\newcommand{\type}{report}
\newcommand{\verbosity}{long}
\newcommand{\mode}{final}
\newcommand{\apx}{proofs}
\newcommand{\numberlevel}{chapter}
\newcommand{\counterlevel}{same}
\newcommand{\colorthm}{true}
\newif\ifsamecounter\ifdefstring{\counterlevel}{same}{\samecountertrue}{\samecounterfalse}
\newif\iflong\ifdefstring{\verbosity}{long}{\longtrue}{\longfalse}
\newif\ifshort\ifdefstring{\verbosity}{long}{\shortfalse}{\shorttrue}
\newif\iffinal\ifdefstring{\mode}{final}{\finaltrue}{\finalfalse}
\newif\ifdraft\ifdefstring{\mode}{draft}{\drafttrue}{\draftfalse}
\newif\ifreview\ifdefstring{\mode}{review}{\reviewtrue}{\reviewfalse}
\newif\ifsubmission\ifdefstring{\mode}{submission}{\submissiontrue}{\submissionfalse}
\newcommand{\contact}[1]{}
\newcommand{\affiliation}[1]{\\#1}
\ifdefstring{\type}{conference}{%

    \documentclass{article} 


}{\ifdefstring{\type}{journal}{%

        \documentclass{report} 


    }{

        \newif\ifconf\conffalse

        \documentclass[12pt,a4paper,twoside,openright]{book}

        \setcounter{secnumdepth}{3}

        \usepackage[
        hcentering,
        margin=1in,
        ]{geometry}

        \usepackage{libertine}


        \emergencystretch 3em

}}
\algnewcommand\Input{\item[{\textbf{Input:}}]}
\algnewcommand\Output{\item[{\textbf{Output:}}]}
\algnewcommand\And{\textbf{and} }
\ifdefempty{\apx}{}{
}
\newif\ifuc\ucfalse
\newtheorem{thm}{Should Not Be Visible}[\numberlevel]
\undefined \newtheorem{theorem}[thm]{Theorem}\fi
\undefined \newtheorem{lemma}[thm]{Lemma} \fi
\undefined \newtheorem{proposition}[thm]{Proposition} \fi
\undefined \newtheorem{definition}[thm]{Definition} \fi
\undefined \newtheorem{remark}[thm]{Remark} \fi
\undefined \newtheorem{corollary}[thm]{Corollary} \fi
\undefined \newtheorem{example}[thm]{Example}\fi
\undefined \newtheorem{assumption}[thm]{Assumption}\fi
\undefined \newtheorem{conjecture}[thm]{Conjecture}\fi
\undefined \newtheorem{theorem}{Theorem}[\numberlevel] \fi
\undefined \newtheorem{lemma}{Lemma}[\numberlevel] \fi
\undefined \newtheorem{proposition}{Proposition}[\numberlevel] \fi
\undefined \newtheorem{definition}{Definition}[\numberlevel] \fi
\undefined \newtheorem{remark}{Remark}[\numberlevel] \fi
\undefined \newtheorem{corollary}{Corollary}[\numberlevel] \fi
\undefined \newtheorem{assumption}{Assumption}[\numberlevel] \fi
\undefined \newtheorem{conjecture}{Conjecture}[\numberlevel] \fi
\ifdefstring{\colorthm}{true}{

    \let\thmOrg\theorem
    \let\endthmOrg\endtheorem
    \renewenvironment{theorem}{\begin{tcolorbox}\thmOrg}{\endthmOrg\end{tcolorbox}}

    \let\lemOrg\lemma
    \let\endlemOrg\endlemma
    \renewenvironment{lemma}{\begin{tcolorbox}\lemOrg}{\endlemOrg\end{tcolorbox}}

    \let\proOrg\proposition
    \let\endproOrg\endproposition
    \renewenvironment{proposition}{\begin{tcolorbox}\proOrg}{\endproOrg\end{tcolorbox}}

    \let\defOrg\definition
    \let\enddefOrg\enddefinition
    \renewenvironment{definition}{\begin{tcolorbox}\defOrg}{\enddefOrg\end{tcolorbox}}

    \let\corOrg\corollary
    \let\endcorOrg\endcorollary
    \renewenvironment{corollary}{\begin{tcolorbox}\corOrg}{\endcorOrg\end{tcolorbox}}

    \let\asmOrg\assumption
    \let\endasmOrg\endassumption
    \renewenvironment{assumption}{\begin{tcolorbox}\asmOrg}{\endasmOrg\end{tcolorbox}}

    \let\conOrg\conjecture
    \let\endconOrg\endconjecture
    \renewenvironment{conjecture}{\begin{tcolorbox}\conOrg}{\endconOrg\end{tcolorbox}}

    \let\rmkOrg\remark
    \let\endrmkOrg\endremark
    \renewenvironment{remark}{\begin{tcolorbox}\rmkOrg}{\endrmkOrg\end{tcolorbox}}

}{}
\renewcommand{\chaptermark}[1]{\markboth{#1}{}}
\newlist{notation}{itemize}{1}
\setlist[notation,1]{label=,labelwidth=1in,align=parleft,itemsep=0.01\baselineskip,leftmargin=!}
\newcommand{\cyl}[1]{{\rm Cyl}(#1)}
\newcommand{\AxisRotator}[1][rotate=0]{%
    \tikz [x=0.25cm,y=0.60cm,line width=.2ex,-stealth,#1] \draw (0,0) arc (-150:150:1 and 1);%
}
\tikzset{
    >=stealth',
    roundbox/.style={
        rectangle,
        rounded corners,
        draw=black, very thick,
        text width=6.5em,
        minimum height=2em,
        text centered},
    thickarrow/.style={
        ->,
        thick,
        shorten <=2pt,
        shorten >=2pt,}
}
\tikzset{
    node distance = 7mm and -3mm,
    innernode/.style = {draw=black, thick, fill=gray!30,
        minimum width=2cm, minimum height=0.5cm,
        align=center},
    outernode/.style = {draw=black, thick, rounded corners, fill=none,
        minimum width=1cm, minimum height=0.5cm,
        align=center, inner sep=0.5cm},
    endpoint/.style={draw,circle,
        fill=gray, inner sep=0pt, minimum width=4pt},
    arrow/.style={->,thick,rounded corners},
    point/.style={circle,inner sep=0pt,minimum size=2pt,fill=black},
    skip loop/.style={to path={-- ++(#1,0) |- (\tikztotarget)}},
    every path/.style = {draw, -latex}
}
\tikzset{
    >=stealth',
    roundbox/.style={
        rectangle,
        rounded corners,
        draw=black, very thick,
        text width=6.5em,
        minimum height=2em,
        text centered},
    thickarrow/.style={
        ->,
        thick,
        shorten <=2pt,
        shorten >=2pt,},
    graycircle/.style={
        circle,
        draw,
        fill=gray!30,
        inner sep=0,
        thick,
        minimum size=14mm}
}
\newenvironment{outline}{\begin{center}\bfseries\large Outline \end{center}\begin{quote}}{\end{quote}}
\titlespacing{\chapter}{0pt}{0ex}{*4}
\titleformat{\chapter}[display]
{\bfseries\Large}
{\filleft\MakeUppercase{\chaptertitlename} \Huge\thechapter}
{2ex}
{\titlerule[1pt]\vspace{1pt}\titlerule
    \vspace{1ex}%
    \centering
}
[\vspace{1ex}%
{\titlerule}]
\newcommand{\MONTH}{%
    \ifcase\the\month
    \or January
    \or February
    \or March
    \or April
    \or May
    \or June
    \or July
    \or August
    \or September
    \or October
    \or November
    \or December
    \fi}
\newcommand*{\blankpage}{%
    \vspace*{\fill}
    {\centering\footnotesize\itshape This page is intentionally left blank, no more.\par}
    }
\renewcommand*{\cleardoublepage}{\clearpage\if@twoside \ifodd\c@page\else
    \blankpage
    \thispagestyle{empty}
    \newpage
    \if@twocolumn\hbox{}\newpage\fi\fi\fi}
\newcommand{\iparadot}{\item\paradot}
\def\frs#1#2{{^{#1}\!/\!_{#2}}} 
\def\DPi{\eps_\Pi}
\def\DQ{\eps_Q}
\def\piR{{\u\pi_R}}
\def\mse{{\rm MSE}}
\newcommand{\T}{\mathscr T}
\title{
    Abstractions of General Reinforcement Learning
}
\author{
    Sultan J. Majeed \contact{sultan.pk} \affiliation{Research School of Computer Science, ANU}
}
\begin{document}


\ifdefempty{\mode}{
    \pagestyle{empty}
    \newpage
    \listoftodos[Todo \& Notes]
    \newpage
    \pagestyle{plain}
    \setcounter{page}{1}
}{}



\begin{titlepage}
    \centering
    {\LARGE Abstractions of\\}
    {\LARGE General Reinforcement Learning\\[1ex]}
    {\large\itshape An inquiry into the scalability of generally intelligent agents}
    \vfill
    {\Large\scshape\href{http://sultan.pk}{Sultan J. Majeed}}
    \\[1ex]
    \rule{29ex}{1pt}\\[1ex]
    {\scshape Supervised By\\[1.5ex]}
    {\Large\scshape\href{http://hutter1.net}{Prof. Marcus Hutter}}
    \vfill

    \includegraphics[scale=1.0]{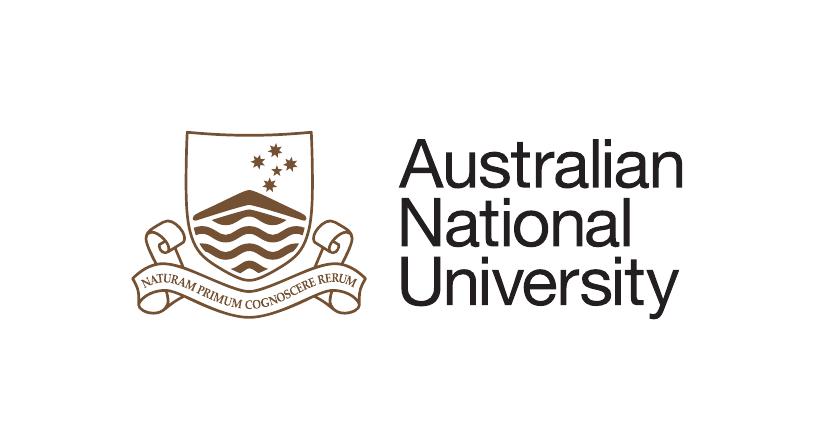}

    {\itshape \large A thesis submitted for the degree of Doctor of Philosophy at the Australian National University\\[1.5ex]}
    {\large August 2021}
\end{titlepage}


\newpage
\thispagestyle{empty}
\vspace*{\fill}

{\centering Copyright \copyright \ 2021 \ Sultan J. Majeed Some Rights Reserved.\par}

\doclicenseThis

\cleardoublepage
\thispagestyle{empty}
\vspace*{\fill}
{\itshape Unless referenced otherwise, this thesis is my own original work.}

{\flushright

{--- Sultan J. Majeed \\ August 23, 2021 \,}

\vspace{\fill}}


\cleardoublepage
\thispagestyle{empty}
\vspace*{\fill}

\begin{center}
   {\itshape
   To Ammi and Abu
%
%
%
%
%
   }
%
%

\end{center}

\vspace{\fill}


\cleardoublepage
\thispagestyle{empty}
\vspace*{\fill}
\begin{center}
    \bfseries\Large I would like to acknowledge my ...
\end{center}
\begin{quote}
\begin{itemize}
    \item[Supervisor] Marcus Hutter for his fathomless wisdom, constant support, and guidance which not only shaped my understanding of the subject matter, but he also made me think critically about the science as a whole. He is, and will always be, a role model for me to look up to!
    \item[Wife] Fatima who got out of her comfort zone to accompany me on this adventure to the down under. Her ardent love and unshaken trust helped me to pull myself together during the gloomy times all these years.
    \item[Kids] Zainab and Umamah, the only biological and generally intelligent agents I had privilege to create, who made it clear that specifying the goals and training such agents is not so easy task!
    \item[Family] who remotely shared a part of their lives with us by making the time to join us during the lows and the highs, which would have been unbearable and colorless otherwise.
    \item[Friends] especially, Elliot Catt, Samuel Yang-Zhao, Jan Leike, Noman Akbar, Matthew Aitchison and Micheal Cohen, without whom Straya would have been a lonely place!
    \item[College Admins] especially Christie Liu and Jasmine Jury, who made the administration side of the degree a breeze.
    \item[Funding Agencies] Islamic Development Bank (IsDB) and Australian Research Council (ARC) for the sufficient financial support to fund my degree and travels.
\end{itemize}
\end{quote}
\vspace{\fill}

\frontmatter


\chapter{Abstract}

\noindent The field of artificial intelligence (AI) is devoted to the creation of artificial decision-makers that can perform (at least) on par with the human counterparts on a domain of interest. Unlike the agents in traditional AI, the agents in artificial general intelligence (AGI) are required to replicate human intelligence in \emph{almost every} domain of interest. Moreover, an AGI agent should be able to achieve this without (virtually any) further changes, retraining, or fine-tuning of the parameters.
The real world is non-stationary, non-ergodic, and non-Markovian: we, humans, can neither revisit our past nor are the most recent observations sufficient statistics. Yet, we excel at a variety of complex tasks. Many of these tasks require longterm planning. We can associate this success to our natural faculty to \emph{abstract} away \emph{task-irrelevant} information from our overwhelming sensory experience. We make \emph{task-specific} mental models of the world without much effort. Due to this ability to abstract, we can plan on a significantly \emph{compact representation} of a task without much \emph{loss of performance}.
Not only this, we also abstract our actions to produce high-level plans: the level of action-abstraction can be anywhere between small muscle movements to a mental notion of ``doing an action''. It is natural to assume that any AGI agent competing with humans (at every plausible domain) should also have these abilities to abstract its experiences and actions.
This thesis is an inquiry into the existence of such abstractions which aid efficient planing for a wide range of domains, and most importantly, these abstractions come with some \emph{optimality guarantees}.
We use a \emph{history-based} reinforcement learning (RL) setup, appropriately called \emph{general reinforcement learning} (GRL), to model such general-purpose decision-makers. We show that if such GRL agents have access to appropriate abstractions then they can perform optimally in a huge set of domains. That is, we argue that \emph{GRL with abstractions}, called abstraction reinforcement learning (ARL), is an appropriate framework to model and analyze AGI agents.
This work uses and extends beyond a powerful class of (state-only) abstractions called extreme state abstractions (ESA). We analyze a variety of such extreme abstractions, both state-only and state-action abstractions, to formally establish the \emph{representation and convergence guarantees}. We also make many minor contributions to the ARL framework along the way. Last but not least, we collect a series of ideas that lay the foundations for designing the (extreme) \emph{abstraction learning} algorithms.

\vfill

\paradot{Keywords} General Reinforcement Learning (GRL), Abstraction Reinforcement Learning (ARL), Feature Reinforcement Learning (FRL), Representation Learning, (State-only) Abstractions, Homomorphisms (State-action Abstractions), Non-Markovian Decision Processes (NMDP), Q-uniform Decision Processes (QDP), Value \& Policy-uniform Decision Processes (VPDP), Scalability, Feature Construction, Representation and Convergence Optimality Guarantees.


\tableofcontents
\listoffigures
\listoftables
\listofalgorithms
\addcontentsline{toc}{chapter}{List of Algorithms}


\chapter{Notation}

\noindent This thesis is notation heavy, but we use a consistent notation throughout. A comprehensive list of notation and (important) symbols is presented in \Cref{chap:notation-and-symbols}.

\begin{itemize}

    \iparadot{General Notation}
    The set of natural numbers is $\SetN := \{1, 2, \dots\}$, $\SetB := \{0,1\}$ is a set of binary symbols, $\SetR$ is the set of reals, and $\SetRX \coloneqq \SetR \cup \{-\infty, +\infty\}$ denotes the set of extended real numbers. We denote by $\Dist(X)$ the set of probability distributions over any set $X$. The concatenation of two objects (or strings) is expressed through juxtaposition, e.g.\ $xy$ is a concatenation of $x$ and $y$. We express a finite string with boldface, e.g.\ $\v x = x_1 x_2 \dots x_{\abs{\v x}}$ where $\abs{{}\cdot{}}$ is used to denote the length or cardinality of the object. The individual members of a string or a vector may be accessed as $\v x_n = x_n$ for any $i \leq \abs{\v x}$. A substring of length $i \leq \abs{\v x}$ is denoted as $\v x_{\leq n} = x_1x_2\dots x_{n}$ and $\v x_{<n} = x_1x_2\dots x_{n-1}$. The empty string is denoted by $\epsilon$. We interchangeably use the same notation for vectors and strings, e.g.\ $\v x \in \B^d$ is a $d$-dimensional $\B$-ary vector which may also be expressed as a string. This choice simplifies the notation and saves redundant variables. If a variable is time-indexed, we express the continuation of the variable with a prime on it, e.g.\ if $x \coloneqq x_{n}$ then $x' \coloneqq x_{n+1}$ where $\coloneqq$ denotes equality by definition. A small scalar value (usually the error tolerance) is denoted by $\eps > 0$.
    A different member of the same set is expressed with a dot on it, e.g.\ $x, \d x \in \B$. We express the fact of $\v x$ being a prefix of $\v y$, i.e.\ $x_n = y_n$ for all $n \leq \abs{\v x}$, by $\v x \sqsubseteq \v y$ or $\v y \sqsupseteq \v x$. Moreover, $\vo{xy}$ represents a vector that is point-wise joined, i.e.\ $\vo{xy}_n \coloneqq \v x_n \v y_n$. We use a shorthand notation $\forall f(x)=y$ to mean $\forall x, y: f(x)=y$. The power set is expressed as $\PowSet(X) \coloneqq \{ A \mid A \subseteq X\}$ that is the set of all subsets of $X$. We use $f_{\langle x, y \rangle}(z)$ to denote a function $f$ of $z$ which has some parameters $x$ and $y$. By ${\rm supp}(f)$ we mean the support of $f: X \to \SetR$ which is the set $\{x : f(x) > 0\}$. ${\rm uniform}(X)$ is defined for a finite set $X$ as the uniform distribution on $X$. $x = O(y)$ denotes the ``big-O'' notation which means that $x$ is of order $y$. We denote a norm of any vector $\v x$ as $\norm{\v x}$, where the type of the norm is apparent from the context. We use $\norm[\infty]{\v x} \coloneqq \sup_{i} x_i$ and $\norm[1]{\v x} \coloneqq \sum_{i} \abs{x_i}$ to denote the infinity-norm and 1-norm respectively.

    \iparadot{Function overloading}
    We use function overloading in the thesis. This saves use cluttering the notation with unnecessary related symbols. For example, we use $\mu$ to denote the true environment. This environment symbol is used to express different representations of the environment, e.g.\ $\mu(e'\| ha)$, $\mu(e'\|x')$, or $\mu(x'\|xa)$. There does not exist any ambiguity of different choices of this function. The choice is apparent from the context and the types of the input and output parameters.

    \iparadot{Functionals}
    We use $f(x\|y)$ to denote a function which takes $y$ as a parameter to produce a real-value (mostly a probability) for $x$. We use this notation to distinguish such functions from functions like $g(x|y)$ which denotes a conditional distribution of $x$ given $y$. We use the overloaded notation for such conditional distribution functions, and express $g(xy)$ to denote the joint distribution. We can not do that for the function $f(x\|y)$ form.

    \iparadot{Stochastic process}
    We denote a stochastic process $X$ over a sample space $\Omega$ as
    \beqn
    X : \Omega \to Y^{\mathcal I}
    \eeqn
    which takes values from $Y$. Moreover, ${\mathcal I}$ is any totally ordered index set, usually the time-steps. Any random variable at (time-step) $n \in \SetN$ is denoted by $X_n(\w) \coloneqq X(\w, n)$.
\end{itemize}


\mainmatter

\chapter{Introduction}\label{chap:intro}

\begin{outline}
    We start the thesis by gently introducing the concept of \emph{intelligence}. It seems like an intuitive concept for \emph{generally} intelligent beings like ourselves, but it becomes hairy when we try to formally define intelligence. In this thesis, we are interested in artificial generally intelligent (AGI) agents, which can work in a wide range of environments. We underscore the ``general'' part of the intelligence. After considering the traditional AI paradigms for such agents, we declare them insufficient for AGI. These paradigms (with their original structure) are not powerful enough to model artificial \emph{generally} intelligent agents.
    Moreover, we highlight some of the \emph{key hurdles} such as \emph{scalability}, \emph{specification}, \emph{robustness}, and \emph{assurance} that we need to address before we get to ``the promised land''. This thesis deals with the scalability issue. We informally introduce the core framework used in this work to model AGI agents. Additionally, we outline the rest of the thesis to better position this work in the big picture of AI. In the end, we conclude the chapter by clearly stating the \emph{technical contributions} we make throughout this work and beyond.
\end{outline}

\epigraph{\it ``Our intelligence is what makes us human, and artificial intelligence is an extension of that quality.''}{--- \textup{Yann LeCun}}

The concept of intelligence is unfathomable. It is easy to provide an intuitive list of \emph{characteristics} of an intelligent being, but it is not easy to formally define \emph{intelligent behavior} comprehensively \cite{Legg2007a}. For example, we can agree that a squirrel behaves ``intelligently'' when it saves nuts for the approaching winter, the ``intelligent'' ants follow through the assigned roles for the survival of the colony, and chimpanzees exhibit ``intelligent'' behavior by solving complex puzzles. However, all these examples target some part of the broader intuitive notion of intelligence, but these \emph{traits} are only useful for certain situations. A common theme in these examples, and also in general, is the actors are faced with a choice (e.g.\ eat the nuts now or save them for later, follow the personal goals or the assigned one, and make a certain move or the other to solve the puzzle). The notion of intelligence is tightly connected to the decision-making process about some alternatives \cite{Pomerol1997}.

If someone is facing a choice, they are at a junction to exercise their ability of intelligence. If we take this decision-making perspective on intelligence then an ``optimal'' decision-maker should be the most intelligent being. However, this decision-making perspective of intelligence also bring the issues related to decision theory into the theory of intelligence. For instance, there is no \emph{universal} way to compare the decisions of any pair of individuals who value their future outcomes differently \cite{Schervish1996}. We expand on this more later in \Cref{sec:goals}.
Moreover, an agent being good at a particular task does not necessarily mean that they will perform equally good on another (possibly unrelated) task. This ability of any agent to be successful on a large variety of tasks is usually associated with general intelligence  \cite{Legg2007a}.

If the agent has the ability to adapt and (re)learn novel tasks then we might label that as an intelligent behavior. The recent successful deep learning models, e.g.\ DQN  \cite{Mnih2015}, AlphaZero \cite{Silver2018} and MuZero \cite{Schrittwieser2020} just to name a few, have the ability to adapt, albeit on a special class of environments. This thesis includes discussion of such agents, which can lean and adapt. Interestingly, not all intelligences are created equal. In this thesis, we are (primarily) interested in artificial general (human-level) intelligence.

\section{Human Intelligence}

We can argue that ants are intelligent in a certain sense as they are able to build complex tunnels and function as a working colony, but, arguably, they are not as intelligent as humans. What distinguishes humans from ants is their \emph{general-purpose intelligence} \cite{Legg2007a}.
Humankind has the ability to adapt to a \emph{diverse range of situations}, and we \emph{contemplate} about our actions at lengths.
We, as human beings, can do many tasks which are exclusive to our intelligence. Unlike other intelligent species on the planet, we have been able to take (partial) control of our environment, and we can (reasonably) mold it to our needs. We are no longer dependent (only) on evolution to ensure our survival. We as a species have come a long way because of our \emph{general-purpose} intelligence.

Replicating our general intelligence in machines is the long standing goal of AI \cite{Minsky1956,Hutter2000}. If successful, we will not just make ``another'' (non-biological) human-like species in the process, but a machine that can think and act like more rationally than a human without tiring, retiring, and breaks. Such human-level intelligent general-purpose machines would be able to argue and plan as fast as the technology permits. It is unlikely that an AGI will be bounded by similar biological limits as we are. A super-fast computer will allow AGI to compute (and think) faster.\footnote{Note that an AGI can only solve \emph{computable} problems. There are problems which no agent can solve, even with an infinite amount of computation power, e.g.\ the halting problem \cite{Turing1937}.} In the following section, we list a few of the major dichotomies in AI. We argue why an AGI should be (and be not) a part of one or the other categorization.

\section{Artificial Intelligence}

Keeping human intelligence as the baseline, we embark on the journey to study artificial systems\footnote{Throughout this thesis, we try not to associate AGI with physically embodied agents. Therefore, we usually address the agents by either as systems, algorithms, or decision-makers to put the emphasis on the actual decision-making process. We try to avoid the prevalent confusion of associating bipedal robots with human-like AGI.} which have the potential to mimic this general-purpose intelligence. But first, it is important to highlight some of the key dichotomies exist in the field of AI \cite{Hutter2000}. It will help the readers better understand and position this work, and AGI in general, in the bigger picture of the field of designing the artificial intelligent agents.

\begin{itemize}
    \item\paradot{Specialized vs General}
    We do not want to design different algorithms for every task of interest. We should have a single algorithm which can adapt to different situations and hardware. We call such algorithms \emph{general} decision-makers, which are the subject of this thesis. That puts this work apart from the prevalent techniques of \emph{specialized} agents. For example, we have a very good specialized agents for playing Chess \cite{Silver2018a}, playing Go \cite{Silver2016}, recognizing faces \cite{Masi2019}, synthesizing speech-to-text \cite{Chiu2018}, driving cars \cite{Grigorescu2020}, folding proteins \cite{Senior2020}, generating/understanding natural language \cite{Brown2020}, and so on. However, we do not (yet) have a truly general agent beyond some theoretical works \cite{Hutter2000,Leike2016,Lattimore2014a,Orseau2013b}. This work, although also being dominantly theoretical in nature, is an investigation into the scalability of such general agents.

    It is easier to develop/program the specialized agents because, usually, the domain of interest provides a fair amount of structure, which simplifies the design. For example, it is easier and structurally simpler to design a (good) face recognizer by \emph{exploiting} the facial structure, but undoubtedly the resultant system would be brittle \cite{Wright2009}. It may not generalize beyond faces. On the other hand, general agents cannot assume much about the set of domains, which complicates the analysis and design of such agents \cite{Hutter2000}.

    \begin{remark}[Scheduler AGI]
        It is critical to point out that we do not dismiss the possibility of a general agent being nothing but a scheduler for a collection of specialized agents. Such agent can first do a task recognition activity, and later find the best specialized agent for the task to get the task done \cite{Montes2019}. As clear from the above construction, the notion of intelligence is further convoluted in such setups: where is the intelligence in this setup? Is it in the scheduler or in the specialized agents? In this work, we take the stance that this question is irrelevant. We should not worry about where we put the credit. It is the complete system which exhibits the general-purpose intelligence. We should \emph{not distinguish the entities} beyond this union.
    \end{remark}

    \item\paradot{Planning vs Learning}
    The uncertainty about the environment is not an exception but it is the norm in the universe we live. We need a decision-maker which can perform best under uncertainty. If the agent does not know the environment \emph{a priori} and it has to \emph{learn} the (true) environment then we, unsurprisingly, say the agent is a \emph{learning} agent. On the other hand, a \emph{planning} agent is one which has access to a (near) perfect model of the world. It ``only'' needs to find the (optimal) plan of actions for the environment. However, it is tedious to come up with a perfect model a priori. Therefore, the agents should not only be able to plan (almost) perfectly given the (perfect) model, but also be able to acquire knowledge on their own \cite{Lattimore2014a}. Ideally, we want them to be able to learn autonomously without our constant feedback \cite{Orseau2013b}.

    In the early years of AI, agents were mostly planning agents. The community was dominated by ideas of encoding the world knowledge using crisp logical rules which can later be fed to some powerful planners \cite{Weld1999}. As reality is too complex to be explained by simple rules\footnote{We are aware that there are some branches of science which are actively looking for a simple logical explanation of the universe, or more famously known as a theory of everything. However, the observable world is ``complex'' even if the governing rules turned out to be ``simple.''}, the field of AI has recently been resurrected under a (purely\footnote{Roughly, a purely learning agent is one which learns to ``behave optimally'' in the domain exclusively through real (interaction) data without using a model.}) learning regime. The recent achievements of deep learning are paramount evidence that a lot can be gained only through pure learning \cite{Mnih2015,Silver2018a,Silver2016,Senior2020}. However, the lack of world knowledge in most of these models is supplemented through big data, which is not possible for many tasks of interest. For example, we can not afford to endanger a lot of cancer patients to gather more data on the effects of different experimental drugs. A planning agent would be a better option in this situation. A sufficiently accurate model (e.g.\ a model quantifying the effects of drugs on human body) may a priori allow to prune off many candidate (dangerous) drug trails.

    The main focus of this work is agents which might have an approximate model of the world, but they are not certain about it. The agents are able to learn through experience, but they also maintain a potentially compact model of the world to plan.
\end{itemize}

The dichotomies listed above are by far not the only possible distinctions in the vast domain of AI, but we believe they are the most discriminatory \cite{Hutter2000}. Any algorithm falling under one or the other division is structurally, motivationally and logically very different. This work is in the ``middle'' of the learning and planning regimes. We deal with general agents which keep an uncertain (approximate) model of the world to plan, but learn through interactions to improve the model.\footnote{An example is Dyna \cite{Sutton1990}, but as we will formally state the problem later in \Cref{chap:grl}, this framework is much general than Dyna.}

We need to overcome some huge technical challenges to even start talking about a \emph{realizable} AGI. We call them the \emph{milestones} to AGI.

\section{Milestones to AGI}

An AGI agent is \emph{realizable} if it can be implemented on a physical system which observes physical limitations, e.g.\ limited computation power and storage. There are already some (theoretical) candidates of AGI out there \cite{Hutter2000,Leike2016,Lattimore2014a,Orseau2013b}. However, the current proposals for general agents are too demanding. They are usually \emph{incomputable} or \emph{intractable} \cite{Hutter2000}. Moreover, the policy (or the course of actions) they learn is context dependent which grows over time. The policy learned by the agent is not valid for the next time-instance. On the other hand, specialized agents are too brittle for the job; they cannot generalize much beyond the domain they are specialized for \cite{Wright2009}. In the following, we argue and list some of the key milestones which a \emph{realizable} AGI has to meet. They are listed in no particular order, and every milestone is essential for a \emph{realizable}, \emph{beneficial}, and \emph{safe} AGI \cite{Everitt2018}.

\begin{itemize}

    \iparadot{Scalability}
    We need an AGI (also known as a (universal) anytime algorithm \cite{Hutter2000}) which scales with the resources available at its disposal. Most importantly, the performance of a realizable AGI should \emph{gracefully} scale with limited resources. For example, if a realizable AGI is proving mathematical theorems (e.g.\ by searching through a space of proofs) then the probability of producing a wrong or incomplete proof should be a function of the resources. When the same AGI is run on a supercomputer, it may become appropriately powerful to produce correct and complete proofs with high probability.

    \iparadot{Specification}
    We should be able to specify what we want from the agent. As the title of the thesis says, this work builds on over the reinforcement learning (RL) paradigm. In RL, the reward signal is an essential component to specify a problem. A complete RL problem setup is provided in \Cref{chap:grl,chap:arl}.  However, there are many other ways to specify ``goals'' for machines in RL, e.g.\ by providing preferences over states (or trajectories) \cite{Christiano2017a} or iteratively specifying the rewards by human-in-the-loop \cite{Zanzotto2019}. This thesis assumes that we can specify the rewards for any task of interest, see \Cref{asmp:reward-hypothesis} which is known as the \emph{reward hypothesis}. Under this assumption, we can safely sidestep this milestone. As will be discussed more in \Cref{chap:grl}, this assumption is not very hard to satisfy in most domains of interests. If we humans, as a reward dispenser, can distinguish between a failed and successful states of the task, e.g.\ cancer is cured or not, a chess board is in the won/drawn/lost position, or the dinner is cooked with a cleaned kitchen or not, then \Cref{asmp:reward-hypothesis} could be satisfied by simply rewarding the agent in the desired state of the system in the human-in-the-loop paradigm. However, there is an active body of research, known as AI-Safety, which tries develop AGI agents without the reward hypothesis \cite{Everitt2018}.

    \iparadot{Robustness}
    The designed agents should be able to handle \emph{slight perturbations} or any \emph{small variations} in the system. A robust agent should be able to ``recover'' from any slight disturbance from the optimal settings \cite{Ortega2018}.  Throughout this thesis, we allow an error tolerance $\eps > 0$ (for many quantities of interest) in the main results. Therefore, the stipulated agents possible through this work are inherently robust.

    \iparadot{Assurance}
    Ideally, the agents are designed for the wellbeing and benefits of the society. There must be a way to monitor and control the progress of such agents. Assurance becomes critical when the agent has the ability to effect the environment in unintended ways. This work assumes a dualistic setup where there is a clear distinction between the environment and the agent. The agent can only effect the environment through a predefined ``communication/action-perception'' channel. However, a realistic scenario would demand a proper treatment of these aspects of AGI; see the \emph{AI-Safety} research on these topics \cite{Everitt2018}. Nevertheless, a dualistic setup under the reward hypothesis allows us to not worry about the assurance of the agents.
\end{itemize}

The above milestones are neither exclusive nor complete. However, they capture a majority of critical aspects of the general agents \cite{Hutter2000}. As discussed above, in this work, we \emph{exclusively} focus on the scalability of general agents. The rest of the milestones are (automatically) granted due to the reward hypothesis in a dualistic setup. However, generalizations of this work where these assumptions no longer hold are important future research directions.

\section{Scalable Framework}

It is easy to see that reality is non-stationary, i.e.\ the dynamics of the world are changing with time, and it is not \emph{ergodic}, which (roughly) means that we cannot \emph{revisit} every situation infinitely ofter. Moreover, the most recent observation is (almost) never a sufficient statistics for optimal decision-making, i.e.\ the world is non-Markovian; let alone be IID (independent and identically distributed). This means that the standard (un)supervised learning paradigms are not sufficient to model many interesting problems \cite{Roth2017}. Therefore, we base our work on the reinforcement learning (RL) paradigm which can model non-stationary, reactive, and arbitrary history-based processes \cite{Hutter2000}. However, the standard RL setup, which is defined over a finite-state Markov decision process (MDP) \cite{Sutton2018}, is not sufficient for a majority of realistic situations \cite{Spaan2012}. Moreover, if we \emph{naively} try to approximate a complex problem as an MDP by adding more states to distinguish different situations then the resultant state-space blows up \cite{Sutton2018}. In this thesis, we take a top-down approach, and start from a history-based RL setup known as the general reinforcement learning (GRL) framework \cite{Hutter2000}.

\begin{figure}[!]
    \centering
    \begin{tikzpicture}[
        node distance = 7mm and -3mm,
        innernode/.style = {draw=black, thick, fill=gray!30,
            minimum width=2cm, minimum height=0.5cm,
            align=center},
        outernode/.style = {draw=black, thick, rounded corners, fill=none,
            minimum width=1cm, minimum height=0.5cm,
            align=center},
        endpoint/.style={draw,circle,
            fill=gray, inner sep=0pt, minimum width=4pt},
        arrow/.style={->,thick,rounded corners},
        point/.style={circle,inner sep=0pt,minimum size=2pt,fill=black},
        skip loop/.style={to path={-- ++(#1,0) |- (\tikztotarget)}},
        every path/.style = {draw, -latex}
        ]
        \node (start) {Start};
        \node (h) [innernode]{History};
        \node (phi) [innernode, below=of h]{Abstraction};
        \node (pi) [innernode, below=of phi]      {Policy};
        \node [outernode, align=left, inner sep=15pt, fill=none, fit=(h) (phi) (pi)] (agent) {};
        \node[below right, inner sep=3pt, fill=none] at (agent.north west) {Agent};
        \node[outernode, left=120pt of agent, fit=(agent.north)(agent.south), inner sep=0pt] (env) {};
        \node[below right, inner sep=0pt, fill=none, rotate=90, anchor=center] at (env) {Domain};
        \node[endpoint, above= -2pt of env] (or_env) {};
        \node[endpoint, below= -2pt of env] (a_env) {};
        \node[endpoint, below= -2pt of agent] (a_agent) {};
        \node[endpoint, above= -2pt of agent] (or_agent) {};

        \path (a_agent) edge[arrow,bend left] node[below]{$\{\text{action}\}$} (a_env);
        \path (or_env) edge[arrow, bend left] node[above]{$\{\text{percept}\}$} (or_agent);
        \path (or_agent) edge[arrow] node[right]{} (h);
        \path (h) edge[arrow] node[above=0.5pt,midway,name=h_phi,point]{} node[right]{} (phi);
        \path (phi) edge[arrow] node[left]{} (pi);
        \path (pi) edge[arrow] node[above=0.5pt,midway,name=pi_a,point]{} node[left]{} (a_agent);
        \path (pi_a) edge[arrow, skip loop=1.5cm] (h.east);
        \path (h_phi) edge[->, skip loop=-1.5cm, thick, rounded corners] (h.west);
    \end{tikzpicture}
    \caption{A general reinforcement learning setup with abstractions.}
    \label{fig:scalable-grl}
\end{figure}

%

The GRL setup is arguably the most general setup, which models the interaction of an agent with the environment in the least restrictive way possible. It starts from the extreme case in which every history (a replacement of the state of the system in standard RL) is unique. The setup in its original form is not scalable, as the length of the history grows, which limits the possibilities of getting a \emph{realizable} GRL agent. This history-dependence shows up in the optimal policies, i.e.\ the optimal GRL agent, which makes learning these policies extremely hard if not impossible \cite{Hutter2000}.

In this work, we use and extend a scalable variation of GRL initiated by \citet{Hutter2014}. \citet{Hutter2016} augmented this history-based GRL setup with an abstraction map which tries to provide a compact representation of the domain, see \Cref{fig:scalable-grl}. We extend the abstraction based GRL setup of \citet{Hutter2014} to a general state-action abstraction framework, which we call the abstraction reinforcement learning (ARL) framework. The ARL setup can model \emph{almost} every prevalent RL setup in the literature. In ARL, the scalability is provided through the abstraction map. The map can mimic the resource constraints, e.g.\ the storage capacity and/or the processing power required for representation and learning, by providing different levels of abstractions. A resource-limited system might have coarser abstraction, which might need less storage and compute, than resource-rich systems. However, if there are some guarantees on the structure of the map then performance can be assured. This thesis is dominantly about the existence of such abstractions (be it either abstracting only the states or both states and actions) which are powerful enough to \emph{reliably} model \emph{any} domain of interest by a \emph{compact} state-action space.

Most of the thesis deals with representation and learning guarantees of these abstractions. However, in \Cref{chap:abs-learning} we consider the possibility of learning such abstractions from data. \citet{Hutter2014} calls learning the abstraction map feature reinforcement learning (FRL). The abstractions considered in FRL are crucially different from the usual representation learning techniques \cite{Hutter2009}. In FRL, the abstractions are functions of history, which allows for non-stationary, context based aggregation maps. This characteristic sets FRL, and by extension this work, apart from other works dealing with representation learning \cite{Bengio2013}.

\section{Thesis Statement}

This thesis can be summarized by the following \emph{thesis statement}.
\begin{tcolorbox}
    \begin{quote}
        \it \centering The abstraction reinforcement learning framework is a pathway to a scalable artificial general intelligence.
    \end{quote}
\end{tcolorbox}

The above statement is purposefully left open ended, as this thesis barely scratches the surface of the powerful ARL setup. There is plenty of support behind the statement in this thesis, and in the related literature. This thesis is a part of a series of earlier works done on this topic \cite{Hutter2016,Daswani2015a,Nguyen2012,Daswani2013a}.

\section{Thesis Outline}

To facilitate the readers in navigating through the thesis, we point to the different chapters of the thesis in this section. Through this section, the reader can have a birds-eye view of what to expect in each chapter. Even though the thesis is theoretical in nature, we resisted the temptation to put a chapter about the mathematical preliminaries. We develop the necessary mathematical machinery along the way as need and summarize notation in \Cref{chap:notation-and-symbols}. This choice may not allow independent reading of the chapters if the required notion is developed in other chapters. If it is the case, we highly recommend to first read \Cref{chap:grl,chap:arl} before reading any other chapter. This will expose the reader to necessary mathematical background required for the majority of the chapters. Nevertheless, we expect a continuous reading of the main text, which justifies our choice of not having an independent mathematical background chapter.

We start by formally laying down the foundations of a GRL framework in \Cref{chap:grl}. Then, we build on GRL and provide a setup of ARL in \Cref{chap:arl}. Because the literature on abstractions is vast, we compile some of the dominant state-only abstractions in \Cref{chap:state-only-abs} and state-action abstractions in \Cref{chap:state-action-abs}. These chapters do not just contain the well-established abstractions; we provide a number of novel abstractions in between. \Cref{chap:extreme-abs} initiates the discussion on the extreme abstractions which is a special type of state-only abstractions. We provide convergence guarantees in \Cref{chap:convergence-guarnt}. Representation guarantees for many abstractions  are listed in \Cref{chap:representation-guarnt}. \Cref{chap:action-seq} contains more novel results about action-sequentialization to improve representation guarantees of extreme abstractions. \Cref{chap:abs-learning} contains a series of ideas to build abstraction learning algorithms. The thesis style is primarily based on rigorously proving statements, but in \Cref{chap:vpdp-exp} we provide some (preliminary) supporting experimental results. We finally conclude the thesis in \Cref{chap:conclusion} with an important outlook to some of the key extensions of this work. A comprehensive list of notation is presented in \Cref{chap:notation-and-symbols}.


\section{Contributions}

During my PhD, I have made the following contributions, some of which were published in the prestigious AI conferences. This thesis is mostly written around these contributions.
\begin{enumerate}
    \item \Cref{chap:grl,chap:arl,chap:state-only-abs,chap:state-action-abs,chap:extreme-abs} are based on \citet{Majeed2021}. It put the limelight on the ARL framework as being a natural way to model the agent-environment interaction under abstractions, which can be specialized to many standard RL frameworks. This work can be considered as a shorter version of the thesis.
    \item \Cref{chap:convergence-guarnt} is an extended version of \citet{Majeed2018}. The original publication provides convergence guarantees for Q-learning history-based RL beyond MDP domains.
    \item \Cref{chap:representation-guarnt} is based on \citet{Majeed2019}, which establishes the representation guarantees for non-MDP  homomorphisms. It basically extends the results of \citet{Hutter2016} from state-only abstractions to state-action abstractions.
    \item \Cref{chap:action-seq} is an adaptation of \citet{Majeed2020}. The major contribution in this work is the usage of the technique of action-sequentialization to improve representation guarantees of extreme abstractions. The upper bound on the required state-space size of extreme abstractions was improved double exponentially through action-sequentialization.
    \item \Cref{chap:abs-learning} lists the results reported in \citet{Majeed2021a}. The manuscript collects many idea and pieces required to build abstraction learning algorithms for non-MDP state abstractions.
    \item \Cref{chap:vpdp-exp} is based on \citet{McMahon2019}, which provides a numerical treatment of some non-MDP abstractions. It highlights the possibility of going beyond Q-uniform abstractions. I supervised and made equal contributions to this work.
\end{enumerate}

The above list is for the contributions in the main trunk of the thesis. The following is the list of the rest of the contributions I made during my PhD, not included in this thesis.

\begin{enumerate}[resume]
    \item In \citet{Hutter2019}, I made equal contribution in establishing the conditions on features for convergence of natural algorithms.
    \item \citet{Majeed2018a} provides a Python framework for developing ARL agents.
    \item In \citet{Parker2019}, I contributed in numerically analyzing the looping context tree for a candidate of better model class for general agents.
\end{enumerate}

\section{Summary}

This chapter set the scene for the rest of the thesis. We introduced artificial generally intelligent agents. We sketched a utopia where AGI agents are abundant in society. It contained an informal introduction of a scalable AGI framework ARL. This chapter also provided a list of contributions I made during my PhD.

This concludes the introductory chapter of the thesis. Now, we embark on the journey to investigate the scalability of generally intelligent agents.

\chapter{General Reinforcement Learning}\label{chap:grl}

\begin{outline}
    The standard RL setup assumes that the environment is a Markov decision process (MDP), which is not sufficient to model a majority of interesting problems. In this chapter, we formalize the general reinforcement learning (GRL) framework, which is a history-based setup. At the core of this setup is a history-based decision process (HDP), which is also a countable MDP but with a special structure that no state ever repeats. Along the way, we expand on the critical aspects and assumptions of the framework. Additionally, we list many possible ways one can define ``goals'' for a GRL agent, which leads to a lot of possible GRL variants. However in the end, we declare an unnormalized $\g$-discounted GRL setup as \emph{the} GRL framework considered in the rest of the thesis.
\end{outline}

\epigraph{\it``All knowledge - past, present, and future - can be derived from data by a single, universal learning algorithm.''}{--- \textup{Pedro Domingos}}

\section{Introduction}

As highlighted in \Cref{chap:intro}, we take the ``computational perspective'' of intelligence. The intelligence of an agent is categorized by the quality of the decisions it takes \cite{Hutter2000}. Before we can even put a quality measure over a sequence of decisions, we need a formal notion for the environment and the agent. For an AGI, we would like to have an agent which can be used in a multitude of environments, just like a human. We must not put such assumptions early on which may limit the types of environments we can model. For example, if we build our framework over a finite-state MDP, i.e.\ the most recent observation is sufficient information for the agent, then we restrict ourselves to an environment class which may not \emph{reasonably} model the time-variant or long-term dependencies of the environment \cite{Sutton2018,Spaan2012,Hutter2000}.

We can rule out an MDP modeling early on because the real world is too ``messy'' to be considered as a finite-state MDP \cite{Dulac-Arnold2019}. Hence, we do not want an MDP as a starting point. However, as shown later in \Cref{chap:arl}, there is no harm if we may end up having an abstraction of the environment which happens to be an MDP. Because as long as the general agents can perform well in a wide range of environments (even if the abstraction is an MDP), we can say that the agent is a candidate for AGI. Nevertheless, if we start our modeling of an environment as an MDP we loose the ability to model a major chunk of interesting environments \cite{Spaan2012}.

Hence in the rest of this chapter, we formulate a general history-based RL setup, where the agent can \emph{potentially} use the complete history of interaction to decide its actions. In the decision theory perspective, we allow the decision-making process to be a function of history, which can model \emph{almost every} possible decision problem \cite{Hutter2000}.

\section{Dualistic Setup}

In simple terms, we are interested to model the decision-making process under uncertainty with as little assumptions as possible.
In this work, we consider a dualistic framework, see \Cref{fig:ae-loop}, where there is a clear distinction between the \emph{agent} $\pi$ (the decision-maker) and the \emph{environment} $\mu$ (the process being controlled). In the dualistic setting the entities only affect each other through a defined ``communication'' interface. The internal working of each element (i.e.\ the agent and the environment) is not directly observable to each other except through the communication interface.\footnote{See \Cref{chap:conclusion} for more discussion on the other possible setups, e.g.\ embedded agents framework \cite{Orseau2012,Everitt2015}.}

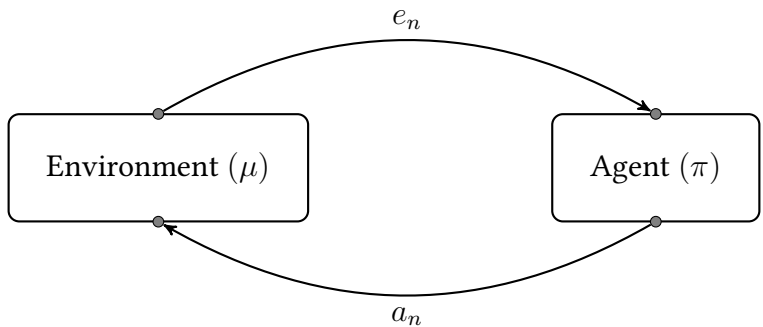
\begin{figure}[!]
    \centering
    \begin{tikzpicture}
        \node [outernode] (agent) {Agent $(\pi)$};
        \node[outernode, left=0.2\textwidth of agent] (env) {Environment $(\mu)$};
        \node[endpoint, above= -2.5pt of env] (or_env) {};
        \node[endpoint, below= -2.5pt of env] (a_env) {};
        \node[endpoint, below= -2.5pt of agent] (a_agent) {};
        \node[endpoint, above= -2.5pt of agent] (or_agent) {};
        \path (a_agent) edge[arrow,bend left] node[below]{$a_n$} (a_env);
        \path (or_env) edge[arrow, bend left] node[above]{$e_n$} (or_agent);
    \end{tikzpicture}
    \caption[The agent-environment interaction in a dualistic setup.]{The agent-environment interaction in a dualistic setup. At any time-step $n$ the agent $\pi$ receives a percept $e_n \in \OR$ from the environment $\mu$ and then it acts on the environment with an action $a_n \in \A$. In this setup, so far, there is no restriction on the percept and action generation processes. In the extremely general case both entities can produce the percept-action pair from a set of unique distributions at every time-step.}
    \label{fig:ae-loop}
\end{figure}

We assume that there \emph{always} exists a real-valued (partial) function that can guide the agent to the optimal behavior, i.e.\ the agent can learn to take the ``best'' action through this (reward) signal. This is called the reward hypothesis \cite{Glimcher2011}.

\begin{assumption}[Reward Hypothesis]\label{asmp:reward-hypothesis}
    There exists a real-valued (partial) reward function that can be \emph{optimized} to learn the \emph{(intuitive) optimal behavior}.
\end{assumption}

We argue that this hypothesis makes sense, and it may hold in a majority of domains. Actually, for most of the environments we can simply reward the agent once it has reached a situation that we \emph{want} it to get to, e.g.\ a winning end state of a chess game, the destination of a self-driving agent, a cup of hot coffee by the house-hold robot and so forth. The agent should be able to infer the required behavior if we can ``evaluate'' the final states\footnote{At this point, by the word ``state'' we denote the world configuration we end up with.} appropriately. For example, we may be interested only in the clean (state of the) room rather than how the agent actually cleans the room. Ideally, if the agent has damaged items (or cheated) in between we should not reward the agent. However, if we, the reward generators, can not distinguish between two ``good'' situations, e.g.\ two rooms cleaned by two different strategies, there is no reason for us to evaluate them differently. That is why the reward hypothesis is an assumption in our setup. Once we have a reward sequence, we assume that it is enough to optimize this signal for the agent to be aligned with the intentions of the ``designer''. There is a whole different body of research linked to the famous \emph{value-alignment problem} where the reward hypothesis is no longer true \cite{Everitt2018}. We do not consider the value-alignment issue in this work, because of \Cref{asmp:reward-hypothesis}.

So far, we have been using the term \emph{optimal behavior} to capture the intuitive notion of doing the ``best'' thing. Provided the reward hypothesis holds, we can informally state the (intuitive) optimal behavior.

\begin{definition}[Optimal Behavior]\label{def:opt-behavior}
    The (intuitive) \emph{optimal behavior} of a (G)RL agent is to \emph{keep getting} as \emph{much reward} as \emph{possible} from the environment as \emph{often} as \emph{possible}.
\end{definition}

The above definition is not yet strictly formal, hence labeled ``intuitive''. At this stage, there are many elements left unspecified in the definition. As will be the case in the following sections, there are a lot of subtleties involved when we start to formalize this intuitive notion of optimality. We will see later in the thesis that this intuitive notion of optimal behavior (sometimes) may not be a solution of the formal setup.

In the following, we start formalizing the setup. The environment-agent interaction loop in \Cref{fig:ae-loop} is formally a sequence of function evaluations. First we formally define the spaces where these functions live.

\section{Foundational Spaces}

The decision-making process for the agent and the environment is merely a process of choosing among the alternatives.\footnote{We say that the environment is of the same ``type'' as the agent. They both can be decision-makers, e.g.\ the opponents in a game can act as an environment for the (main) player.} In this section we only define the spaces for these alternatives. The decision-making process is laid down later.

\subsection{Action Space}

Let the agent have an allowable non-empty set of actions $\A$. Without loss of generality, we assume that the action-space is stationary, i.e.\ it does not change over time or under different situations.\footnote{So far, we have been using ``situation'' to vaguely mean the interaction-history up to the current time-step. This notion is crystallized shortly.} Any sequence of action-spaces $(\A_n)$ can be expressed as a single action-space, which is a union of all these spaces as $\A \coloneqq \cup \A_n$. At any time instance $n$ the agent effectively chooses an action $a_n$ only from the subset of actions $\A_n \subseteq \A$. Note that the agent needs not to know the exact subset $\A_n$ beforehand. It can try all actions, and the ``invalid'' actions $\A \setminus \A_n$ are either not possible or provide no feedback. We argue that this is a natural assumption. For example, a human being has the same set of ``actions'' in a form of the limb movements, but under different contexts different actions are ``available'' (or allowed). We can take other invalid actions, sometimes without any bad consequences, and the environment can ignore these actions. For instance, we can move our feet during a Chess game, but this ``invalid'' action, typically, does not influence the game, except when we accidentally trip the board over.

There are many ways one can model an invalid action. The invalid action $a \in \A \setminus \A_n$ may either carry a large penalty, e.g. tripping the Chess board may lead to a lost game, it could be mapped back to the set of valid actions $\A_n$, or could behave as a ``no-action'' action.

\begin{remark}[Default ``no-action'' Action]
    Sometimes, it is easy to state a problem where at some situations the agent has no action available. Maybe, the agent is ``thinking'' while the environment may transition to the next situation. We can model such situation by adding a default ``no-action'' action into the original action-space. Formally, it does not change the problem of decision-making. The agent can react when it has an available set of actions, and in between it can imagine/argue that it has been taking this ``no-action'' action. Such modeling is actually quite useful in many time-critical tasks where the agent has to react at every time-step. And, if it does not do so in time, this inaction might affect the agent adversely. In such situations, if the agent has this ``no-action'' action in the action-space then it can contemplate the effects of this inaction during the decision-making process.
\end{remark}

Without any loss of generality, we assume that the agent has this ``no-action'' action available in its action-space, if required.

\subsection{Percept Space}

At any time-step $n$, the environment produces a percept $e_n$ from a percept-space $\OR$. As the name suggests, this space is the set of possible ``sensory'' perceptions, e.g.\ the raw screen captures in Atari \cite{Mnih2015}, a board position of Chess or Go \cite{Silver2018a}, the images of the objects in a classification task \cite{Masi2019}, and so forth. The perceptions need not to be ``raw'' sensations, e.g.\ camera images or audio signals. The precept denotes any set of inputs received by the agent apart from the reward signal. Sometimes, it is easy to imagine that the reward is also a part of the percept. We call this way of expressing the reward signal as the \emph{reward embedding}. Some parts of this thesis uses this notion of reward embedding. However as we explain later, it does not matter much if the agent either observes a separate reward signal or it is embedded inside the percept-space. Under mild assumptions, both setups are formally equivalent.

\subsubsection{Reward Embedding}

In the case of reward embedding, we assume that the percept-space is a product of two spaces, an observation-space $\O$ and a reward-space $\R$, i.e.\ the percept is a tuple $e_n = (r_n \in \R, o_n \in \O) \in \OR$.

\begin{itemize}
    \iparadot{Observation Space}
    The observation part of the percept is sampled from the observation-space $\O$. It is the part of the percept without the reward embedding, which models the ``everything else'' observed by the agent apart from the reward. \todo{Expand on this section.}

    \iparadot{(Embedded) Reward Space}
    The reward signal is one of the most critical elements of the decision-making process in (G)RL. It quantifies the ``goodness'' of the most recent action performed by the agent. It measures the desirability of the action taken in the current situation, but it does not provide information about the ``best'' action at the instance, \emph{cf.}\ supervised learning, where the feedback from environment is the right action (or the class label) \cite{Masi2019}.

    \begin{assumption}[Bounded rewards]\label{asm:bounded_rewards}
        The rewards are bounded.
    \end{assumption}

    We assume rewards to be bounded.
    %
    The boundedness assumption, ubiquitous in the literature, intuitively means that the agent cannot be rewarded arbitrarily high (or low) at any time-step \cite{Sutton2018}. Therefore, the agent has to devise a plan to appropriately gain more rewards. Typically, the boundedness of the rewards is a \emph{necessary condition} to get a meaningful performance guarantee. If the agent can be arbitrarily worse off in the future, there might not be a way to come up with an ``optimal'' sequence of decisions under uncertainty \cite{Leike2016b}.
\end{itemize}

\begin{remark}[Finite Rewards are Sufficient]
    So far, we were able to avoid taking about the cardinality of the spaces. However, the reward-space $\R$ is a bit tricky. Apparently, it seems restrictive to assume that the reward set is either finite or countable. Nevertheless, even if the environment can dispense a real-valued reward $\R_\mu \subseteq \SetR$ the observation apparatus of an (implementable) agent can only produce a rational approximation, i.e.\ $\R \subseteq \SetQ$. Since the rationals are dense in the reals, the agent should not loose much \cite{Abbott2015}.

    \begin{proposition}[$\SetR \supseteq \R_\mu \approx \R \subseteq \SetQ$]\label{lem:r_rational}
        Let the environment rewards from $\R_\mu \subseteq \SetR$ but the agent can only observe a unique rational approximation of it from $\R \subseteq \SetQ$, then the approximation $\R \approx \R_\mu$ is ``good enough'' for \emph{any} performance measure.
    \end{proposition}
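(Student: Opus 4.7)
My plan is to exploit density of $\SetQ$ in $\SetR$ together with Assumption~\ref{asm:bounded_rewards}. First I would fix an approximation tolerance $\eps > 0$ and observe that, since $\SetQ$ is dense in $\SetR$, for every $r \in \R_\mu$ there is a $q \in \SetQ$ with $|r - q| < \eps$. Combined with boundedness, say $|r| \le R_{\max}$, this lets us restrict the codomain to a finite $\eps$-net $\R_\eps \subseteq \SetQ \cap [-R_{\max}, R_{\max}]$ and define the ``agent's observation map'' as the (unique) nearest-rational rounding $\rho_\eps : \R_\mu \to \R_\eps$. This produces a well-defined rational approximation stream $(q_n) = (\rho_\eps(r_n))$ satisfying $\sup_n |r_n - q_n| \le \eps$ uniformly along any interaction history.

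Next I would argue that any \emph{sensible} performance measure is (uniformly) continuous in the reward stream with respect to $\ell_\infty$, so that a uniformly small per-step perturbation yields an arbitrarily small performance gap. The clean model case to write out is the unnormalized $\g$-discounted return declared in the chapter outline: for $V(r_1 r_2 \dots) \coloneqq \sum_{n=1}^{\infty} \g^{n-1} r_n$ with $\g \in [0,1)$, one immediately has
\begin{equation*}
    \bigl| V(r_1 r_2 \dots) - V(q_1 q_2 \dots) \bigr|
    \le \sum_{n=1}^{\infty} \g^{n-1} |r_n - q_n|
    \le \frac{\eps}{1 - \g},
\end{equation*}
so by choosing $\eps$ as small as the implementing hardware permits, the induced error on the value (and hence on any comparison between policies) is driven below any prescribed threshold. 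I would then remark that essentially the same Lipschitz argument works for any performance measure that is a bounded-linear or Lipschitz functional of the reward stream (finite-horizon sums, average reward over a fixed window, etc.), which covers all measures considered later in the thesis.

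Finally, I would close by noting that the ``uniqueness'' requirement in the proposition is satisfied by fixing a deterministic tie-breaking rule in the rounding $\rho_\eps$, so the map $\R_\mu \to \R$ is a well-defined function and not merely a relation. The statement ``$\R \approx \R_\mu$ is good enough'' then formally becomes: for every $\eps > 0$ and every performance measure $J$ that is $L$-Lipschitz in $\ell_\infty$ over bounded reward streams, the agent operating on $(q_n)$ achieves $|J(r_{1:\infty}) - J(q_{1:\infty})| \le L\eps$, which can be made arbitrarily small.

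The main obstacle is precisely the informality of ``any performance measure''. My plan handles the natural cases (discounted, finite-horizon, bounded Lipschitz functionals), but a pathological non-continuous measure could in principle blow up under an arbitrarily small rounding. I would therefore frame the proposition as a statement about the class of performance measures that are (uniformly) continuous in the reward stream in the $\ell_\infty$ topology, and flag that this class covers every criterion actually used in this thesis, so restricting attention to it is not a loss of generality.
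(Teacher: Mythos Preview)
The paper does not actually prove this proposition: immediately after stating it, the text says ``We leave the proof of \Cref{lem:r_rational} out, as it is an easy consequence of approximating a real-valued function by the best rational approximation.'' So there is no detailed argument to compare against; the paper's ``proof'' is precisely the one-line density-of-$\SetQ$ hand-wave.

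Your proposal is the natural way to make that hand-wave precise, and it matches the paper's intent: use density of $\SetQ$ in $\SetR$ plus \Cref{asm:bounded_rewards} to define a deterministic rounding map, then propagate the uniform per-step error through the performance measure. The $\g$-discounted calculation you write out is exactly the ``the GRL framework'' case fixed at the end of the chapter, so it covers what the thesis actually uses. Your observation that ``any performance measure'' must really mean ``any $\ell_\infty$-continuous (or Lipschitz) functional of the reward stream'' is a fair qualification; the paper's informal phrasing glosses over this, and your restriction is not a loss relative to anything the thesis does later. In short, your write-up is more careful than the paper's own treatment, not less.
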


    The above proposition  hints that a countable reward-space is sufficient in any implementable (G)RL/AGI agent. We leave the proof of \Cref{lem:r_rational} out, as it is an easy consequence of approximating a real-valued function by the best rational approximation \cite{Abbott2015}.

    Because of \Cref{lem:r_rational} and \Cref{asm:bounded_rewards}, we can easily assume that $\R \coloneqq [0,1] \cap \SetQ$.
    Furthermore to argue for a finite reward set, for any $\eps > 0$ and bounded $\R$, we may define $\R_\eps$ as
    \beq
    \R_\eps \coloneqq \left\{ \floor*{\frac{r}{\eps}} \eps \ \middle| \ r \in \R\right\}
    \eeq
    where it is clearly the case that $\abs{\R_\eps} < \infty$. Using a similar argument as for \Cref{lem:r_rational}, any problem modeled with $\R$ is \emph{approximately} the same (with a slack of order $\eps$) problem modeled using $\R_\eps$ instead of $\R$.
\end{remark}

\subsubsection{Reward Process}

The tuple structure of the percept $e_n = (r_n, o_n)$ is not the only way to get a GRL setup. The other possible option is to assume that there exists a reward function $r : \OR \to \SetR$. This function is external to the agent that evaluates the current situation and dispatches the reward. We get an equivalent GRL setup if we put this external reward function back into the environment. We expand more on this topic in \Cref{sec:reward-process}, where we use a much more general notion for the reward process.

\subsection{Finiteness}

For brevity, throughout this work, we assume that the percept and action spaces are finite. The size of the action space $\A$ is denoted by $A$.

\begin{assumption}[Finiteness]\label{asm:finite}
    The action and percept spaces\footnote{If the rewards are embedded in the percept-space this implies that the reward spaces is finite too.} are finite.
\end{assumption}

Note that the above finiteness assumption is mostly for simplicity of exposition. The results in this work can be extended to countable or even continuous spaces. The summations over the percept and action spaces involve bounded, non-negative, and convergent series, so the sums can easily be replaced by a countable summation or an integral for that matter. However, the involved formalities over weigh the advantages.

Moreover, any physical implementation of (G)RL/AGI agents would eventually have this finite structure. However, this should not be confused with the finite-state assumption of standard RL. Even with finite action and percept spaces, a GRL agent can have an infinite number of ``states''.

\begin{remark}[Continuous spaces]
    We formulate the GRL problem in topologically discrete spaces. We purposefully avoid continuous spaces, because usually the extension to the continuous case is mere a matter of technicalities. The usual conditions to handle a continuous space are either $a)$ a (countable) parameterization of the space, or $b)$ to assume some convenient structure on the spaces to allow for a dense subset, e.g.\ assuming the space to be Polish or separable. Moreover, typically it is assumed that the spaces are locally smooth or only have bounded derivative, e.g.\ Lipschitz/Holder continuity holds. See \citet{Hasselt2012} for a survey of RL in continuous state-action space.
\end{remark}

A countable action-space formalization needs a bit of care when the rewards are \emph{unbounded}. There might be some adverse effects if the decision-making process continues for infinite number of steps. The agent may be rational on individual decisions, but it may still suffer a great loss in total. See \citet{Arntzenius2004a} for some examples of such decision problems.

\subsection{History Space}

By design, the agent-environment interaction is sequential, see \Cref{fig:ae-loop}. There are two ways to start the interaction: 1) it can either be agent, or 2) environment initiated. Both variations are prevalent in the literature, and have their own (de)merits in terms of notational convenience \cite{Hutter2000,Sutton2018}. However, both variations are logically equivalent. In this work, we prefer to use the environment initiated interaction. The agent first receives the percept and then they respond. This is closer to the natural setup in the reality. We do not start a systems with a ``random'' uninformed action. Mostly, there is a well-defined starting configuration of the system, i.e.\ the initial percept for the agent to react to.

For any time-step $n \in \SetN$, we define the set of interaction histories of length $n$ as
\beq
\H_n \coloneqq (\OR \times \A)^{n-1} \times \OR
\eeq
where the exponent denotes the number of Cartesian products of the terms enclosed. The Kleene star closure of this set is called the set of all \emph{finite} histories. We express this set as
\beq\label{eq:history}
\H \coloneqq \bigcup_{n=1}^\infty \H_n
\eeq

The above set of finite histories can be regraded as the ``state'' of the system. We will expand on this idea later, but it is essential to highlight the fact that even with \Cref{asm:finite} the resultant ``set of states'' $\H$ is countably infinite.
So far, we have only discussed a finite interaction history, which is essential to talk about the interaction at any instance of time. However by assumption, the interaction between the agent and environment never stops. Hence, theoretically the interaction history is infinite for any single run. We denote the set of such \emph{infinite} histories as $\H_\infty$. Any pair of an agent and environment jointly samples one member $\w$ from $\H_\infty$. This set of infinite histories form the natural \emph{sample-space} of the problem. An agent $\pi$ and environment $\mu$ induce a probability measure $\mu^\pi$ over (the $\sigma$-algebra of) the sample-space.

In the next section, we formulate this measurable sample-space, which is at the core of a GRL setup. Moreover, we provide the formal definitions of the agent and environment as a joint measure over this space.

\section{Actors and Interaction}

Up to this point, we have defined the sets $\A, \O, \R, \OR, \H$, and $\H_\infty$. These sets are needed for the formal definitions of the agent and environment in a GRL setup. In GRL, we allow the agents (and the environment) to use the complete history of interaction to decide the next action (or percept), if required. Formally, any agent $\pi$ is a mapping from finite histories to the action-space, which can potentially be stochastic:
\beq
\pi : \H \to \Dist(\A)
\eeq
where, recall, $\Dist(X)$ denotes a set of probability distributions over any finite set $X$.
Similarly, the environment $\mu$ is a mapping from a history-action tuple to a distribution over the next percept:
\beq
\mu : \H \times \A \to \Dist(\OR)
\eeq

The key fact about the above definitions is that both $\pi$ and $\mu$ are not restricted in any way.\footnote{Other than by the choices of $\A$, $\O$ and $\R$.} We have only specified the types of these functions. They can potentially depend on the complete interaction history not just the last percept, which is usually the assumption in standard RL \cite{Sutton2018}.
The interaction of the agent and environment generates an infinite history. We will show in \Cref{sec:measure-space} that $\mu$ and $\pi$ induce a joint \emph{measure} on the space of infinite histories.

\subsection[Measure Theoretic Basis]{Measure Theoretic Basis%
\footnote{We assume a basic understanding of measure theory for this section. See \citet{Stein2019} for a refresher. This section establishes the notion of conditional expectation and probability over the measure space. However, the section can be skipped without effecting the continuity, as in the end we recover the standard ``elementary'' notion of conditional expectation and distribution. We decided to keep this exposition for the readers who are interested to look ``under the hood''.}
}

This section provides a sound measure theoretic basis for our GRL setup. We build the framework starting from the very basic measurable space structure.

\subsubsection{Measurable Space} Let us define the \emph{measurable} space over the set of infinite histories $\H_\infty$, which constitutes the \emph{sample-space} of our setup. There are many choices of $\sigma$-algebras one can put on the space, but the following is a natural option. A $\sigma$-algebra (which is a set of \emph{events}) roughly\footnote{In our setup the $\sigma$-algebra is countably generated, so it exactly represents the information set. However, this is not always the case. See \citet{Herves-Beloso2013} for a discussion of this topic.} quantifies the notion of information available at any time-step. Arguably, the complete (finite) history of the interaction at any time instance is the maximum information available up to that point. Any function of this history can only reduce (or abstract) this information. We quantify this information using the \emph{cylinder sets}.

A cylinder set is a subset of the sample-space with the same prefix. In our setup, a cylinder set contains all infinite histories starting from the same finite history. For any finite history $h \in \H$ and action $a \in \A$, we define the cylinder set as follows:
\beq\label{eq:cyl-ha}
\cyl{ha} \coloneqq \{\w \in \H_\infty : ha \sqsubseteq \w \}
\eeq

Moreover, for notational convenience, we express the set of all such cylinder sets as
\beq
\cyl{\H_n \times \A} \coloneqq \{\cyl{ha} : ha \in \H_n \times \A\}
\eeq
for any time-step $n$.
Let $\F_n$ be a sub-$\sigma$-algebra up to and including time-step $n$, and $\F$ be the union $\sigma$-algebra of all such sub-$\sigma$-algebras:
\beq
\F_n \coloneqq \sigma\left(\cyl{\H_n \times \A} \right) \text{ and } \F \coloneqq \sigma\left(\bigcup_{n=1}^\infty \F_n \right)
\eeq
where $\sigma(X)$ denotes the smallest $\sigma$-algebra that includes set $X$, which is the intersection of all sigma algebras containing $\cyl{\H_n \times \A}$.
Using the above choice of $\sigma$-algebra, we get the following (filtered) measurable space for our GRL framework.
\beq
\langle \H_\infty, \F, \{\F_n\}\rangle
\eeq

At any history $h_n$ and action $a_n$ the cylinder set $\cyl{h_na_n} \in \F_n$ quantifies the maximum possible information one can have at $h_na_n$. This property of sub-sigma-algebra $\F_n$ provides a key relationship between the functions of finite histories-action pairs and stochastic processes adapted to the filtration.

\begin{proposition}[{$X \iff g_X$}]\label{prop:XiffgX}
    If $X : \H_\infty \to \SetR^\SetN$ is any stochastic process \emph{adapted to the filtration} then there exists a corresponding function of finite histories and actions $g_X : \H \times \A \to \SetR$ such that
    \beq
    X_n(\w) = g_X(h_n(\w)a_n(\w))
    \eeq
    for all infinite histories $\w \in \H_\infty$ and time-steps $n$. The converse also holds if all $\F_n$ are discrete sigma algebras.
\end{proposition}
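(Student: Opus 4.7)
The plan is to exploit the very explicit structure of the filtration: each $\F_n$ is generated by the family of cylinders $\cyl{\H_n \times \A}$, and these cylinders are pairwise disjoint and cover $\H_\infty$ (two infinite histories share the cylinder $\cyl{h_na_n}$ iff their first $n$ percept-action pairs coincide). Under \Cref{asm:finite} the indexing set $\H_n \times \A$ is countable, so $\F_n$ is a discrete $\sigma$-algebra whose atoms are exactly the cylinders $\cyl{h_na_n}$. This atomic description is what will translate measurability into a plain pointwise identity.

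For the forward direction, first I would fix $n$ and observe that any $\F_n$-measurable real-valued function must be constant on every atom of $\F_n$: if $\w,\w'\in \cyl{h_na_n}$ with $X_n(\w)\neq X_n(\w')$, one could separate them by a Borel set in $\SetR$ whose preimage would split the atom, contradicting atomicity. Hence for each $h_na_n \in \H_n\times\A$ there is a well-defined value $c_{h_na_n}\in\SetR$ such that $X_n$ equals $c_{h_na_n}$ on $\cyl{h_na_n}$. I then set $g_X(h_na_n) := c_{h_na_n}$. Doing this for every $n$ produces a single function $g_X : \H \times \A \to \SetR$, with no overlap issue since $\H_n \cap \H_m = \emptyset$ for $n\neq m$. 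By construction $X_n(\w) = g_X(h_n(\w)a_n(\w))$ for all $\w$ and $n$, which is the desired identity.

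For the converse, given any $g_X : \H\times\A\to\SetR$, define $X_n(\w) := g_X(h_n(\w)a_n(\w))$. To check $X_n$ is $\F_n$-measurable, take any Borel set $B\subseteq\SetR$ and write
\[
X_n^{-1}(B) \;=\; \bigcup_{\substack{h_na_n\in\H_n\times\A \\ g_X(h_na_n)\in B}} \cyl{h_na_n}.
\]
This is a countable union of generators of $\F_n$, hence lies in $\F_n$. Thus $X$ is adapted to $\{\F_n\}$, and the two maps $X\leftrightarrow g_X$ are mutual inverses.

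The main obstacle, and really the only subtle point, is the step ``an $\F_n$-measurable function is constant on atoms.'' This is where the discreteness of $\F_n$ (granted by the countability of $\H_n \times \A$, which in turn comes from \Cref{asm:finite}) is essential, and it is precisely the hypothesis flagged in the statement's converse caveat. If $\F_n$ had non-atomic parts, a function could be $\F_n$-measurable without factoring through the map $\w\mapsto h_n(\w)a_n(\w)$, and the correspondence with functions on $\H\times\A$ would fail. Given our finiteness assumption, however, the argument is clean and essentially reduces to the observation that the cylinders form a countable measurable partition of $\H_\infty$.
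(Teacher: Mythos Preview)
Your proof is correct and is a fleshed-out version of the paper's one-line argument: $\F_n$-measurability forces constancy on each cylinder $\cyl{h_na_n}$, and conversely any function of $h_na_n$ has cylinder-union preimages. One minor misattribution in your closing commentary: the countability/discreteness hypothesis is needed for the \emph{converse} (so that the union in your preimage display is countable and hence lies in $\F_n$), not for the forward ``constant on atoms'' step --- since $\F_n$ is generated by the cylinder partition, every element of $\F_n$ is automatically a union of cylinders regardless of cardinality, so the cylinders are always atoms and the forward direction needs no extra hypothesis.
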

\begin{proof}
    The proof is trivial, and directly follows from the fact that $X_n \in \F_n$ for each time-step $n$. Due to this measurability constraint on $X_n$, any realization $X_n(\w)$ depends only on $h_n(\w)$ and $a_n(\w)$. The converse holds since every function of $h_n a_n$ is $\F_n$ measurable if $\F_n$ is discrete.
\end{proof}

Usually finite (and even countable) percept and action spaces are equipped with discrete sigma-algebra, so the converse holds for all processes considered in this thesis.
The correspondence between the adapted stochastic processes on the measure space and the functions of finite histories gives a distinctive characteristic to our GRL setup. As discussed earlier, the main power of our GRL framework comes from the fact that both the agent and the  environment can use the complete history of interaction, if they need, which encompasses almost every class of problems prevalent in the literature \cite{Hutter2000}.

An important property of the cylinder sets $\cyl{\H_n \times \A}$ is that they form a \emph{countable partition} of the sample-space $\H_\infty$ for every time-step $n$.

\begin{proposition}[Cylinder Set Partition]\label{prep:h-partition}
    The set of cylinder sets corresponding to any countable, prefix-free\footnote{A set $X$ is prefix-free if for any $x, y \in X$, $x \sqsubseteq y$ implies $x = y$.}, and complete\footnote{A set $X \subseteq Y$ is complete (for $Y$) if for every $y \in Y$ there exists an $x \in X$ such that $x \sqsubseteq y$.} set forms a \emph{countable partition} of the sample-space.
\end{proposition}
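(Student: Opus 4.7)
The plan is to verify the three defining properties of a countable partition separately: countability, pairwise disjointness, and covering. Let $X$ denote the countable, prefix-free, complete set in question, viewed as a subset of $\H \times \A$ (so that $\cyl{x}$ is defined for each $x \in X$, with completeness taken relative to $\H_\infty$ in the sense of the paper's footnote, i.e., every infinite history has some element of $X$ as a prefix). Countability of $\{\cyl{x} : x \in X\}$ is immediate from the countability of $X$, so that part is essentially free.

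For pairwise disjointness, I would argue by contradiction. Suppose $x, y \in X$ are distinct but $\cyl{x} \cap \cyl{y} \neq \emptyset$, and pick some $\w$ in the intersection. Then both $x \sqsubseteq \w$ and $y \sqsubseteq \w$, and since $\w$ is a single infinite string, the two prefixes are comparable: without loss of generality $|x| \le |y|$, which forces $x \sqsubseteq y$. The prefix-free assumption then gives $x = y$, contradicting distinctness.

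For covering, I would fix an arbitrary $\w \in \H_\infty$ and apply completeness directly: there exists $x \in X$ with $x \sqsubseteq \w$, hence $\w \in \cyl{x}$. Thus $\H_\infty = \bigcup_{x \in X} \cyl{x}$.

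I do not expect a real obstacle here; the proposition is essentially a bookkeeping observation that each of the three hypotheses on $X$ maps, one-to-one, onto one of the three clauses in the definition of a countable partition. The only subtle point worth spelling out is the comparability step inside the disjointness argument (two finite prefixes of the same infinite string are always $\sqsubseteq$-comparable), since this is what lets the prefix-free hypothesis do its work.
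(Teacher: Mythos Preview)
Your proposal is correct and follows essentially the same approach as the paper's proof sketch: both map the three hypotheses (countable, prefix-free, complete) directly onto the three clauses of a countable partition. Your version is slightly more explicit in spelling out the comparability step for disjointness, which the paper leaves implicit.
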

\begin{proof}[Proof sketch]
    The proof trivially follows from the definition of cylinder sets. Let $X$ be a countable, complete, and prefix-free set. For any two $x \neq \d x \in X$, the cylinder sets $\cyl{x}$ and $\cyl{\d x}$ are disjoint, as $x$ and $\d x$ cannot be a prefix of each other. Moreover, the completeness of $X$ implies that the union of the cylinder sets is the sample-space: $\cup_{x \in X} \cyl{x} = \H_\infty$.
    The partition, the set of cylinder sets, is countable by the countable assumption on $X$.
\end{proof}

An immediate consequence of \Cref{prep:h-partition} is that a set of cylinder sets corresponding to either $\H_n$ and $\H_n \times \A$ form a countable partition of $\H_\infty$ for any time-step $n$ because both sets are countable, complete, and prefix-free. This is also obvious.

\subsubsection{Measure Space} \label{sec:measure-space}
When any agent $\pi$ and environment $\mu$ interact they induce a (probability) measure
\beq
\mu^\pi: \F \to [0,1]
\eeq
on the measurable space defined above. However, this measure is not trivial. The measure $\mu^\pi$ is a function of events in $\F$ which are subsets of infinite history space $\H_\infty$, whereas the agent and the environment are functions of only finite histories. Luckily, our choice of sigma algebra based on the cylinder sets of finite history-action pairs allows us to produce a unique measure $\mu^\pi$ for each agent $\pi$ and environment $\mu$.

Intuitively, the measure $\mu^\pi(\cyl{h_na_n})$ for any $h_na_n$-pair should coincide with the probability of the agent-environment interaction to produce this finite history-action pair. Therefore, we define the measure on the cylinder sets as:
\bqa\label{eq:mu-mu-rel}
\mu^\pi(\cyl{h_na_n})
&\coloneqq
\mu(e_1) \left(\prod_{m=1}^{n-1} \mu(e_{m+1} \| h_m a_m)\pi(a_m\|h_m)\right)\pi(a_n \| h_n) \\
\mu^\pi(\cyl{h_n})
&\coloneqq
\mu(e_1) \left(\prod_{m=1}^{n-1} \mu(e_{m+1} \| h_m a_m)\pi(a_m\|h_m)\right)
\eqa
where $\mu(e_1)$ is the probability of initial percept $e_1$.

\begin{remark}[Alternate Definition]
    We can have an equivalent (and cleaner, but recursive) condition for the cylinder set measures as:
    \bqa
    \mu^\pi(\cyl{e}) &\coloneqq \mu(e)\\
    \mu^\pi(\cyl{ha}) &\coloneqq \pi(a\|h) \mu^\pi(\cyl{h}) \\
    \mu^\pi(\cyl{hae'}) &\coloneqq \mu(e'\|ha) \mu^\pi(\cyl{ha})
    \eqa
    for all $e, e' \in \OR$, $h \in \H$ and $a \in \A$.
\end{remark}

An important and easily verifiable property of this choice is that it satisfies the additivity of measures:
\beq
\mu^\pi(\cyl{h_n}) = \mu^\pi(\cup_{a_n} \cyl{h_na_n}) = \sum_{a_n} \mu^\pi(\cyl{h_na_n})
\eeq
that is, the measure of disjoint sets is sum of measures of the individual sets. Therefore, by Carathéodory's extension theorem there exists a unique extension of $\mu^\pi$ (which, for brevity, we also denote by $\mu^\pi$) over the sigma-algebra $\F$.
Unsurprisingly, the resultant \emph{measure} space is denoted by the tuple
\beq
\langle \H_\infty, \F, \{\F_n\}, \mu^\pi\rangle
\eeq

Now, we establish the notion of conditional expectation and conditional distribution on this measure space.

\subsubsection{Conditional measure and expectation}

Since the measure $\mu^\pi$ is well-defined, the conditional measure is given as follows:
\beq\label{eq:conditional-dist}
\mu^\pi(A|B) \coloneqq \frac{\mu^\pi(A \cap B)}{\mu^\pi(B)}
\eeq
for any $A, B \in \G \subseteq \F$. For a succinct representation, we sometimes abuse the notation and drop the cylinder notation from the measure. That is, we express $\mu^\pi(ha) \coloneqq \mu^\pi(\cyl{ha})$ and $\mu^\pi(h) \coloneqq \mu^\pi(\cyl{h})$ for any $ha$ history-action pair.

To define a notion of conditional expectation, we start by defining the \emph{expected} value of any \emph{Lebesgue integrable}\footnote{In this work we exclusively deal with Lebesgue integrable random variables.} (random) function $X: \H_\infty \to \SetR$ as
\beq
\E_\mu^\pi[X] \coloneqq \int Xd\mu^\pi
\eeq
where the right hand side is the \emph{Lebesgue integral} of $X$.
If $X$ is an \emph{adapted} stochastic process, i.e.\ $X_n \in \F_n$ for all $n$, then we recover the elementary notion of expectation as
\bqan\label{eq:xn-expectation}
\E_\mu^\pi[X_n]
&\overset{(a)}{=} \sum_{h_na_n} \int X_n\ind{\cyl{h_na_n}}d\mu^\pi \\
&\overset{(b)}{=} \sum_{h_na_n} g_X(h_na_n) \int \ind{\cyl{h_na_n}}d\mu^\pi \\
&\overset{(c)}{=} \sum_{h_na_n} g_X(h_na_n) \mu^\pi(\cyl{h_na_n}) \\
&\equiv \sum_{h_na_n} g_X(h_na_n) \mu^\pi(h_na_n) \numberthis
\eqan
where $(a)$ follows from the fact that the cylinder sets at time-step $n$ partition the sample space, \Cref{prep:h-partition}, $(b)$ is due to \Cref{prop:XiffgX} and $(c)$ is true by definition. Note that the above relationship is also true for any $\F_n$-measurable random variable.

In the most general terms, the conditional expectation is a random variable with the following specifications:

\begin{definition}[Conditional Expectation Given a Sigma-Algebra]
The conditional expectation of any random variable $X: \H_\infty \to \SetR$ and given a sub-$\sigma$-algebra $\G \subseteq \F$ is a random variable $Y$ that satisfies the following conditions:
\begin{enumerate}[noitemsep]
    \item $Y \in \G$, i.e.\ $Y$ is $\G$-measurable, and
    \item $\E_\mu^\pi\left[Y\ind{A}\right] = \E_\mu^\pi\left[X\ind{A}\right]$ for all $A \in \G$.
\end{enumerate}
\end{definition}

Fortunately, the cylinder sets provide enough structure on the measure space that we can use (or recover) the ``elementary'' notion of conditional expectation.
Recall that $\F_n$ is a countably generated sigma-algebra, and by \Cref{prep:h-partition} the family of cylinder sets $\cyl{\H_n \times \A}$ forms a partition of $\H_\infty$. Therefore,
\beq
\E_\mu^\pi[X|\F_n](\w) \coloneqq \E_\mu^\pi[X|\cyl{ha}] = \frac{\E_\mu^\pi[X\ind{\cyl{ha}}]}{\mu^\pi(\cyl{ha})}
\eeq
where $h \in \H_n$ and $ha \sqsubseteq \w$. So, we only need the conditional expectation on the cylinder sets to properly define the conditional expectation for any sub-sigma-algebra $\F_n \subseteq \F$.

Let $X$ be $\F_{n+m}$-measurable, $ha \sqsubseteq \w$ and $\abs{h} = n$. The conditional expectation of such random variables can also be expressed as
\bqan
\E_\mu^\pi[X|\F_n](\w)
&= \frac{\E_\mu^\pi[X\ind{\cyl{ha}}]}{\mu^\pi(\cyl{ha})} \\
&\overset{(a)}{=} \sum_{h_{n+m} a_{n+m}} g_X(h_{n+m}a_{n+m})\fracp{\mu^\pi(\cyl{h_{n+m} a_{n+m}}\cap \cyl{h a})}{\mu^\pi(\cyl{ha})} \\
&\overset{(b)}{=} \sum_{h_{m} a_{m}} g_X(hah_{m}a_{m})\fracp{\mu^\pi(\cyl{hah_{m} a_{m}})}{\mu^\pi(\cyl{ha})} \\
&\equiv \sum_{h_{m} a_{m}} g_X(hah_{m}a_{m})\mu^\pi(h_ma_m|ha)
\eqan
where $(a)$ follows form similar steps as in \Cref{eq:xn-expectation} and $(b)$ is the result of the intersection of the cylinder sets.

We can also define the notion of conditional expectation for an arbitrary event in terms of the conditional expectation on the cylinder sets.

\begin{definition}[Conditional Expectation Given an Event]\label{def:cond-exp-event}
    Since $\F_n$ is generated by a countable partition (\Cref{prep:h-partition}) for any time-step $n$, the conditional expectation of any random variable $X: \H_\infty \to \SetR$ given any event $A \in \F_n$ is defined as
    \bqan
    \E_\mu^\pi\left[X|A\right]
    &\coloneqq \frac{\E_\mu^\pi[X\ind{A}]}{\mu^\pi(A)} \\
    &\overset{(a)}{=} \sum_{ha \in \H_n \times \A} \fracp{\E_\mu^\pi\left[X\ind{ \cyl{ha} \cap A}\right]}{\mu^\pi(A)} \frac{\mu^\pi(\cyl{ha} \cap A)}{\mu^\pi(\cyl{ha} \cap A)} \\
    &\overset{(b)}{=} \sum_{ha \in \H_n \times \A} \fracp{\E_\mu^\pi\left[X\ind{ \cyl{ha} \cap A}\right]}{\mu^\pi(\cyl{ha} \cap A)}\mu^\pi(\cyl{ha} | A) \\
    &= \sum_{ha \in \H_n \times \A} \E_\mu^\pi\left[X|\cyl{ha} \cap A\right]\mu^\pi(\cyl{ha} | A)\numberthis
    \eqan
    where $(a)$ is true because the cylinder sets from a partition and $(b)$ uses the definition of conditional distribution from \Cref{eq:conditional-dist}.
\end{definition}

As with the conditional distribution, we also abuse the notion for conditional expectation and express the cylinder set conditions as
\beq
\E_\mu^\pi[X|\F_n](\w) \eqqcolon \E_\mu^\pi[X|ha]
\eeq
where $h \in \H_n$ and $ha \sqsubseteq \w$.

Moreover, throughout this work, whenever we say that a result holds with probability 1 (w.p.1) or almost surely, we mean that the statement holds with $\mu^\pi$-probability 1:
\beq
\mu^\pi\Big(\{\w \in \H_\infty : \text{the result does not hold on } \w\}\Big) = 0
\eeq

That is, in practice the agent-environment interaction does not produce (or sample) an interaction history where the result does not hold.

\subsection{Properties of History-based Functions}

Let $X$ be any \emph{measurable} function (or random variable) defined over the measure space as:
\beq
X : \H_\infty \to Y^\SetN
\eeq
which may be a (stochastic) history-based function. We express a couple of important relationships about the conditional expectations. First, we show that a conditional expectation given a history can be expressed as a $\pi$-expected conditional expectation:
\bqan\label{eq:htoha}
\E_\mu^\pi\left[X|h\right]
&= \frac{\E_\mu^\pi\left[X\ind{\cyl{h}}\right]}{\mu^\pi(\cyl{h})} \\
&\overset{(a)}{=} \frac{1}{\mu^\pi(\cyl{h})} \int_{\H_\infty} X\ind{\cup_{a} \cyl{ha}} d\mu^\pi\\
&\overset{(b)}{=} \sum_{a}\frac{\mu^\pi(\cyl{ha})}{\mu^\pi(\cyl{h})}  \left(\frac{\int_{\H_\infty} X\ind{\cyl{ha}} d\mu^\pi}{\mu^\pi(\cyl{ha})}\right)\\
&\overset{(c)}{=} \sum_{a}\pi(a\|h) \E_\mu^\pi[X|ha] \numberthis
\eqan
where $(a)$ is due to the fact that $\cyl{h} = \cup_{a}\cyl{ha}$, $(b)$ holds because cylinder sets are disjoint, so $\sum_{a} \ind{\cyl{ha}} = \ind{\cyl{h}}$ and $(c)$ from \Cref{eq:mu-mu-rel} and \Cref{def:cond-exp-event}. Second, we express the similar relationship in the other direction.
\bqan
\E_\mu^\pi\left[X|ha\right]\label{eq:hatoh}
&= \frac{\E_\mu^\pi\left[X\ind{\cyl{ha}}\right]}{\mu^\pi(\cyl{ha})} \\
&\overset{(a)}{=} \frac{1}{\mu^\pi(\cyl{ha})} \int_{\H_\infty} X\ind{\cup_{e'} \cyl{hae'}} d\mu^\pi\\
&\overset{(b)}{=} \sum_{e'}\frac{\mu^\pi(\cyl{hae'})}{\mu^\pi(\cyl{ha})}  \left(\frac{\int_{\H_\infty} X\ind{\cyl{hae'}} d\mu^\pi}{\mu^\pi(\cyl{hae'})}\right)\\
&= \sum_{e'}\mu(e'\|ha) \E_\mu^\pi[X|hae']\numberthis
\eqan
where $(a)$ is due to the fact that $\cyl{ha} = \cup_{e'}\cyl{hae'}$ and $(b)$ holds because cylinder sets are disjoint, so $\sum_{e'} \ind{\cyl{hae'}} = \ind{\cyl{ha}}$. The above relationships hold for any random variable on the space. However, in the next section we specialize these relations to an important stochastic process that provides ``goals'' for the (G)RL agent.

\section{Goals and Targets}\label{sec:goals}

So far, we have purposefully side stepped the question of how to check (or decide) whether an agent is acting the ``right way'' in the environment. We settle this question in this section under the general heading of goals. Historically, this topic has turned out to be quite involved. There are many ways we can define a goal for an agent, but there is no single best answer for this, yet \cite{Leike2016b}. However, one thing is for sure that the agent should not try to greedily maximize the \emph{immediate} reward. The whole idea of dynamic programming and RL is built around the multi-step optimization \cite{Bertsekas1996,Sutton2018}. It is easy to imagine many simple environments where a goal of ``greedy one-step optimization'' does not lead to the ``right'' behavior, e.g.\ Chess \cite{Silver2018a} and Heaven-hell domains \cite{Hutter2000} demand long-term optimization, while greedy optimization leads to a ``poor'' performance.

However, if the agent has to do \emph{long-term} optimization then how should it prefer one reward sequence over the other? In other words which reward sequence should have higher ``utility'' for the agent? In the following, we formally answer this question, albeit \emph{not} uniquely!

\subsection{Reward Process}\label{sec:reward-process}

We provide a general structure for rewarding the agent. We allow for an unrestricted (but bounded) reward signal. The key property of the reward signal is that it is available immediately after taking the action. We formulate this as a filtered stochastic process.
Let the reward process $\{R_n: n \geq 1\}$ be a stochastic process:
\beq
R : \H_\infty \to \R^\SetN
\eeq
where $R_n$ is $\F_n$-measurable, i.e.\ $R_n \in \F_n$, for all time-steps $n$. That means, the reward process is \emph{adapted to the filtration}. The reward set $\R$ can be any bounded subset of the reals. In this work, we assume $\R \coloneqq [0,1] \subseteq \SetR$. The only necessary condition is the boundedness of the rewards. We can (re)scale the rewards from any $r \in [r_\min, r_\max]$ to $\t r \coloneqq \frac{r - r_\min}{r_\max - r_\min} \in [0,1]$ without changing the decision-making process \cite{Sutton2018}.

\begin{remark}[Reward Embedding]
    We only assume bounded rewards. However, if we allow the rewards to be embedded into the percept-space as $e_n = (o_n, r_n)$ then we also require $\R$ to be countable.
\end{remark}

As discussed above, the agent should not try to maximize the immediate reward, but some longterm sequence of rewards, which we call \emph{return} (also known as gain or utility).

\subsection{Return Process}

Let us assign a return random variable (as a proxy for the ``usefulness'' of a sequence of rewards) as
\beq
G : \H_\infty \to  \left(\SetR \cup \{\infty\}\right)^\SetN
\eeq
that is, $\{G_n : n \geq 1\}$ is a stochastic process. We allow $G_n$ to be infinite for some infinite histories and time-steps. However, for it to serve as a meaningful and well-defined notion of utility for (G)RL, it has to be finite almost surely for every policy $\pi$ and environment of interest $\mu$.

\begin{definition}[Admissible]
    A return stochastic process $\{G_n : n \geq 1\}$ is \emph{admissible} if $\sup_{\pi} \E_\mu^\pi[G_n] < \infty$ for all time-steps $n$.
\end{definition}

It is important to note that $G_n$ can be infinite even if the instantaneous rewards are bounded, i.e.\ bounded rewards are not sufficient for the admissibility of $\{G_n:n \geq 1\}$.

In the following subsection, we collect a variety of possible ways to specify goal for an agent.
As in standard RL, there are two major devisions in GRL when it comes to specify goals of the agent: a discounted and an undiscounted (G)RL.

\subsubsection{Undiscounted (G)RL}

The undiscounted formulation of (G)RL is pretty natural. However, it is difficult to formulate the problem for a broad class of environments using the undiscounted returns \cite{Mahadevan1996}. There are many different ways to define returns (or utilities) under the undiscounted umbrella.

\begin{itemize}
    \item \paradot{Infinite Sum of Rewards} In the most natural form, the agent can simply sum all future rewards. The future return $G_n(\w)$ on any $\w \in \H_\infty$ at time-step $n$ is defined as follows:
    \beq
    G_n(\w) \coloneqq \sum_{m = 1}^{\infty} R_{n+m}(\w)
    \eeq

    However, as required above, this criterion only makes sense if the sum is finite almost surely for every policy in the environments of interest. This might be a useful and objective criteria in a \emph{restrictive} set of environments where the rewards are absolutely summable for every infinite history \cite{Legg2007a}. For example, it is true in domains where there is an \emph{absorbing} ``state'' with a reward zero, and the state is reached almost surely for every policy \cite{Bertsekas1996}.

    \item \paradot{Truncated Average of Rewards} A general way to assure summability the rewards is to truncate the sum. The truncation length $T$ is known as the \emph{horizon} of the decision problem. The horizon is a finite number of steps up to which the agent should optimize the rewards:
    \beq
    G_n(\w) \coloneqq \frac{1}{T_n(\w)}\sum_{m = 1}^{T_n(\w)} R_{n+m}(\w)
    \eeq
    where $T$ is a stochastic process as $T: \H_\infty \to \left(\SetN \cup \{\infty\}\right)^\SetN$ and $T_n \in \F_n$ for every $n$. This formulation covers the standard finite horizon and episodic RL \cite{Bertsekas1996}.\footnote{The episodic formulation requires an extra condition to \emph{reset} the environment at the end of an episode.} It also includes the case where the horizon can be a function of ``state''. The horizon random process can be generated from a separate process/entity (other than the agent) which calculates the horizon for the agent, e.g.\ a human instructing the agent.

    \item \paradot{Limit Average of Rewards} The limit returns provides one of the most popular undiscounted RL setups \cite{Mahadevan1996}.
    \beq
    G_n(\w) \coloneqq \liminf_{T \to \infty} \frac{1}{T}\sum_{m = 1}^{T} R_{n + m}(\w)
    \eeq

    Interestingly, the use of $\liminf$ instead of $\lim$ in the above definition makes the return process admissible\footnote{It is not clear whether this ``cheap trick'' of using $\liminf$ instead of $\lim$ can lead to a meaningful resolve of ordering policies in an undiscounted RL setting. As $\liminf$ is same as $\lim$ if the limit exists, so we believe that this definition should help expand the undiscounted RL setup to a bit more domains. However, it is beyond the scope of this work to find such class of environment.}. However, in the undiscounted RL literature usually $\lim$ is used to define the return process. Therefore, similar to the infinite sum of rewards, to make this return process admissible, there has to be some form of (connectedness) structure on the environment \cite{Bartlett2009}. For example, this notion of return is usually used in ergodic and/or recoverable environments \cite{Osband2016a}. This (severely) limits the set of environments where the limit average of rewards is a valid notion of returns.
\end{itemize}

It is not apparent if there exists any undiscounted notion which can cover almost every domain of interest \cite{Mahadevan1996}. Another ``side effect'' of this notion is that the agents tend to be ``lazy''. This notion of undiscounted limit optimality does not distinguish an agent which gets high rewards at the end from one which has the average performance from the start. This indifference unfairly rewards the ``lazy'' behavior. These are some reasons that a discounted return formulation is preferred over the undiscounted returns \cite{Hutter2000}.

\subsubsection{Discounted (G)RL}

We borrow the notion of discounting from the economics literature to establish an admissible return process for every history without any further assumptions on the reward sequence. Let there be a discounting random process $\{\g_{\langle n, m\rangle}: n, m \geq 1\}$ such that $\g_{\langle n, m \rangle} \in \F_{n+m}$:
\beq
0 < \sup_{\pi} \E_\mu^\pi[\Gamma_n] \coloneqq \sup_{\pi} \E_\mu^\pi\left[\sum_{m=1}^\infty \g_{\langle n, m\rangle}\right] < \infty
\eeq
where $\Gamma_n$ is the ``to go discount weight'' for any time-step $n$.
We can use the following (general) discounted sum of rewards to assign a return for each infinite history $\w$ and time-step $n$:
\beq
G_n(\w) \coloneqq \sum_{m = 1}^{\infty} \g_{\langle n, m\rangle}(\w)R_{n+m}(\w)
\eeq

The above definition is quite general. It can be used to model state-based discounting \cite{Sutton2018}, time-consistent discounting \cite{Lattimore2014b}, and most importantly the standard geometric discounted RL setup.
The standard geometric discounting setup can be recovered from the above definition with $\g_{\langle n,m \rangle}(\w) = (1-\g)\g^{m-1}$ or $\g_{\langle n,m \rangle}(\w) = \g^{m-1}$ for a normalized or an unnormalized geometric discounted (G)RL respectively, for any scaler $\g \in [0,1)$.

\subsubsection{The Return Process for (G)RL}

We need a definition of return process which can be used to achieve \emph{optimal behavior} for a broad class of environments. In AGI, we prefer to do this for all domains of interest \cite{Legg2007a,Hutter2000}. At the moment, there is no consensuses which definition is the best. Both the discounted and undiscounted formulations have their own (de)merits \cite{Sutton2018,Mahadevan1996}. The preferred choice is to discount the rewards, primarily for its good convergence properties. However, there are many examples where a discounting does \emph{not} lead to optimal behavior \cite{Naik2019}. The quest for \emph{the} return process for (G)RL is still unsettled and beyond the scope of this work.

\subsection{Value Functions}

Once we have a notion for return, \emph{the goal} of a (G)RL agent, naturally, is to \emph{maximize the expected return}. The expected return is also known as the \emph{value} function(s) \cite{Sutton2018}. The history-based (action-)value functions of any finite history-action pair $ha$ are defined as:
\beq
V_\mu^\pi(h) \coloneqq \E_\mu^\pi[G_{\abs{h}} |h] \quad\text{and}\quad Q_\mu^\pi(ha) \coloneqq \E_\mu^\pi[G_{\abs{h}} |ha]
\eeq
where $V_\mu^\pi$ is the \emph{history-value} function and $Q_\mu^\pi$ is called the \emph{history-action-value} function. It is easy to see that these functions are related as:
\beq
V_\mu^\pi(h) = \E_\mu^\pi[Q^\pi_\mu(ha) |h] = \sum_{a\in\A} Q_\mu^\pi(ha)\pi(a\|h) \eqqcolon Q_\mu^\pi(h\pi(h))
\eeq
where we use the notation $Q(h\pi(h))$ to denote the expectation of any action-value function $Q$ when the actions at the current time-step are chosen by the (stochastic) policy $\pi$.
The maximally achievable values by any agent (or policy), which are known as the \emph{optimal} value functions, are defined as
\beq
V^*_\mu(h) \coloneqq \sup_{\pi} V^\pi_\mu(h) \quad\text{and}\quad Q_\mu^*(ha) \coloneqq \sup_{\pi} Q^\pi_\mu(ha)
\eeq
for any history-action pair $ha$. The value functions define the goals for a GRL agent, and the ``target'' for an agent is to find a policy which (nearly) achieves this goal.

\subsection{Value-maximizing Policies}

We now embark on the quest to formally define a target for GRL agents. The target of the agent is to behave like any policy from a set of (nearly) value-maximizing policies. In this work, we use the notion of \emph{$\eps$-optimality}. We accept a policy as an \emph{$\eps$-optimal} policy if its value is ``$\eps$-close'' to the optimal value. There are different notions of this  ``$\eps$-closeness'', which leads to different targets for the agents \cite{Leike2016b}.

\begin{itemize}
    \iparadot{Uniformly Value-maximizing Policies} An important class of policies, which are simply known as \emph{$\eps$-optimal} policies in the literature \cite{Sutton2018}, is a set of uniformly value-maximizing policies $\Pi_\eps^\sup$. Each policy in $\Pi_\eps^\sup$ has the history-value function $\eps$-close to the optimal value function for every history.
    \beq\label{eq:uniform-v-max-policy}
    \pi \in \Pi^\sup_\eps(\mu) \iff \sup_{h} \abs{V_\mu^*(h) - V^\pi_\mu(h)} \leq \eps
    \eeq

    Note that $\Pi_\eps^\sup$ is a function of the environment $\mu$. Later, we want an agent that is optimal for as many environments as possible. Ideally, the agent should be able to achieve the $\eps$-optimal value at every history.

    For a geometrically discounted GRL framework $\Pi_\eps^\sup$ is not empty \cite{Lattimore2014b} for any arbitrary action-space. Moreover, under the finite action-space assumption an optimal policy always exists, i.e.\ there exists a policy which achieves the optimal value exactly \cite{Lattimore2014b}.

    To be fair, this is too strict a criterion for the agent to meet at \emph{every} history of interaction. The agent might not have enough information at the start. For example, in the learning case the agent does not know the environment a priori. It may do some non-optimal actions in the beginning to gain some information about the environment \cite{Hutter2000}.

    \iparadot{Asymptotically Value-maximizing Policies} The policies in $\Pi_\eps^\sup$ demand the agents to be $\eps$-optimal from the beginning, which might not be a realizable target. In realistic RL situations, an agent may not know the environment a priori, so it may not be able to use a policy from $\Pi_\eps^\sup$. Therefore, it is not unreasonable to only require that the agent eventually gets to a value-maximizing policy. We expand the set of $\eps$-optimal polices $\Pi_\eps^\sup$ to the limit value-maximizing policies as
    \beq\label{eq:limit-v-max-policy}
    \pi \in \Pi^\infty_\eps(\mu) \iff \limsup_{n \to \infty} \abs{V_\mu^*(\w_{1:n}) - V_\mu^\pi(\w_{1:n})} \leq \eps \quad \text{almost surely}.
    \eeq
    This class of policies is known as the set of \emph{asymptotically $\eps$-optimal} policies \cite{Hutter2000}. This is one of the least demanding set of policies. A typical behavior in this set is that the agent ``learns'' the environment in the beginning, and later exploits the environment to achieve optimal value in the limit \cite{Leike2016}.

    Sometimes the following equivalent representation of $\Pi^\infty_\eps$ is more useful:
    \beq\label{eq:fintie-v-max-policy}
    \pi \in \Pi^\infty_\eps(\mu) \iff \sum_{n=1}^{\infty} \ind{\abs{V_\mu^*(\w_{1:n}) - V_\mu^\pi(\w_{1:n})} > \eps} < \infty \quad \text{almost surely}.
    \eeq
    which highlights the connection between the notion of optimality and the number of ``wrong'' actions a policy makes.
    An important subclass of such policies is known as Probably Approximately Correct (PAC) policies, where we demand that a PAC-policy should have an upper bound on the number of mistakes polynomial in the key parameters\footnote{The usual parameters are the discount-factor, error margin, and some connectedness parameter of the environment such as the ``diameter'' of an ergodic MDP \cite{Kearns2002}.} of the setup \cite{Kakade2003,Modi2019}, but only with high probability.

    \iparadot{Pseudo-regret Minimizing Policies} Sometimes, it is natural to control the total (average) loss an agent occurs throughout the run. For this, the notion of \emph{regret} is a well-established measure in the undiscounted RL community \cite{Mahadevan1996}. This notion is problematic in the discounted setting, though \cite{Sutton2018}. We denote the following class of policies as ``pseudo-regret'' $\eps$-optimal policies:
    \beq\label{eq:regret-min-policy}
    \pi \in \overline\Pi^\infty_\eps(\mu) \iff \limsup_{n \to \infty} \frac{1}{n}\sum_{m=1}^{n} \abs{V_\mu^*(\w_{1:m}) - V_\mu^\pi(\w_{1:m})} \leq \eps \quad \text{almost surely}.
    \eeq

    In comparison to finite sample complexity policies, pseudo-regret minimizing policies can make infinitely many mistakes but either not too many or not too large.
\end{itemize}

\begin{proposition}
    The sets of value-maximizing polices are related as
    \beq
    \Pi^\sup_\eps(\mu) \subseteq \Pi^\infty_\eps(\mu) \subseteq \overline\Pi^\infty_\eps(\mu)
    \eeq
    for any environment $\mu$ and $\eps$.
\end{proposition}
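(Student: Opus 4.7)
The plan is to verify the two inclusions separately, both by pointwise (or rather, per-$\omega$) arguments, since the definitions differ only in how the pointwise deviation $a_n(\omega) \coloneqq |V_\mu^*(\omega_{1:n}) - V_\mu^\pi(\omega_{1:n})|$ is aggregated across time. The key background fact I will invoke is that under \Cref{asm:bounded_rewards} together with admissibility of the return process, the value functions $V_\mu^*$ and $V_\mu^\pi$ are uniformly bounded, so there exists $M < \infty$ with $a_n(\omega) \leq M$ for every $n$ and every $\omega$.

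For the first inclusion $\Pi^\sup_\eps(\mu) \subseteq \Pi^\infty_\eps(\mu)$, I would simply observe that any $\pi \in \Pi^\sup_\eps(\mu)$ satisfies $a_n(\omega) \leq \sup_{h \in \H} |V_\mu^*(h) - V_\mu^\pi(h)| \leq \eps$ for every $n$ and every $\omega \in \H_\infty$ (since $\omega_{1:n} \in \H$), so the $\limsup$ in \eqref{eq:limit-v-max-policy} is at most $\eps$ surely, hence almost surely.

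For the second inclusion $\Pi^\infty_\eps(\mu) \subseteq \overline\Pi^\infty_\eps(\mu)$, the plan is a standard Ces\`aro-mean argument. Fix $\pi \in \Pi^\infty_\eps(\mu)$ and work on the almost-sure event on which $\limsup_{n \to \infty} a_n(\omega) \leq \eps$. Given $\delta > 0$, pick $N = N(\omega,\delta)$ such that $a_m(\omega) \leq \eps + \delta$ for all $m \geq N$. Split
\begin{equation*}
\frac{1}{n}\sum_{m=1}^{n} a_m(\omega) \;=\; \frac{1}{n}\sum_{m=1}^{N-1} a_m(\omega) \;+\; \frac{1}{n}\sum_{m=N}^{n} a_m(\omega) \;\leq\; \frac{(N-1)M}{n} + \frac{n-N+1}{n}(\eps+\delta).
\end{equation*}
Taking $\limsup_{n \to \infty}$ yields a bound of $\eps + \delta$, and sending $\delta \downarrow 0$ gives $\limsup_n \tfrac{1}{n}\sum_{m=1}^n a_m(\omega) \leq \eps$ on the same almost-sure event, which is exactly \eqref{eq:regret-min-policy}.

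The only mild obstacle is the boundedness step needed to control the ``head'' of the Ces\`aro sum: it is critical here that each $a_m$ is uniformly bounded, which is why I appeal to \Cref{asm:bounded_rewards} and admissibility of the return process. Without such a bound the finite prefix $\tfrac{1}{n}\sum_{m=1}^{N-1} a_m(\omega)$ might not vanish and the argument would fail; everything else is routine.
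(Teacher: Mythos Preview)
Your proof is correct and takes essentially the same approach as the paper, which simply asserts that the inclusions ``trivially follow from the definitions.'' Your version is considerably more careful: you spell out the Ces\`aro-mean argument for the second inclusion and explicitly flag that uniform boundedness of the value functions (guaranteed in the paper's setting by bounded rewards and geometric discounting, so $a_n \leq 1/(1-\gamma)$) is what makes the finite-prefix term vanish. The paper's one-line proof leaves all of this implicit.
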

\begin{proof}
    The proof trivially follows from the definitions of $\Pi^\sup_\eps(\mu)$, $\Pi^\infty_\eps(\mu)$, and $\overline\Pi^\infty_\eps(\mu)$.
\end{proof}

Due to the above proposition, we may limit the search for a value-maximizing policy in $\overline\Pi_\eps^\infty$ with a bias towards the policies in $\Pi_\eps^\infty$ or $\Pi_\eps^\sup$. Any policy in $\overline\Pi_\eps^\infty$ has at least the guarantee to be on average a value-maximizing policy in the limit.

\begin{remark}[Almost Surely Weak Notions]
    There is a serious weakness of all the above optimality notions which use ``almost surely'' quantifier. These value-maximizing policies are guaranteed to be optimal on the trajectories (or infinite histories $\w$) that are taken by the (learning) agent $\pi$. That is, the agent fulfills the criteria with $\mu^\pi$-probability 1, which is inherently troublesome. An (oracle) agent who knows the optimal policy $\pi_\mu \in \Pi_\eps^\sup(\mu)$ from the start might take a different set of trajectories with $\mu^{\pi_\mu}$-probability 1 and the agent $\pi$ may suffer a huge loss on these trajectories. This is a well known flaw of such optimality notions \cite{Kakade2003}. For example, an agent would be (asymptotically) optimal if it jumps into a trap because given the history of being inside the trap the agent is optimal thereafter.
    This is one of the reasons that the environments are typically assume to be ergodic, so the set of supported trajectories are the same under every policy \cite{Kearns2002}. A conclusive resolution of this issue is beyond the scope of this work.
\end{remark}

In the rest of the thesis, whenever we say that an agent is \emph{behaving optimally}, we mean that the agent is acting according to one of the $\eps$-optimal policies defined above. The ``target'' of a GRL agent is to find a policy form one of these sets.

\begin{remark}[Value-maximization is Optimal Behavior]
    Ideally, the target value-maximizing policy satisfies the (intuitive) optimal behavior defined in \Cref{def:opt-behavior}. However, this is not guaranteed  \cite{Naik2019}. There are examples when the agent has found a bug in the system to get more rewards using a rather ``non-optimal'' behavior \cite{Leike2017}.
    We leave this question for future research to determine when any of the value-maximizing policies defined in \Cref{eq:uniform-v-max-policy,eq:limit-v-max-policy,eq:regret-min-policy} are indeed optimal behavior stipulated in \Cref{def:opt-behavior}.
\end{remark}

It is apparent that choice of the ``target'' policy is a design choice. It defines the nature of the agents we will get in the end. If the target policy is in $\Pi_\eps^\infty$ then the agent may only have asymptotic guarantees, whereas the agents targeting to find (and use) a policy from $\overline\Pi_\eps^\infty$ may only be asymptotically optimal on average. However, with some strong structural assumptions, these asymptotic guarantees can be converted into some approximate finite bounds \cite{Hutter2000}. In this thesis, we set the target for our GRL agents to find a policy from $\Pi_\eps^\sup$.

\section{The GRL Framework}

So far, we have listed many possible alternatives and design choices one can make in a GRL setup. In the remainder of the thesis we use the standard geometrically discounted rewards.

\begin{definition}[Geometrically discounted GRL]
    For any $\g \in [0,1)$, we define an unnormalized $\g$-discounted GRL setup which has the following return process:
    \bqan
    G_n(\w) &\coloneqq \sum_{m = 1}^{\infty} \g^{m-1}R_{n+m}(\w)
    \eqan
    for any infinite history $\w$ and time-step $n$. The choice of target policy is left open. However, usually, the target policy is required to be a member of $\Pi_\eps^\sup$-the set of uniform value-maximizing policies.
\end{definition}

It is easy to see that for a $\g$-discounted return the following recursion holds:
\beq\label{eq:grel}
G_n = R_{n+1} + \g G_{n+1}
\eeq
for any time-step $n$, which leads to the following relationship between the history-value and history-action-value functions:
\bqan
Q^\pi_\mu(ha)
&\coloneqq \E_\mu^\pi\left[G_{\abs{h}} \middle| ha\right] \\
\overset{\eqref{eq:hatoh}}&{=} \sum_{e'}\mu(e' \| ha) \left(\E_\mu^\pi\left[G_{\abs{h}} \middle| hae'\right] \right)\\
\overset{\eqref{eq:grel}}&{=} \sum_{e'}\mu(e' \| ha) \left(\E_\mu^\pi\left[R_{\abs{h}+1} \middle| hae'\right] + \g \E_\mu^\pi\left[G_{\abs{h}+1} \middle| hae'\right] \right)\\
\overset{\eqref{eq:htoha}}&{=} \sum_{e'}\mu(e' \| ha) \left(r(hae') + \g V_\mu^\pi(hae') \right) \numberthis
\eqan
where $r(h) \coloneqq \E_\mu^\pi\left[R_{\abs{h}} \middle| h\right]$ because the reward process is adapted to the filtration.
Although the above recursion is a pseudo-recursion, as no the history is ever repeated, we call the above equation the general Bellman equation (GBE) similar to the Bellman equation (BE) in standard RL \cite{Sutton2018}. Similarly, the optimal history-value and history-action-value functions satisfy the following general optimal Bellman equations (GOBE):
\bqa
V_\mu^{*}(h) &= \sup_{a} \left(r_\mu(ha) + \g \sum_{e'} \mu(e'\|ha) V_\mu^*(hae')\right)\\
Q_\mu^{*}(ha) &= r_\mu(ha) + \g \sum_{e'} \mu(e'\|ha) \sup_{a'} Q_\mu^{*}(hae'a')
\eqa
for any history action pair $ha$, where we use $r_\mu(ha) \coloneqq \sum_{e'} \mu(e'\|ha) r(hae')$.

\begin{remark}[History- vs History-action-based Reward Signals]
Recall, due to \Cref{asmp:reward-hypothesis}, we have a reward function
\beq
r : \H \to \SetR
\eeq
which is simply a reward random process adapted to the filtration. This reward signal is \emph{universal} in the sense that it evaluates each history \emph{independent} of the environment $\mu$. However, the real objective of the agent is to optimize the expected value of this reward for any history-action pair that depends on the environment:
\beq
r_\mu(ha) \coloneqq \sum_{e'} \mu(e'\|ha)r(hae')
\eeq
which quantifies the expected next reward the agent receives at the history $h$ if it takes action $a$. In the case of reward embedding we get the similar relationship as
\beq
r_\mu(ha) \coloneqq  \sum_{r'o'} r' \mu(o'r'\|ha) = \sum_{r'} r'\mu(r'\|ha)
\eeq

We use both notations, but $r_\mu$ is preferred. Using this history-action based reward function makes more sense, as the agents usually\footnote{Unless we break the dualistic setup.} do not control the next percept transition. They effectively experience the expected percept (hence, experience the expected reward) anyway, so there is little need for using a history-based reward function.
\end{remark}

This concludes the GRL setup. In its pure form, the optimal policy from any value-maximizing sets $\Pi_\eps^\sup$, $\Pi_\eps^\infty$, or $\overline\Pi_\eps^\infty$ may be a function of history. Hence, these policies may be hard if not \emph{impossible} to learn with finite computational resources. In the next chapter we augment this setup with an abstraction map, which enables us to \emph{curtail the history-dependence} of the optimal policies and other quantities such as history-(action-)value functions for many interesting abstractions with a little to \emph{no restrictions} on the class of environments.

\section{Summary}
This chapter laid down a measure theoretically sound framework of GRL for decision-making under uncertainty. There are many design choices one can make to get different variants of GRL. There are a number of possibilities to define values (goals) and optimal policies (targets), in general. We use $\g$-discounted returns to define the geometrically discounted GRL. This choice provides a series of (pseudo) recursive Bellman equations, GBE and GOBE. We left the choice of target policy open, because the setup can be used for different target policies without any change.

\chapter{Abstraction Reinforcement Learning}\label{chap:arl}

\begin{outline}
    In the previous chapter, we formulated the geometrically discounted GRL setup, which we are going to refer to simply as \emph{the} GRL framework. The GRL framework in its current from can model any domain, but the resultant optimal policy may depend on the observations from an arbitrary number of previous time-steps. This limits the practical application of GRL, as the policy learned at a history may become \emph{invalid} on the next time-step. To curtail the history-dependence of the optimal policies, this chapter introduces the idea of a GRL setup with an abstraction called abstraction reinforcement learning (ARL). In ARL, the agent has access to an abstraction which reduces the original history-action space to a finite state-action space. The ARL setup allows us to do either state-only, action-only, or state-action abstractions.
\end{outline}

\epigraph{\it ``Being abstract is something profoundly different from being vague $\dots$ the purpose of abstraction is not to be vague, but to create a new semantic level in which one can be absolutely precise.''}{--- \textup{Dijkstra}}

\section{Introduction}

As highlighted in \Cref{chap:intro}, decision-making is the most important aspect of intelligence. The ability to predict the consequences of ones actions is so vital that most artificial intelligence (AI) research is concentrated on perfecting this ability in machines \cite{Hutter2000,Pomerol1997}. An intelligent agent needs some discerning information about the present situation so it can take an appropriate action. For example, if the traffic light is red then an \emph{intelligent} driver should not enter the intersection. Ideally, this choice of action is independent of any other sensory information given the traffic light is red. We, homo sapiens, have this unparalleled ability to attend only to the \emph{useful} aspects of the world around us. The sensory data our bodies receive is overwhelming, yet we pay attention to only a fraction of it. The information we keep at \emph{present} turns out to be relevant for our \emph{future} actions.

Moreover, we make a versatile model of the world in the sense that it can be adapted from one task to another. The level of information provided (or modeled) by our mental state of the world is depends on the task at hand. For example, we do not need a lot of other information apart from the chess board while we are playing chess, but we do need to consider a wealth of factors when designing a quantum experiment. We are so fluid in adding and removing (ir)relevant information in our mental state of the world that we usually do not fully appreciate this ability. An AGI agent should be able to \emph{adapt} to every \emph{plausible}\footnote{A set of all plausible tasks is a set of all tasks we care, which is a subset of the set of all \emph{possible} tasks. The set of all possible tasks is ``too big'' to allow for a universal optimizer \cite{Hutter2000}.} task if it is has sufficiently long experience interacting with the task \cite{Lattimore2014a,Leike2016}.

From an information theoretic perspective, it is sufficient to keep the \emph{complete} interaction history to make predictions about the future \cite{Hutter2000}. However, the world we live in and the world these agents will inhabit, has a lot of structure such that for most tasks the complete history may not be \emph{necessary} for optimal behavior, i.e.\ most of the tasks could` be modeled as bounded memory sources \cite{Ryabko2008}. An \emph{implementable/realizable} AGI agent should be able to compress (or abstract) its experience in an abstract state in such a way that using this (abstract) state of the world it can plan (and predict) the  future \emph{optimal} course of actions \cite{Veness2014}.

Reinforcement learning is considered to be one of the most promising learning paradigms for AGI \cite{Hutter2000}. However, the standard RL paradigm is tightly connected to (finite-state) Markov decision processes (MDP). The MDP structure defines the nature of information the agent keeps: given the current state of an MDP (and the MDP itself) the agent can predict the future states for every contingency \cite{Sutton2018}. Usually, the ability to predict \emph{every} situation is \emph{not necessary} for optimal behavior in many domains. The agent can achieve optimal performance by predicting only ``valuable'' trajectories \cite{Hutter2016}. However, the \emph{sufficient} information required to achieve (near) optimal performance may not be Markovian in the sense that the agent can not predict next state given the current abstract state \cite{Hutter2016}.

Unlike the standard RL setup, in the GRL framework defined in \Cref{chap:grl} every (historical) situation is \emph{unique}.
In theory, GRL can model every task which can be described under the \emph{reward hypothesis}. Under the natural computability assumptions about the real-world, GRL has provided a plethora of interesting and insightful results \cite{Hutter2000,Lattimore2014a,Leike2016}. However, this line of research has overlooked an important aspect of the resultant policies. In almost every proved result, this regime provides a history-based policy. The history-based polices in GRL are difficult to learn. So, the (potential) optimal policy derived through these methods for one history may never be reused for any other history. History-dependence of optimal policy is the normal rather than the exception in the field of GRL (without abstractions). Therefore, even being a powerful (theoretical) framework, GRL without further structure is not very useful for an \emph{implementable} AGI.

In this thesis, we emphasize that GRL with an (appropriate) set of abstractions, which we call abstraction RL (ARL), is a potential pathway to the ultimate goal of an \emph{implementable} AGI. An ARL agent should be able to \emph{tractably} plan the optimal course of actions by exploiting the relatively compact model provided by an abstraction.

\section{Abstraction as a Model}

An abstraction provides a \emph{model} of the environment. However, the word ``model'' carries slightly different meaning among different sub-communities of RL. In the standard RL framework the model may imply any estimate of the state transition function (also known as a transition kernel) of the environment \cite{Kuvayev1996}. The Bayesian RL community considers a model being a Bayesian mixture over a class of transition kernels \cite{Vlassis2012}.\footnote{We deliberately used the term ``class of functions'' instead of ``class of models'' to highlight the fact that the Bayesian mixture is indeed the actual model of the environment used in Bayesian RL. The choice of the class of functions affects the Bayesian mixture, but the agent reacts based on the mixture only.}

The notion of model is not very clear among the ``model-free'' RL community though, e.g.\ Q-learning is a model-free algorithm which assumes a Markovian ``model''. Traditionally, model-free RL lacks a (full) predictive estimate of the transition kernel of the environment \cite{Mnih2015}. Such model-free algorithms do not estimate the transition function of the environment, albeit typically under the assumption that the environment is an MDP with a known state-space \cite{Sutton2018}. The state-action-value function, \emph{cf.} the history-action-value function, is directly estimated in these model-free algorithms.

Model-free algorithms are quite useful, as the state-action-value function can be used by the agent to act (optimally) even without having an explicit access to an estimation of the transition function. Therefore, in model-free RL the state-action-value function models (only) the useful aspects of the environment without being able to predict the next ``state''. It lacks the ``state-predictive'' model, but model-free algorithms estimate state-action values, which is only what matters under \Cref{asmp:reward-hypothesis}, as the optimal policy is, trivially, a maximizer of the state-action-value function.  Given this estimate, the agent does not need to know more to act optimally in the environment \cite{Majeed2018}. Therefore, in a broader sense the so-called model-free RL does have a (``state-action-value non-predictive'') model of the environment.

This thesis considers world models in a more unified and broader sense. We move away from the distinction of a model being an estimate of the environment dynamics, i.e.\ the transition kernel. We use the notion of abstraction, a map from histories to a (finite) set of states, instead of a (not well agreed upon) notion of model. In the following, we provide an extensible framework by augmenting the GRL setup with an abstraction. The nature of the abstraction dictates the properties of the overall setup. Later, we show that ARL can mimic almost all prevalent RL setups by changing the abstraction map.

\begin{figure}[!ht]
    \centering
    \begin{tikzpicture}[
        node distance = 7mm and -3mm,
        innernode/.style = {draw=black, thick, fill=gray!30,
            minimum width=2cm, minimum height=0.5cm,
            align=center},
        outernode/.style = {draw=black, thick, rounded corners, fill=none,
            minimum width=1cm, minimum height=0.5cm,
            align=center},
        endpoint/.style={draw,circle,
            fill=gray, inner sep=0pt, minimum width=4pt},
        arrow/.style={->,thick,rounded corners},
        point/.style={circle,inner sep=0pt,minimum size=2pt,fill=black},
        skip loop/.style={to path={-- ++(#1,0) |- (\tikztotarget)}},
        every path/.style = {draw, -latex}
        ]
        \node (start) {Start};
        \node (h) [innernode]{History};
        \node (phi) [innernode, below=of h]{Abstraction $(\psi)$};
        \node (pi) [innernode, below=of phi]      {Policy $(\pi)$};
        \node [outernode, align=left, inner sep=0.5cm, fill=none, fit=(h) (phi) (pi)] (agent) {};
        \node[below right, inner sep=3pt, fill=none] at (agent.north west) {Agent};
        \node[outernode, left=120pt of agent, fit=(agent.north)(agent.south), inner sep=0pt] (env) {};
        \node[below right, inner sep=0pt, fill=none, rotate=90, anchor=center] at (env) {Environment $(\mu)$};
        \node[endpoint, above= -2pt of env] (or_env) {};
        \node[endpoint, below= -2pt of env] (a_env) {};
        \node[endpoint, below= -2pt of agent] (a_agent) {};
        \node[endpoint, above= -2pt of agent] (or_agent) {};

        \path (a_agent) edge[arrow,bend left] node[below]{$a_t$} (a_env);
        \path (or_env) edge[arrow, bend left] node[above]{$o_{t+1}r_{t+1}$} (or_agent);
        \path (or_agent) edge[arrow] node[right]{$o_{t+1}r_{t+1}$} (h);
        \path (h) edge[arrow] node[above=0.5pt,midway,name=h_phi,point]{} node[right]{$h_t$} (phi);
        \path (phi) edge[arrow] node[left]{$\psi(h_t)$} (pi);
        \path (pi) edge[arrow] node[above=0.5pt,midway,name=pi_a,point]{} node[left]{$a_t$} (a_agent);
        \path (pi_a) edge[arrow, skip loop=1.75cm] (h.east);
        \path (h_phi) edge[->, skip loop=-1.5cm, thick, rounded corners] (h.west);
    \end{tikzpicture}
    \caption{An interaction cycle in abstraction reinforcement learning (ARL).}
    \label{fig:arl}
\end{figure}


\section{Interaction Through an Abstraction}

An agent in ARL is constrained to interact through an abstraction, see \Cref{fig:arl}. It observes only the states of the abstraction, and does not directly receive the percepts from the environment. As the history of interaction is the only viable ``state'' of the underlying environment in general, the abstraction map is performing a ``state'' reduction of the environment. Trivially, the original GRL setup can be recovered if we allow for the identity abstraction, i.e.\ $\psi(h) = h$ for any history.

\begin{definition}[State-only Abstraction]\label{def:abstraction}
    An abstraction $\psi$ is a map from the \emph{history} to some \emph{abstract state}:
    \beq
    \psi: \H \to \S
    \eeq
    where $\S$ is the set of states. If finite, the size of the state space is denoted by $S$.
\end{definition}

Let the agent have access to an \emph{abstraction}. In this work, we assume that $\S$ is any finite set, and the mapping $\psi$ is deterministic.\footnote{This thesis uses a specialized notion for ARL. A fully-general ARL framework should allow for stochastic abstractions with a (countably) infinite set of states. However, the specialized problem is already non-trivial and sufficiently interesting. Therefore, we leave this extension for a future work.} In ARL, the agent may have access to a state-action abstraction, which in standard RL is known as a \emph{homomorphism} \cite{Whitt1978}.

\begin{definition}[State-action Abstraction]\label{def:homo}
    An abstraction $\psi$ is a homomorphism if it is a map from the \emph{history-action} pairs to some \emph{abstract state-action} pairs:
    \beq
    \psi: \H \times \A \to \S \times \B
    \eeq
    where $\S$ is the set of states and $\B$ is the set of abstract actions.
\end{definition}

This chapter expands on the state-only abstraction, while the state-action abstraction (or homomorphism) is explored in \Cref{chap:representation-guarnt}. One of the key differences between ARL and GRL is that in ARL the agent ``pretends'' that the states of the abstraction are Markovian, i.e.\ the agent uses abstraction as an MDP. We formally define this ``pretend'' MDP shortly. Interestingly, there are many non-MDP abstractions where this ``pretending'' is actually useful. A major part of this thesis deals with such non-MDP abstractions. The agent does not loose much in terms of performance by regarding the states of these non-MDP abstractions as Markovian, see \Cref{chap:representation-guarnt,chap:convergence-guarnt} for some examples. The following section formalizes this idea of the agent pretending the states are Markovian.

\section{Abstract Environment}

The power of the abstraction map defined in \Cref{def:abstraction} becomes apparent when we compare it to the prevalent notion of state-abstraction of MDPs \cite{Li2006,Abel2016}. If the environment $\mu$ is an MDP then the most recent observation is a sufficient information for the agent. In this MDP case, the state-abstraction is from the percepts, the state-space of the underlying MDP, to the abstract state-space of the abstracted process, i.e.\ $\psi_{\rm MDP}: \OR \to \S$. So, the map $\psi_{\rm MDP}$ is fixed and stationary, i.e.\ it does not depend on time \cite{Abel2016}. However, the abstraction $\psi$ considered in ARL is potentially non-stationary and history dependent. In ARL, we can model the situations where the abstraction map may evolve (i.e.\ change over time) as the agent gains new information. For instance the state of $\psi$ may be a (discretized) belief vector if $\mu$ is a POMDP. Moreover, the agent can switch to a completely different abstraction based on the history of interaction, which is not possible in the standard state-abstraction setup.
Therefore, the type of problems considered by ARL are significantly harder and richer than the standard state-aggregation paradigm, \emph{cf.} \citet{Abel2016}.

Interestingly, if an agent is interacting with the environment through an abstraction then it can (and should only) consider the resultant abstract process as the ``true'' environment. By design, the agent should not use (or have access to) the ``original'' interaction history $h \in \H$. However, it may use the complete ``abstract'' interaction history, which is similar to the original interaction history except the percept are replaced by the abstract states.\footnote{The reward signal is abstracted as an ``average'' reward signal using a dispersion distribution, see \Cref{def:surrogate-mdp}.}

Let $\tau \coloneqq s_1 a_1 \dots s_{n-1} a_{n-1} s_n$ be such an abstract interaction history at a time-step $n$, where the states $s_m \coloneqq \psi(h_m)$ are generated by the abstraction from the original histories $h_{m}$. We define the set of all finite abstract histories as
\beq
\H^\psi \coloneqq \bigcup_{m=1}^\infty \H^\psi_n
\eeq
where $\H^\psi_n \coloneqq (\S \times \A)^{n-1} \times \S$  denotes the set of abstract histories of length $n \in \SetN$.
Moreover, we define the (equivalence) class of original histories $[\tau]$ which are indistinguishable from the abstract history $\tau$:
\beq
[\tau] \coloneqq \{ h \in \H_{\abs{\tau}} \mid \forall m \leq \abs{\tau}.\ s_m(\tau) = \psi(h_{m}(h)), a_m(\tau) = a_m(h)\}
\eeq
where, $h_m(\cdot)$, $s_m(\cdot)$ and $a_m(\cdot)$ represent the history, state and the action at the time-step $m$.
Let us imagine that an agent is interacting with the true (history-based) environment, but it can only observe (and store/use) the abstract histories. The agent would effectively be interacting with the following abstract history-based process, which we also call \emph{an} abstract environment,\footnote{The abstraction environment depends on the choice of the history-conditional dispersion distribution, see \Cref{rem:choice-of-mixing}.} for any abstract history-action pair $\tau a$:
\beq
\mu_\psi(s' \| \tau a) \coloneqq \sum_{h \in [\tau]} \mu_\psi(s'\|ha)\Pr(h \| \tau a)
\eeq
where
\beq
\mu_\psi(s'\|ha) \coloneqq \sum_{e':\psi(hae') = s'} \mu(e'\|ha)
\eeq
and $\Pr(h \| \tau a)$ is any (arbitrary) \emph{history-conditional} dispersion distribution over the original indistinguishable (fixed-length) histories given the (same length) abstract history and action pair. It may be generated (or induced) by a sampling/behavior policy of the agent, see \Cref{rem:choice-of-mixing} for an example.
However, it is not necessary that this distribution is generated \emph{on-policy}.\footnote{By on-policy we mean that the agent uses the target policy (the one it wants to learn, ideally the optimal policy) also as the sampling policy (the one used to interact with the environment during learning).} There is no such restriction in ARL. It is possible to imagine a situation where a different sampling/behavior policy is used to generate this distribution, which is later used by the agent to come up with the abstract environment \cite{Hutter2016}.
It is critical to remember that, in general the abstract process may not be an MDP, i.e.\ $\mu_\psi(s'\|\tau a) \neq \mu_\psi(s'\|\d \tau a)$ even if $s_{\abs{\tau}}(\tau) = s_{\abs{\d \tau}}(\d \tau)$.

\begin{remark}[Generated/induced History-conditional Dispersion Distribution]\label{rem:choice-of-mixing}
    A natural choice of the history-conditional dispersion distribution $\mu^{\pi_B}_n : \H^\psi_n \times \A \to \Dist(\H_n)$ is that it is generated/induced by a behavior policy $\pi_B$ as
    \bqan
    \mu^{\pi_B}_{\abs{\tau}}(h\|\tau a)
    &\coloneqq \frac{\mu^{\pi_B}(ha)\ind{h \in [\tau]}}{\mu^{\pi_B}(\tau a)}
    \eqan
    for any abstract history-action pair $\tau a$, where, we defined the probability of visiting $\tau a$-pair as $\mu^{\pi_B}(\tau a) \coloneqq \sum_{h \in [\tau]} \mu^{\pi_B}(ha)$.
    Interestingly, this distribution can be made \emph{independent} of the choice of the behavior policy under a mild (and natural) condition that the behavior policy is only a function of the abstract history, i.e.\ $\pi_B(a \| h) = \pi_B(a \| \tau)$ for any action $a$ and history $h \in [\tau]$.
    \bqan
    \mu^{\pi_B}_{\abs{\tau}}(h\|\tau a)
    &= \fracp{\mu(e_1)\prod_{m=1}^{\abs{\tau}-1} \mu(e_{m+1}\|h_{m}a_{m}) \pi_B(a_{m}\|h_{m})\pi_B(a\|h)}{\sum_{\d h \in [\tau]} \mu(\d e_1)\prod_{m=1}^{\abs{\tau}-1} \mu(\d e_{m+1}\|\d h_{m} \d a_{m}) \pi_B(\d a_{m}\|\d h_{m})\pi_B(a\|\d h)}\ind{h \in [\tau]}\\
    &= \fracp{\mu(e_1)\prod_{m=1}^{\abs{\tau}-1} \mu(e_{m+1}\|h_{m} a_{m})}{\sum_{\d h \in [\tau]} \mu(\d e_1)\prod_{m=1}^{\abs{\tau}-1} \mu(\d e_{m+1}\|\d h_{m} \d a_{m})}\ind{h \in [\tau]}
    \eqan
    where, for brevity, we suppress the explicit history dependence in the notation, e.g.\ $e_{m+1} \equiv e_{m+1}(h)$ and $\d e_{m+1} \equiv e_{m+1}(\d h)$.
\end{remark}

\section{Surrogate MDP}

Although the abstract environment may not be an MDP, in ARL the agent ``pretends'' that the abstract states are Markovian. It results into a ``pretend'' surrogate MDP, which we formally define in this section.
As hinted in the introduction, the standard RL methods are designed to work on finite-state MDPs, and there exists a number of efficient methods to find an optimal policy in a finite-state MDP \cite{Kearns2002,Strehl2006,Strehl2009}. That is why we are interested in defining the surrogate MDP of an abstraction because if the (near) optimal policy of the original environment is \emph{representable} through the optimal policy of the surrogate MDP then we can leverage the standard RL methods for a broad class of environments; see \Cref{chap:convergence-guarnt} for an example.

Once the agent starts treating the states as Markovian states, the agent is effectively working with a \emph{surrogate MDP} $\b\mu$. We define this \emph{stationary}\footnote{It is stationary in the sense that the transition kernel $\b\mu$ does not depend on time.} MDP from the abstract environment as follows:
\bqan\label{eq:surrogate-mdp}
\b\mu(s' \| sa) &\coloneqq \sum_{\tau} \mu_\psi(s' \| \tau a) \Pr(\tau \| sa) \\
&= \sum_{\tau}\Pr(\tau \| sa) \sum_{h \in [\tau]} \mu_\psi(s'\|ha)\Pr(h \| \tau a) \\
&= \sum_{h} \mu_\psi(s'\|ha) \left(\sum_{\tau}\Pr(h \| \tau a) \Pr(\tau \| sa)\right) \\
&= \sum_{h} \mu_\psi(s'\|ha) \Pr(h \| s a) \numberthis
\eqan
where $\Pr(\tau \| sa)$ is an (arbitrary) \emph{state-conditional} dispersion distribution over the abstract histories that ends at $s$, and, for brevity, we assume that $\Pr(h\|\tau a) = 0$ if $h \not\in [\tau]$ and $\Pr(\tau \| sa) = 0$ if $s_{\abs{\tau}}(\tau) \neq s$. Recall that, in case $\Pr(h\|\tau a)$ is generated by a (behavior) policy, the history-conditional dispersion distribution could be made only a function of the environment under the mild condition that the (behavior) policy is only a function of the abstract history, see \Cref{rem:choice-of-mixing}. However if generated by a (behavior) policy, the state-conditional dispersion distribution is \emph{always} a function of the (behavior) policy and the environment, see \Cref{rem:variable-length-mixing}.

\begin{remark}[Generated/induced State-conditional Dispersion Distribution]\label{rem:variable-length-mixing}
    As the history-conditional dispersion distribution, the state-conditional dispersion distribution $\mu^{\pi_B}_w: \S \times \A \to \Dist(\H^\psi)$ may also be induced by a policy $\pi_B$, but the structure of this distribution is not straight forward. The subtlety arises from the fact that we ask for a distribution over mixed length histories given an $sa$-pair. What we are doing is basically ``averaging out'' the time information. We demand that there exists a set of weights\footnote{In general, the weights can be stochastic, see \citet{Hutter2016} for an example.} $\{w_n : n \geq 1\}$ for each $sa$-pair such that
    \bqan
    \mu^{\pi_B}_w(\tau \| sa)
    &\coloneqq w_{\abs{\tau}}(sa) \fracp{\mu^{\pi_B}(\tau a)\ind{s_{\abs{\tau}}(\tau) = s}}{\mu^{\pi_B}_{\abs{\tau}}(s a)}  \\
    &= w_{\abs{\tau}}(sa) \mu^{\pi_B}_{\abs{\tau}}(\tau \| sa) \numberthis
    \eqan
    where the probability of visiting $sa$-pair on time-step $n$ is denoted as
    \bqan
    \mu^{\pi_B}_n(sa)
    &\coloneqq \sum_{\tau \in \H^\psi_n} \mu^{\pi_B}(\tau a)\ind{s_{n}(\tau) = s} \\
    &= \sum_{h \in \H_n} \mu^{\pi_B}(h a)\ind{\psi(h) = s} \numberthis
    \eqan
    and the weights are required to be
    \beq
    \sum_{m=1}^\infty w_m(sa) = 1
    \eeq

    Moreover, the policy $\pi_B$ is assumed to be \emph{sufficiently exploratory} such that it visits each $sa$-pair infinitely often almost surely.
\end{remark}

As highlighted earlier, the above (history- and state-conditional) distribution $\Pr$ may be induced by a (behavior) policy, or the may be provided to the agent as an extra information along side with the abstraction. No matter what the \emph{source} of this distribution is, ideally, the learned (near-optimal) policy by the agent should not depend on the choice of this function. Therefore for the majority of this thesis, we collectively denote a history- and state-conditional dispersion distributions simply as a \emph{dispersion} distribution $B$:
\beq
B: \S \times \A \to \Dist(\H)
\eeq

Moreover, it is natural to require that for a valid $B$ we have $B(h \| sa) = 0$ if $\psi(h) \neq s$ for all $a$. We revisit the definition in \Cref{eq:surrogate-mdp} and define the surrogate MDP in a more general form using $B$.

\begin{definition}[Surrogate MDP]\label{def:surrogate-mdp}
    For any abstraction $\psi$ and dispersion distribution $B$, the surrogate MDP is defined as
    \bqan
    \b\mu(s'\|sa)
    &\coloneqq \sum_{h}\mu_\psi(s'\|ha)B(h\|sa) \\
    &= \sum_{m=1}^\infty \sum_{h_m \in \H_m}\mu_\psi(s'\|h_ma)B(h_m\|sa) \numberthis
    \eqan
    for any pair of states $s,s' \in \S$ and action $a \in \A$. Additionally, we define the \emph{abstract} reward signal\footnote{In the case of reward embedding the notation is slightly different as
        \beq
        \b r_\mu(sa) \coloneqq \sum_{h} \sum_{r'} r'\mu(r'\|ha)B(h\|sa) = \sum_{h}\sum_{r'o'} r' \mu(o'r'\|ha)B(h\|sa)
        \eeq
        for any state-action pair $sa$.} for the surrogate MDP as follows:
    \bqan
    \b r_\mu(sa) &\coloneqq \sum_{h} r_\mu(ha)B(h\|sa)
    \eqan

    The pair $\langle \b \mu, \b r_\mu \rangle$ collectively denotes the surrogate-MDP.

\end{definition}

The idea of a dispersion distribution $B$ as a function of state and action is crucially different from the other notions of ``weighting function'' considered in the literature \cite{Li2006,Abel2016}. Therefore, we discuss more this key quantity in the following sub-section. Importantly in \Cref{def:surrogate-mdp} we have explicitly separated the time index to highlight the fact that $B$ is a distribution over \emph{mixed-length} finite histories, which makes it semantically different than a measure over cylinder sets. For example, $\mu^{\pi_B}(h) = \mu^{\pi_B}(\cyl{h})$ is the probability of reaching $h$ by $\pi_B$, but $B(h\|sa)$ is a discrete ``belief probability'' of being at history $h$ if the only information we have is the $sa$-pair. We recollect and discuss the differences between different probability distributions further in \Cref{rem:cheat-sheet}.

\section{Dispersion Distribution}

So far, we have only discussed about aggregating the histories to a common state. This direction makes sense, as the agent only has access to the states through an abstraction. The agent does not directly ``observe'' the history of interaction (only the abstraction map has access to this history). However in ARL, as the agent interacts with the environment it effectively lump histories into state labels because it does not distinguish between histories which end at the same state. This effectively induces a dispersion distribution over the set of histories for each state-action pair, see \Cref{rem:dispersion}. The agent must not have explicit access to this dispersion distribution, because it will defeat the purpose of the abstraction should the agent have access.

\begin{remark}[Generated/induced Dispersion Distribution]\label{rem:dispersion}
    Similar to the history- and state-conditional dispersion distributions, a dispersion distribution may also be generated/induced by a (behavior) policy $\pi_B$ as
    \bqan
    \mu^{\pi_B}_w(h \| sa)
    &\coloneqq \sum_{\tau} \mu^{\pi_B}_{\abs{\tau}}(h\|\tau a) \mu^{\pi_B}_w(\tau \| sa) \\
    &= \sum_{\tau} \mu^{\pi_B}_{\abs{\tau}}(h\|\tau a) w_{\abs{\tau}}(sa)\mu^{\pi_B}_{\abs{\tau}}(\tau \| sa) \\
    &= \sum_{\tau} \mu^{\pi_B}_{\abs{\tau}}(h\|\tau a) w_{\abs{\tau}}(sa) \fracp{\mu^{\pi_B}(\tau a)\ind{s_{\abs{\tau}}(\tau) = s}}{\mu^{\pi_B}_{\abs{\tau}}(s a)} \\
    &= \sum_{\tau} \frac{w_{\abs{\tau}}(sa)\mu^{\pi_B}(ha)\mu^{\pi_B}(\tau a)\ind{h \in [\tau]} \ind{s_{\abs{\tau}}(\tau) = s}}{\mu^{\pi_B}(\tau a)\mu^{\pi_B}_{\abs{\tau}}(s a)} \\
    &= \fracp{w_{\abs{h}}(sa)\mu^{\pi_B}(ha)}{\mu^{\pi_B}_{\abs{h}}(sa)} \sum_{\tau} \ind{h \in [\tau]} \ind{s_{\abs{\tau}}(\tau) = s} \\
    &= w_{\abs{h}}(sa) \fracp{\mu^{\pi_B}(ha)\ind{\psi(h) = s}}{\mu^{\pi_B}_{\abs{h}}(s a)} \\
    &= w_{\abs{h}}(sa) \mu^{\pi_B}_{\abs{h}}(h\|sa) \numberthis
    \eqan
    where the (behavior) policy is assumed sufficiently exploratory, i.e.\ it visits each $sa$-pair infinitely ofter almost surely.
\end{remark}

Apart from a few exceptions in the standard (MDP) state-abstraction literature, this dispersion distribution is usually parameterized only by the state, i.e. it is considered to be a distribution on the state-space of the underlying (original) MDP as a function of the abstract state \cite{Li2006,Abel2016,Roy2006,Hostetler2014}. The rational behind this structure is that if the environment admits a stationary distribution\footnote{The stationary distribution $d^\pi$ induced by a policy $\pi$ over a finite state MDP $\mu$ is a probability distribution which satisfies $d^\pi(s') = \sum_{s} \mu^\pi(s'\|s)d^\pi(s)$ for all underlying states $s'$.} over the underlying states then the dispersion distribution is simply the stationarity distribution on the underlying states conditioned on the abstract state. This construction clearly restricts the covered use cases; not all environments allow for a stationary distribution for any arbitrary (behavior) policy \cite{Hutter2000}. Moreover, in GRL it is hard to even have a notion of such stationary distribution on the history-space, as no history ever repeats.

Therefore, we consider the most general structure possible for the dispersion distribution $B$ which parametrized by the state-action pairs.
It is critical to note that in general $B$ may neither be directly \emph{estimable} nor \emph{accessible} \cite{Hutter2016}. However as shown in \Cref{eq:surrogate-mdp}, $B$ has the right ``type signature'' of being a probability mass function over the set of finite histories $\H$, see \Cref{rem:cheat-sheet}. Additionally, the dispersion distribution $B$ can also be a function of the (behavior) policy (\Cref{rem:dispersion}). And, using this (behavior induced) distribution we can even argue about some non-stationary environments which do not admit a limit stationary distribution on the underlying state-space \cite{Hutter2000}.

\begin{remark}[Probability Distributions Cheat Sheet]\label{rem:cheat-sheet}
    We use a variety of probability mass functions (PMFs) and measures to define probability distributions over the subsets of finite histories. For ease of access, we enumerate some of such important distributions below.
    \begin{itemize}
        \iparadot{$\mu^\pi$ (Probability Measure)} It is a probability measure over the subsets of \emph{infinite} histories induced by a policy $\pi$.
        \beq
        \mu^\pi: \F \to [0,1]
        \eeq

        We abuse notation and use the measures over cylinder sets as a PMF over finite histories, i.e.\ $\mu^\pi(ha) \coloneqq \mu^\pi(\cyl{ha})$ for any $ha$-pair. We can evaluate this PMF using \Cref{eq:mu-mu-rel} for any environment $\mu$ and policy $\pi$. It is critical to note that
        \beq
        \sum_{h_na_n \in \H_n \times \A} \mu^\pi(h_n a_n) = 1
        \eeq
        for any time-step $n$, but it may not be summable over mixed-length finite histories, i.e.\ $\sum_{m=1}^{\infty} \sum_{h_ma_m} \mu^\pi(h_ma_m) = \infty$.

        \iparadot{$\mu^\pi_n$ (Induced PMF,  Fixed-length)} Using the above measure as the starting point, we define the PMF $\mu^\pi_n$ over the set of finite histories of length $n \in \SetN$ as
        \beq
        \mu^\pi_n(ha) \coloneqq \mu^\pi(ha) \ind{\abs{h} = n}
        \eeq
        for \emph{any} finite history $h$ and action $a$. Now this is a properly defined PMF over the set of finite histories because $\sum_{m=1}^\infty \sum_{h_ma_m} \mu^\pi_n(h_ma_m) = 1$ for all time-steps $n$. We renormalized and marginalize this PMF to get many useful ``derived'' distributions, e.g.\ $\mu^\pi_n(sa)$, $\mu^\pi_n(h\|\tau a)$ and $\mu^\pi_n(h\|sa)$ for any state $s$ and abstract history $\tau$ of length $n$.

        \iparadot{$\mu^\pi_w$ (Induced PMF,  Mixed-length)} The above fixed-length PMF $\mu^\pi_n$ is also properly defined probability distribution over the set of mixed-length finite histories. However, it simply does not ``mix'' histories and put zero probability weight over any finite history other than length $n$. We use a weighting factor to get a ``truly'' defined PMF over the finite histories of mixed-length parametrized by an $sa$-pair as
        \beq
        \mu^\pi_w(h\|sa) \coloneqq w_{\abs{h}}(sa) \mu^\pi_{\abs{h}}(h\|sa)
        \eeq
        where $\sum_{m=1}^\infty w_m(sa) = 1$ for each $sa$-pair. The PMF is normalized as
        \beq
        \sum_{h \in \H} \mu^\pi_w(h\|sa) = \sum_{m=1}^\infty w_m(sa)\sum_{h_m}\mu^\pi_m(h_m\|sa) = 1
        \eeq
        for every state-action pair $sa$.

        \iparadot{$B$ (Dispersion Distribution)} We express the surrogate-MDP in \Cref{def:surrogate-mdp} in terms of a ``generic'' dispersion distribution $B$ which has the same ``type signature'' as $\mu^\pi_w$ described above. This is also a PMF over the set of mixed-length finite histories $\H$ paramterized by state-action pairs. $\mu^\pi_w$ is a possible choice of $B$ in \Cref{def:surrogate-mdp}. However, we develop the theory of ARL using this ``generic'' dispersion distribution $B$ without any reference to a $\pi$-induced distribution.

        \iparadot{$\Pr$ (Placeholder PMF)} We use $\Pr$ to denote any \emph{arbitrary choice} of a PMF in an expression. It is used as a ``placeholder'' for many $\pi$-induced distributions, e.g.\ $\mu^\pi_n$ and $\mu^\pi_w$, listed above.

    \end{itemize}

\end{remark}

\section{Learning a Surrogate MDP}

Let us assume that the agent has access to an abstraction $\psi$ of the environment, i.e.\ an oracle which, typically, is an abstraction learning algorithm,\footnote{We consider the abstraction learning case in \Cref{chap:abs-learning}.} has provided the agent a ``feature map'' that extracts the features of every history into a finite set of states. For this section, we assume that the environment is at least $\psi$-communicating, i.e.\ it is possible to visit every $sa$-pair infinitely often by a \emph{sufficiently exploratory} policy $\pi_B$. The agent can estimate a surrogate MDP using this policy, even if the original environment is history-based and non-stationary. Using the dispersion distribution of \Cref{rem:dispersion}, \citet{Hutter2016} proved that a simple frequency estimate of the following surrogate MDP converges due to the law of large numbers under ``weak conditions'':
\beq\label{eq:surrogate-mdp-by-behavior}
\b\mu(s'\|sa) = \sum_{m=1}^\infty w_m(sa)\mu^{\pi_B}_m(s'|sa)
\eeq
where $\mu_m^{\pi_B}(s'|sa) = \frac{\mu_m^{\pi_B}(sas')}{\mu_m^{\pi_B}(sa)}$. The structure of above surrogate MDP shows that it is a ``time-average'' of an ``effective'' state-process at every time instance. So, the estimation is trivial. However, the above structure allows the estimation of the surrogate MDP even for some non-stationary processes too \cite{Hutter2016}.

\section{Planning with Surrogate MDP}

Principally, once the agent has either estimated (or has been provided) a surrogate MDP, the agent can plan using the surrogate MDP process. In this work, by ``planning'' we mean that the agent estimates the optimal (state-)action-value functions (or a value-maximizing policy) of the surrogate MDP environment.

We use the corresponding abstract reward function $\b r_\mu$ to define the corresponding (state-)action-value functions and Bellman equations for the surrogate MDP. Moreover, we use the same discount factor $\g \in [0,1)$ from the underlying true environment. For a fixed policy $\pi: \S\to \Dist(\A)$, we define the action-value and state-value function as follows:
\bqa
q^\pi_\mu(sa) &\coloneqq \b r_\mu(sa) + \g  \sum_{s'}\b\mu(s' \| sa) v^\pi_\mu(s')   \\
v^\pi_\mu(s) &\coloneqq \sum_{a} \pi(a\|s)q_\mu^\pi(sa) \eqqcolon q_\mu^\pi(s\pi(s))
\eqa

These are the Bellman equations (BE) for the surrogate MDP. The corresponding optimal Bellman equations (OBE) for the process are the following:
\bqa
q^*_\mu(sa) &\coloneqq \b r_\mu(sa) + \g  \sum_{s'}\b\mu(s' \| sa) \sup_{a'} q^*_\mu(s'a') \\
v^*_\mu(s) &\coloneqq \sup_{a} \left(\b r_\mu(sa) + \g  \sum_{s'}\b\mu(s' \| sa) v_\mu^*(s')\right)
\eqa
where we use the short hand notation $q^\pi_\mu(s\pi(s))$ to denote the application of stochastic policy at state $s$.
The agent in ARL uses this OBE of the surrogate MDP to find an optimal policy $\pi_{\b\mu}$. After having access to this policy, the agent \emph{uplifts} this policy to the true environment by
\beq
\u\pi_{\b\mu}(a\|h) \coloneqq \pi_{\b\mu}(a\|\psi(h))
\eeq
for any action $a$,  history $h$ and state $\psi(h)$. The uplifting of the policy simply means that the agent takes actions dictated by the optimal policy of the surrogate MDP at the states provided by the abstraction.

The key question is when it is the case that $\u\pi_{\b\mu} \in \Pi^{\sup}_\eps(\mu)$. That is, the optimal policy found through the surrogate MDP is actually good. By definition the uplifted policy is a function of $\psi$ and $B$, so the question naturally applies to the choice of abstraction and dispersion distribution.
In this work, we put emphasis on and prove that there exist \emph{non-MDP} abstractions for which this is possible. One of the most important aspects of ARL is the use of a surrogate MDP to get a ``candidate'' policy for the environment. Since the literature is teeming with finite-state MDP solution algorithms, the agent can plan in both a model-based and model-free manner on the surrogate MDP. For completeness, we list some of the standard dynamic programming (DP) algorithms \cite{Bertsekas1996}, which are later referenced in the thesis.

\begin{itemize}
\iparadot{Model-based Planning} For model-based planning, let the agent have access to (an estimate of) the surrogate MDP $\langle \b\mu, \b r_\mu \rangle$ through an abstraction and any dispersion probability\footnote{There are many abstractions where the choice of $B$ is irrelevant. \Cref{chap:state-only-abs,chap:state-action-abs,chap:extreme-abs} list some examples of such abstractions.}.
The agent may either use \Cref{alg:policy-iteration} of policy iteration (PI), \Cref{alg:state-action-value-iteration} of state-action-value iteration (AVI), or an interleaving application of \Cref{alg:state-value-iteration} and \Cref{alg:policy-evaluation} with adaptive policies\footnote{By adaptive policies we mean the sequence of policies which progressively becomes ``greedy'' \cite{Sutton2018}.} to get an optimal policy of the surrogate MDP.\footnote{Note that each error tolerance $\theta$ in these algorithms corresponds to a particular $\eps$-optimality. These are some of the standard well-understood DP algorithms. We direct interested readers to consult \cite{Bertsekas1996} for their detailed treatment and convergence proofs.}

\begin{algorithm}[!]
    \caption{State-action-value Iteration (AVI)}
    \begin{algorithmic}[1]
        \Input finite-state MDP $M = \langle \b\mu, \b r_\mu\rangle$, discount factor $\g$ and small error tolerance $\theta$
        \Output (anytime) optimal state-action-value function $\h q \to q_\mu^*$
        \State Set $\Delta = \infty$ \Comment{any starting value greater than $\theta$ works}
        \State Set $\h q = \v 0$ \Comment{any initial value works}
        \Repeat \Comment{until the iteration converges}
        \ForAll{$s \in \S$}
        \ForAll{$a \in \A$}
        \State Set $q(sa) = \b r_\mu(sa) + \g \sum_{s'} \b \mu(s'\|sa) \max_{a'} \h q(s'a')$
        \EndFor
        \EndFor
        \State Set $\Delta = \norm{\h q - q}$ \Comment{typically, the infinity norm}
        \State Set $\h q = q$ \Comment{new estimate of $q_\mu^*$}
        \Until $\Delta \leq \theta$
    \end{algorithmic}
    \label{alg:state-action-value-iteration}
\end{algorithm}

\begin{algorithm}[!]
    \caption{Policy Iteration (PI)}
    \begin{algorithmic}[1]
        \Input finite-state MDP $M = \langle \b\mu, \b r_\mu\rangle$, discount factor $\g$ and error tolerance $\theta$
        \Output (anytime) uniform value-maximizing policy $\h \pi \to \pi_\mu^\theta$
        \State Set $q = \v 0$ \Comment{any initial value works}
        \State Set $\h \pi = {\rm uniform}(\A)$ \Comment{any random policy works}
        \Repeat \Comment{until the iteration converges}
        \State Set $\Delta = {\rm False}$
        \ForAll{$s \in \S$}
        \ForAll{$a \in \A$}
        \State Solve $q(sa) = \b r_\mu(sa) + \g \sum_{s'} \b \mu(s'\|sa) q(s'\h\pi(s'))$ w.r.t. $q$
        \EndFor
        \State Set $v(s) = \max_{a} q(sa)$
        \State Set $\pi[s] = {\rm uniform}(\{a \mid v(s) - q(sa) \leq \theta \})$ \Comment{any fixed prob. dist. rule}
        \If {${\rm supp}(\h \pi[s]) \neq {\rm supp}(\pi[s])$}
        \State Set $\Delta = {\rm True}$
        \EndIf
        \EndFor
        \State Set $\h \pi = \pi$ \Comment{new estimate of $\pi_\mu^\theta$}
        \Until $\Delta = {\rm False}$
    \end{algorithmic}
    \label{alg:policy-iteration}
\end{algorithm}

\begin{algorithm}[!]
    \caption{State-value Iteration (VI)}
    \begin{algorithmic}[1]
        \Input finite-state MDP $M = \langle \b\mu, \b r_\mu\rangle$, discount factor $\g$ and small error tolerance $\theta$
        \Output (anytime) optimal state-value function $\h v \to v_\mu^*$
        \State Set $\Delta = \infty$ \Comment{any starting value greater than $\eps$ works}
        \State Set $\h v = \v 0$ \Comment{any initial value works}
        \Repeat \Comment{until the iteration converges}
        \ForAll{$s \in \S$}
        \State Set $v(s) = \sup_{a} \left(\b r_\mu(sa) + \g \sum_{s'} \b \mu(s'\|sa) \h v(s')\right)$
        \EndFor
        \State Set $\Delta = \norm{\h v - v}$ \Comment{typically, the infinity norm}
        \State Set $\h v = v$ \Comment{new estimate of $v_\mu^*$}
        \Until $\Delta \leq \theta$
    \end{algorithmic}
    \label{alg:state-value-iteration}
\end{algorithm}

\begin{algorithm}[!]
    \caption{Policy Evaluation (PE)}
    \begin{algorithmic}[1]
        \Input finite-state MDP $M = \langle \b\mu, \b r_\mu\rangle$, discount factor $\g$, policy $\pi$ and small error tolerance $\eps$
        \Output (anytime) state-value function of the policy $\h v \to v_\mu^\pi$
        \State Set $\Delta = \infty$ \Comment{any starting value greater than $\eps$ works}
        \State Set $\h v = \v 0$ \Comment{any initial value works}
        \Repeat \Comment{until the iteration converges}
        \ForAll{$s \in \S$}
        \State Set $v(s) = \b r_\mu(s\pi(s)) + \g \sum_{s'} \b \mu(s'\|s\pi(s)) \h v(s')$
        \EndFor
        \State Set $\Delta = \norm{\h v - v}$ \Comment{typically, the infinity norm}
        \State Set $\h v = v$ \Comment{new estimate of $v_\mu^*$}
        \Until $\Delta \leq \eps$
    \end{algorithmic}
    \label{alg:policy-evaluation}
\end{algorithm}

\iparadot{Model-free Planning} Interestingly, the agent may not (directly) need $\langle \b\mu, \b r_\mu \rangle$ to find an optimal policy of the surrogate MDP $\pi_{\b\mu}$. The agent may use any model-free algorithm, e.g. Monte-Carlo, Q-learning, SARSA, DQN \cite{Sutton2018}, to directly estimate the state-action-value function $q^*_\mu$ of the surrogate MDP by interacting with the true environment. However, a subtlety is that the abstract process may not be an MDP, but we show in \Cref{chap:convergence-guarnt} that under some non-MDP abstractions Q-learning may still converge.  Once the agent has (an estimate of) $q^*_\mu$ then the optimal policy is trivially a maximizer of this function.
\end{itemize}

\section{Summary}

This chapter provided the formal definitions required for our ARL setup. We established the notion of a surrogate MDP and the corresponding abstract quantities. We stated what it means to uplift an optimal policy to the true environment. We highlighted that the planning (and learning) algorithms of standard RL designed for finite-state MDPs may be used to plan on the surrogate MDP(s). Once learned, the optimal policy of these surrogate MDP(s) is used in the original environment as the ``uplifted'' policy.

\chapter{Abstraction Zoo: State-only Abstractions}\label{chap:state-only-abs}
\begin{outline}
    This chapter collects different types of state-only abstractions prevalent in the literature. We also define some new types of abstractions along the way. The definitions of these abstractions are history-based, which makes them more general than their classical counterparts. Because of this general setup, these abstractions can readily be applied and compared with the ARL setup.
\end{outline}

\epigraph{\it ``All models are wrong, but some are useful.''}{--- \textup{George Box}}

\section{Introduction}

Recall that a state-only abstraction is simply a map from histories to some set of states. The abstraction map effectively groups histories to some information set,
which is the set of states. In ARL, the agent has only access to the abstract history. The information gained through a state signal becomes critical for the agent. If the state is abstracted too much (i.e.\ it distinguishes only a few histories) then the optimal policy might not be representable by the surrogate MDP, defined in the last chapter. In this case, the agent might not be able to learn the optimal behavior. On the other hand, if the abstraction provides too much information (i.e.\ it is too discriminatory) then the learning slows down, as there are too many state-action pairs \cite{McCallum1996}.

We argue that a state is nothing but a predictor about some quantity of interest (e.g.\ the next state, a future reward (or state) sequence, or expected future reward etc.) In MDPs, the current state is sufficient for the agent to make a prediction about the next state of the environment for every possible action. An agent can benefit a lot if it has access to a state which potentially helps to (only) make ``useful'' predictions, e.g.\ it predicts the future expected reward.
For example, if the ``state'' is the \emph{speed}, \emph{acceleration} and \emph{color} of the car in an autonomous driving problem, then the agent may not need to know the color of the car to be able to predict if the car is going to stop or not, which is a ``useful'' prediction in this context.

This predictive perspective allows us to determine which information might be useful to gain more rewards. Under the reward hypothesis of \Cref{asmp:reward-hypothesis}, we stress that the history-action-value function quantifies the ``useful'' aspects of a problem. Any abstraction which represent this function in the state-space should, in theory, be sufficient to represent the optimal behavior.

In this chapter, we collect the major branches of state-only abstractions used in this work. Recall that the history dependence of the abstraction map is one of the most important aspects of the abstractions considered in this thesis. This complete history dependence of the abstractions sets them apart from the standard state-abstractions in MDPs \cite{Abel2016}. These abstractions subsume the (possible) non-stationary nature of the environment, as they ``extract'' the state (features) from the complete history. Therefore under these history-based state-only abstractions, the mapping from a percept to a state may \emph{change} based on the history that has led to the precept, which is not the case in standard state-aggregation methods \cite{Abel2016}. In standard state-aggregation methods for MDPs, the most recent percept is mapped to a state and that mapping is fixed, i.e. it is a stationary mapping. It is critical to point out that the history notation used in this thesis may understate the generality of these abstractions, as a history may be mistaken with a set of states of any countably-infinite state MDP. One can say that $\H$ is a countably infinite set of states of an MDP, and we can use the existing theory of countably infinite MDPs. However as discussed earlier, history-based decision problems (HDP) are crucially different from other countably-infinite state MDP frameworks considered in the literature \cite{Melo2008}, as no state ever repeats in an HDP. This inherent non-recurrence of histories is a peculiarity of GRL \cite{Hutter2000} which has no direct counterpart in the theory of RL with MDPs \cite{Sutton2018}.

In this work, we do not try to provide a comprehensive or exhaustive list of abstraction maps, as the literature is vast (for their standard state-aggregation counterparts) with many (minor) variations. However, we try to provide a categorization of this field, which, we believe, conveys the main message about the structure of abstractions without worrying about the minor improvements possible with the variations.

\section{States as Predictors}

In general, a state is simply a label for some features\footnote{In general, a \emph{feature} is any function of history which should, ideally, capture some useful aspect of the history. For example, the arithmetic mean, sum and standard deviation are some \emph{features} of an IID sequence of numbers which are quite useful in predicting the next number in the sequence.} of the history. For example, an abstraction might put the quantized frequencies of the percepts as the label of the state. However, this historical information about the frequencies of the percepts might not be useful for future predictions. Unless the environment is a bandit problem,\footnote{A problem is an RL problem with only one state. See \citet{Lattimore2020} for a readable text about the bandit problem..} the frequency of the past percepts may not be of much help to make accurate predictions about the next percept. However, if the environment is an MDP then the abstraction which treats the states simply as the most recent percept provides a sufficient statistics for the future trajectories \cite{Sutton2018}. Importantly, the agent can predict the future rewards (or expectations of any other random process) from an MDP state label.

We use this view of state information being a statistics for future predictions of useful signals in a much broader sense. We define some state-only abstractions below according to the information preserved by the state labels. In particular, we are interested in making predictions about the optimal policy. It is easy to see that a sufficient distinction is the optimal action itself, i.e.\ states are distinguished only if they have map histories with different optimal actions. However, the resultant state-space through this information is not sufficient for learning even in simple MDPs. So, we need a bit more ``redundancy'' in the state-space than encoding the optimal actions to be able to make agents which can work without a priori knowing the abstractions  \cite{McCallum1996}.

The convergence and representation guarantees of such abstractions can be provided by sidestepping the question of learning these abstractions. It is not, yet, clear which is the right level\footnote{We use the ``right level'' to mean the smallest possible ``useful'' abstraction which an agent can learn by interacting with the environment.} of information, so the agent is able to learn the abstraction map in an online manner. We consider the abstraction learning problem in \Cref{chap:abs-learning}. Armed with the information preservation and prediction view of an abstract state, we start listing some of the major abstractions prevalent in the literature \cite{Hutter2016,Abel2016,Li2006}. It it important to note that the variants of such abstractions considered in the standard RL (non-history based) setup are fundamentally different. As discussed in \Cref{chap:arl}, the standard counterparts of these abstractions are stationary and history-independent. Whereas, we allow them to be non-stationary and history-based.

\section{Transition Kernel Abstractions}

The abstractions based on the transition kernel of the environment are subject of this section. Recall that the history-based process has as \emph{transition kernel} an action-conditional function from the history-space to a distribution over the same space.
\beq
\mu : \H \times \A \to \Dist(\OR)
\eeq

Any abstraction which (directly) tries to preserve the structure of this function, i.e.\ the transition kernel is representable as a function of states of the abstraction, is subject of this section. It is important to note that the definitions below are more general than what is considered in the standard RL \cite{Abel2016}.

\begin{definition}[$\eps$-MDP Abstraction]\label{def:mdp}
    An abstraction $\psi$ is called an $\eps$-MDP abstraction if for every $\psi(h) = \psi(\d h)$ the following holds for every action $a \in \A$:
    \beq
    \sum_{s'\in \S} \abs{\mu_\psi(s'\|ha) - \mu_\psi(s'\|\d h a)} \leq \eps_1 \text{ and } \abs{r_\mu(ha) - r_\mu(\d h a)} \leq \eps_2
    \eeq
    where $\eps \coloneqq \eps_1 + \eps_2$.
\end{definition}

The above conditions are also known as bounded-parameter MDP (BMDP) in the literature \cite{Givan2000}. In this thesis, this is one of the most restrictive abstraction in the sense that to be satisfied it has to distinguish histories too much. In terms of the predictive power of the abstraction, it can predict \emph{any} future continuations of the history. However as discussed above, the agent only needs to learn the optimal behavior which achieves maximum longterm expected rewards.

\citet{Hutter2016} considered (a variant of) such abstractions to prove representation guarantees of $\eps$-MDP abstractions \cite[Theorem 2]{Hutter2016}, which we extend to homomorphisms in \Cref{thm:psimdpstar}. The main focus of this thesis is to prove that we can abstract further (to get non-MDP abstractions) and still maintain representation (\Cref{thm:psiQstar}) and convergence guarantees (\Cref{thm:non-MDP-convergence}), see details in \Cref{chap:representation-guarnt,chap:convergence-guarnt}.

The following history-action-value based abstractions do exactly that. They do not preserve the state-predictive information, but they abstract based on the history-action-values to preserve only the ``useful'' information.

\section{Action-value Abstractions}

In ARL, the agents are designed to predicts only the valuable future, i.e.\ they are able to make prediction about the high optimal value states. So rather than going through the transition kernel, which can be used to predict history-action-value function of any policy, it suffices to abstract the optimal history-action-value function (also knows as the optimal Q-function). In the following abstractions, we choose maps based on the structure of the optimal Q-function. It turns out that these types of abstractions are very compact as compared to MDP abstractions \cite{McCallum1996,Abel2016,Hutter2016}. The resultant abstractions have many benefits over the transition kernel based abstractions, e.g.\ $\eps$-MDP abstractions. For example, optimal Q-function abstractions allow Q-learning to converge beyond MDPs, see \Cref{chap:convergence-guarnt}. Moreover, since the space of Q-function is (a priori) well defined and bounded due to our bounded reward assumption, there is an upper bound on the maximum number of states required to represent the optimal Q-function \cite{Hutter2016}. Such a bound is not possible for $\eps$-MDP abstraction stated above.

\subsection{Q-uniform Decision Process (QDP)}

The QDP, which we now introduce, is one of the most important abstractions considered in this work. In a $\eps$-QDP abstraction histories are mapped to a same state if they have nearly the same optimal history-action-value function.

\begin{definition}[$\eps$-QDP Abstraction]\label{def:qdp}
    A map $\psi$ is an $\eps$-QDP if for every $\psi(h) = \psi(\d h)$ the following holds:
    \beq
    \norm{Q_\mu^*(ha) - Q_\mu^*(\d ha)} \leq \eps
    \eeq
    for every action $a \in \A$.
\end{definition}

The $\eps$-QDP abstraction defined above is a major class of non-MDP abstractions. As said earlier, the majority of the thesis is based on (variants of) such abstractions. Later in \Cref{chap:extreme-abs,chap:convergence-guarnt,chap:representation-guarnt,chap:action-seq}, we prove some very important properties of these abstractions. \citet{Hutter2016} has proved representation guarantees for $\eps$-QDP abstraction \cite[Theorem 8]{Hutter2016}. Later in the thesis, we extend these results to homomorphisms \Cref{thm:psiQpi,thm:psiQstar}. We also improve (\Cref{thm:bin-esa}) an important upper-bound on the number of states of surrogate-MDP proved by \citet[Theorem 11]{Hutter2016}.

However, in the literature, there are many sub-classes of this abstraction class which lie between the $\eps$-MDP and $\eps$-QDP abstractions \cite{Abel2016}. The commonality in these sub-classes is the normalization since the absolute scale of the optimal Q-function does not alter the optimal policy. The absolute scale of the history-action-value functions is not preserved. That is, the mapped histories may have a mixture of history-action-values of different absolute values. The following is a non-exhaustive list of such sub-classes \cite{Abel2016}:

\begin{itemize}
    \iparadot{Boltzmann QDP}
    As apparent from the name, these abstraction preserves a Boltzmann like structure of history-action-value functions. The Q-functions can be normalized, since the absolute scale of a Q-function does not matter. An abstraction map $\psi$ is \emph{Boltzmann $\eps$-QDP} if for all actions $a \in \A$ the following holds:
    \beq
    \psi(h) = \psi(\d h) \implies \norm{\frac{\e^{Q_\mu^{*}(ha)}}{\sum_{\d a} \e^{Q_\mu^{*}(h\d a)}} - \frac{\e^{Q_\mu^{*}(\d ha)}}{\sum_{\d a} \e^{Q_\mu^{*}(\d h\d a)}}} \leq \eps
    \eeq

    \iparadot{Multinomial QDP}
    Similar to the Boltzmann $\eps$-QDP, the history-action-values are also normalized in the multinomial $\eps$-QDPs, but slightly differently. An abstraction $\psi$ is a multinomial $\eps$-QDP if the following holds:
    \beq
    \psi(h) = \psi(\d h) \implies \norm{\frac{Q^*_\mu(ha)}{\sum_{a} Q^*_\mu(ha)}- \frac{Q^*_\mu(\d ha)}{\sum_{a} Q^*_\mu(\d ha)}} \leq \eps
    \eeq
    for all $a \in \A$

    \iparadot{Normalized QDP}
    A natural way to normalize a history-action-value is to normalize it by the history-value, and not by the total ``area'' of a Q-function, as is the case above. An abstraction $\psi$ is called normalized $\eps$-QDP, if for all actions $a \in \A$ the following holds:
    \beq
    \psi(h) = \psi(\d h) \implies \norm{\frac{Q_\mu^*(ha)}{V_\mu^*(h)}- \frac{Q_\mu^*(\d ha)}{V_\mu^*(\d h)}} \leq \eps
    \eeq

    \iparadot{Advantage QDP}
    Another popular way to normalize a Q-function is to consider the relative (dis)advantage of each action from the optimal actions. The grouped histories have similar (dis)advantage for each action as a function of state, but the absolute value may still depend on the history. Formally, an abstraction $\psi$ is an advantage $\eps$-QDP if for all actions $a \in \A$ the following holds:
    \beq
    \psi(h) = \psi(\d h) \implies \norm{A_\mu^*(ha) - A_\mu^*(\d ha)} \leq \eps
    \eeq
    where $A_\mu^*(ha) \coloneqq  Q_\mu^*(ha) - V_\mu^*(h)$ is known as the advantage function in the literature \cite{Sutton2018}.
\end{itemize}

The above classes of Q-function abstractions may be beneficial in designing a learning algorithm \cite{Abel2018}, but, as highlighted at many places in the thesis, the (standard) $\eps$-QDP class is interesting enough. We do not use these specializations any further.

\subsection{Value \& Policy-uniform Decision Process (VPDP)}

History-action-value functions are critical for the agent to learn the optimal behavior, see \Cref{chap:convergence-guarnt}. However, distinguishing the states based on the full history-action-value functions might not be necessary. For example, if the non-optimal actions have smaller values relative to the optimal values then the abstraction might ``safely'' mix multiple histories with same values and optimal actions only. The surrogate MDP might not lead to a sub-optimal policy. However, it is not a rigorously proven fact, yet. We have empirical support\footnote{The empirical setup used was similar to what is considered in \Cref{chap:vpdp-exp}. However, we did not find any counter-example to the conjecture in our experiments, which has lead us to believe that the conjecture is true. It calls for further investigation.} that it might be the case. We put this down as a conjecture later, see \Cref{conj:vpdp}. We call such class of abstractions the value \& policy-uniform decision processes (VPDP).

\begin{definition}[$\eps$-VPDP Abstraction]\label{eq:vpdp}
    An abstraction map $\psi$ is an $\eps$-VPDP if for every $\psi(h) = \psi(\d h)$ the following holds:
    \beq
    \norm{V_\mu^*(h) - V_\mu^*(\d h)} \leq \eps_1 \text{ and } \A_{\eps_2} = \A_{\eps_2}
    \eeq
    where $\eps \coloneqq \eps_1 + \eps_2$ and $\A_{\eps}(h) \coloneqq \{a \in \A : V^*(h) - Q^*(ha) \leq \eps\}$ is the set of all $\eps$-optimal actions.
\end{definition}

The above definition is crucially different from the value-uniform decision process ($\eps$-VDP) considered by \Cite{Hutter2016} and in \Cref{chap:representation-guarnt}.

\begin{definition}[$\eps$-VDP Abstraction]\label{eq:vdp}
    An abstraction map $\psi$ is an $\eps$-VDP if for every $\psi(h) = \psi(\d h)$ the following holds:
    \beq
    \norm{V_\mu^*(h) - V_\mu^*(\d h)} \leq \eps \text{ and } \pi^*(h) = \pi^*(\d h)
    \eeq
\end{definition}

As apparent from the definitions, $\eps$-VDP only demands for the optimal actions to be $\psi$-uniform. Whereas, $\eps$-VPDP merges histories together which have same set of $\eps_2$-optimal actions therefore allows a finer reduction than $\eps$-VDP.
\Cref{fig:classes} shows a graphical relationship between the main classes of abstractions considered in this thesis. The $\eps$-PDP class of abstractions are the ones where
\beqn
\norm{V_\mu^*(h) - Q_\mu^*( h\pi_\mu(\d h))} \leq \eps
\eeqn
for all pair of histories such that $\psi(h) = \psi(\d h)$. Trivially, the $\eps$-VPDP class is the intersection of the $\eps_1$-VDP and $\eps_2$-PDP classes.

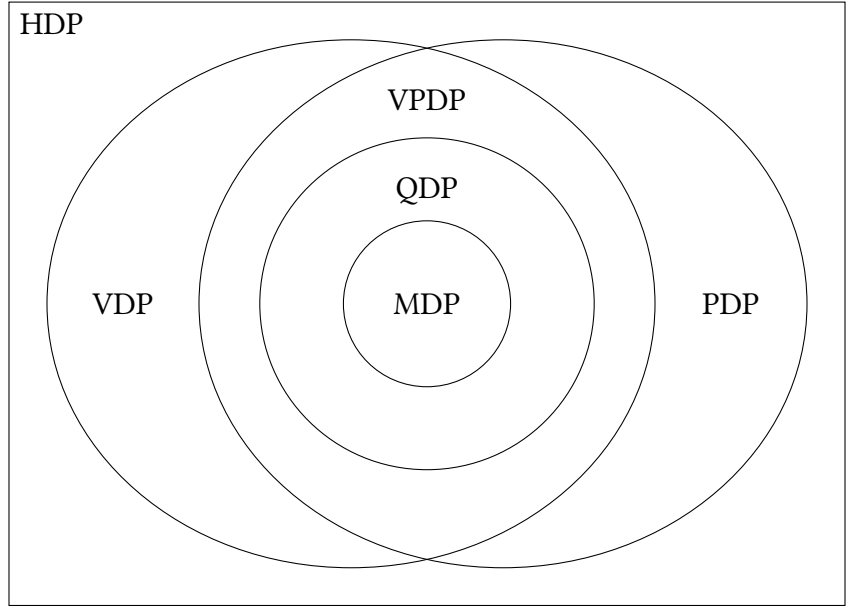
\begin{figure}
    \centering
    \begin{tikzpicture}
        \draw[black] (-5.5,-4) rectangle (5.5,4);
        \node[below right] at (-5.5,4) {HDP};
        \draw[black] (0,0) circle (2.2cm);
        \node[] at (0,1.5cm) {QDP};
        \draw[black] (0,0) circle (1.1cm);
        \node[] at (0,0) {MDP};
        \draw[black] (-1,0) ellipse (4cm and 3.5cm);
        \node[] at (-4,0) {VDP};
        \draw[black] (1,0) ellipse (4cm and 3.5cm);
        \node[] at (4,0) {PDP};
        \node[] at (0,2.7) {VPDP};
    \end{tikzpicture}
    \caption{Major abstraction classes for abstraction reinforcement learning.} \label{fig:classes}
\end{figure}

\section{Other Abstractions}

In this section, we list further abstractions used by the community. We hint their connections to the ARL framework. However, a proper treatment of these abstraction to cast them into a history map is left as a future work.

\subsection{Factored MDP (FMDP)}

The above BMDP condition (which is $\eps$-MDP defined in \Cref{def:mdp}) does not assume any structure on the state-space, but one can go further. Sometimes the environment allows for a natural factorization of the state-space \cite{Boutilier2000}. The state-space $\S \coloneqq \S_1 \times \S_2 \times \dots \times \S_N$ can be considered as a product of many different factors $\S_i$, e.g.\ in an Atari game the shape, color, and position of the objects might be enough to produce the state-space.

A factored MDP represents the environment with a factored state-space, where the next factor of the state depends on a subset of the state factors in the previous step. These are known as the parents of the factor. Dynamic Bayesian networks (DBN) \cite{Kearns1999a} and decision trees \cite{Strehl2007} are actively used to model such factored MDP environments. In a factored MDP representation the surrogate-MDP has the following structure:
\beq
\b \mu(\v s'\| \v sa) = \prod_{i=1}^{d} \mu(s_i'\| {\rm Parents}_i(\v s)a)
\eeq
where ${\rm Parents}_i(\v s)$ is the set of factors which affects the next factor $s'_i$.
\citet{Hutter2008c} considers such factored representations as  $\phi$DBN abstractions.

\subsection{Partially Observable MDP (POMDP)}

The most prevalent extension of MDP framework is the class of partially observable Markov decision processes (POMDP). In a POMDP model, it is assumed that there is an underlying MDP which is not observable by the agent \cite{Kaelbling1996}. To see how we can fit this model into ARL, let there be a (finite) set of states $\X$ of the environment. The true occupation probability\footnote{The occupation probability is the probability of the POMDP being in a certain underlying, unobserved state after a certain history of interaction of the agent with the environment.} is expressed by a map $\chi : \H \to \Dist(\X)$. By design, the true environment has an underlying MDP $\mu_{\rm MDP}$ and an emission process $\mu_{\rm E}$ with the following structure:
\bqan
\mu(e'\|ha)
&\coloneqq \sum_{x' \in \X}\mu_{\rm E}(e'\|x')\sum_{x \in \X}\mu_{\rm MDP}(x'\|xa)\chi(x\|h) \\
&= \sum_{x' \in \X}\mu_{\rm E}(e'\|x')\mu_{\rm MDP}(x'\|\v x(h) a) \\
&= \mu(e'\|\v x(h) a) \numberthis
\eqan
where $\v x (h)$ is the true occupancy probability of the underlying MDP $\mu_{\rm MDP}$. For a fixed starting distribution, there is a unique $\v x(h)$ for each $h$. The mapping $\chi$ is a kind of soft state-aggregation from the set of histories to the distribution over the underlying state-space \cite{Singh1995}.

It is clear from the above construction that if we allow for continuous-state abstractions then we can model the POMDP class exactly with ARL. However as shown in \Cref{chap:representation-guarnt}, we do not need a continuous state vector to fully represent the useful aspects of the environment. Therefore, we side step the POMDP construction, and only consider QDP and VPDP classes in this thesis.

\subsection{Predictive State Representation (PSR)}

The POMDP model of the environment involves the information about the true, unobservable state-space $\X$. Typically, the agent maintains a belief distribution over $\X$, so it knows $\X$ \cite{Kaelbling1996}, which is a strong assumption. Predictive state representations try to circumvent this complication by building a representation by using only observable quantities \cite{Littman2002}. In its standard form, a PSR models the environment using a (finite) set of \emph{core tests} $\T \subseteq \A\times\H$, which are nothing but finite set of future trajectories (starting from an action). The idea is that if the probability of these so-called core tests is known then we can \emph{represent} the probability of \emph{any} future trajectory $\vo{ae}' \in \A \times \H$ by some (linear or non-linear) combination of the core tests. We can connect the theory of PSR with ARL by noting that a PSR state maps histories $h$ and $\d h$ together if they ``agree'' on their predictions on the core tests. Formally,
\beq
\sup_{\vo{ae}'_n \in \T} \abs{\mu(\v e'_n \| h, \v a_n) - \mu(\v e'_n \| \d h, \v a_n)} \leq \eps
\eeq
for any $\psi(h) = \psi(\d h)$, where $\mu(\v e'_n \| h, \v a_n)$ is an action-conditional distribution defined as
\beq
\mu(\v e'_n \| h, \v a_{n}) \coloneqq \prod_{m=1}^n \mu(e'_{m}\| ha_1e'_1\dots e'_{m-1} a_m)
\eeq

Furthermore, it is easy to see that a PSR state $\v s(h)$ at each history $h$ is a finite vector from $\SetR^\T$. A PSR requires the existence of a weight vector $w: \A \times \H \to \SetR^\T$
for each future trajectory which can be used to express the action-conditional probability of that trajectory. In linear PSR it is expressed as follows:
\beq
\mu(\v e'_n \| h, \v a_n) \approx \sum_{t \in \T} w_t(\vo{ae}'_n)\v s_t(h)
\eeq
for all trajectories $\vo{ae}'_n \in \A \times \H$ strarting from the history $h$.

\section{Summary}
This chapter listed some of the major state-only abstractions used in the literature. We also highlighted some minor variations. The chapter provided formal definitions of important classes of abstractions, $\eps$-MDP, $\eps$-QDP, and $\eps$-VPDP used in the rest of the thesis. We also explored the connection of ARL with other dominant state-representation and abstraction methods.

\chapter{Abstraction Zoo: State-Action Abstractions}\label{chap:state-action-abs}

\begin{outline}
    The standard RL framework suffers from the curse of dimensionality of both the state and action spaces, i.e.\ the standard RL algorithms scale badly with the increased size of the state-action space. In this chapter we collect many prominent techniques of state-action abstractions in RL. However, the problem of formalizing a state-action abstraction is more subtle than the state-only abstraction. That is the reason that there is less variety among state-action abstractions (also known as homomorphisms) as compared to the state-only abstractions.
\end{outline}

\epigraph{\it ``Stop learning tasks, start learning skills.''}{--- \textup{Satinder Singh}}

\section{Introduction}
In \Cref{chap:state-only-abs} we categorized a number of state-only abstractions, which produce a finite state representation of the environment by mapping histories to states. In this chapter, we extend the definition of some of these abstractions to a state-action abstractions. As highlighted in \Cref{chap:arl}, in this way we can abstract actions jointly with the states using a single map. This method of jointly reducing the (original) state-action space is typically studied under the homomorphism framework \cite{Whitt1978}. Just like state-abstractions, in the homomorphism setup the original problem is solved by finding the solution in the smaller state-action abstract problem. The abstract action-space is assumed to be reasonably small to facilitate planning using the abstraction map.

Recall that a state-action abstraction is a mapping from history-action pairs to state and abstract action pairs:
\beq
\psi: \H \times \A \to \S \times \B
\eeq
where $\B$ is some set of abstract actions. As for the state-only abstractions, we assume that state-action abstraction is deterministic. This chapter mirrors most of the abstractions considered in \Cref{chap:state-only-abs}.

\section{Transition Kernel Homomorphisms}

The most natural similarity measure is the similarity of the transition kernel and the immediate rewards. This mirrors the $\eps$-MDP state-only abstraction of \Cref{def:mdp}. The resultant abstraction maps the history-action process to a finite abstract state-action MDP, which nearly preserves the Markovian structure of the origin problem \cite{Ravindran2004}.

\begin{definition}\label{def:mdp-homo}
    An abstraction $\psi$ is an $\eps$-MDP homomorphism if for any $\psi(ha) = \psi(\d h \d a)$ the following holds:
    \beq
    \sum_{s' \in \S} \abs{\mu_\psi(s'\|ha) - \mu_\psi(s'\|\d h\d a)} \leq \eps_1 \land \abs{r_\mu(ha) - r_\mu(\d h \d a)} \leq \eps_2
    \eeq
    where $\eps \coloneqq \eps_1 + \eps_2$.
\end{definition}

Usually, the $\eps$-MDP homomorphism is studied under a parametrized metric \cite{Taylor2008}. The conditions in \Cref{def:mdp-homo} are joined together as
\beq
\sum_{s' \in \S} \abs{\mu_\psi(s'\|ha) - \mu_\psi(s'\|\d h\d a)} + \alpha \abs{r_\mu(ha) - r_\mu(\d h \d a)} \leq \eps
\eeq
where $\alpha > 0$ is some preference parameter which trades-off between the reward and the next state distribution mismatches. As these homomorphisms model the transitional similarities, they are useful for domains where there are regions of similar transitional structures, e.g.\ navigational and grid-world domains \cite{Ravindran2004}.

\section{Action-value Function Homomorphisms}

Again, we can argue, and show in \Cref{chap:representation-guarnt}, that an $\eps$-MDP homomorphism is not necessary for a good representation. We only need to preserve the ``useful'' structure of the environment. There is nothing better than the history-action-value function to quantify such ``usefulness''. We can easily mirror the definition of $\eps$-QDP state-only abstraction to define an $\eps$-QDP homomorphism.

\begin{definition}
    An abstraction $\psi$ is an $\eps$-QDP homomorphism if for any $\psi(ha) = \psi(\d h \d a)$ the following holds:
    \beq
    \norm{Q^*_\mu(ha) - Q^*_\mu(\d h\d a)} \leq \eps
    \eeq
\end{definition}

We can replicate the sub-class of $\eps$-QDP state-only abstractions from \Cref{chap:state-only-abs} by $\B = \A$ and $\psi(ha) = sa$. To the best of our knowledge, this $\eps$-QDP homomorphism is quite underexplored. Moreover, the notion of $\eps$-VPDP homomorphism is very convoluted and weak. We do not list it as a potential homomorphism here. However, we address a variation of it in \Cref{chap:representation-guarnt}.

\section{Options Framework}

The state-action homomorphism is one of a few formal methods to reduce the complexity of an RL problem. The options framework is a competing formulation to homomorphisms. However, the standard options framework does not involve state abstraction, and it is a framework for temporal action abstractions \cite{Sutton1999a}. An options framework coupled with state abstraction would be a framework of joint state-action abstraction. For completeness, we provide a bit more formal introduction of this framework below.

\subsection{Standard Options}
In the standard options framework, we say that a set of abstract actions $\B$ is a set of behaviors (or plans). There are three main components of any behavior $b = \langle \pi_b, I_b, \beta_b \rangle \in \B$ defined as follows:
\bqs
\pi_b &: \H \to \Dist(\A) \\
I_b &: \H \to \{0,1\} \\
\beta_b &: \H \to \SetRP
\eqs
where $I_b$ is a set of states where this option can be ``started'', and $\beta_b$ is the ``continuation probability''. Its complement $\b \beta \coloneqq 1-\beta$ is the ``termination probability'' probability \cite{Sutton1999a}.
The agent at any history $h$ selects an option $b \in \B$. As an option could take multiple steps before terminating, the agent follows $\pi_b$ until the option terminate before deciding again for the next option. There is no ``decision-making'' during the option. Once an option is started the agent follows the associated policy without further decision making about the option. However, once the option is terminated the agent takes back the control.

The ARL setup with homomorphic abstraction and the options framework with action-abstraction are both trying to reduce the complexity of the underlying RL problem. We believe that the connection between the ARL setup with a homomorphism may aid to a improve the understanding of the options framework. However, it requires more investigation.

\subsection{Relativized Options}

The major limitation of the standard options framework above is the options' dependency on the original history $h$. Recently in an MDP context, this framework has been extended to use a state-only abstraction \cite{Ravindran2003,Abel2019}. The resultant options which work ``well'' with the (state-only) abstraction are called \emph{relative} options. The relativized options are the ``macro'' options which are simply the labels for a set of options which respect the state-only abstraction, e.g.\ the options start in the abstract state and terminate in another abstract state \cite{Abel2019}. It is easy to see that the framework of relativized options can also be emulated by ARL with a set of appropriate abstractions and policies for the agents.

\section{Summary}

This chapter is a close replication of \Cref{chap:state-only-abs}. We defined the two most important state-action abstractions, $\eps$-MDP and $\eps$-QDP homomorphisms. Moreover, we touched upon the alternate framework of options for state-action reduction.

\chapter{Abstraction Zoo: Extreme State Abstractions}\label{chap:extreme-abs}

\begin{outline}
    This chapter expands on some of the non-MDP abstractions of \Cref{chap:state-only-abs}. These abstractions have special properties: they allow to cast any RL problem into a fixed state abstraction. This is known as the extreme state aggregation (ESA). We define what it means to have an extreme abstraction, and list some of the key properties of such abstractions.
\end{outline}

\epigraph{``Approximations, after all, may be made in two places - in the construction of the model and in the solution of the associated equations. It is not at all clear which yields a more judicious approximation.''}{--- \textup{Richard Bellman}}

\section{Introduction}

Recall that we assume that the rewards are bounded between a unit interval, i.e. $r \in [0,1]$. This implies that in the unnormalized discounted GRL setup the history-value and history action-value functions are also bounded between $[0, 1/(1-\g)]$. The boundedness of this ``value space'' is an important property for non-MDP abstractions grouping histories based on value similarity, e.g.\ $\eps$-QDP and $\eps$-VPDP abstractions. As defined later, the state-space of such abstractions can be embedded into (a discretized version of) a bounded hypercube. Interestingly, adding more observations to $\OR$ does not increase the size of the state-space. This can not be done for $\eps$-MDP abstractions. If we fix the state-space of an $eps$-MDP and increase the number of percepts then the resultant abstract process may not be an MDP. In general, the dimension of the kernels matrix $\mu$ increases with the addition of more percepts. The bounded range of $\eps$-QDP and $\eps$-VPDP abstractions provide some of the most interesting properties to these non-MDP models. Before we discuss these extreme abstractions, we formalize the notion of (universal) extreme state-space.

\section{Extreme State-space}

In general, an abstraction map $\psi$ may have any arbitrary state-space $\S$. However, as discussed above, if histories are mapped based on the (optimal) Q-function then we can (trivially) label the state by a (representative) action-value vector. That is, the state-space is $\S \subseteq [0, (1-\g)\inv]^A \subset \SetR^A$. Furthermore, we are interested in $\eps$-close Q-functions, so a discretized version of above space is sufficient our setup.

\begin{definition}[Extreme State-space]
    For any finite action-space of size $A$ and $\eps > 0$, we define the \emph{extreme state-space} as an $\eps$-grid of the hypercube $[0, (1-\g)\inv]^A$.
\end{definition}

We consider the case of learning such extreme abstractions in \Cref{chap:abs-learning}.
In the following sections, we go over the extreme versions of $\eps$-QDP and $\eps$-VPDP abstractions. We discuss some of the key consequences for using such extreme abstractions.

\section{Extreme QDP Abstractions}

The $\eps$-QDP abstractions defined in \Cref{def:qdp} can have an arbitrary state-space, but we call them extreme $\eps$-QDP abstractions if they are defined over the (universal) extreme state-space.

\begin{definition}[Extreme $\eps$-QDP]
    For any environment $\mu$, an abstraction $\psi_\mu^*$ is an extreme $\eps$-QDP if
    \beq\label{eq:extreme-qdp}
    \psi_\mu^*(h) \coloneqq \left(\ceil{Q_\mu^*(ha)/\eps}\right)_{a \in \A}
    \eeq
    for any history $h$.
\end{definition}

In the above definition we use the special notation $\psi_\mu^*$ to highlight the fact that the abstraction is tailored to the individual environment. Later in \Cref{chap:abs-learning} when we try to design a learning algorithm, this notation comes handy to denote the estimated (or candidate) abstraction maps.

The following theorem is an adaptation of \citet[Theorem 11]{Hutter2016}, which proves the usefulness of these abstractions.\footnote{We casually call it ``extremely useful QDP'' just to reference the fact that it is an extreme abstraction.}

\begin{theorem}[Extremely ``Useful'' QDP]
    Any uplifted optimal policy of any surrogate-MDP of an extreme $\eps'$-QDP abstraction is an $\eps$-optimal policy in the original environment. The number of states is bounded as
    \beq
    S \leq \fracp{3}{\eps(1-\g)^3}^A
    \eeq
    where $A$ is the size of the action-space and $\eps' \coloneqq 3\inv(1-\g)^2 \eps$.
\end{theorem}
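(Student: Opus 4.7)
The plan has two independent pieces: a counting bound on the cardinality of the extreme state-space, and a three-stage chain relating the surrogate-MDP action-values to $Q_\mu^*$ and then to policy suboptimality. I would treat them in that order, as the first is purely combinatorial while the second invokes approximate dynamic programming.

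For the cardinality bound, I would argue directly from \eqref{eq:extreme-qdp}: each coordinate of $\psi_\mu^*(h)$ is a non-negative integer $\ceil{Q_\mu^*(ha)/\eps'}$, and since the reward hypothesis gives $Q_\mu^*(ha) \in [0, (1-\g)\inv]$ by geometric discounting, each coordinate can take at most $\ceil{1/((1-\g)\eps')}$ distinct values. Substituting $\eps' = (1-\g)^2\eps/3$ yields $3/((1-\g)^3\eps)$ grid points per coordinate, and taking the product over the $A$ coordinates produces the claimed envelope $S \leq (3/(\eps(1-\g)^3))^A$.

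For the $\eps$-optimality, I would assemble three approximations. (i) The QDP property of $\psi_\mu^*$ guarantees that whenever $\psi_\mu^*(h)=\psi_\mu^*(\d h)=s$, we have $\abs{Q_\mu^*(ha)-Q_\mu^*(\d ha)} \leq \eps'$ for every $a$; hence the $B$-averaged quantity $\tilde q(sa) \coloneqq \sum_h Q_\mu^*(ha) B(h\|sa)$ satisfies $\abs{Q_\mu^*(ha)-\tilde q(\psi_\mu^*(h),a)} \leq \eps'$ uniformly in $h$ and, crucially, in the choice of dispersion $B$ (since $B$ puts mass only on histories with $\psi(h)=s$). (ii) Plugging the GOBE for $Q_\mu^*$ into the definition of $\tilde q$ and into the definitions of $\b\mu$ and $\b r_\mu$ from \Cref{def:surrogate-mdp} shows that $\tilde q$ is an $\eps'$-approximate fixed-point of the surrogate Bellman optimality operator; by its $\g$-contractivity, $\norm[\infty]{q^*_{\b\mu}-\tilde q} \leq \eps'/(1-\g)$, so the combined bound $\abs{Q_\mu^*(ha)-q^*_{\b\mu}(\psi_\mu^*(h),a)} \leq \eps'+\eps'/(1-\g)$ holds. (iii) Finally, the standard greedy-policy lemma states that acting greedily with respect to a $\delta$-approximation of $Q_\mu^*$ loses at most $2\delta/(1-\g)$ in value; applying it to $\u\pi_{\b\mu}$ and using the substitution $\eps' = (1-\g)^2\eps/3$ closes the bound at $\eps$, with the factor $3$ absorbing the three sources of blow-up (the QDP tolerance, the contraction factor $(1-\g)\inv$, and the greedy-policy factor $(1-\g)\inv$).

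The main obstacle will be step (ii): the surrogate operator mixes the $\sup_a$ nonlinearity with the $B$-averaging, and these do not commute, so bounding the near-fixed-point residual uniformly in $B$ is the delicate point. It is precisely the uniform $\eps'$-proximity of $Q_\mu^*$ across all histories collapsing to the same extreme state that makes this commutativity gap vanish up to $\eps'$, rather than growing with the diameter of $[h]$; without the extreme QDP structure, the bound would depend on $B$ and the result would fail for adversarial dispersions.
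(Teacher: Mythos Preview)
Your proposal is correct and matches the paper's approach: the paper defers the formal proof to the original reference (Hutter 2016, Theorems~8 and~11), but the counting argument for $S$ is spelled out identically in the intuitive paragraph immediately following the statement, and your value-loss chain (QDP closeness $\to$ near-fixed-point of the surrogate operator $\to$ contraction $\to$ greedy-policy lemma) is precisely the mechanism the paper later details in \Cref{thm:psiQstar}. Your identification of step~(ii) as the delicate point is apt; the only quibble is constant bookkeeping, since the tighter combined estimate $\eps'/(1-\g)$ (which your $\g\eps'$ residual actually yields after contraction) is what closes the bound at $\eps$ rather than the looser $\eps'+\eps'/(1-\g)$ you quote.
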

\begin{proof}
    The proof can be found in the original reference \cite[Theorem 8 \& 11]{Hutter2016}.
\end{proof}

To intuitively see why the above bound holds, consider we are discretizing each action-value into $\eps'$ wide bins. Due to $\g$-discounting and unit bounded rewards, the values can range from zero to $1/(1-\g)$. So, we end up with $\ceil{1/\eps'(1-\g)}$ bins per action-value. Since we need to do the discretization for each action-value, we have in total $\ceil{1/\eps'(1-\g)}^A$ bins, which are the number of different states of the abstraction. The final piece in the bound the relationship between $\eps'$ and $\eps$ that requires a further fine graining of the grid by a factor of $3(1-\g)^{-2}$.

In \Cref{chap:action-seq} we improve the above upper bound on the number of states from an exponential to a mere logarithmic dependency in the size of the action-space, see \Cref{thm:bin-esa}. The above fact is one of the most powerful aspects of extreme abstractions. If the agent has access to (or it learns one of) them then it can use the surrogate-MDP (defined over a finite state-space of size $S$) to plan optimally. Importantly, \emph{any} surrogate MDP defined in \Cref{eq:surrogate-mdp} will lead to a near-optimal policy of the original process. This allows the agent to use any behavior sufficiently exploratory policy of the environment to learn the surrogate MDP and then use the optimal policy of the learned process \cite{Hutter2016}. Note that the resultant surrogate MDP will be a function of the behavior, see \Cref{eq:surrogate-mdp-by-behavior} for example, but the above theorem assures that the optimal policies of these (possible) surrogate MDPs are near-optimal in the original environment.

\section{Extreme VPDP Abstractions }

The idea of extreme state abstraction (ESA) can also be used on $\eps$-VPDP abstractions.

\begin{definition}[Extreme $\eps$-VPDP]
    For any environment $\mu$, an abstraction $\psi_\mu^*$ is an extreme $\eps$-VPDP if
    \beq
    \psi_\mu^*(h) \coloneqq \left(\ceil{V_\mu^*(h)/\eps_1}, \pi^*(h), \A_{\eps_2}(h))\right)
    \eeq
    for any history $h$, where $\A_\eps(h) \coloneqq \{a \in \A: V^*_\mu(h) - Q^*_\mu(ha) \leq \eps \}$ denotes the set of all $\eps$-optimal actions.
\end{definition}

\citet[Theorem 10]{Hutter2016} provides a counter-example for the optimality of $\eps$-VDP abstractions. Note that the notion of value and policy-uniformity considered by \citet{Hutter2016} is crucially different than what is defined in $\eps$-VPDP. He demands an $\eps$-VPDP to preserve \emph{an} optimal action in each state. Whereas, our notion preserves \emph{all} $\eps_2$-optimal actions. We map the histories together only if all $\eps_2$-optimal actions in these histories are the same. The notion of \citet{Hutter2016} is coarser than ours. That is the reason the counter-example of \citet[Theorem 10]{Hutter2016} does not apply to our definition of $\eps$-VPDP. The offending state in \citet[Theorem 10]{Hutter2016} can not be aggregated in $\eps$-VPDP abstraction for an arbitrarily large $R$, which is a crucial parameter for the counter-example, see \citet[Theorem 10]{Hutter2016}  for more details.

We have done an empirical analysis to check if there exists some random MDP which violates the optimality condition for $\eps$-VPDP, but have found none so far. We provide more details about the experimental setup in \Cref{chap:vpdp-exp}. The following is an empirically demonstrated conjecture.

\begin{conjecture}[Extremely ``Useful'' VPDP]\label{conj:vpdp}
    There exist constants $c_1$, $c_2$ and $\eps' = O(\eps)$ such that any uplifted optimal policy of any surrogate-MDP of an extreme $\eps'$-VADP abstraction is an $\eps$-optimal policy in the original environment. The number of states of the surrogate MDP is bounded as
    \beq
    S \leq O(\eps^{-c_1} \cdot (1-\g)^{-c_2} \cdot 2^A)
    \eeq
    where $A$ is the size of the action-space.
\end{conjecture}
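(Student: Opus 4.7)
The plan is to handle the two halves of the conjecture separately: first a direct combinatorial bound on $|\S|$, and then a two-sided value comparison establishing $\eps$-optimality of any uplifted surrogate-optimal policy.

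For the counting bound I would simply enumerate the three coordinates of the label $\psi_\mu^*(h) = (\lceil V_\mu^*(h)/\eps_1\rceil,\,\pi^*(h),\,\A_{\eps_2}(h))$. By \Cref{asm:bounded_rewards} and $\g$-discounting, $V_\mu^*(h) \in [0,(1-\g)^{-1}]$, so the first coordinate takes at most $\lceil(\eps_1(1-\g))^{-1}\rceil$ values, the second at most $A$ values, and the third, being a subset of $\A$, at most $2^A$ values. Multiplying and absorbing $A$ into $2^A$ yields
\[
S \;\leq\; O\!\left(\eps_1^{-1}(1-\g)^{-1}\cdot 2^A\right).
\]
Choosing $\eps_1 = \Theta(\eps\cdot(1-\g)^{k-1})$ for the exponent $k$ produced by the loss analysis below converts this to the claimed form $O(\eps^{-c_1}(1-\g)^{-c_2}\cdot 2^A)$ with $c_1=1$ and $c_2=k$.

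For near-optimality my plan is a policy-simulation sandwich. Let $\bar{v}^*(s)$ denote the surrogate-MDP optimal value and introduce the reference policy $\pi^\circ(s) \coloneqq \pi^*(h)$ for any $h\in(\psi_\mu^*)^{-1}(s)$; this is well-defined by the invariance of the second coordinate. The upper bound $\bar{v}^*(s) \leq V_\mu^*(h) + O(\eps_1/(1-\g))$ would be obtained by simulating the uplift of the surrogate-optimal policy in $\mu$ and absorbing the per-step discrepancy $|V_\mu^*(h)-V_\mu^*(\d h)| \leq \eps_1$ (which holds whenever $\psi_\mu^*(h)=\psi_\mu^*(\d h)$ by the first coordinate) through a geometric sum. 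For the matching lower bound I would evaluate $\pi^\circ$ on the surrogate MDP via policy evaluation; since the second coordinate tracks a true optimal action exactly, the uplift of $\pi^\circ$ coincides with an optimal $\mu$-policy on every visited history, so its $\mu$-value equals $V_\mu^*(h)$, and transferring back costs only the same $\eps_1$-granularity. Combining yields $|\bar{v}^*(s) - V_\mu^*(h)| \leq \delta$ with $\delta = O(\eps_1/(1-\g))$. The third coordinate then does the decisive work: all $h\in(\psi_\mu^*)^{-1}(s)$ share a common $\A_{\eps_2}(s)$, so any $a\notin\A_{\eps_2}(s)$ satisfies
\[
\bar{q}^*(sa) \;=\; \sum_{h} B(h\|sa)\,Q_\mu^*(ha) + O(\delta) \;<\; \bar{v}^*(s) - \eps_2 + O(\delta),
\]
whereas $\bar{q}^*(s,\pi^\circ(s)) \geq \bar{v}^*(s) - O(\delta)$. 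For $\delta \ll \eps_2$ this forces every surrogate-optimal action to lie in $\A_{\eps_2}(s)$, costing at most $\eps_2$ per step in $\mu$; the standard simulation lemma then propagates this into a total loss of order $\eps_2/(1-\g) + \eps_1/(1-\g)^2$, fixing $\eps' = O(\eps\cdot(1-\g)^2)$ and hence $k=2$.

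The main obstacle will be the circular dependence between the sandwich and the action-restriction step: the sandwich presumes that the surrogate-optimal policy agrees with $\pi^\circ$ on $\A_{\eps_2}$, yet action restriction is itself proved using the sandwich. The natural escape is a joint contraction argument on pairs $(\bar{v},\A_{\eps_2})$, showing that each Bellman update on the surrogate MDP simultaneously preserves value-closeness and the identity of the $\eps_2$-optimal set. The delicate step is ruling out oscillation, namely that an action cannot repeatedly drift in and out of near-optimality across iterates; this is presumably the mechanism the $\eps$-VDP counter-example in \citet[Theorem 10]{Hutter2016} exploits, and is exactly why the conjecture strengthens the label to carry the entire set $\A_{\eps_2}(h)$ rather than a single optimal action. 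Producing a uniform quantitative bound on this boundary slack in terms of $\eps_1,\eps_2,\g$ that holds for every dispersion distribution $B$ is the crux I expect, and is what keeps the result conjectural despite the supporting empirical evidence.
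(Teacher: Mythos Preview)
The paper provides no proof of this statement: it is stated explicitly as a conjecture, supported only by empirical search and the heuristic that preserving the full set $\A_{\eps_2}(h)$ (rather than a single optimal action, as in $\eps$-VDP) should prevent any dispersion distribution $B$ from making a sub-optimal action look best in the surrogate. So there is no ``paper's proof'' to compare against; your proposal is an attempt at something the authors leave open.

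Your counting half is correct and is the routine part. The label has three coordinates with at most $\lceil(\eps_1(1-\g))^{-1}\rceil$, $A$, and $2^A$ values respectively, and absorbing $A$ into $2^A$ gives the bound with $c_1=1$. The paper makes no claim about specific $c_1,c_2$, so this is compatible.

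Your optimality half is a reasonable roadmap, and you are right that the circularity is the crux. But I would name the obstruction more concretely than ``oscillation''. Your sandwich step assumes you can establish $|\bar v^*(s)-V_\mu^*(h)|\leq\delta$ before invoking the action-restriction. The paper's existing machinery for V-uniform maps (\Cref{thm:psiVstar}) gets this closeness only under the extra hypothesis $\bigl|\sum_a Q_\mu^*(ha)\,B^{\u\pi^*}(a\|hs)-V_\mu^*(h)\bigr|\leq\eps_B$, which is a condition on $B$, not on the abstraction; and the paper notes that without it there are counter-examples for generic $B$. The VPDP label adds $\A_{\eps_2}(h)$ precisely to try to force this condition to hold automatically, but neither you nor the paper shows that it does. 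In particular, your displayed step $\bar q^*(sa)=\sum_h B(h\|sa)\,Q_\mu^*(ha)+O(\delta)$ already presupposes the value sandwich via \Cref{lem:QBqDiff}, so the argument as written is genuinely circular rather than merely delicate. Your closing paragraph correctly flags this; I would just make explicit that the missing ingredient is a $B$-free surrogate-to-true value comparison that uses the $\A_{\eps_2}$ coordinate directly, bypassing the $\eps_B$ hypothesis, and that no such lemma currently exists in the paper.
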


We tried generating a number of random ergodic MDPs which were amenable to $\eps$-VPDP abstractions. A range of sampling distributions were used to produce the surrogate MDPs. However, none of these runs produced a counter-example to our conjecture. Another reason why we believe the above conjecture should hold is that \emph{all} $\eps_2$-optimal actions are function of the state. This makes the estimates of the history-action-values of these preserved optimal actions (somewhat) independent of the dispersion distribution, i.e.\ how the histories are mixed. Since the optimal action-value estimates are more or less independent of the behavior policy, the sub-optimal actions should not be rewarded higher enough to ``fool'' the surrogate MDP to select a sub-optimal action for the state.

However, a formal proof (or disproof) for the conjecture is a highly desirable addition to the theory of ARL. We leave this for another work in the future. It is important to reiterate, the counter-example of \citet[Theorem 10]{Hutter2016} is not applicable to our conjecture as the abstraction is \emph{not} $\eps$-VPDP as defined in \Cref{eq:vpdp}.

\section{Summary}

This chapter focused on the special cases of $\eps$-QDP and $\eps$-VPDP abstractions. These are the extreme variants of these abstraction, which are defined (or can be embedded) into the universal state-space. The universal state-space is nothing but an $\eps$-discretized version of $[0, 1/(1-\g)]^A$. The extreme $\eps$-QDP abstraction has been shown to carry some remarkable properties, e.g.\ a uniformly fixed state-space abstraction where any policy of any surrogate MDP is near-optimal. We conjecture that the same result holds for the extreme $\eps$-VPDP abstractions. The conjecture is empirically tested, and hints are provided to find a formal (dis)proof.

\chapter[Convergence Guarantees for TD-based Algorithms]{Convergence Guarantees for TD-based Algorithms {\\ \it \small This chapter is an adaptation of \citet{Majeed2018}}}\label{chap:convergence-guarnt}

\begin{outline}
    Temporal-difference (TD) learning is an attractive, computationally efficient framework for model-free RL. Q-learning is one of the most widely used TD learning techniques that enables an agent to learn the \emph{optimal} action-value function. Contrary to its widespread use, Q-learning has only been proven to converge on Markov Decision Processes (MDPs) and Q-uniform abstractions of finite-state MDPs.
    On the other hand, most real-world problems are inherently non-Markovian: the full true state of the environment is not revealed by recent observations.
    \ifshort\else
    In addition to the finite state MDP assumption, ergodicity is also typically required for Q-learning convergence which may also not be satisfied in most of real-world problems.
    \fi
    In this chapter, we investigate the behavior of Q-learning when applied to non-MDP and non-ergodic domains which may have infinitely many underlying states. We prove that the convergence guarantee of Q-learning can be extended to a class of such non-MDP problems, in particular, to some non-stationary domains. As defined in \Cref{chap:state-only-abs}, we model these domains with $\eps$-QDP abstractions with $\eps = 0$, which we also call the ``exact'' QDP aggregations.
    We show that state-uniformity of the optimal Q-value function is a necessary and sufficient condition for Q-learning to converge even in the case of infinitely many (underlying) states.
\end{outline}

\section{Introduction}

Temporal-difference learning \cite{Sutton1988} is a model-free learning framework in reinforcement learning. In TD learning, an agent learns the optimal action-value function of the underlying problem without explicitly building or learning a model of the environment. The agent can learn the optimal behavior from the learned Q-value function: the optimal action maximizes the Q-value function. It is generally assumed that the environment is Markovian and ergodic for a TD learning agent to converge \cite{Tsitsiklis1994,Bertsekas1996}.

The TD learning agents, apart from a few restrictive cases\footnote{See Section \ref{sec:related-work} for exceptions.}, are not proven to learn\footnote{In this chapter we use the term ``learn a domain'' in the context of learning to act optimally and not to learn a model/dynamics of the domain.} non-Markovian environments, whereas most real-world problems are inherently non-Markovian: the full true state of the environment is not revealed by the last observation \cite{Agarwal2021}, and the set of true states can be infinite, e.g. as effectively in non-stationary domains.
Therefore, it is important to know if the agent performs well in such non-Markovian domains to work with a broad range of real-world problems.

In this chapter, we investigate convergence of one of the most widely used TD learning algorithms, Q-learning \cite{Watkins1992}.
Q-learning has been shown to converge in MDP domains \cite{Tsitsiklis1994,Bertsekas1996}, whereas there are empirical observations that Q-learning sometimes also work in some non-MDP domains \cite{Sutton1984}. First non-MDP convergence of Q-learning has been reported by \cite{Li2006} for the environments that are Q-uniform abstractions of finite-state MDPs. It was shown in \Cref{chap:extreme-abs} that under certain conditions there exists a deterministic, near-optimal policy for non-MDP environments which are not required to be abstractions of any finite-state MDP. These positive results motivated this work to extend the non-MDP convergence proof of Q-learning to a larger class of infinite internal state non-MDPs.

We show that QDP is the largest class where Q-learning can converge, i.e. QDP provides the necessary and sufficient conditions for Q-learning convergence.
Apart from a few toy problems, it is always a leap of faith to treat real-world problems as MDPs. An MDP \emph{model} of the underlying true environment is implicitly \emph{assumed} even for model-free algorithms. Our result helps to relax this assumption: rather assuming the domain being a \emph{finite-state} MDP, we can suppose it to be a QDP, which is a much weaker implicit assumption.
\ifshort\else
We develop our proof in a state-aggregation context. We show that our result holds for any aggregation map (i.e. a model of real-world problem) that respects the exact state-uniformity condition of the optimal action-value function.
\fi
The positive result of this chapter can be interpreted in a couple of ways; {\bf a)} it provides theoretical grounds for Q-learning to be applicable in a much broader class of environments or {\bf b)} if the agent has access to a QDP aggregation map as a potential model of the true environment or the agent has a companion map learning/estimation algorithm to build such a model, then this combination of the aggregation map with Q-learning converges.
It is an interesting topic to learn such maps. We will consider learning such an aggregation map in \Cref{chap:abs-learning}.




\section{Setup}\label{sec:setup}
In this chapter, we call the following a \emph{state-process} \footnote{It is technically a state and reward process, but since rewards are not affected by the mapping $\psi$, we suppress the reward part to put more emphasis on the contrast between history and state dependence. } induced by the map $\psi$.
\begin{definition}[State-process]
    For a history $h$ that is mapped to a state $s$, a state-process $\mu_h$ is a stochastic mapping from a state-action pair with the fixed state $s$ to state-reward pairs. Formally, $\mu_h : \{s\} \times \A \to \Dist(\S \times \R)$.
\end{definition}

Recall that in a reward embedding setting, the relationship between the underlying HDP and the induced state-process for an $s=\psi(h)$ is formally defined as:
\beq \label{eq:state-process}
\mu_h(s'r'\|sa) := \sum_{o': \psi(hao'r')=s'}\mu(o'r'\|ha).
\eeq

We denote the action-value function of the state-process by $q$, and the optimal Q-value function is given by $q^*_\mu$:
\beq
q^*_\mu(sa\|h) := \sum_{s'r'} \mu_h(s'r'|sa) \left(r' + \g \max_{\d a} q^*_\mu(s'\d a\|h)\right) \label{eq:q}
\eeq

It is clear that $\mu_h(s'r'\|sa)$ may not be same as $\mu_{\dot{h}}(s'r'\|sa)$ for any two histories $h$ and $\dot{h}$ mapped to a same state $s$. If the state-process is an MDP, then $\mu_h$ is independent of history and so is $q^*_\mu$, and convergence of Q-learning follows from this MDP condition \cite{Bertsekas1996}. However, we do not assume such a condition and go beyond MDP mappings. We later show --- by constructing examples --- that $q^*_\mu$ can be made independent of history while the state-process is still history dependent, i.e. non-MDP.

Now we formally define Q-learning in the finite-state MDP setting: At each time-step $n$ the agent maintains an action-value function estimate $q_n$. The agent in a state $s := s_n$ takes an action $a := a_n$ and receives a reward $r' := r_{n+1}$ and the next state $s' := s_{n+1}$. Then the agent performs an action-value update to the $sa$-estimate with the following Q-learning update rule:
\beq \label{eq:q-iteration}
q_{n+1}(sa) :=  q_n(sa) + \alpha_n(sa)\left(r' + \g \max_{\d a} q_n(s'\d a) - q_n(sa)\right)
\eeq
where $(\alpha_n)_{n \in \SetN}$ is a learning rate sequence. \ifconf\else This concludes our problem setup. In the next section, we describe QDP, a reasonably large class of decision problems (Figure \ref{fig:QDP}). A pseudo-algorithm of Q-learning in MDP domains is shown in \Cref{alg:q-learning}. Later we prove that Q-learning converges if the state-process is a QDP.\fi

\begin{algorithm}
    \caption{Q-learning in MDPs (Q-learning)}
    \begin{algorithmic}[1]
        \Input MDP environment $\mu$ for sampling, behavior policy $\pi$, and error tolerance $\eps$, learning rate sequence $(\alpha_n)$
        \Output (anytime) optimal state-action-value function $\h q \to q_\mu^*$
        \State Set $\Delta = \infty$ \Comment{any starting value greater than $\eps$ should work}
        \State Set $\h q = \v 0$ \Comment{any initial value should work}
        \State Observe the current state $s$ from the environment $\mu$
        \Repeat \Comment{until the iteration converges}
        \State Increment time-step to $n$
        \State Take action $a$ from the behavior policy $\pi$ \Comment{sufficiently exploratory policy}
        \State Observe the next state $s'$ and reward $r'$ from the environment $\mu$
        \State Set $q = \h q$ \Comment{save old estimate of $q_\mu^*$}
        \State Set $\h q(sa) = \h q(sa) + \alpha_n(sa) \left(r' + \g \max_{\d a} \h q(s'\d a) - \h q(sa)\right)$ \Comment{new estimate of $q_\mu^*$}
        \State Set $\Delta = \norm{\h q - q}$ \Comment{typically, the infinity norm}
        \Until $\Delta \leq \eps$
    \end{algorithmic}
    \label{alg:q-learning}
\end{algorithm}

\section{MDP vs QDP vs POMDP} \label{sec:QDP}

Recall from \Cref{chap:state-only-abs} the class of environments called Q-value uniform decision processes, i.e.\ QDP class. This class is substantially larger than MDP and has a non-empty intersection with POMDP and HDP (Figure \ref{fig:QDP}). The state-uniformity condition of QDP is weaker than the MDP condition: the latter implies the former but not the other way around \cite{Hutter2016}. Therefore, trivially MDP~$\subseteq$~{\small\stackanchor{QDP}{POMDP}}~$\subseteq$~HDP. The example in Figure \ref{fig:example-mrp} shows that QDP $\cap$ POMDP$\setminus$MDP $\neq\emptyset$. The example in Figure \ref{fig:example-mdp} proves that QDP $\setminus$POMDP $\neq\emptyset$. Moreover, it is easy to find examples in POMDP$\setminus$QDP and HDP$\setminus$QDP.

It is easy to see that QDP is a much larger class than MDP since the former allows non-stationary domains: it is possible to have $\mu_{h_n}(\cdot | sa) \neq \mu_{h_m}(\cdot | sa)$ for some histories $\psi(h_n)=\psi(h_m)=s$ at two different time-steps but still maintaining $Q^*_\mu(h_na)=Q^*_\mu(h_ma)$ (Figure \ref{fig:example-mdp}). Moreover, the definition of QDP enables us to approximate most if not all problems as a QDP model: any number of \emph{similar} $Q^*_\mu$-value histories can be merged into a single QDP state \cite{Hutter2016}. In particular, a QDP model of the environment can provide more compression in terms of state space size as compared to an MDP model: multiple MDP states with the same/similar Q-value can be merged into a single QDP state but not necessarily the other way around. Thus, QDP allows for more compact models for an environment than its MDP counterparts.

In general, we can say that a POMDP has both the dynamics and Q-values as functions of history. Whereas, the definition of QDP provides models where only the dynamics can be history-dependent. Therefore, QDP captures a subset of POMDP models that have history-independent Q-values.

%
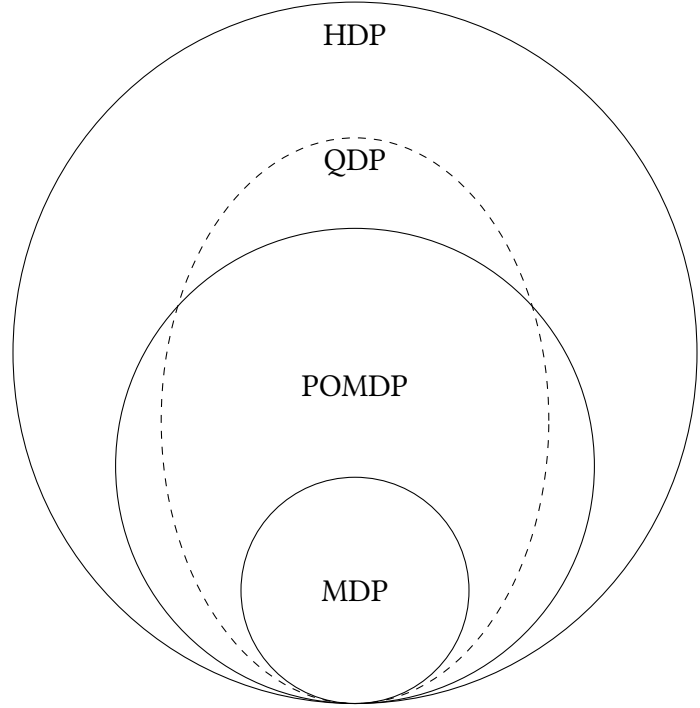
\begin{figure}[!]
    \centering
    \begin{tikzpicture}[scale=1.5]
    \draw[black] (0,-0.4) ellipse (3cm and 3.1cm);
    \node[below] at (0,2.6cm) {HDP};
    \draw[black] (0,-1.4cm) ellipse (2.1cm and 2.1cm);
    \node[below] at (0,-0.5cm) {POMDP};
    \draw[black] (0,-2.5cm) circle (1cm);
    \node[] at (0,-2.5cm) {MDP};
    \draw[black, dashed] (0,-1cm) ellipse (1.7cm and 2.5cm);
    \node[below] at (0,1.5cm) {QDP};
    \end{tikzpicture}
    \caption{QDP in the perspective of other decision problem classes.} \label{fig:QDP}
\end{figure}

\section{Q-learning Convergence in QDP}\label{sec:main-result}

In this chapter, we assume that the state-process is ergodic --- i.e. all states are reachable under any policy from the current state after sufficiently many steps.

\begin{assumption}[Ergodicity]\label{asmp:state-ergodicity}
    The state-process is ergodic.
\end{assumption}

Because of the ergodicity assumption we can suppose the following standard stochastic approximation conditions on each state-action $sa$-pair's learning rate sequence $(\alpha_n)_{n \in \SetN}$\footnote{Note that $\alpha_n(sa):=0, \forall sa \neq s_na_n$.}.
\beq \label{eq:infinite-visit}
\sum_{n=0}^{\infty} \alpha_n(sa) = \infty, \qquad \sum_{n=0}^{\infty} \alpha_n^2(sa) < \infty.
\eeq

The above conditions on the learning rate ensure that the agent asymptotically decreases the learning rate to converge to a fixed point but never stops learning to avoid local maxima \cite{Bertsekas1996}. It is critical to note that we assume ergodicity of the state-process but not of the underlying HDP: a state can be reached multiple times from different histories but any history is only reached once. We assume that the state-process is a QDP.

\begin{assumption}[QDP] \label{asmp:phi-uniformity} The state-process is a QDP.
\end{assumption}

It is important to consider that we only assume the optimal action-value to be a function of states. We do not suppose any structure on the intermediate action-value estimates \emph{cf.}\ $q_n(sa) \neq Q_n(ha)$. Recall that we also assume that rewards are bounded. This is a standard condition for Q-learning convergence.
We have all the components in place to extend Q-learning convergence in QDP.

\begin{theorem} [Q-learning Convergence in QDP] \label{thm:non-MDP-convergence}
    Under Assumptions \ref{asmp:state-ergodicity}, \ref{asmp:phi-uniformity}, and bounded rewards, and with a learning rate sequence $(\alpha_n)_{n \in \SetN}$ satisfying \Cref{eq:infinite-visit}, the sequence $(q_n)_{n \in \SetN}$ generated by the iteration \eqref{eq:q-iteration} converges to $q^*_\mu=Q^*_\mu$.
\end{theorem}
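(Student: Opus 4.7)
The plan is to cast the update \eqref{eq:q-iteration} into the general stochastic-approximation framework for asynchronous, contractive iterations (e.g.\ the Tsitsiklis~(1994) / Bertsekas--Tsitsiklis style result used in the standard MDP proof), and to show that the QDP assumption supplies exactly the contraction property we need \emph{despite} the fact that the one-step conditional distribution $\mu_{h_n}(\cdot\|s_na_n)$ actually varies from visit to visit. Concretely, let $\mathcal{F}_n$ denote the $\sigma$-algebra generated by the interaction up to time $n$, and decompose
\begin{equation*}
q_{n+1}(sa) = (1-\alpha_n(sa))\,q_n(sa) + \alpha_n(sa)\bigl[(T_n q_n)(sa) + w_n(sa)\bigr],
\end{equation*}
where $(T_n q)(sa) := \sum_{s'r'} \mu_{h_n}(s'r'\|sa)\bigl(r' + \gamma \max_{a'} q(s'a')\bigr)$ is the \emph{history-dependent} one-step Bellman backup evaluated at the current history $h_n$ with $\psi(h_n)=s$, and $w_n(sa) := r_{n+1} + \gamma \max_{a'} q_n(s_{n+1}a') - (T_n q_n)(sa)$ is, by construction, a martingale-difference noise with respect to $(\mathcal{F}_n)$ and bounded second moment (using bounded rewards and that $q_n$ stays bounded by a routine induction as in the MDP case).

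The crux is the following \emph{common fixed point / uniform contraction} observation, which is where \Cref{asmp:phi-uniformity} (QDP) is used. By \eqref{eq:q}, for every history $h$ with $\psi(h)=s$,
\begin{equation*}
q^*_\mu(sa) = \sum_{s'r'} \mu_h(s'r'\|sa)\Bigl(r' + \gamma \max_{a'} q^*_\mu(s'a')\Bigr),
\end{equation*}
because under QDP the left-hand side is $h$-independent and equals $q^*_\mu(sa\|h)$. Hence \emph{every} operator $T_n$ fixes $q^*_\mu$. Moreover, for any bounded $q$,
\begin{equation*}
\bigl|(T_n q)(sa) - q^*_\mu(sa)\bigr| \le \gamma \,\E_{s'\sim\mu_{h_n}(\cdot\|sa)}\Bigl[\bigl|\max_{a'} q(s'a') - \max_{a'} q^*_\mu(s'a')\bigr|\Bigr] \le \gamma\,\|q - q^*_\mu\|_\infty,
\end{equation*}
so each $T_n$ is a $\gamma$--pseudo-contraction towards the common fixed point $q^*_\mu$ in the sup-norm, with the contraction constant \emph{independent} of which history triggered the update. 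This is precisely the property exploited by the Jaakkola--Jordan--Singh / Tsitsiklis convergence theorem for asynchronous stochastic iterations; the ordinary MDP proof only uses this much (it just happens to have $T_n \equiv T$ there).

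It then remains to verify the hypotheses of that theorem: (i) every $sa$-pair is updated infinitely often, which follows from \Cref{asmp:state-ergodicity} applied to any admissible behaviour policy; (ii) the Robbins--Monro step-size conditions \eqref{eq:infinite-visit}; (iii) the martingale noise $w_n(sa)$ has $\E[w_n(sa)\mid \mathcal{F}_n]=0$ and $\E[w_n(sa)^2\mid\mathcal{F}_n] \le C(1+\|q_n\|_\infty^2)$ for some $C<\infty$, which is immediate from bounded rewards; and (iv) the uniform pseudo-contraction shown above. Invoking the theorem yields $q_n \to q^*_\mu$ almost surely, and the identification $q^*_\mu = Q^*_\mu$ (on each equivalence class $\psi^{-1}(s)$) is just the QDP assumption.

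\textbf{Main obstacle.} The non-trivial point is \emph{not} the stochastic-approximation bookkeeping but justifying that a time-varying family of operators $\{T_n\}$ can be fed into a convergence theorem that is usually stated for a single contraction $T$. The care needed is to check that the standard proof (a Lyapunov argument on $\|q_n - q^*_\mu\|_\infty$) only ever uses (a) a \emph{common} fixed point and (b) a \emph{uniform} contraction factor $\gamma$, both of which are delivered by QDP; no Markov / stationarity property of the underlying state-process is actually required beyond the ergodic visitation used to secure (i). A secondary subtlety is that $w_n$ must genuinely be a martingale difference with respect to the \emph{full} history filtration (not merely the state filtration), which is why we condition on $\mathcal{F}_n$ rather than on $s_n$ in the decomposition above.
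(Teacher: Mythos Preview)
Your proposal is correct and follows essentially the same route as the paper: decompose the update into a history-dependent Bellman operator plus martingale noise, use the QDP assumption to show that every such operator is a $\gamma$-pseudo-contraction toward the \emph{common} fixed point $q^*_\mu$, verify the zero-mean/bounded-variance noise conditions, and invoke a Bertsekas--Tsitsiklis/Tsitsiklis-style stochastic-approximation theorem (the paper cites Proposition~4.5 of \cite{Bertsekas1996}). Your explicit remark that the standard proof only uses a common fixed point and a uniform contraction factor---not stationarity of $T_n$---is exactly the paper's punchline.
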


Hence, the agent learns the optimal action-value function of a QDP state-process. We provide a proof of Theorem \ref{thm:non-MDP-convergence} below. Superficially, the proof looks similar to a standard MDP proof \cite{Bertsekas1996}, however, a subtlety is involved in the definition of the contraction map: the contraction map is a function of history. This history-dependence lets the proof scale to non-MDP domains, especially to non-stationary domains.
\ifconf\else Nevertheless, the key contribution of this work is to show that the MDP convergence guarantee of Q-learning can be extended to QDP rather easily with our setup.\fi

\paradot{Proof of Theorem \ref{thm:non-MDP-convergence}} At a time-instant $n$ with a history $h_n$ we  rewrite \Cref{eq:q-iteration} in terms of an operator and a noise term.
\beq
q_{n+1}(sa) = \left(1-\alpha_n(sa)\right)q_n(sa) +  \alpha_n(sa)\left(F_{h_n}q_n(sa) + w_{h_n}(sa)\right)
\eeq
where, the noise term is defined as follows,
\beq
w_{h_n}(sa) := r' + \g \max_{\d a} q_n(s'\d a) - F_{h_n}q_n(sa).
\eeq

Since the agent samples from the underlying HDP, the operator $F_{h_n}$ is defined to be a history-based operator.
\beq
F_{h_n}q_n(sa) := \E_{\mu_{h_n}}\left[r' + \g \max_{\d a} q_n(s'\d a)  \middle| \F_n \right]
\eeq
where $\F_n$ is a complete history of the algorithm up to time-step $n$ that signifies all information including $h_n$, $(\alpha_k)_{k \leq n}$ and the state sequence $(s_k)_{k \leq n}$. We use $\E_{\mu_{h_n}}$ as an expectation operator with respect to $\mu_{h_n}$.

\paradot{Noise is bounded} Now we show that the noise term is not a significant factor that affects the convergence of Q-learning.  By construction it has a zero mean value:
\beq
\E_{\mu_{h_n}}\left[w_{h_n}(sa)| \F_n \right] = \E_{\mu_{h_n}}\left[r' + \g \max_{\d a} q_n(s' \d a) -F_{h_n}q_n(sa)  \middle| \F_n\right] = 0.
\eeq

Due to the bounded reward assumption the variance of the noise term is also bounded.

\bqan
\E_{\mu_{h_n}}\left[w_{h_n}^2(sa) \middle| \F_n \right]
&=  \E_{\mu_{h_n}}\left[\left(r' + \g \max_{\d a} q_t(s'\d a)\right)^2 \middle| \F_n\right]  - \E_{\mu_{h_n}}\left[r' + \g \max_{\d a} q_n(s'\d a) \middle| \F_n\right]^2 \\
&\overset{(a)}{\leq} \frac{1}{4}\left(\max_{s'r'}\left(r' + \g \max_{\d a} q_n(s'\d a)\right) -  \min_{s'r'}\left(r' + \g \max_{\d a} q_n(s'\d a)\right)\right)^2 \\
&\overset{(b)}{\leq} \frac{1}{4}\left( \frac{r_{\max}-r_{\min}}{1-\g} + \g \norm[\infty]{q_n}\right)^2 \\
&\overset{(c)}{\leq} A + B \left(\norm[\infty]{q_n}\right)^2 \numberthis
\eqan
$(a)$ follows from Popoviciu's inequality, $\Var(X) \leq \frs{1}{4}(\max X - \min X)^2$, $(b)$ is due to the bounded rewards assumption; and $(c)$ results from some algebra with constants $A := \frs{\Delta}{4}\left(\frs{2\g}{1-\g} + \Delta \right)$ and $B := \frs{\g^2}{4}$, where $\Delta := \left(r_{\max} - r_{\min}\right)/(1-\g)$. We denote a \emph{sup-norm} by $\norm[\infty]{\cdot}$.

\paradot{$F_{h}$ is a contraction} For a fixed history $h$, we show that an operator $F_{h}$ is a contraction mapping.
\bqan\label{eq:contraction}
\norm[\infty]{F_{h}q - F_{h}q'}
&\overset{}{=} \max_{sa} \left| \E_{\mu_{h_n}}\left[r' + \g \max_{\d a} q(s'\d a) \middle| \F_n\right]  - \E_{\mu_{h_n}}\left[r' + \g \max_{\d a} q'(s'\d a) \middle| \F_n\right]\right| \\
&\overset{(a)}{=} \max_{sa} \left| \E_{\mu_{h_n}}\left[r' + \g \max_{\d a} q(s'\d a) \middle| sa\right] - \E_{\mu_{h_n}}\left[r' + \g \max_{\d a} q'(s'\d a) \middle| sa\right]\right| \\
&\overset{}{\leq} \max_{sa} \max_{s'} \g \left| \max_{\d a} q(s'\d a) - \max_{\d a} q'(s'\d a)\right| \\
&\overset{}{\leq} \g \max_{sa} \left|q(sa) - q'(sa)\right|
= \g \norm[\infty]{q - q'}\numberthis
\eqan
$(a)$ for a fixed history, the expectation only depends on the $sa$-pair rather than the complete history $\F_n$. Therefore $\norm[\infty]{F_{h}q - F_{h}q'} \leq \g \norm[\infty]{q - q'}$, hence, $F_{h}$ is a contraction.

\paradot{Same fixed point} We show that for any history $h$, the contraction operator $F_h$ has a fixed point $q^*_\mu$. Let $h$ be mapped to state $s$:
\bqan\label{eq:fixed-point}
q^*_\mu(sa) &\overset{(a)}{=} Q^*_\mu(ha) \\
&\equiv \sum_{o'r'} \mu(o'r'\|ha)\left(r' + \g \max_{\d a} Q^*_\mu(h'\d a)\right) \\
&\overset{(b)}{=} \sum_{s'r'} \mu_h(s'r'\|sa)\left(r' + \g \max_{\d a} q^*(s'\d a)\right) \\
&\equiv F_h q^*_\mu(sa) \numberthis
\eqan
$(a)$ is the QDP assumption; and $(b)$ follows from \Cref{eq:state-process} and again using the QDP assumption. We also show that for any history $h$ the operator $F_h$ has a same contraction factor $\g$.
\beq
\norm[\infty]{F_hq-q^*_\mu} \overset{(a)}{=} \norm[\infty]{F_hq - F_hq^*_\mu} \overset{(b)}{\leq} \g \norm[\infty]{q-q^*_\mu}
\eeq
$(a)$ follows from \Cref{eq:fixed-point}; and $(b)$ is due to \Cref{eq:contraction}. Therefore, for any history $h$ the operator $F_h$ has the same fixed point $q^*_\mu$ with the same contraction factor $\g$.

We have all the conditions to invoke a convergence result from \cite{Bertsekas1996}. We adopt\footnote{The original proposition is slightly more general than we need for our proof. It has an extra diminishing noise term which we do not have/require in our formulation.} and state Proposition 4.5 from \cite{Bertsekas1996} without reproducing the complete proof.

\begin{proposition}[Prop. 4.5 \cite{Bertsekas1996}] \label{prpo:4.5}
    Let $(q_n)_{n\in\SetN}$ be the sequence generated by the iteration \eqref{eq:q-iteration}. We assume the following.
    \begin{enumerate}
        \item[(a)] The learning rates $\alpha_n(sa)$ are nonnegative and satisfy,
        \beqn
        \sum_{n=0}^{\infty} \alpha_n(sa) = \infty, \qquad \sum_{n=0}^{\infty} \alpha_n^2(sa) < \infty.
        \eeqn
        \item[(b)] The noise term $w_{n}(sa)$ satisfies,
        \bqan
        \E_{\mu_{n}}\left[w_{n}(sa)| \F_n \right] &= 0, \\ \E_{\mu_{n}}\left[w_{n}^2(sa)| \F_n \right]  &\leq A + B \norm[\infty]{q_n}^2, \quad \forall s,a,n
        \eqan
        where, $A$ and $B$ are constants.
        \item[(c)] There exists a vector $q^*_\mu$, and a scalar $\g \in [0,1)$, such that,
        \beqn
        \norm[\infty]{F_{n}q_n - q^*_\mu} \leq \g \norm[\infty]{q_n-q^*_\mu}, \quad \forall n.
        \eeqn
    \end{enumerate}
    Then, $q_n$ converges to $q^*_\mu$ with probability 1.
\end{proposition}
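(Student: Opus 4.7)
The plan is to prove this by classical stochastic-approximation arguments that decouple the iteration into a contractive deterministic part and a martingale noise part. First I would introduce the error process $e_n \coloneqq q_n - q^*_\mu$ and rewrite the update as
\beqn
e_{n+1}(sa) = (1-\alpha_n(sa))\,e_n(sa) + \alpha_n(sa)\bigl(F_n q_n(sa) - q^*_\mu(sa) + w_n(sa)\bigr),
\eeqn
so that condition (c) bounds the signal contribution componentwise by $\g \|e_n\|_\infty$ and condition (b) makes the noise contribution a martingale difference with respect to $\{\F_n\}$.

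Next I would treat the two contributions separately. Unrolling the recursion gives, for each fixed $sa$, a decomposition $e_n(sa) = Y_n(sa) + W_n(sa)$ in which $Y_n$ is the noiseless contractive iterate and $W_n(sa) = \sum_{k<n} \beta_{n,k}(sa)\,w_k(sa)$ is a weighted sum of martingale differences, the weights $\beta_{n,k}$ being telescoping products of $(1-\alpha_j(sa))$ and $\alpha_k(sa)$. Because $\sum \alpha_n^2(sa) < \infty$ by (a) and the conditional variance is bounded by (b), the process $W_n(sa)$ is $L^2$-bounded and converges almost surely by the martingale convergence theorem; a Kronecker-type lemma together with $\sum \alpha_n(sa) = \infty$ then forces $W_n(sa) \to 0$. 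For the contractive piece, the standard iterated-contraction lemma (a discrete Gronwall estimate using $\g < 1$ and $\sum \alpha_n(sa) = \infty$) drives $\|Y_n\|_\infty \to 0$, and combining both pieces yields $\|e_n\|_\infty \to 0$ almost surely.

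The main obstacle will be the a priori dependence of the noise variance on $\|q_n\|_\infty^2$ in condition (b): one cannot directly invoke the martingale-convergence step without first establishing almost-sure boundedness of $(q_n)$. The usual resolution is a stability/bootstrapping argument -- rescale the iterates by $M_n \coloneqq \max(1, \sup_{k \le n}\|q_k\|_\infty)$, verify that the rescaled process stays bounded by exploiting the strict contraction together with a Lyapunov-type inequality on $\|e_n\|_\infty^2$, and only then return to the original scale to run the argument above. I expect this stability step to absorb the bulk of the technical effort, while everything else is a routine application of the Robbins--Siegmund almost-supermartingale theorem and the contractive-iteration lemma.
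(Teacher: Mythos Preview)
The paper does not actually prove this proposition: it explicitly adopts the result from \cite{Bertsekas1996} and provides only a two-sentence intuitive sketch (each $F_n$ contracts toward the common fixed point $q^*_\mu$ with the same factor $\g$, so the iterates converge). Your proposal is far more detailed than anything in the paper and is essentially the classical stochastic-approximation argument that appears in the cited source---first establishing almost-sure boundedness of $(q_n)$ via a rescaling/Lyapunov step, then controlling the martingale noise via $\sum \alpha_n^2 < \infty$ and a Kronecker-type lemma, and finally using the $\g$-contraction together with $\sum \alpha_n = \infty$ to drive the error to zero.

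One small caution on your decomposition: calling $Y_n$ the ``noiseless contractive iterate'' is slightly misleading, since after unrolling the signal term $F_k q_k(sa) - q^*_\mu(sa)$ still depends on the full vector $q_k$ and hence on past noise in all components through the coupling $|F_k q_k(sa) - q^*_\mu(sa)| \le \g\|e_k\|_\infty$. The clean separation into a purely deterministic contractive piece and a pure martingale piece therefore does not hold literally; what works instead is the comparison/sandwich argument (as in \cite{Bertsekas1996} or Tsitsiklis 1994) where one shows that once $\|e_n\|_\infty \le D$ and the accumulated noise is below $\epsilon D$, the iterates eventually satisfy $\|e_n\|_\infty \le (\g+\epsilon)D$, and then iterates this geometric improvement. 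Your plan already contains all the right ingredients for that refinement, so this is a matter of presentation rather than a gap.
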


\paradot{Proof Sketch} We have a sequence of maps $(F_{n})_{n \in \SetN}$. At any time-step $n$, the map $F_n$ is a contraction and every map moves the iterates toward the same fixed point $q^*_\mu$. Since, the contraction factor is the same, the rate of convergence is not affected by the order of the maps. Every map contracts the iterates by a factor $\g$ with respect to the fixed point that asymptotically converges to $q^*_\mu$.\qed

Proposition \ref{prpo:4.5} uses a sequence of maps $(F_{n})_{n\in \SetN}$ with a same fixed point. In our case, we have this sequence based on histories, i.e.$(F_{h_n})_{n\in \SetN}$. Similarly, for $w_n$ and $\mu_n$ we have corresponding history-based instances. Therefore, Proposition \ref{prpo:4.5} with $\mu_n = \mu_{h_n}$, $F_n = F_{h_n}$ and $w_n = w_{h_n}$ provides the main result. This can be done, since $\alpha_n, q_n, w_n$ and $F_n$ are allowed to be random variables. \qed



Obviously, state-uniformity is a necessary condition, since otherwise $Q^*_\mu(ha)$ can not even be represented as $q^*_\mu(sa)$.
%
%
%
Typically, the state-process is assumed to be an MDP. This makes the state-process $\mu_h$ independent of history, and leads to a history-independent operator $F := F_h$ for any history $h$, which (trivially) all have the same fixed point.
We relax this MDP assumption, and only demand state-uniformity of the optimal value-function. The proof shows that this condition is sufficient to provide a unique fix point for the \emph{history-dependent} operators. Therefore, the state-uniformity is not only a necessary but also a sufficient condition for Q-learning convergence.

\section{Empirical Evaluation}\label{sec:evaluation}
In this section, we empirically evaluate two example non-MDP domains to show the validity of our result.

\paradot{Non-Markovian Reward Process} Let us consider our first example from \cite{Hutter2016} to demonstrate Q-learning convergence to a non-Markovian reward processes\footnote{An MPR is an MDP with only one action, i.e.\ $\abs{\A} = 1$.} (non-MRP). We consider that the underlying HDP is an MDP over the observation space $\O$ (in fact an action-independent MDP, i.e. an MRP) with a transition matrix $T$ and a deterministic reward function $R$. The state diagram of the process is shown in Figure \ref{fig:example-mrp}.
\beq
T =
\begin{bmatrix}
    0 & 1/2 & 1/2 & 0  \\
    1/2 & 0 & 0 & 1/2  \\
    0 & 1 & 0 & 0  \\
    1 & 0 & 0 & 0
\end{bmatrix}, \qquad
R = \begin{bmatrix}
    \frac{\g/2}{1+\g}  \\
    \frac{1+\g/2}{1+\g}  \\
    0  \\
    1
\end{bmatrix}.
\eeq

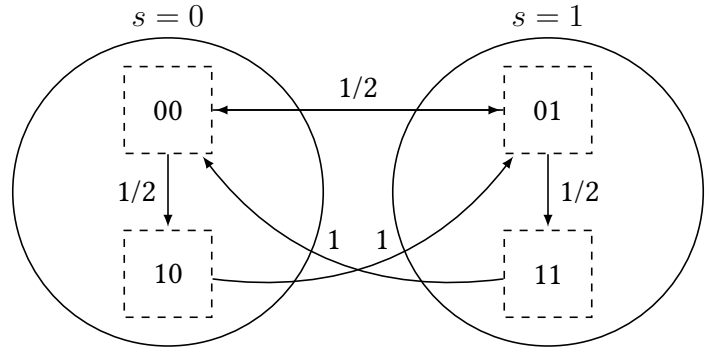
\begin{figure}[!]
    \centering
        \begin{tikzpicture}[->,>=stealth',shorten >=1pt,auto,node distance=5cm,
        semithick,square/.style={regular polygon,regular polygon sides=4,dashed},
        outer/.style={inner sep=6pt, column sep=1cm, row sep=1cm}
        ]
        \tikzstyle{every state}=[]

        \matrix (0) [matrix of nodes, state, outer, nodes={state,square}, label ={$s=0$}]{
            00 \\
            10 \\
        };

        \matrix (1) [matrix of nodes, outer, state, right of=0, nodes={state,square},label={$s=1$}]{
            01 \\
            11 \\
        };

        \path (0-1-1) edge     node {1/2} (1-1-1)
        edge   node[left] {1/2} (0-2-1)
        (1-1-1) edge  (0-1-1)
        edge  node{1/2} (1-2-1)
        (0-2-1) edge [bend right] node[above, pos=0.5]{1} (1-1-1)
        (1-2-1) edge[bend left] node[above, pos=0.5]{1} (0-1-1);
        \end{tikzpicture}
    \caption[An example MRP aggregated to a non-MDP (non-MRP).]{An example MRP aggregated to a non-MDP (non-MRP). The square nodes represent the states of the underlying MRP. The circles are the aggregated states. The solid arrows represent the only available action $x$ and the transition probabilities are shown at the transition edges.}\label{fig:example-mrp}
    \centering
\end{figure}

\begin{figure}[!]
    \centering
    \includegraphics[width=0.8\linewidth]{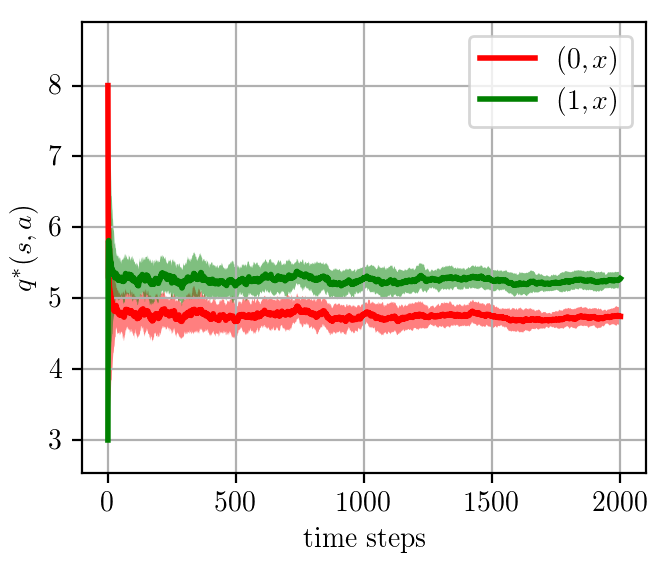}
    \caption[The learning curves of Q-learning of the first example domain.]{The learning curves of Q-learning are averaged over 40 independent runs with the parameters, $\g = 0.9$, $q_0(s=0,x)=8$ and $q_0(s=1,x)=3$.}\label{fig:convergence-mrp}
\end{figure}

Due to this structure, the HDP is expressible as, $\mu(o'r'\|ha) = T_{oo'} \cdot \llbracket r' = R(o)\rrbracket$, such that $h$ has a last observation $o$, where $\llbracket \cdot \rrbracket$ denotes an Iverson bracket.  The observation space is $\O = \{00,01,10,11\}$. Let us consider the state space $S = \{0,1\}$, and the agent experiences the state-process under the following aggregation map:
\beqn
s_t := \psi(h_t) :=
\begin{cases}
    0, \qquad \text{if } o_t = 00 \text{ or } 10, \\
    1, \qquad \text{if } o_t = 01 \text{ or } 11.
\end{cases}
\eeqn

It is easy to see that the resultant state-process is not an MDP (MRP):
\bqan
\mu_{00}(s'=0\|s=0) &= T_{00,00} + T_{00,10} = 0 + \frs{1}{2} = \frs{1}{2} \\
\mu_{10}(s'=0\|s=0) &= T_{10,00} + T_{10,10} = 0 + 0 = 0
\eqan
which implies $\mu_{00} \neq \mu_{10}$, but this state-process satisfies the optimal Q-value function state-uniformity condition (see below). Hence, the state-process is a QDP $\in$ POMDP$\setminus$MDP: the underlying HDP has a finite set of hidden states, i.e.\ the states of the underlying MDP. Since it is an action-independent process, the action-value function is the same for any action $a \in \A$. We denote the only available action with $x$:
\bqan
q^*_\mu(s=0,x) &:= Q^*_\mu(00,x) = Q^*_\mu(10,x) =  \frac{\g}{1-\g^2}\\
q^*_\mu(s=1,x) &:= Q^*_\mu(01,x) = Q^*_\mu(11,x) = \frac{1}{1-\g^2}.
\eqan

\begin{figure}[!]
    \centering
    \includegraphics[width=0.8\linewidth]{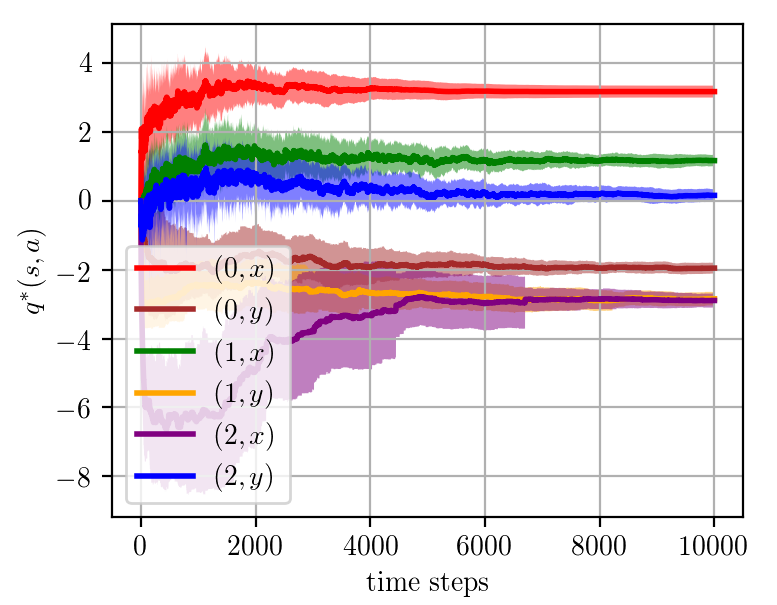}
    \caption[The learning curves of Q-learning of the second example domain.]{The learning curves of Q-learning are averaged over 50 independent runs with the parameters, $\g = 0.9$, $p_{\min} = 0.01$ and $q_0(sa)=0$ for all $s$ and $a$.}\label{fig:convergence-mdp}
\end{figure}

We apply Q-learning to the induced state-process and the learning curves plot is shown in Figure \ref{fig:convergence-mrp}. The plot shows that Q-learning is able to converge to the optimal action-value function of the process, despite the fact that the state-process is a non-MDP (non-MRP).

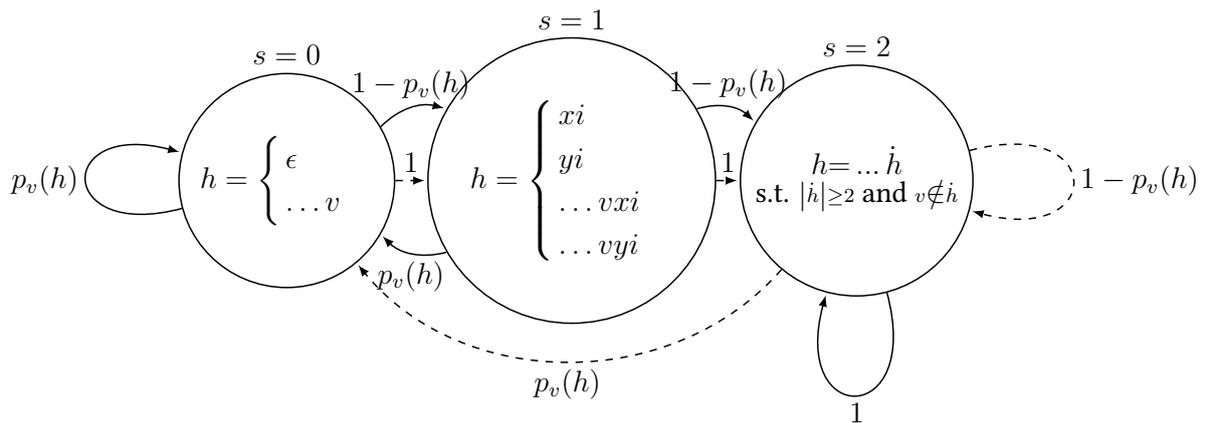
\begin{figure}[!]
    \centering
    \resizebox{\textwidth}{!}{%
        \begin{tikzpicture}[->,>=stealth',shorten >=1pt,auto,node distance=4cm,inner sep=3pt,
        semithick]
        \tikzstyle{every state}=[]

        \node[state,label={above:$s=0$}]              (0) {
            $h= \begin{cases}
            \epsilon \\
            \dots v
            \end{cases}
            $
        };
        \node[state,label={$s=1$}] [right of=0] (1) {
            $h= \begin{cases}
            xi \\
            yi \\
            \dots vxi \\
            \dots vyi
            \end{cases}
            $
        };
        \node[state,label={above:$s=2$}] [right of=1] (2) {
            $\underset{\text{s.t. } \abs{\d h} \geq 2 \text{ and } v \notin \d h}{h=\dots \d h}$
        };

        \path (0) edge [loop left,looseness=6]     node{$p_{v}(h)$}      (0)
        (0) edge [bend left]      node[above]{$1-p_{v}(h)$}    (1)
        (0) edge [dashed]         node{$1$}           (1)
        (1) edge [bend left]      node[below]{$p_{v}(h)$}      (0)
        (1) edge [bend left]      node[above]{$1-p_{v}(h)$}    (2)
        (1) edge [dashed]         node{$1$}           (2)
        (2) edge [bend left=50, dashed]   node[below]{$p_{v}(h)$}      (0)
        (2) edge [loop right, dashed, looseness=6]     node{$1-p_{v}(h)$}    (2)
        (2) edge [loop below, looseness=6]     node{$1$}           (2)
        ;

        \end{tikzpicture}
    }
    \caption[A complete history-dependent process is aggregated to a 3-state non-MDP.]{A complete history-dependent process is aggregated to a 3-state non-MDP. The circles are states. Inside the states are the corresponding history patterns mapped to the state. For clarity, the rewards are not shown in the history patterns. The action $x$ is denoted by the solid arrows while the action $y$ is denoted by the dashed arrows. The transition probabilities are indicated at the transition edges.}\label{fig:example-mdp}
\end{figure}

\paradot{Non-Markovian Decision Process} The previous example demonstrated that Q-learning is able to learn a non-MRP $\in$ QDP. Now we provide an example QDP which is a two-action non-MDP $\in$ HDP$\setminus$POMDP: the state space of the underlying HDP is infinite. The agent is facing a \emph{non-stationary} state-process with state space $\S = \{0,1,2\}$ and action space $\A = \{x,y\}$. The agent has to input a right \emph{key}-action $k_s$ at a state $s = \psi(h)$ but the environment accepts the key action with a certain history-dependent probability $p_{v}(h)$ by providing an observation from $\O = \{v, i\}$, where $v$ and $i$ indicate acceptance or rejection of an input, respectively.
\beq
p_{v}(h) := \max\{p_{\min} , \%(v,h)\}
\eeq
where, $\%(v,h)$ is the percentage of accepted keys in $h$ and $p_{\min}$ is a minimum acceptance probability. Without loss of generality, we use the key sequence $k_0 := x, k_1 := x, k_2 := y$. The history to state mapping is defined as:

\beq
\psi(h) = \begin{cases}
    0 \, &\text{if, } h= \begin{cases}
        \epsilon \\
        \dots v
    \end{cases} \\
    1 \, &\text{if, } h= \begin{cases}
        xi \\
        yi \\
        \dots vxi \\
        \dots vyi
    \end{cases} \\
    2 \, &\text{if, } h=\dots \dot{h} \text{ such that } |\dot{h}| \geq 2 \text{ and } v \notin \dot{h}
\end{cases}
\eeq

It is apparent from the mapping function that \emph{state-0} is the start state, and it is also the case when a key is accepted in the last time-step, \emph{state-1} is defined when a key input is rejected once, and \emph{state-2} is reached when the key input has been recently rejected at least twice in a row.

The transition probabilities are formally given as follows (see Figure \ref{fig:example-mdp} for a graphical representation):
\beq
\mu_h(s'\|sa) = \begin{cases}
    p_{v}(h) \, &\text{if, } s'=0, s=0|1|2, a = k_s \\
    1-p_{v}(h) \, &\text{if, } \begin{cases}  s'=1, s = 0, a = k_s \\ s'=2, s = 1, a = k_s \\ s'=2, s = 2, a = k_s \end{cases} \\
    1 \, &\text{if, } \begin{cases}  s'=1, s = 0, a \neq k_s \\ s'=2, s = 1, a \neq k_s \\ s'=2, s = 2, a \neq k_s \end{cases} \\
    0 \, &\text{otherwise.}
\end{cases}
\eeq

The reward is also a function of the complete history.
\beq
r'(ha) = \begin{cases}
    3-\g-2\g p_{v}(h) \; &\text{if, } \psi(h)=0, a = k_0, \\
    1-3\g p_{v}(h) \; &\text{if, } \psi(h)=1, a = k_1, \\
    -3\g p_{v}(h) &\text{if, } \psi(h)=2, a = k_2, \\
    -3 \; &\text{if, } \psi(h) = s, a \neq k_s
\end{cases}
\eeq

It is easy to see that Q-values are only a function of state-action pairs as follows for $\psi(h) = s$:
\beq
q^*_\mu(sa) = Q^*_\mu(ha) = \begin{cases}
    3 \; &\text{if, } s=0, a = x, \\
    -2 \; &\text{if, } s=0, a = y, \\
    1 \; &\text{if, } s=1, a = x, \\
    -3 \; &\text{if, } s=1, a = y, \\
    -3 \; &\text{if, } s=2, a = x, \\
    0 \; &\text{if, } s=2, a = y.
\end{cases}
\eeq

Despite the history-based dynamics, Figure \ref{fig:convergence-mdp} shows that Q-learning is able to learn the Q-values of the \emph{non-stationary} state-process due to the fact that it is a QDP.

\section{Related Work}\label{sec:related-work}

A similar result was first reported by \citet{Li2006} in a finite state MDP setting. We confirm and extend the findings by considering a more general class of history-based environments. Also, we do not assume a weighting function to define the state-process \emph{cf.} \cite[Definition 1]{Li2006}, and our proof is based on time-dependent contraction mappings (see Section \ref{sec:main-result} for the details).

A finite-state POMDP is the most commonly used extension of an MDP. It is well-known that the class of finite-state POMDPs is a subset of HDP class \cite{Leike2016b}. One prevalent approach to handle the non-Markovian nature of a POMDP is to estimate a Markovian model with a state estimation method  \cite{Whitehead1995,Lin1992,Cassandra1994,Cassandra1994a} or use a finite subset of the recent history as a state to form a $k$-order MDP ($k$-MDP) \cite{McCallum1995}.
Then Q-learning is applied to learn this resultant state-based MDP.
This is a different approach to ours. We do not try to estimate an MDP or $k$-MDP representation of the underlying HDP.

\citet{Singh1994} investigate a direct application of model-free algorithms to POMDPs without a state estimation step akin to our setup but limited to finite state POMDPs only. They show that an optimal policy in a POMDP may be non-stationary and stochastic. However, the learned policy in direct model-free algorithms, such as Q-learning, is generally stationary and deterministic by design. Moreover, they also show that the optimal policy of a POMDP can be arbitrarily better than the optimal stationary, deterministic policy of the corresponding MDP. These negative results of \citet{Singh1994} are based on counter-examples that violate the state-uniformity assumption of the optimal Q-value function. Similar negative findings are reported by \citet{Littman1994}. Our positive convergence result holds for a subset of POMDPs that respects the state-uniformity condition.

\citet{Pendrith1998} show that a direct application of standard reinforcement learning methods, such as Q-learning, to non-Markovian domains can have a stationary optimal policy, if undiscounted return is used as a performance measure.
\citet{Perkins2002} prove existence of a fixed point in POMDPs for continuous behavior policies. However, this fixed point could be significantly worse than the average reward achieved by an oscillating discontinuous behavior policy. It signifies the effect of behavior policy on the learning outcome. On the other hand, our convergence result is valid as long as all the state-action pairs are visited infinitely often. The nature of the behavior policy does not directly affect our convergence result. A comprehensive survey of solution methods for POMDPs is provided by \citet{Murphy2000} and more recently by \citet{Spaan2012}.

\section{Summary}\label{sec:conclusion}

In this chapter, we proved that Q-learning convergence can be extended to a much larger class of decision problems than finite-state MDPs. In QDP, the optimal action-value function of the state-process is still only a function of states, but the dynamics can be a function of the complete history (in effect, a function of time). That enables QDP to allow non-stationary domains in contrast to finite-state MDPs that can only model stationary domains. We also showed that this state-uniformity condition is not only a necessary but also a sufficient condition for Q-learning convergence.
An empirical evaluation of a few non-MDP domains is also provided.

The proof in this chapter relies on the \emph{exact} state-uniformity of the optimal Q-value function --- also known as \emph{exact} state-aggregation condition \cite{Hutter2016}. It is intriguing to explore if the proof can be extended to an \emph{approximate} aggregation since it can be considered as a special case of function approximation. It is well-known that \emph{off-policy}, model-free learning with function approximation sometimes diverges \cite{Baird1995,Baird1999}. Therefore, it is not a priori clear if our approach can be extended to an \emph{approximate} aggregation case under some interesting condition(s) that can exclude such counter examples. Nevertheless, it is a potential extension of this work.
%
%

\chapter[Representation Guarantees for non-MDP Homomorphisms]{Representation Guarantees for non-MDP Homomorphisms{\\ \it \small This chapter is an adaptation of \citet{Majeed2019}}}\label{chap:representation-guarnt}

\begin{outline}
    A majority of \emph{real-world} problems either have a huge state or action space or both. Therefore, a naive application of existing \emph{tabular}\footnote{In a tabular setup the quantities of interest, e.g.\ $\mu$, $r_\mu$, $\pi_\mu$, $Q^*_\mu$, and $V_\mu^*$, can be represented as a finite table of values.} solution methods is not tractable on such problems. Nevertheless, these solution methods are quite useful if the agent has access to a relatively small state-action space homomorphism of the true environment. The agent can plan on the homomorphic model and uplift the policy to the actual environment.
    A plethora of research is focused on the case when the homomorphism is a Markovian representation of the underlying process. However, we show that the near-optimal performance is sometimes guaranteed even if the homomorphism is non-Markovian, i.e.\ a non-Markovian homomorphism can represent the environment fairly well for planning.
    \ifshort\else Moreover, non-Markovian representations are coarser than their Markovian counterparts. In this work, we extend the analysis of the Extreme State Aggregation (ESA) framework to homomorphisms. We also lift the policy uniformity condition for aggregation in ESA which allows even coarser modeling of the environment. \fi
\end{outline}

\section{Introduction}\label{sec:Intro}

Recall that it is typically assumed that the agent is facing a small state-action space Markov Decision Process (MDP) so the agent can advise a stationary policy as a function of state \cite{Puterman2014}. Unfortunately, the number of state-action pairs in most of real-world problems is prohibitively large, e.g.\ driving a car, playing Go, personal assistance, controlling a plant with real-valued inputs and so forth. The agent can neither simply visit each state-action pair nor can it keep the record of these visits to learn a near-optimal behavior, which is usually the assumption for convergence of many learning algorithms \cite{Watkins1992}. This intractability of state-action space is known as the \emph{curse of dimensionality} in RL \cite{Sutton2018}. Therefore, it is essential for the agent to generalize over its experiences in such problem.

Although most of the abstraction proposals concentrate on the state space reduction \cite{Abel2016,Li2006}, there is another equally important dimension of action space that hinders the application of traditional RL methods to real-world problems. The problem with a small state but huge --- sometimes continuous --- action space is equally challenging for learning and planning, {\em cf.} continuous bandit problem \cite{Bubeck2012}.

A \emph{homomorphism} framework originated by \citet{Whitt1978} is a well-studied solution to handle the state-action space curse of dimensionality. In the homomorphism framework a problem of a large state-action space is \emph{solved} by using an \emph{abstract} problem with a relatively small state-action space. The \mbox{(near-)optimal} policy of the abstract problem is a \emph{solution} if it is also a \mbox{(near-)optimal} policy in the true environment.

It is important to highlight that homomorphism is not the only technique for abstracting actions. The \emph{options} framework is a competing method for temporal action abstractions \cite{Sutton1999a}. In the option/macro-action framework, the original action space is augmented with long-term/built-in policies \cite{McGovern1997}. The agent using an option/macro-action commits to execute a fixed set of actions for a fixed (expected) time duration. This temporal action abstraction framework is arguably more powerful but beyond the scope of this work. Because, to the best of our knowledge, there are no theoretical performance guarantees available for such methods, and most probably such bounds might not exist.

In the homomorphism framework it is typically assumed that the abstract problem is an MDP \cite{Ravindran2003,Ravindran2004,Taylor2008}. However, the size of the abstract state-action space can be significantly reduced if non-MDP abstractions are possible \cite{Abel2016,Li2006}. Moreover, the reduction of abstract state-action space roughly\footnote{Although reduction of state-action space is necessary for faster learning/planning but not sufficient \cite{Littman1995}.} translates into faster learning and planning \cite{Strehl2009,Lattimore2014}.

In this chapter, we use similar notation and techniques of ESA but investigate and prove optimality bounds for non-MDP state-action homomorphisms in ARL. Since state abstraction is a special case of homomorphism (the action space is not reduced/mapped), our work is a generalization of ESA \cite{Hutter2016}.

The homomorphism framework has been extended beyond MDPs to finite-state POMDPs \cite{Wolfe2010}. As mentioned earlier,  GRL has an infinite set of histories and no two histories are alike. We can represent a finite-state POMDP environment as a history-based process by imposing a structure that there is an internal MDP that generates the observations and rewards. The GRL framework, by design, is more powerful and expressive than a finite-state POMDP \cite{Wolfe2010,Leike2016b}. Therefore, our results are more general than finite-state POMDP homomorphisms.

In this chapter we expand ESA representation guarantees to homomorphisms that scale beyond MDPs. We make another important technical contribution by relaxing the policy uniformity condition in ESA. In ESA, the states are aggregated together if they have the same policy. We show that this requirement can be relaxed and states with approximately similar policies can also be aggregated together with little performance loss. It enables us to have near-optimal maps with, considerably, more coarsely aggregated state-action pairs.

\section{Homomorphism Setup}

Recall that we define a homomorphism as a surjective mapping $\psi$ from the original state-action space $\H \times \A$ to the abstract state-action space $\S \times \B$.
For a succinct exposition, we also define a few marginalized mapping functions. These marginalized maps do not have any special significance other than making the notation a bit simpler.

\paradot{Histories mapped to an $sb$-pair} For a given abstract action $b \in \B$, we define a marginalized abstract state map as
\beq
\psi^{-1}_b(s) := \left\lbrace h \in \H \mid \exists a \in \A : \psi(ha)=sb \right\rbrace.
\eeq

\paradot{Actions mapped to an $sb$-pair} Similarly, we also define a marginalized abstract action map for any abstract state $s \in \S$ and history $h \in \H$ as
\beq
\psi^{-1}_s(b) := \left\lbrace a \in \A \mid \psi(ha)=sb \right\rbrace.
\eeq

It is important to note that $\psi^{-1}_s(b)$ is also a function of history. This dependence is always clear from the context, so we suppress it in the notation. 

\paradot{Abstract states mapped by a history} By a slight abuse of notation we overload $\psi$, and define a history to abstract state marginalized map as
\beq
\psi(h) := \{s \in \S \mid \exists a \in \A, b \in \B : \psi(ha)=sb\}.
\eeq

\paradot{Histories mapped to an abstract state} Finally, an abstract state to history marginalized map is defined as
\beq
\psi^{-1}(s) := \{h \in \H \mid \exists a \in \A, b \in \B : \psi(ha)=sb\}.
\eeq

\ifshort\else In this chapter, we assume a structure for the aggregation map $\psi$. The general unstructured case is left for future research. \fi

\begin{assumption}{$(\psi(h)=s)$}\label{asm:state-history}
    We assume that an abstract state is determined only by the history, i.e.\ $\psi(ha) := (s=f(h),b)$, where $f$ is any fixed \emph{surjective} function of history and is independent of actions $a$ and $b$.
\end{assumption}

The above assumption implies that $\psi(h)$ is \emph{singleton}. This is not only a technical necessity but a requirement to make the mapping \emph{causal}, i.e.\ the current history $h$ corresponds to a unique state $s$ independent of the next action taken by the agent. If we drop this assumption then the current history might resolve to a different state based on the next (future) action taken by the agent.

A homomorphic map $\psi$ lets the agent merge the experiences from $\mu$ and induces a \emph{history-based} abstract process $\mu_\psi$. Formally, for all $\psi(ha) = sb$ and any next abstract state $s'$, we express $\mu_\psi$ as
\beq\label{eq:marginP}
\mu_\psi(s'r'\|ha) := \sum_{o' : \psi(hao'r')=s'} \mu(o'r'\|ha).
\eeq

The map $\psi$ also induces a \emph{history-based} abstract policy $\pi_{\psi}$ as
\beq\label{eq:dpih}
\pi_\psi(b\|h) := \sum_{a \in \psi^{-1}_{s}(b)} \pi(a\|h).
\eeq

It is clear from \Cref{eq:marginP,eq:dpih} that the induced abstract process and policy are in general non-Markovian, i.e.\ both are functions of the history $h$ and not only the abstract state $s$.

\section{Motivation}\label{sec:Example}

In this section we motivate the importance of non-MDP homomorphisms by an example. We show that a non-MDP homomorphism can cater to a large set of domains and allows more compact representations.

\paradot{Navigational Grid-world} Let us consider a simplified version of the asymmetric grid-world example by \citet{Ravindran2004} in Figure \ref{fig:grid-world-original}. In this navigational domain, the goal of an agent $\pi$ is to navigate the grid to reach the target cell $T$. The unreachable cells are grayed-out. The agent receives a large positive reward if it enters the cell $T$, otherwise a small negative reward is given to the agent at each time-step. The agent is capable of moving in the four directions, i.e.\ up, down, left and right. This domain has an \emph{almost} similar transition and reward structure across a diagonal axis. We call this an approximate MDP axis and denote it by $\approx$MDP. This axis of symmetry enables us to create a homomorphism of the domain using approximately half of the original state-space (see Figure \ref{fig:grid-world-abstract}).

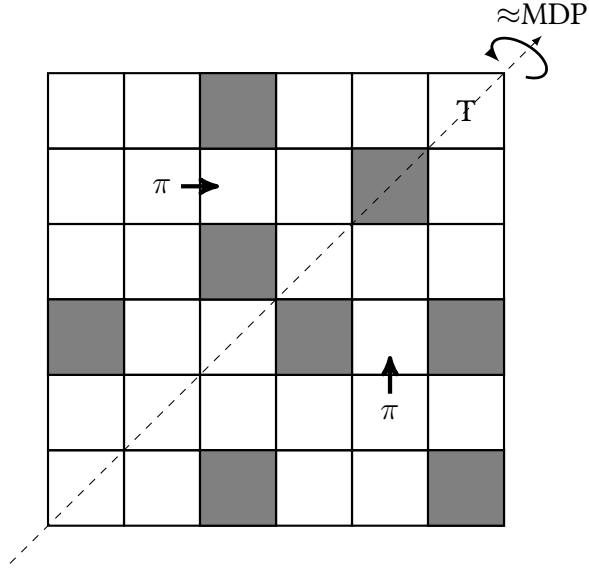
\begin{figure}[h]
    \centering
        \begin{tikzpicture}
            [
            box/.style={rectangle,draw=black,thick, minimum size=1cm},
            gray-box/.style={box, fill=gray}
            ]

            \foreach \x in {0,1,...,5}{
                \foreach \y in {0,1,...,5}
                \node[box] at (\x,\y){};
            }
            \node[gray-box] at (2,5){};
            \node[gray-box] at (4,4){};
            \node[gray-box] at (2,3){};
            \node[gray-box] at (3,2){};
            \node[gray-box] at (0,2){};
            \node[gray-box] at (5,2){};
            \node[gray-box] at (2,0){};
            \node[gray-box] at (5,0){};

            \node at (5,5){T};
            \draw[->, ultra thick] (1.25,4) -- (1.75,4);
            \node at (1,4){$\pi$};
            \draw[->, ultra thick] (4,1.25) -- (4, 1.75);
            \node at (4,1){$\pi$};

            \node[above] at (6,6){$\approx$MDP};
            \draw[dashed] (-1,-1) -- (6,6);
            \node at (5.7,5.7) {\AxisRotator[x=0.2cm,y=0.4cm,->,rotate=60]};
        \end{tikzpicture}
    \caption[The original navigational grid-world with the axis of approximate symmetry.]{The original navigational grid-world with the axis of approximate symmetry. The gray cells are not reachable. The target cell is at the top right corner. The figure shows two possible positions of the agent and corresponding optimal actions.}\label{fig:grid-world-original}
\end{figure}
\begin{figure}[h]
    \centering
        \begin{tikzpicture}
            [
            box/.style={rectangle,draw=black,thick, minimum size=1cm},
            gray-box/.style={box, fill=gray},
            stripe-box/.style={box, pattern=north west lines, pattern color=gray}
            ]

            \foreach \y in {1,...,5}{
                \foreach \x in {0,1,...,\y}
                \node[box] at (\x,\y){};
            }
            \node[box] at (0,0){};
            \node[gray-box] at (2,5){};
            \node[gray-box] at (4,4){};
            \node[gray-box] at (2,3){};
            \node[gray-box] at (0,2){};
            \node[stripe-box] at (0,5){};

            \node at (5,5){T};
            \draw[->, ultra thick] (1.25,4) -- (1.75,4);
            \node at (1,4){$\u\pi$};

    \end{tikzpicture}
    \caption[A possible MDP homomorphism by merging the mirror state-action pairs together.]{A possible MDP homomorphism by merging the mirror state-action pairs together. The presence of a hashed cell indicates that it is not an exact homomorphism. The agent $\u\pi$ solves the problem in this abstract domain.}\label{fig:grid-world-abstract}
    \hfill
\end{figure}
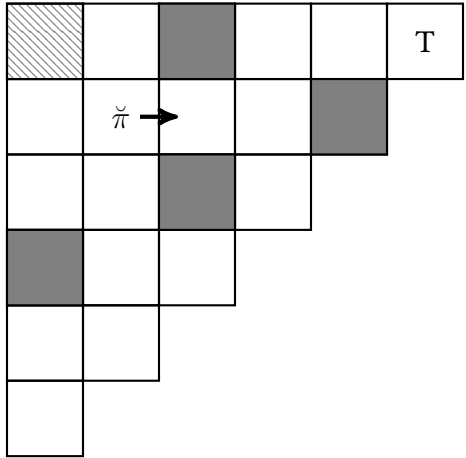

This grid-world example has primarily been studied in the context of either exact, approximate or Bounded parameter MDP (BMDP) homomorphisms \cite{Ravindran2004}: the abstract model \emph{approximately} preserves the one-step dynamics of the original environment. However, as we later prove in this chapter (see Theorem \ref{thm:psiQstar}ii), some non-MDP homomorphisms can also be used to find a near-optimal policy in the original process. We motivate the need of non-MDP homomorphisms, first, by highlighting the fact\footnote{This section is an informal motivation, we formally deal with this fact in the main results section (Theorem \ref{thm:psimdpstar}i).} that in the grid-world domain, the states with similar dynamics have similar optimal action-values. Afterwards, we modify the grid-world domain such that the modified grid-world does not have an approximate MDP symmetry axis, but still has the same approximate optimal action-values symmetry.

We apply Value Iteration (VI) \cite{Bellman1957} algorithm (\Cref{alg:state-value-iteration}) with some fixed but irrelevant parameters on the grid world (see Figure \ref{fig:optimal-values-gw}). The grid world has the same approximate symmetry axis for the optimal values, denoted by $\approx$Q-uniform axis. It is easy to see that each merged state in Figure \ref{fig:grid-world-abstract} has the same action-values. Hence, the $\approx$MDP axis is also an $\approx$Q-uniform axis in the grid-world.

\begin{figure}
    \centering
        \begin{tikzpicture}
            [
            box/.style={rectangle,draw=black,thick, minimum size=1cm},
            gray-box/.style={box, fill=gray},
            highlight/.style={box,draw=black,ultra thick,dashed}
            ]

            \foreach \x in {0,1,...,5}{
                \foreach \y in {0,1,...,5}
                \node[box] at (\x,\y){};
            }
            \node[gray-box] at (2,5){};
            \node[gray-box] at (4,4){};
            \node[gray-box] at (2,3){};
            \node[gray-box] at (3,2){};
            \node[gray-box] at (0,2){};
            \node[gray-box] at (5,2){};
            \node[gray-box] at (2,0){};
            \node[gray-box] at (5,0){};

            \node at (0,0){1.74};
            \node at (0,1){1.94};
            \node at (0,3){2.42};
            \node at (0,4){2.70};
            \node at (0,5){\textbf{2.42}};

            \node at (1,0){1.94};
            \node at (1,1){2.17};
            \node at (1,2){2.42};
            \node at (1,3){2.70};
            \node at (1,4){3.01};
            \node at (1,5){\textbf{2.17}};

            \node at (2,1){2.42};
            \node at (2,2){2.17};
            \node at (2,4){3.35};

            \node at (3,0){2.42};
            \node at (3,1){2.70};
            \node at (3,3){3.35};
            \node at (3,4){3.74};
            \node at (3,5){4.16};

            \node at (4,0){2.70};
            \node at (4,1){3.01};
            \node at (4,2){3.35};
            \node at (4,3){3.74};
            \node at (4,5){4.64};

            \node at (5,1){\textbf{2.70}};
            \node at (5,3){4.16};
            \node at (5,4){4.64};
            \node at (5,5){5.16};

            \node[highlight] at (4,2){};
            \node[highlight] at (2,4){};

            \node[above] at (6,6){$\approx$Q-uniform};
            \draw[dashed] (-1,-1) -- (6,6);
            \node at (5.7,5.7) {\AxisRotator[x=0.2cm,y=0.4cm,->,rotate=60]};

    \end{tikzpicture}
    \caption[The optimal values at each approachable cell.]{The optimal values at each approachable cell. The bold-faced values are not exactly matched across the symmetry axis.} \label{fig:optimal-values-gw}
\end{figure}

\paradot{Modified Navigational Grid-world} Now we modify the grid world such that it does not have an $\approx$MDP axis (Figure \ref{fig:grid-world-original}) but it still has the same $\approx$Q-uniform axis (Figure \ref{fig:optimal-values-gw}). The idea is to take a pair of merged states from Figure \ref{fig:grid-world-original} and change the reward and transition probabilities such that the states no longer have similar one-step dynamics but still have similar action-values. For example, let us consider the cells highlighted with dashed borders in Figure \ref{fig:optimal-values-gw} and denote the cell in the bottom half with $s_{23}$. Let $u$, $d$, $p_u$ and $p_d$ denote the actions up and down, and the probabilities to reach the desired cell by taking the corresponding action, respectively. Let $r_u$ and $r_d$ be the expected rewards for each action in the state $s_{23}$. In general, we get an \emph{under-determined} set of equations for the action-value function at state $s_{23}$ as
\beq\label{eq:q23}
Q^*_\mu(s_{23},a) = \begin{cases}
    r_u + 0.73\g p_u  + 3.01\g   &\quad \text{if }a = u\\
    r_d - 0.73\g p_d + 3.74\g  &\quad \text{if } a = d.
\end{cases}
\eeq

In the original navigational grid-world problem $p_u = p_d = 1$, i.e.\ each action leads deterministically to the indented reachable cell, and $r_u=r_d = r_n$, where $r_n$ is a fixed small negative reward. We can break the $\approx$MDP similarity by setting\footnote{$p_u = 0$ implies that the action $u$ now takes the agent to the down cell and vice versa for the action $d$.} $p_u = p_d := 0$, i.e.\ the actions behave in the opposite way in the lower half, $r_u := r_n + 0.73\g$ and $r_d := r_n - 0.73\g$, without changing the $\approx$Q-uniform similarity. In fact, we can have infinite combinations of rewards and transitions to get a set of modified domains since the set of equations \eqref{eq:q23} is under-determined.

This set of \emph{modified} domains, by design, no longer allows the approximate MDP homomorphism of Figure \ref{fig:grid-world-abstract}. Every state is different in terms of reward and transition structure across the $\approx$MDP axis of Figure \ref{fig:grid-world-original}. Any one-step model similarity abstraction would be approximately of the same size as the original problem. However, if we consider Q-uniform homomorphisms, i.e.\ state-action pairs are merged if the action-values are close, then the set of modified domains has a same Q-uniform homomorphism.

In GRL it is natural to assume that the (expected) rewards are function of realized history. The above modification argument is more likely to hold in a GRL setting: the reward and transition similarity might be hard to satisfy. Therefore, a GRL agent is better to consider such non-MDP homomorphisms to cover more domains with a single abstract model. Now we ask the main question, does such a non-MDP homomorphism, e.g.\ Q-uniform homomorphism, have a guaranteed solution for the original problem? In the next section, we answer this question in affirmative for Q-uniform homomorphisms (Theorem \ref{thm:psiQstar}ii), but in negative for V-uniform
homomorphisms with a weaker positive result (Theorem \ref{thm:psiVstar}ii).

As discussed earlier in \Cref{chap:extreme-abs}, we are primarily interested in the optimal policies of the surrogate-MDP. However, it is also interesting to consider a general policy case (e.g.\ Theorems \ref{thm:psimdppi}, \ref{thm:exactmdp}, \ref{thm:psiQpi} and \ref{thm:psiVpi}) akin to an on-policy result where we uplift a representative policy. We use any arbitrary member as a representative policy $\piR$ on the abstract state $s$.
\beq\label{eq:dpi}
\piR(\cdot\|s) := \pi_{\psi}(\cdot\|h), \quad  \text{for some } h \in \psi\inv(s).
\eeq

This arbitrary choice of representative introduces a policy representation error $\DPi$ for each abstract state $s$, expressed as
\beq\label{eq:Delta}
\DPi(s) := \sup_{h \in \psi^\inv(s)} \norm{ \piR(\cdot\|s) - \pi_\psi(\cdot\|h)}_1.
\eeq

This representation error is small/zero when the induced abstract policy $\pi_{\psi}$ is approximately/piecewise constant, i.e.\ $\pi_{\psi}(\cdot\|h) = \pi_\psi(\cdot|\d h)$ for all $\psi(h) = \psi(\d h)$.

\begin{remark}
    It is easy to see that the state aggregation mapping function $\phi$ of ESA setup\footnote{The reader is encouraged to see \cite{Hutter2016} for more details about $\phi$.} is a special case of our generalized mapping function $\psi$. We can mimic any $\phi$ by using a $\psi_\phi$ with identity transformation over action spaces, i.e.\ $\psi_\phi(ha) := (\phi(h)a)$.
\end{remark}

In the next section we provide the main results of this chapter. We construct a near-optimal policy for the original process from the surrogate-MDP even if the homomorphism is non-MDP.

\section{Representation Guarantees}\label{sec:Aggregation}

\ifshort\else In this section, we prove that under some conditions an optimal policy of the abstract process loses only a faction of the value when uplifted in the original process. Also, these results hold even when the marginalized process in not an MDP. \fi
We analyze three types of homomorphisms in this work: MDP, Q-uniform and V-uniform homomorphisms\ifshort\footnote{Due to the limited space, we omit the proofs. But the proofs can be found in the extended version of this paper \cite{Majeed2018}.}\fi. Both Q and V-uniform homomorphisms are non-Markovian by definition. In general, MDP and Q-uniform homomorphisms admit a \emph{deterministic} near-optimal policy of the original process, while V-uniform homomorphisms do not.

\subsection{MDP Homomorphisms}

A homomorphism is an MDP homomorphism if the induced abstract process $\mu_\psi$ is an MDP. This would mean there exists a process $\mu_{\mathrm{MDP}}$ such that for all $\psi(ha) = sb$ and for all $s'$ and $r'$, it holds:
\beq\label{eq:MDP}
\mu_\psi(s'r'\|ha) = \mu_{\mathrm{MDP}}(s'r'\|sb).
\eeq

Using the above condition renders $\b \mu_B = \mu_{\mathrm{MDP}}$ and independent of $B$.
The condition $\eqref{eq:MDP}$ is a  \emph{stronger} version of the bisimulation condition \cite{Givan2003} that is generalized to joint history-action pairs. This condition is strong enough to preserve the optimal \mbox{(action-)value} functions of the original process (see Theorem \ref{thm:psimdpstar}). But, it is not strong enough to preserve arbitrary policy \mbox{(action-)value} functions (see Theorem \ref{thm:psimdppi}). Unless we define a notion of action-value function representative and a corresponding representation error. For an abstract state-action pair, the representative action-value is defined as
\beq\label{eq:Qrep}
Q^\pi_\mu(\psi^{-1}(sb)) := Q^\pi_\mu(ha), \,  \text{for some } \psi(ha) = sb
\eeq
and the representation error of the action-value function is expressed as
\beq\label{eq:delta}
\DQ(s) := \sup_{h,a,b:\psi(ha) = sb} \abs{ Q^\pi_\mu(\psi\inv(sb)) - Q^\pi_\mu(ha)}.
\eeq

Similar to $\DPi$, this representation error is small/zero if the action-value function is approximately/piecewise constant.
At this point, we have all the required components properly defined to state the first theorem of the chapter. Before we give the first theorem of this chapter, we establish a couple of important lemmas that bound the (optimal) value loss when we evaluate the action-value function only at the representatives.

\begin{lemma}\label{lem:Vrep}
    For any policy $\pi$, and $\psi(h)=s$ the following holds,
    \beqn
    \left|V^\pi_\mu(h) - Q^\pi_\mu(\psi^\inv(s\piR(s))) \right| \leq \DQ(s) + \frac{\DPi(s)}{1-\g}.
    \eeqn
\end{lemma}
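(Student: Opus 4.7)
The plan is to unfold $V^\pi_\mu(h)$ into a sum over original actions, regroup by the abstract action they map to, and then make two approximations in sequence: first replace each $Q^\pi_\mu(ha)$ by its representative, then replace the induced abstract policy $\pi_\psi(\cdot\|h)$ by the chosen representative policy $\piR(\cdot\|s)$. Each approximation picks up exactly one of the two terms in the target bound.

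Concretely, I would start from the identity
\beqn
V^\pi_\mu(h) \;=\; \sum_{a \in \A} \pi(a\|h)\, Q^\pi_\mu(ha) \;=\; \sum_{b \in \B} \sum_{a \in \psi^{-1}_s(b)} \pi(a\|h)\, Q^\pi_\mu(ha),
\eeqn
where the second equality simply partitions $\A$ by the fiber $\psi^{-1}_s(b)$. By \eqref{eq:delta} each $Q^\pi_\mu(ha)$ in the inner sum differs from the representative $Q^\pi_\mu(\psi^{-1}(sb))$ by at most $\DQ(s)$, so using $\sum_a \pi(a\|h) = 1$ and the definition \eqref{eq:dpih} of $\pi_\psi(b\|h)$ gives
\beqn
\left| V^\pi_\mu(h) - \sum_{b \in \B} \pi_\psi(b\|h)\, Q^\pi_\mu(\psi^{-1}(sb)) \right| \;\leq\; \DQ(s).
\eeqn

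The second step replaces $\pi_\psi(\cdot\|h)$ by $\piR(\cdot\|s)$. By bilinearity,
\beqn
\sum_{b} \bigl(\pi_\psi(b\|h) - \piR(b\|s)\bigr)\, Q^\pi_\mu(\psi^{-1}(sb))
\eeqn
is bounded in absolute value by the $L^1$--$L^\infty$ H\"older inequality by $\|\pi_\psi(\cdot\|h)-\piR(\cdot\|s)\|_1 \cdot \sup_b |Q^\pi_\mu(\psi^{-1}(sb))|$. Under \Cref{asm:bounded_rewards} the Q-function is bounded by $1/(1-\g)$, and the $L^1$ distance is at most $\DPi(s)$ by \eqref{eq:Delta}, yielding an error of at most $\DPi(s)/(1-\g)$. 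Recognising $\sum_b \piR(b\|s)\, Q^\pi_\mu(\psi^{-1}(sb)) = Q^\pi_\mu(\psi^{-1}(s\piR(s)))$ and combining via the triangle inequality gives the desired bound.

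There is no real obstacle here; the argument is a double triangle inequality. The only delicate points are notational: (i) checking that the partition $\{\psi^{-1}_s(b)\}_{b\in\B}$ of $\A$ is well-defined given that $\psi(h)$ is a singleton by \Cref{asm:state-history}, so the fibers depend on $h$ only through $s$; and (ii) being careful that the representative $Q^\pi_\mu(\psi^{-1}(sb))$ is indeed a single scalar (not a set-valued object), so that pulling it out of the inner sum is legitimate. The Q-value boundedness $\|Q^\pi_\mu\|_\infty \leq 1/(1-\g)$ is immediate from the geometric series on the bounded reward signal.
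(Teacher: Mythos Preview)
Your proof is correct and follows essentially the same route as the paper: partition the action sum by abstract action $b$, replace each $Q^\pi_\mu(ha)$ by the representative to incur $\DQ(s)$, then swap $\pi_\psi(\cdot\|h)$ for $\piR(\cdot\|s)$ and use the Q-value bound $1/(1-\g)$ to incur $\DPi(s)/(1-\g)$. One small inaccuracy in your side remarks: the fibers $\psi^{-1}_s(b)$ do in general depend on $h$ and not only on $s$ (the paper notes this explicitly), but since the argument is at a fixed $h$ this does not affect the proof.
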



\begin{proof}
    For any $\psi(h)=s$, we start from the value function of the original process,
    \bqan
    V^\pi_\mu(h) &\overset{}{\equiv} \sum_{a \in \A} Q^\pi_\mu(ha) \pi(a \|h) \\
    &\overset{(a)}{=} \sum_{b \in \B} \sum_{a \in \psi^{-1}_s(b)} Q^\pi_\mu(ha) \pi(a \|h) \\
    &\overset{}{=} \sum_{b \in \B} \sum_{a \in \psi^{-1}_s(b)} \left( Q^\pi_\mu(ha) + Q^\pi_\mu(\psi^{-1}(sb)) - Q^\pi_\mu(\psi^{-1}(sb))  \right)  \pi(a \|h) \\
    &\overset{}{=} \sum_{b \in \B} Q^\pi_\mu(\psi^{-1}(sb)) \sum_{a \in \psi^{-1}_s(b)} \pi(a \|h) + \sum_{b \in \B} \sum_{a \in \psi^{-1}_s(b)} \left( Q^\pi_\mu(ha) - Q^\pi_\mu(\psi^{-1}(sb)) \right)  \pi(a \|h) \\
    &\overset{(b)}{\leq} \sum_{b \in \B} Q^\pi_\mu(\psi^{-1}(sb)) \pi_\psi(b \|h) + \DQ(s) \\
    &\overset{}{=} \sum_{b \in \B} Q^\pi_\mu(\psi^{-1}(sb)) \left(\pi_\psi(b \|h) + \piR(b \|s) - \piR(b\|s) \right) + \DQ(s) \\
    &\overset{}{=} \sum_{b \in \B} Q^\pi_\mu(\psi^{-1}(sb)) \piR(b \|s) + \sum_{b \in \B} Q^\pi_\mu(\psi^{-1}(sb)) \left(\pi_\psi(b \|h) - \piR(b \|s) \right) + \DQ(s) \\
    &\overset{(c)}{\leq} Q^\pi_\mu(\psi^{-1}(s \piR(s))) + \frac{\DPi(s)}{1-\g} + \DQ(s) \\
    \eqan
    $(a)$ follows from the fact that the mapping is defined to be surjective; $(b)$ from \Cref{eq:dpih,eq:delta}, $(c)$ uses \Cref{eq:Delta} and the fact that action-value function is bounded, so is the representative. It is easy to get the other side of the inequality from similar steps.
\end{proof}

Using the above Lemma, the following theorem states that an (approximately) MDP homomorphism (approximately) represents the (action-)value function of the true environment for a fixed policy $\pi$, i.e.\ it has the representation guarantees for the value functions.

\begin{theorem}{$(\psi_{\mathrm{MDP}\pi})$}\label{thm:psimdppi}
    Let $\psi$ be a homomorphism such that $\mu_\psi$ is an MDP, then for any policy $\pi$ and all $\psi(h,a) = (s,b)$ it holds:
    \beqn
    \left|q^\piR_\mu(sb) - Q^\pi_\mu(ha)\right| \leq  \frac{\g \eps_{\max}}{1-\g} \quad \text{and} \quad
    \left|v^\piR_\mu(s) - V^\pi_\mu(h)\right| \leq  \frac{\eps_{\max}}{1-\g}
    \eeqn
    where $\eps_{\max} := \max_{s \in \S} \left(\DQ(s) + \frac{\DPi(s)}{1-\g}\right)$.
\end{theorem}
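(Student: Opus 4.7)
The plan is to set up two coupled quantities that measure the mismatch between the surrogate MDP and the true process, show that each bounds the other via a contraction-style inequality, and then solve the resulting system. Define
\[
\Delta_Q \coloneqq \sup_{sb,\; ha:\,\psi(ha)=sb} \left|q^\piR_\mu(sb) - Q^\pi_\mu(ha)\right|, \qquad \Delta_V \coloneqq \sup_{s,\; h \in \psi\inv(s)} \left|v^\piR_\mu(s) - V^\pi_\mu(h)\right|.
\]
The theorem statements are exactly $\Delta_Q \leq \g \eps_{\max}/(1-\g)$ and $\Delta_V \leq \eps_{\max}/(1-\g)$, so it suffices to prove these two inequalities.

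First I would use the MDP homomorphism condition \eqref{eq:MDP} to collapse the Bellman expansions of $q^\piR_\mu(sb)$ and $Q^\pi_\mu(ha)$ onto a common footing. Since $\mu_\psi(s'r'\|ha) = \mu_{\mathrm{MDP}}(s'r'\|sb) = \b\mu(s'r'\|sb)$ for every $\psi(ha)=sb$, taking marginals gives $\b r_\mu(sb) = r_\mu(ha)$ and $\b\mu(s'\|sb) = \mu_\psi(s'\|ha)$, so the reward terms cancel and
\[
q^\piR_\mu(sb) - Q^\pi_\mu(ha) = \g \sum_{s'} \b\mu(s'\|sb)\, v^\piR_\mu(s') \;-\; \g \sum_{e'} \mu(e'\|ha)\, V^\pi_\mu(hae').
\]
Partitioning the percept sum by $s' = \psi(hae')$ and using $\mu_\psi(s'\|ha) = \b\mu(s'\|sb)$ lets me rewrite the second sum as a $\b\mu(\cdot\|sb)$-average of $V^\pi_\mu$ values over histories $hae'$ lying in $\psi\inv(s')$. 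Bounding each such value gap by $\Delta_V$ and taking absolute values yields the first recursive bound
\[
\Delta_Q \;\leq\; \g\, \Delta_V.
\]

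Next I would bound $\Delta_V$ in the reverse direction using Lemma~\ref{lem:Vrep}. For any $h \in \psi\inv(s)$ the lemma gives
\[
\left|V^\pi_\mu(h) - Q^\pi_\mu(\psi\inv(s\piR(s)))\right| \leq \DQ(s) + \tfrac{\DPi(s)}{1-\g} \leq \eps_{\max}.
\]
Expanding $Q^\pi_\mu(\psi\inv(s\piR(s))) = \sum_b \piR(b\|s) Q^\pi_\mu(\psi\inv(sb))$ and $v^\piR_\mu(s) = \sum_b \piR(b\|s) q^\piR_\mu(sb)$, the triangle inequality and the fact that the representative $Q^\pi_\mu(\psi\inv(sb))$ is just $Q^\pi_\mu(h^\ast a^\ast)$ for some chosen $h^\ast a^\ast$ with $\psi(h^\ast a^\ast) = sb$ give
\[
\left|Q^\pi_\mu(\psi\inv(s\piR(s))) - v^\piR_\mu(s)\right| \leq \sum_b \piR(b\|s) \left|Q^\pi_\mu(\psi\inv(sb)) - q^\piR_\mu(sb)\right| \leq \Delta_Q,
\]
so that $\Delta_V \leq \eps_{\max} + \Delta_Q$.

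Combining the two inequalities yields $\Delta_Q \leq \g(\eps_{\max} + \Delta_Q)$, i.e.\ $\Delta_Q \leq \g \eps_{\max}/(1-\g)$, and then $\Delta_V \leq \eps_{\max} + \g \eps_{\max}/(1-\g) = \eps_{\max}/(1-\g)$, giving both claimed bounds. The main obstacle I anticipate is the bookkeeping in the first step: making sure the MDP-homomorphism equality lets me convert a percept-level expectation into an $\b\mu$-weighted state-level expectation of $V^\pi_\mu$ terms, so that the resulting bound is uniform in $ha$ and reduces cleanly to $\g \Delta_V$ rather than to some history-dependent quantity. Once that reduction is carried out, the rest of the argument is a straightforward triangle-inequality manipulation and an elementary solution of the coupled inequalities.
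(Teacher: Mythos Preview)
Your proposal is correct and follows essentially the same route as the paper's proof: both arguments combine Lemma~\ref{lem:Vrep} with the MDP condition \eqref{eq:MDP} to obtain a contraction inequality for the action-value gap and then solve it. The only cosmetic difference is that you introduce $\Delta_V$ as a separate named quantity and establish $\Delta_Q \leq \g\Delta_V$ before applying Lemma~\ref{lem:Vrep}, whereas the paper keeps the value-gap bound inline (as $\delta + \DQ(s') + \DPi(s')/(1-\g)$) and performs the Bellman expansion of $Q^\pi_\mu$ after that bound; the resulting recursion $\delta \leq \g(\delta + \eps_{\max})$ and its solution are identical.
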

\begin{proof}
    Let $\delta := \underset{h,a,s,b: \psi(ha)=sb}{\sup}\left|q^\piR_\mu(sb) - Q^\pi_\mu(ha)\right|$, and for any $\psi(h)=s$ we have,
    \bqa\label{eq:vVdif}
    \left|v^\piR_\mu(s) - V^\pi_\mu(h)\right|
    &\overset{(a)}{\leq} \left|\sum_{b \in \B} q^\piR_\mu(sb){\piR}(b\|s) - \sum_{b \in \B} Q^\pi_\mu(\psi^{-1}(sb)){\piR}(b\|s)\right| + \DQ(s) + \frac{\DPi(s)}{1-\g} \nonumber \\
    &\overset{}{\leq} \sum_{b \in \B} \left|q^\piR_\mu(sb) - Q^\pi_\mu(\psi^{-1}(sb))\right| {\piR}(b\|s) + \DQ(s) + \frac{\DPi(s)}{1-\g} \nonumber \\
    & \leq \delta + \DQ(s) + \frac{\DPi(s)}{1-\g}
    \eqa
    $(a)$ follows from the definition of $v^\piR_\mu(s)$ and Lemma \ref{lem:Vrep}. Now for any $\psi(ha)=sb$, we have,
    \bqan
    Q^\pi_\mu(ha) \overset{}&{\equiv} \sum_{o' \in \O, r' \in \R} \mu(o'r' \|ha)(r' + \g V^\pi_\mu( h'))  \qquad [h' = hao'r'] \\
    \overset{(b)}&{\lessgtr} \sum_{s' \in \S, r' \in \R} \mu_\psi(s'r' \|ha)\left(r' + \g v^\piR_\mu(s')\right) \pm \g\left(\delta + \DQ(s') + \frac{\DPi(s')}{1-\g}\right) \\
    \overset{(\ref{eq:MDP})}&{=} \sum_{s' \in \S, r' \in \R} \mu_{\rm MDP}(s'r' \|sb)\left(r' + \g v^\piR_\mu(s')\right) \pm \g\left(\delta + \DQ(s') + \frac{\DPi(s')}{1-\g}\right) \\
    \overset{}&{\lessgtr} q^\piR_\mu(sb) \pm \g(\delta + \eps_{\max})
    \eqan
    $(b)$ follows from value function error bound (\ref{eq:vVdif}) and definition of $\mu_\psi$ given by (\ref{eq:marginP}).
    Since, $\delta \equiv \sup \abs{q^{\piR}_\mu(sb) - Q^\pi_\mu(ha)} \leq \g(\delta + \eps_{\max})$ therefore, $\delta \leq \frac{\g \eps_{\max}}{1-\g}$, hence completes the proof.
\end{proof}

The above theorem shows that the surrogate MDP \emph{approximately} preserves the \mbox{(action-)value} functions of the original process for any arbitrary policy. However, these \mbox{(action-)value} functions are preserved \emph{exactly} if we further impose a policy uniformity condition in addition to an MDP assumption.

\begin{theorem}{$(\psi_{\mathrm{MDP}\pi=})$} \label{thm:exactmdp}
    Let $\psi$ be a homomorphism such that $\mu_\psi$ is an MDP and $\pi(\cdot\|h) = \pi(\cdot|\d h)$ (i.e.\ the policy similarity condition holds) for some policy $\pi$ and for all $\psi(h) = \psi(\d h)$. Then for all $\psi(ha) = sb$ it holds:
    \beqn
    q^\piR_\mu(sb) = Q^\pi_\mu(ha) \quad \text{and} \quad v^\piR_\mu(s) = V^\pi_\mu(h).
    \eeqn
\end{theorem}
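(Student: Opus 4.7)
The plan is to show that the ``lifted'' functions $\tilde q(ha) := q^{\piR}_\mu(\psi(ha))$ and $\tilde v(h) := v^{\piR}_\mu(\psi(h))$ satisfy the Bellman equations of the \emph{original} process under policy $\pi$, and then invoke uniqueness of the fixed point to conclude $\tilde q = Q^\pi_\mu$ and $\tilde v = V^\pi_\mu$. Two preliminary observations drive the argument. First, the policy similarity condition $\pi(\cdot\|h)=\pi(\cdot\|\d h)$ whenever $\psi(h)=\psi(\d h)$ yields, via \eqref{eq:dpih}, that $\pi_\psi(\cdot\|h)$ is constant on each equivalence class $\psi^{-1}(s)$, so $\DPi(s)=0$ and $\piR(\cdot\|s) = \pi_\psi(\cdot\|h)$ for \emph{every} $h\in\psi^{-1}(s)$. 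Second, the MDP homomorphism condition \eqref{eq:MDP} makes $\mu_\psi(s'r'\|ha)$ depend on $ha$ only through $sb=\psi(ha)$, so the one-step dynamics and expected rewards of the surrogate coincide with those of $\mu_\psi$ at every representative.

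Next I would chain these two facts to rewrite the surrogate Bellman equation pointwise on the original history space. For any $ha$ with $\psi(ha)=sb$,
\beqn
\tilde q(ha) \;=\; q^{\piR}_\mu(sb) \;=\; \sum_{s'r'} \mu_\psi(s'r'\|ha)\bigl(r' + \g\, v^{\piR}_\mu(s')\bigr) \;=\; \sum_{o'r'} \mu(o'r'\|ha)\bigl(r' + \g\, \tilde v(hao'r')\bigr),
\eeqn
where the last equality unfolds the definition \eqref{eq:marginP} of $\mu_\psi$ and recognizes $v^{\piR}_\mu(\psi(hao'r')) = \tilde v(hao'r')$. Likewise, using $\piR(b\|s)=\pi_\psi(b\|h)$ on every $h\in\psi^{-1}(s)$ and then expanding $\pi_\psi$ via \eqref{eq:dpih}:
\beqn
\tilde v(h) \;=\; \sum_b \piR(b\|s)\, q^{\piR}_\mu(sb) \;=\; \sum_b \pi_\psi(b\|h)\, \tilde q(ha_b) \;=\; \sum_{a \in \A} \pi(a\|h)\, \tilde q(ha),
\eeqn
where $ha_b$ is any representative in $\psi^{-1}_s(b)$ and well-definedness follows because $\tilde q$ is constant on $\psi^{-1}(sb)$ by construction.

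Finally, I would observe that these are precisely the GBE for $\pi$ on the original process. Since the standard Banach contraction argument (or the uniqueness part of the general Bellman fixed-point theorem) gives a unique bounded solution to these equations, and since $\tilde q$ and $\tilde v$ are bounded by the reward bound, we must have $\tilde q \equiv Q^\pi_\mu$ and $\tilde v \equiv V^\pi_\mu$. Unwinding the definitions gives exactly $q^{\piR}_\mu(sb)=Q^\pi_\mu(ha)$ and $v^{\piR}_\mu(s)=V^\pi_\mu(h)$ whenever $\psi(ha)=sb$.

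The only mildly subtle step is the second one: verifying that $\tilde q(ha)$ really depends only on $\psi(ha)$, and that $\tilde v$ ``averages out'' cleanly in the history-based Bellman equation. Both rely on the full strength of the hypotheses (MDP homomorphism plus policy uniformity); if either were weakened we would only recover the approximate statement of Theorem~\ref{thm:psimdppi}. Alternatively, one could derive the conclusion directly as the $\eps_{\max}\to 0$ limit of Theorem~\ref{thm:psimdppi}, since $\DPi(s)=0$ by policy uniformity and $\DQ(s)=0$ follows once $\tilde q$ is shown well-defined on equivalence classes; however, I prefer the fixed-point route because it makes the equality, not just the vanishing of a bound, structurally transparent.
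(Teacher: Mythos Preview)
Your proof is correct and takes a genuinely different route from the paper's. The paper argues on the \emph{original} action-value function: it sets $\delta := \sup |Q^\pi_\mu(ha) - Q^\pi_\mu(\d h\d a)|$ over pairs in the same equivalence class, uses policy similarity to get $|V^\pi_\mu(h) - V^\pi_\mu(\d h)| \leq \delta$, then the MDP condition \eqref{eq:MDP} to get $|Q^\pi_\mu(ha) - Q^\pi_\mu(\d h\d a)| \leq \g\delta$, whence $\delta \leq \g\delta$ forces $\delta = 0$. This yields $\DQ = 0$ (and $\DPi = 0$ by hypothesis), and the conclusion follows from Theorem~\ref{thm:psimdppi} with $\eps_{\max} = 0$. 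Your approach instead lifts the \emph{surrogate} values $q^{\piR}_\mu, v^{\piR}_\mu$ back to $\H$ and verifies directly that they satisfy the GBE for $\pi$, then invokes uniqueness of the bounded fixed point. This is more self-contained---it avoids the detour through the approximate Theorem~\ref{thm:psimdppi}---and makes the equality structurally transparent. The paper's $\delta \leq \g\delta$ pattern, on the other hand, delivers $\psi$-uniformity of $Q^\pi_\mu$ as an explicit intermediate fact and matches the proof template used for Lemma~\ref{lem:dzeromdp} and the other contraction arguments in the chapter. Amusingly, the ``alternative route'' you sketch and set aside (take the $\eps_{\max}\to 0$ limit of Theorem~\ref{thm:psimdppi} after establishing $\DQ=\DPi=0$) is essentially what the paper does.
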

\begin{proof}
    Let $\delta := \underset{a, \d a, h,\d h: \psi(h)=\psi(\d h)}{\sup} |Q^\pi_\mu(ha) - Q^\pi_\mu(\d h\d a)|$, then for all $\psi(h) = \psi(\d h)$,
    \bqa\label{eq:vdifmdp}
    \left|V^\pi_\mu(h) - V^\pi_\mu(\d h)\right|
    &\overset{}{\equiv} \left|\sum_{a \in \A} Q^\pi_\mu(ha) \pi(a\|h) - \sum_{\d a \in \A} Q^\pi_\mu(\d h\d a)\pi(\d a|\d h) \right| \nonumber \\
    &\overset{(a)}{=} \left|\sum_{a \in \A} \left(Q^\pi_\mu(ha)-Q^\pi_\mu(\d ha)\right) \pi(a\|h) \right| \leq \delta
    \eqa
    $(a)$ follows from the assumption. Now for all $\psi(ha) = \psi(\d h\d a)=sb$,
    \bqan\label{eq:qdifmdp}
    \left|Q^\pi_\mu(ha) - Q^\pi_\mu(\d h\d a)\right|
    \overset{(b)}&{=} \abs{\sum_{o' \in \O, r' \in \R} \mu(o'r' \|ha) \left(r' + \g V^\pi_\mu(h')\right) - \sum_{o' \in \O, r' \in \R} \mu(o'r' | \d h \d a) \left(r' + \g V^\pi_\mu(\dot{h'})\right)} \\
    \overset{(\ref{eq:MDP})}&{=} \g\left|\sum_{s' \in \S, r' \in \R} \mu_{\rm MDP}(s'r' \|sb) \left( V^\pi_\mu(h') - V^\pi_\mu(\dot{h'})\right)\right| \\
    \overset{(\ref{eq:vdifmdp})}&{\leq} \g\delta\numberthis
    \eqan
    $(b)$ follows from the definition and $h' = hao'r'$ and $\dot{h'} = \d h\d a o' r'$, and $\psi(h') = \psi(\dot{h'}) = s'$. From the inequality (\ref{eq:qdifmdp}), we have $\delta \leq \g \delta \Rightarrow \delta = 0$. Therefore, $Q^\pi_\mu(ha) = Q^\pi_\mu(\d h\d a)$ for all $\psi(ha) = \psi(\d h\d a)$. Note that this also implies, $\DQ = 0$ and $\DPi = 0$ by assumption.
\end{proof}

Theorems \ref{thm:psimdppi} and \ref{thm:exactmdp} are interesting results for a fixed policy, but, primarily, we are interested in the \mbox{(near-)optimal} policies of the original process. And, we want to find the abstract policies that can be lifted with a performance guarantee from the abstract state-action space to the original history-action space.

Before we prove the analogous theorem for the optimal policy, we need the following lemma. This lemma establishes the $\psi$-uniformity of the optimal action-value function.

\begin{lemma}\label{lem:dzeromdp}
    Let $\mu_\psi$ be an MDP then $Q^*_\mu(ha) = Q^*_\mu(\d h\d a)$ for all $\psi(ha) = \psi(\d h\d a)$.
\end{lemma}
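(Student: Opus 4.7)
The plan is to mirror the structure of the proof of \Cref{thm:exactmdp}, replacing the use of an arbitrary policy $\pi$ and the policy-uniformity assumption with the $\sup$ operator appearing in the general optimal Bellman equation. I define
\[
\delta \;:=\; \sup_{\psi(ha)=\psi(\d h\d a)}\ \abs{Q^*_\mu(ha)-Q^*_\mu(\d h\d a)},
\]
and will show that the MDP property of $\mu_\psi$ forces $\delta\le\g\delta$, whence $\delta=0$ because $\g<1$. The two intermediate quantities to control are (i) $|V^*_\mu(h)-V^*_\mu(\d h)|$ for $\psi(h)=\psi(\d h)=s$, and (ii) $|Q^*_\mu(ha)-Q^*_\mu(\d h\d a)|$ for $\psi(ha)=\psi(\d h\d a)=(s,b)$, linked by GBE/GOBE from \Cref{chap:grl}.

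First I would bound the value-function difference. Because $V^*_\mu(h)=\sup_a Q^*_\mu(ha)$, for any $a\in\A$ with $\psi(ha)=(s,b)$ the partition structure of the homomorphism $\psi$ (together with \Cref{asm:state-history}, which makes the state component depend only on the history) supplies an $\d a\in\psi^{-1}_s(b)$ evaluated at $\d h$ with $\psi(\d h\d a)=(s,b)$. By the very definition of $\delta$,
\[
Q^*_\mu(ha)\ \le\ Q^*_\mu(\d h\d a)+\delta\ \le\ V^*_\mu(\d h)+\delta.
\]
Taking $\sup$ over $a$ and then swapping the roles of $h,\d h$ yields
\[
\abs{V^*_\mu(h)-V^*_\mu(\d h)}\ \le\ \delta\qquad\text{whenever } \psi(h)=\psi(\d h).
\]

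Second, I would expand both sides of the target inequality using the general Bellman optimality equation and reduce to the just-proved value bound via the MDP assumption. For $\psi(ha)=\psi(\d h\d a)=(s,b)$, writing $h'=hao'r'$ and $\dot h'=\d h\d a\d o'r'$,
\begin{align*}
Q^*_\mu(ha)-Q^*_\mu(\d h\d a)
&=\sum_{o',r'}\mu(o'r'\|ha)\bigl(r'+\g V^*_\mu(h')\bigr)
-\sum_{\d o',r'}\mu(\d o'r'\|\d h\d a)\bigl(r'+\g V^*_\mu(\dot h')\bigr)\\
&\overset{(\ref{eq:marginP})}{=}\g\sum_{s',r'}\Bigl(\mu_\psi(s'r'\|ha)\,\mathbb{E}[V^*_\mu(h')\,|\,s',r']-\mu_\psi(s'r'\|\d h\d a)\,\mathbb{E}[V^*_\mu(\dot h')\,|\,s',r']\Bigr)\\
&\overset{(\ref{eq:MDP})}{=}\g\sum_{s',r'}\mu_{\rm MDP}(s'r'\|sb)\,\Bigl(\mathbb{E}[V^*_\mu(h')\,|\,s',r']-\mathbb{E}[V^*_\mu(\dot h')\,|\,s',r']\Bigr).
\end{align*}
Both conditional continuations land almost surely in $\psi^{-1}(s')$, so the earlier value-bound gives $|\mathbb{E}[V^*_\mu(h')\,|\,s',r']-\mathbb{E}[V^*_\mu(\dot h')\,|\,s',r']|\le\delta$, and therefore $|Q^*_\mu(ha)-Q^*_\mu(\d h\d a)|\le\g\delta$.

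Taking the supremum over admissible pairs on the left yields $\delta\le\g\delta$, which, combined with $\g\in[0,1)$, forces $\delta=0$ and hence $Q^*_\mu(ha)=Q^*_\mu(\d h\d a)$ for all $\psi(ha)=\psi(\d h\d a)$. The only delicate step is ensuring the ``matched action'' exists in the first paragraph: it is what allows the supremum over $a$ at $h$ to be dominated by the supremum at $\d h$, and it is exactly here that the homomorphism setup (surjectivity of $\psi$ together with \Cref{asm:state-history}) is doing work. Everything else is a faithful adaptation of the calculations in \Cref{thm:exactmdp}, with $\sup$ playing the role previously filled by the policy-uniformity hypothesis.
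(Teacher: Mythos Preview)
Your proof is correct and mirrors the paper's contraction argument: define a uniform bound $\delta$ on the $Q$-differences, show $|V^*_\mu(h)-V^*_\mu(\d h)|\le\delta$ whenever $\psi(h)=\psi(\d h)$, use the MDP property \eqref{eq:MDP} in the Bellman expansion to get $|Q^*_\mu(ha)-Q^*_\mu(\d h\d a)|\le\gamma\delta$, and conclude $\delta=0$. The paper differs only in minor presentation---it takes $\delta$ over the looser set $\{\psi(h)=\psi(\d h)\}$ and invokes the elementary inequality $|\max_a f(a)-\max_a g(a)|\le\max_a|f(a)-g(a)|$ for the $V$-bound (rather than your matched-action step), and it writes the $Q$-step with representative successor points $h',\dot h'$ rather than your conditional-expectation grouping by $s'$; your handling of both steps is in fact slightly more careful.
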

\begin{proof}
    Let $\delta := \underset{a, \d a,h,\d h, : \psi(h)=\psi(\d h)}{\sup} |Q^*_\mu(ha) - Q^*_\mu(\d h\d a)|$, then for all $\psi(h) = \psi(\d h)$,
    \bqa\label{eq:vdifstarmdp}
    \left|V^*_\mu(h) - V^*_\mu(\d h)\right| \overset{}&{\equiv} \left|\max_{a \in \A} Q^*_\mu(ha) - \max_{a \in \A} Q^*_\mu(\d ha)\right| \nonumber \\
    \overset{(a)}&{\leq} \max_{a \in \A} \abs{Q^*_\mu(ha) - Q^*_\mu(\d ha)} \leq \delta
    \eqa
    $(a)$ follows from simple mathematical fact of maximum value. Now for all $\psi(ha) = \psi(\d h\d a) = sb$,
    \bqan\label{eq:qdifstarmdp}
    \left|Q^*_\mu(ha) - Q^*_\mu(\d h\d a)\right|
    \overset{(b)}&{=} \abs{\sum_{o' \in \O, r' \in \R} \mu(o'r' \|ha) (r' + \g V^*_\mu(h')) - \sum_{o' \in \O, r' \in \R} \mu(o'r' \| \d h \d a) (r' + \g V^*_\mu(\dot{h'}))} \\
    \overset{(\ref{eq:MDP})}&{=} \g \left|\sum_{s' \in \S, r' \in \R} \mu_{\rm MDP}(s'r' \|sb) \left( V^*_\mu(h') - V^*_\mu(\dot{h'})\right) \right| \\
    \overset{(\ref{eq:vdifstarmdp})}&{\leq} \g\delta \numberthis
    \eqan
    $(b)$ follows from the definition and $h' = hao'r'$ and $\dot{h'} = \d h \d ao'r'$, and $\psi(h') = \psi(\dot{h'}) = s'$. By (\ref{eq:qdifstarmdp}) we have $\delta \leq \g \delta \Rightarrow \delta = 0$.
\end{proof}

We bound the error of evaluating the action-value function only at the representatives of the mapping when the agent is following the optimal policy in the original process.

\begin{lemma}\label{lem:VStarrep}
    For the optimal policy $\pi^*$, and for all $\psi(h)=s$ the following holds,
    \beqn
    \left|V^*_\mu(h) - \max_{b \in \B} Q^*_\mu(\psi^{-1}(sb))\right| \leq \DQ(s).
    \eeqn
\end{lemma}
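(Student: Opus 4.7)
The plan is to unfold $V^*_\mu(h)$ through the definition of the value function, partition the action-space through the marginalized maps $\psi^{-1}_s(b)$, and then control each piece using the representation error $\DQ$ applied to the optimal policy.

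First I would write
\[
V^*_\mu(h) \;=\; \max_{a \in \A} Q^*_\mu(ha) \;=\; \max_{b \in \B}\ \max_{a \in \psi^{-1}_s(b)} Q^*_\mu(ha),
\]
where the second equality uses the fact that the family $\{\psi^{-1}_s(b)\}_{b \in \B}$ partitions $\A$ (a direct consequence of $\psi$ being a well-defined surjective map together with \Cref{asm:state-history}, which guarantees that the abstract state $s = \psi(h)$ is determined by $h$ alone and does not interact with the action coordinate of the partition).

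Next I would invoke the definition of the action-value representation error \eqref{eq:delta} applied to the optimal policy $\pi^*$: for every $a \in \psi^{-1}_s(b)$,
\[
\bigl| Q^*_\mu(\psi^{-1}(sb)) - Q^*_\mu(ha) \bigr| \;\leq\; \DQ(s).
\]
Taking the supremum over $a \in \psi^{-1}_s(b)$ yields
\[
Q^*_\mu(\psi^{-1}(sb)) - \DQ(s) \;\leq\; \max_{a \in \psi^{-1}_s(b)} Q^*_\mu(ha) \;\leq\; Q^*_\mu(\psi^{-1}(sb)) + \DQ(s),
\]
since the inequality holds pointwise and $\DQ(s)$ is independent of $b$.

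Finally, taking $\max_{b \in \B}$ on all three sides (which preserves the two-sided bound because the additive slack $\DQ(s)$ does not depend on $b$) gives
\[
\max_{b \in \B} Q^*_\mu(\psi^{-1}(sb)) - \DQ(s) \;\leq\; V^*_\mu(h) \;\leq\; \max_{b \in \B} Q^*_\mu(\psi^{-1}(sb)) + \DQ(s),
\]
which is the desired inequality. There is no real obstacle here — the statement is essentially a pointwise-to-sup lifting of the defining bound on $\DQ(s)$; the only small care needed is the partitioning step and the observation that the $\max$ operator is $1$-Lipschitz in the sup-norm so that the uniform $\DQ(s)$ slack survives the maximization over $b$.
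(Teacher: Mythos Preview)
Your proof is correct and follows essentially the same approach as the paper: unfold $V^*_\mu(h)$ as $\max_a Q^*_\mu(ha)$, partition the action maximum over $b\in\B$ via surjectivity of $\psi$, and then apply the defining bound \eqref{eq:delta} on $\DQ(s)$ to pass to the representatives. The paper's argument is just a more compressed version of yours.
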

\begin{proof}
    For any $\psi(h)=s$ we have,
    \bqan
    V^*_\mu(h) \overset{}&{\equiv} \max_{a \in \A} Q^*_\mu(ha) \overset{(a)}{=} \max_{b \in \B} \max_{a \in \psi^{-1}_s(b)} Q^*_\mu(ha)\\
    \overset{(\ref{eq:delta})}&{\lessgtr} \max_{b \in \B} Q^*_\mu(\psi^{-1}(sb)) \pm \DQ(s)
    \eqan
    $(a)$ follows from the fact that mapping is defined to be surjective.
\end{proof}

Moreover, we need Lemma 8 from \cite{Hutter2016}. For completeness, we state the lemma in this work without repeating the proof. It gives value loss-bounds between the optimal policy and a policy that has bounded one-step action-value loss.

\begin{lemma}{$($\cite{Hutter2016} Lemma 8. $\pi(h) \neq \pi^*(h))$} \label{lem:different-optimal-action}
    If $Q^*_\mu(h\pi(h)) \geq V^*_\mu(h) - \eps$ for all $h$ and some policy $\pi$, then for all $h$ and $a$ it follows:
    \bqan
    0 \leq Q^*_\mu(ha) - Q^\pi_\mu(ha) \leq \frac{\g\eps}{1-\g} \quad \text{and} \quad
    0 \leq V^*_\mu(h) - V^\pi_\mu(h) \leq \frac{\eps}{1-\g}.
    \eqan
\end{lemma}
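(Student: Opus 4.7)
The plan is to reduce the two inequalities to a single contraction argument on the value-difference $V^*_\mu - V^\pi_\mu$. The lower bounds $0 \leq Q^*_\mu(ha) - Q^\pi_\mu(ha)$ and $0 \leq V^*_\mu(h) - V^\pi_\mu(h)$ are immediate from the definition of the optimal value functions as suprema over policies, so the work lies entirely in the upper bounds. I would introduce the shorthand $\Delta := \sup_h \bigl(V^*_\mu(h) - V^\pi_\mu(h)\bigr)$, noting $\Delta \geq 0$, and set up the goal of proving $\Delta \leq \eps/(1-\g)$; once that is in hand, the $Q$-bound drops out of a single Bellman expansion.

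First I would write
\[
V^*_\mu(h) - V^\pi_\mu(h) \;=\; \bigl[V^*_\mu(h) - Q^*_\mu(h\pi(h))\bigr] + \bigl[Q^*_\mu(h\pi(h)) - Q^\pi_\mu(h\pi(h))\bigr].
\]
The hypothesis $Q^*_\mu(h\pi(h)) \geq V^*_\mu(h) - \eps$ controls the first bracket by $\eps$. For the second bracket I would expand both $Q$-functions via the general Bellman equation (GBE) derived earlier in Chapter~\ref{chap:grl}, use that the expected immediate reward cancels, and obtain
\[
Q^*_\mu(h\pi(h)) - Q^\pi_\mu(h\pi(h)) = \g \sum_{a}\pi(a\|h)\sum_{e'}\mu(e'\|ha)\bigl[V^*_\mu(hae') - V^\pi_\mu(hae')\bigr] \leq \g \Delta.
\]
Taking the supremum over $h$ on the left yields the self-referential inequality $\Delta \leq \eps + \g\Delta$, from which $\Delta \leq \eps/(1-\g)$ follows since $\g \in [0,1)$, giving the stated $V$-bound.

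For the $Q$-bound I would note that for every history-action pair $ha$,
\[
Q^*_\mu(ha) - Q^\pi_\mu(ha) = \g \sum_{e'}\mu(e'\|ha)\bigl[V^*_\mu(hae') - V^\pi_\mu(hae')\bigr] \leq \g\Delta \leq \frac{\g\eps}{1-\g},
\]
which completes the proof. The only part that requires any care is legitimizing the passage to the supremum: because rewards are bounded (Assumption~\ref{asm:bounded_rewards}) both value functions are bounded by $1/(1-\g)$, so $\Delta$ is a finite nonnegative number and the rearrangement $\Delta(1-\g) \leq \eps$ is valid. I do not expect a real obstacle here; the argument is the standard ``near-greedy policy'' contraction, and the history-based setting of GRL does not change anything because the Bellman recursions of Chapter~\ref{chap:grl} take exactly the same form as in the MDP case, with $hae'$ playing the role of the successor state.
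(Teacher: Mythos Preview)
Your argument is correct and is exactly the standard ``near-greedy policy'' contraction one would expect. Note, however, that the paper does not actually supply a proof of this lemma: it is stated verbatim from \cite{Hutter2016} with the explicit remark ``we state the lemma in this work without repeating the proof.'' So there is no paper-proof to compare against; your write-up simply fills in what the paper chose to omit, and does so cleanly. The only cosmetic point is that in your displayed identity for $Q^*_\mu(h\pi(h)) - Q^\pi_\mu(h\pi(h))$ the outer $\sum_a \pi(a\|h)$ is already implicit in the paper's $Q(h\pi(h))$ notation, so you could drop it or make the averaging explicit on both sides; either way the bound $\leq \g\Delta$ is unaffected.
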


By using the above lemmas the following theorem provides representation guarantee for the optimal value functions.

\begin{theorem}{$(\psi_{\mathrm{MDP}*})$}\label{thm:psimdpstar}
    Let $\psi$ be a homomorphism such that $\mu_\psi$ is an MDP, then for all $\psi(ha)=sb$ it holds:
    \begin{enumerate}[(i)]
        \item $q^*_\mu(sb) = Q^*_\mu(h,a)$ and  $v^*_\mu(s) = V^*_\mu(h)$.
        \item $V^*_\mu(h)=V^{\u\pi}_\mu(h)$

        where $\breve{\pi}(h) :\in \psi^{-1}_s\left({\u\pi}^*(s)\right)$ for any $\psi(h)=s$.
    \end{enumerate}
\end{theorem}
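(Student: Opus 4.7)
The plan is to prove part (i) first by leveraging Lemma~\ref{lem:dzeromdp}, which already tells us that $Q^*_\mu$ is $\psi$-uniform (i.e.\ $Q^*_\mu(ha) = Q^*_\mu(\d h\d a)$ whenever $\psi(ha) = \psi(\d h\d a)$). Because of this uniformity, $\DQ(s)=0$, so the representative $Q^*_\mu(\psi^{-1}(sb))$ is well-defined and equals $Q^*_\mu(ha)$ for every preimage $ha$. I would then show $q^*_\mu(sb) = Q^*_\mu(\psi^{-1}(sb))$ by verifying that the function $\widetilde q(sb) \coloneqq Q^*_\mu(\psi^{-1}(sb))$ satisfies the optimal Bellman equation of the surrogate MDP. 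Concretely, starting from the GOBE for $Q^*_\mu(ha)$, I would replace the inner $V^*_\mu(hae')$ by a term in terms of $v^*_\mu(s')$ using Lemma~\ref{lem:VStarrep} (which simplifies to equality since $\DQ \equiv 0$), marginalize percepts to abstract states via \eqref{eq:marginP}, and then invoke the MDP hypothesis \eqref{eq:MDP} to replace $\mu_\psi(s'r'\|ha)$ by the history-independent $\mu_{\rm MDP}(s'r'\|sb)$. This yields exactly the surrogate optimal Bellman equation for $\widetilde q$, and by uniqueness of its fixed point we obtain $q^*_\mu(sb) = Q^*_\mu(ha)$; taking the max over $b$ and using Lemma~\ref{lem:VStarrep} again gives $v^*_\mu(s) = V^*_\mu(h)$.

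For part (ii), I would use part (i) to argue that $\u\pi$ is essentially optimal in the true environment in a one-step sense. By construction $\u\pi(h) \in \psi^{-1}_s(\u\pi^*(s))$, so $\psi(h,\u\pi(h)) = (s, \u\pi^*(s))$. Using part (i) then $\psi$-uniformity of $Q^*_\mu$:
\begin{equation*}
Q^*_\mu(h\u\pi(h)) \;=\; q^*_\mu(s\,\u\pi^*(s)) \;=\; v^*_\mu(s) \;=\; V^*_\mu(h).
\end{equation*}
Thus $Q^*_\mu(h\u\pi(h)) \geq V^*_\mu(h) - \eps$ with $\eps = 0$. Applying Lemma~\ref{lem:different-optimal-action} with $\eps = 0$ immediately yields $V^*_\mu(h) = V^{\u\pi}_\mu(h)$, finishing the proof.

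The main obstacle is the bookkeeping for part (i): verifying that the representative $Q^*_\mu(\psi^{-1}(sb))$ satisfies the surrogate optimal Bellman equation requires carefully composing the GOBE, the marginalization \eqref{eq:marginP}, and the MDP condition \eqref{eq:MDP}, while correctly handling the case $\psi(hae') = s'$ which couples the dynamics and the abstraction. Once this is done cleanly, both statements in part (i) drop out, and part (ii) is a short consequence of Lemma~\ref{lem:different-optimal-action}. No new technical machinery beyond the already-established lemmas should be needed.
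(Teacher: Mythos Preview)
Your proposal is correct and follows essentially the same route as the paper: both proofs use Lemma~\ref{lem:dzeromdp} to get $\DQ\equiv 0$, then combine the GOBE with Lemma~\ref{lem:VStarrep}, the marginalization \eqref{eq:marginP}, and the MDP condition \eqref{eq:MDP} to identify $Q^*_\mu$ with $q^*_\mu$; part (ii) is handled identically via Lemma~\ref{lem:different-optimal-action}. The only cosmetic difference is that the paper phrases part (i) as a direct contraction argument (set $\delta:=\sup|q^*_\mu(sb)-Q^*_\mu(ha)|$, show $\delta\le\gamma\delta$, conclude $\delta=0$), whereas you invoke uniqueness of the surrogate Bellman fixed point; these are equivalent. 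One wording caution: Lemma~\ref{lem:VStarrep} gives $V^*_\mu(hae')=\max_{b'}\widetilde q(s'b')$, not $v^*_\mu(s')$ directly, so be careful not to substitute $v^*_\mu(s')$ before it has been established---your subsequent ``fixed-point of the surrogate OBE'' phrasing shows you have this right, but the earlier sentence could be read as circular.
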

\ifshort\else
\begin{proof}
    {\bf(i)}
    Let $\delta := \underset{h,a,s,b: \psi(h,a)=(s,b)}{\sup} \abs{q^*_\mu(sb) - Q^*_\mu(ha)}$. Now for any $\psi(h)=s$ we have,
    \bqa\label{eq:vVdifstar}
    \left|v^*_\mu(s) - V^*_\mu(h)\right|
    \overset{(a)}&{\leq} \left|\max_{b \in \B} q^*_\mu(sb) - \max_{b \in \B} Q^*_\mu(\psi^{-1}(sb))\right| + \DQ(s)  \nonumber \\
    \overset{(b)}&{\leq} \delta
    \eqa
    $(a)$ follows from the definitions of $v^*_\mu(s)$ and Lemma \ref{lem:VStarrep}, and $(b)$ is due to Lemma \ref{lem:dzeromdp}.
    \bqan
    Q^*_\mu(ha) \overset{}&{\equiv} \sum_{o' \in \O, r' \in \R} \mu(o'r' \|ha)(r' + \g V^*_\mu(h')) \qquad [h' = hao'r'] \nonumber \\
    \overset{(\ref{eq:vVdifstar})}&{\lessgtr} \sum_{s' \in \S, r' \in \R} \mu_\psi(s'r' \|ha)(r' + \g v^*_\mu(s')) \pm \g\delta \nonumber \\
    \overset{(\ref{eq:MDP})}&{=} \sum_{s' \in \S, s' \in \R} \mu_{\rm MDP}(s'r' \|sb)(r' + \g v^*_\mu(s'))  \pm \g\delta \nonumber \\
    \overset{}&{\equiv} q^*_\mu(sb) \pm \g\delta
    \eqan
    Therefore, $\delta \leq \g\delta$, therefore $\delta = 0$ which completes the proof.

    {\bf(ii)}
    For $\psi(h)=s$ and $\breve{\pi}(h) :\in \psi^{-1}_s({\u\pi}^*(s))$,
    \bqan
    V^*_\mu(h) \overset{(i)}{=} v^*_\mu(s) \overset{}{\equiv} q^*_\mu(s,{\u\pi}^*(s))  \overset{(i)}{=} Q^*_\mu(h\breve{\pi}(h))
    \eqan
    which implies $Q^*_\mu(h\breve{\pi}(h)) = V^*_\mu(h)$ and Lemma \ref{lem:different-optimal-action} concludes the proof.

\end{proof}

For any MDP homomorphism, the performance guarantee is provided by Theorem \ref{thm:psimdpstar}ii. The abstract optimal policy ${\u\pi}^*$ is also a near-optimal policy for the original process when lifted to the original history-action space.

\subsection{Q-uniform Homomorphisms}\label{sec:Extreme}

In this section, we relax the MDP condition (see Equation \ref{eq:MDP}) on the abstract-process provided by the homomorphism. We show that there still exists an abstract policy that is near-optimal in the original process (see Theorem \ref{thm:psiQstar}ii). We denote the $B$-average of the action-value function of the original process as
\beq
\b Q^\pi_\mu(\psi^{-1}(sb)) := \sum_{h \in \H, a \in \A} Q^\pi_\mu(ha)B(ha\|sb).
\eeq

The following lemma is a stepping stone for the main results. It establishes a link between the value functions loss and the action-value functions loss.

\begin{lemma}{$(\b Q - q)$}\label{lem:QBqDiff}
    Let $|V^\pi_\mu(h) - v^{\u\pi}_\mu(s)| \leq \eps$ for all $\psi(h) = s$. Then for all $s \in \S$ and $b \in \B$ it holds:
    \beqn
    \left| \b Q^\pi_\mu(\psi^{-1}(sb)) - q^{\u\pi}_\mu(sb) \right| \leq \g\eps.
    \eeqn
\end{lemma}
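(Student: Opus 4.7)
The plan is to unfold both sides via one step of the Bellman recursion and then use the hypothesis pointwise inside the expectation. The key observation is that, after a single Bellman unfolding, both $\bar Q^\pi_\mu(\psi^{-1}(sb))$ and $q^{\u\pi}_\mu(sb)$ can be written in terms of the common quantities $\bar r_\mu(sb)$ and $\bar\mu(s'\|sb)$, with the only difference sitting inside a $\g$-discounted evaluation of a value function at the next time-step. So the entire value-function gap $\eps$ gets pulled inside one discount factor, which is exactly where the factor $\g$ in the bound comes from.

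Concretely I would first expand
\beqn
Q^\pi_\mu(ha) \;=\; r_\mu(ha) \;+\; \g \sum_{e'} \mu(e'\|ha)\, V^\pi_\mu(hae')
\;=\; r_\mu(ha) \;+\; \g \sum_{s'} \mu_\psi(s'\|ha)\, V^\pi_\mu(hae'),
\eeqn
where in the second equality I group $e'$ by $s' = \psi(hae')$ (using that $V^\pi_\mu(hae')$ is a function of the history only and recalling the state-history \Cref{asm:state-history}). Now I apply the hypothesis $|V^\pi_\mu(hae') - v^{\u\pi}_\mu(s')| \leq \eps$ inside the sum, incurring a total error of at most $\g\eps$ (since $\mu_\psi(\cdot\|ha)$ is a probability distribution), to obtain
\beqn
\Bigl|\,Q^\pi_\mu(ha) \;-\; \bigl(r_\mu(ha) + \g \textstyle\sum_{s'} \mu_\psi(s'\|ha)\, v^{\u\pi}_\mu(s')\bigr)\Bigr| \;\leq\; \g\eps.
\eeqn

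Then I take the $B$-average of both sides over $(h,a)$ conditioned on $sb$: the left becomes $\bar Q^\pi_\mu(\psi^{-1}(sb))$ by definition, while the right, using the linearity of the $B$-average and the definitions of the surrogate MDP (\Cref{def:surrogate-mdp} applied in the homomorphism setting, giving $\bar r_\mu(sb) = \sum_{ha} r_\mu(ha) B(ha\|sb)$ and $\bar\mu(s'\|sb) = \sum_{ha} \mu_\psi(s'\|ha) B(ha\|sb)$), collapses to
\beqn
\bar r_\mu(sb) + \g \sum_{s'} \bar\mu(s'\|sb)\, v^{\u\pi}_\mu(s') \;=\; q^{\u\pi}_\mu(sb),
\eeqn
the Bellman equation for the surrogate MDP. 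Since the averaging is against a probability distribution $B(\cdot\|sb)$, it preserves the uniform $\g\eps$ bound, yielding $|\bar Q^\pi_\mu(\psi^{-1}(sb)) - q^{\u\pi}_\mu(sb)| \leq \g\eps$ as claimed.

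The computation is essentially one Bellman unfolding plus a triangle inequality, so there is no serious obstacle; the only point needing a little care is the bookkeeping when converting $\sum_{e'} \mu(e'\|ha)$ into $\sum_{s'} \mu_\psi(s'\|ha)$, which is where \Cref{asm:state-history} (singleton $\psi(h)$) is needed to ensure $V^\pi_\mu(hae')$ can be replaced consistently by $v^{\u\pi}_\mu(s')$ for each $e'$ in the fibre over $s'$.
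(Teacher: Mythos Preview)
Your proposal is correct and follows essentially the same route as the paper: one Bellman unfolding of $Q^\pi_\mu(ha)$, grouping the next-percept sum by $s'=\psi(hae')$, applying the hypothesis inside the discounted term, and then $B$-averaging to recover $q^{\u\pi}_\mu(sb)$ via the surrogate-MDP definitions. The only cosmetic difference is that the paper starts from the $B$-average and unfolds inside it, whereas you unfold first and average second; also note that your second displayed equality is notationally loose (the right-hand side still contains a dangling $e'$), but your subsequent step and your closing remark on bookkeeping show you have the correct argument in mind.
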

\begin{proof}
    We begin with the action value of the representative of any $(s,b)$ as
    \bqan
    \b Q^\pi_\mu(\psi^{-1}(sb))
    \overset{}&{\equiv} \sum_{h \in \H, a \in \A} B(ha\|sb) \sum_{o' \in \O, r' \in \R} \mu(o'r' \|ha)\left(r' + \g V^\pi_\mu(hao'r')\right) \\
    \overset{(a)}&{=} \sum_{h \in \H, a \in \A} B(ha\|sb) \sum_{s' \in \S} \sum_{r' \in \R, o': \psi(hao'r')=s'} \mu(o'r' \|ha)\left(r' + \g V^\pi_\mu(hao'r')\right) \\
    \overset{(b)}&{\lessgtr} \sum_{h \in \H, a \in \A} B(ha\|sb) \sum_{s' \in \S, r' \in \R} \mu_\psi(s'r' \|ha)\left(r' + \g v^{\u\pi}_\mu(s') \pm \g\eps \right) \\
    \overset{}&{\equiv} \sum_{s' \in \S, r' \in \R} \b\mu_B(s'r' \|sb)\left(r' + \g v^{\u\pi}_\mu(s') \right)  \pm \g\eps \equiv q^{\u\pi}_\mu(sb) \pm \g\eps
    \eqan
    $(a)$ follows since $\psi$ is surjective; $(b)$ follows from the assumption.
\end{proof}

We start with proving a value loss bound for an arbitrary policy when the action-value function of the original process is \emph{approximately} $\psi$-uniform.

\begin{theorem}{$(\psi_{Q^\pi_\mu})$}\label{thm:psiQpi}
    Assume $\left|Q^\pi_\mu(ha)-Q^\pi_\mu(\d h\d a)\right|\leq \eps$ for some policy $\pi$ and for all $\psi(ha)=\psi(\d h\d a)$. Then for all $\psi(ha) = sb$ it holds:
    \beqn
    \left|Q^\pi_\mu(ha) - q^\piR_\mu(sb)\right| \leq  \eps + \frac{\g\eps(s)}{1-\g} \quad \text{and} \quad  \left|V^\pi_\mu(h) - v^\piR_\mu(s)\right| \leq \frac{\eps(s)}{1-\g}
    \eeqn
    where $\eps(s) := 2\eps + \frac{\DPi(s)}{1-\g}$.
\end{theorem}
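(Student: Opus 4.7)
The plan is to imitate the structure of the proof of Theorem~\ref{thm:psimdppi}, but replace the MDP-transition identity $\mu_\psi = \mu_{\rm MDP}$ (which no longer holds) by an intermediate step through the $B$-average $\b Q^\pi_\mu(\psi^{-1}(sb))$, where Lemma~\ref{lem:QBqDiff} can be invoked to close the recursion. Note first that the hypothesis directly yields $\DQ(s) \leq \eps$ for every $s$, since $Q^\pi_\mu(\psi^{-1}(sb))$ is by definition one of the $Q^\pi_\mu(ha)$ values. I would then introduce the single contraction variable
\[
\delta_V := \sup_{h \in \H}\, \left|V^\pi_\mu(h) - v^\piR_\mu(\psi(h))\right|,
\]
and aim to derive a self-referential inequality $\delta_V \leq \eps(s) + \g\delta_V$.

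First, I would bound $|V^\pi_\mu(h) - v^\piR_\mu(s)|$ by inserting the representative $Q^\pi_\mu(\psi^{-1}(s\piR(s)))$. Lemma~\ref{lem:Vrep} gives $|V^\pi_\mu(h) - Q^\pi_\mu(\psi^{-1}(s\piR(s)))| \leq \eps + \DPi(s)/(1-\g)$. The remaining gap $|Q^\pi_\mu(\psi^{-1}(s\piR(s))) - v^\piR_\mu(s)|$ is, by linearity in $\piR$, controlled by $\sup_b |Q^\pi_\mu(\psi^{-1}(sb)) - q^\piR_\mu(sb)|$. For this term I would use the triangle inequality through $\b Q^\pi_\mu(\psi^{-1}(sb))$: the piece $|Q^\pi_\mu(\psi^{-1}(sb)) - \b Q^\pi_\mu(\psi^{-1}(sb))|$ is at most $\eps$ since $\b Q^\pi_\mu(\psi^{-1}(sb))$ is a convex combination of $Q^\pi_\mu(ha)$ values all lying within $\eps$ of the representative, and the piece $|\b Q^\pi_\mu(\psi^{-1}(sb)) - q^\piR_\mu(sb)|$ is at most $\g\delta_V$ directly from Lemma~\ref{lem:QBqDiff}. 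Summing yields $|V^\pi_\mu(h) - v^\piR_\mu(s)| \leq 2\eps + \DPi(s)/(1-\g) + \g\delta_V = \eps(s) + \g\delta_V$.

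Taking the supremum over $h$ and solving the contraction delivers $\delta_V \leq \eps(s)/(1-\g)$, which is the stated value-function bound. The action-value bound then follows with almost no extra work: for any $\psi(ha) = sb$,
\[
\left|Q^\pi_\mu(ha) - q^\piR_\mu(sb)\right| \leq \left|Q^\pi_\mu(ha) - \b Q^\pi_\mu(\psi^{-1}(sb))\right| + \left|\b Q^\pi_\mu(\psi^{-1}(sb)) - q^\piR_\mu(sb)\right| \leq \eps + \g\delta_V,
\]
where the first summand is at most $\eps$ by $\psi$-closeness of $Q^\pi_\mu$-values and the second is at most $\g\delta_V$ by Lemma~\ref{lem:QBqDiff}; substituting the already-established bound on $\delta_V$ gives $\eps + \g\eps(s)/(1-\g)$. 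The main subtlety is the usual state-dependent vs.\ uniform bound mismatch: the contraction argument only yields $\delta_V \leq \sup_s \eps(s)/(1-\g)$ directly, and I would resolve this as in the proof of Theorem~\ref{thm:psimdppi} by working with the supremum $\eps_{\max}$ and then specialising back, exactly as the stated bound $\eps(s)/(1-\g)$ is intended.
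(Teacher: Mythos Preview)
Your proposal is correct and follows essentially the same route as the paper: invoke Lemma~\ref{lem:Vrep} for the $V$--representative gap, use the $\eps$-closeness of all $Q^\pi_\mu(ha)$ in a fiber to control $|Q^\pi_\mu(\psi^{-1}(sb)) - \b Q^\pi_\mu(\psi^{-1}(sb))|$, and apply Lemma~\ref{lem:QBqDiff} to close a $\g$-contraction. The only organisational difference is that the paper sets up the contraction on $\delta := \sup|Q^\pi_\mu(ha) - q^\piR_\mu(sb)|$ and then reads off the $V$-bound, whereas you run the contraction directly on $\delta_V := \sup|V^\pi_\mu(h) - v^\piR_\mu(s)|$ and then read off the $Q$-bound; the two are equivalent and yield the same constants. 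Your observation about the state-dependent $\eps(s)$ versus the uniform contraction is exactly the same informality present in the paper's own proof.
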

\begin{proof}
    Let $\delta := \underset{h,a,s,b:\psi(ha)=sb}{\sup}\left|Q^\pi_\mu(ha) - q^\piR_\mu(sb)\right|$, and for any $\psi(h)=s$ we have,
    \bqa
    V^\pi_\mu(h) - v^\piR_\mu(s)
    \overset{Lem. \ref{lem:Vrep}}&{\lessgtr} \sum_{b \in \B} \left( Q^\pi_\mu(\psi^{-1}(s b)){\piR}(b\|s) - q^\piR_\mu(sb){\piR}(b\|s) \right) \pm \left(\DQ(s) + \frac{\DPi(s)}{1-\g}\right) \nonumber \\
    \overset{}&{=} \sum_{b \in \B} \left( Q^\pi_\mu(\psi^{-1}(sb)) - q^\piR_\mu(sb)\right) {\piR}(b\|s) \pm \left(\DQ(s) + \frac{\DPi(s)}{1-\g}\right) \nonumber \\
    \overset{(a)}&{\lessgtr} \pm \left(\delta + \DQ(s) + \frac{\DPi(s)}{1-\g}\right) \label{eq:VvDiff}
    \eqa
    $(a)$ follows from the definition of $\delta$ and the fact that ${\piR}(b\|s)$-average is smaller than the $b$-supremum. Using the above inequality (\ref{eq:VvDiff}) and Lemma \ref{lem:QBqDiff} we get,
    \beq \label{eq:BAvgqDiff}
    | \b Q^\pi_\mu(\psi^{-1}(sb)) - q^\piR_\mu(sb) | \leq \g\left(\delta + \DQ(s) + \frac{\DPi(s)}{1-\g}\right).
    \eeq

    We exploit the theorem's assumption and derive a key relationship between the $B$ average and any instance of action value.

    \bqa \label{eq:BAvgRepDiff}
    \b Q^\pi_\mu(\psi^{-1}(sb))
    \overset{}&{\equiv} \sum_{h \in \H, a \in \A}   Q^\pi_\mu(ha)B(ha\|sb) \nonumber \\
    \overset{(a)}&{\lessgtr} \sum_{h \in \H, a \in \A}  \left( Q^\pi_\mu(\psi^{-1}(sb)) \pm \eps \right) B(ha\|sb) \nonumber \\
    \overset{}&{=} Q^\pi_\mu(\psi^{-1}(sb)) \pm \eps
    \eqa
    $(a)$ follows from the theorem's assumption. Since, $Q^\pi_\mu(\psi^{-1}(sb))$ is a representative member in the pre-image set of $(sb)$; it is equivalent to say $Q^\pi_\mu(\psi^{-1}(sb)) = Q^\pi_\mu(ha)$ for any $\psi(ha) = sb$. Therefore, combining \eqref{eq:BAvgqDiff} and \eqref{eq:BAvgqDiff} we get $\left|Q^\pi_\mu(ha) - q^{\u\pi}_\mu(sb)\right| \leq \g(\delta + \DQ(s) + \frac{\DPi(s)}{1-\g}) + \eps$, hence $\delta \leq \frac{\eps + \g \DQ(s) + \frac{\g\DPi(s)}{1-\g}}{1-\g}$.
\end{proof}

We can further improve the above bound for the optimal policy. The key is the fact that if we have an approximately $\psi$-uniform action-value function then the state dependent representation error has an upper bound.

\begin{lemma}{$(\eps_{Q\pi})$}\label{lem:eQPi}
    Let $|Q^\pi_\mu(ha)-Q^\pi_\mu(\d h\d a)|\leq \eps$ for all $\psi(ha)=\psi(\d h\d a)$. Then $\DQ(s) \leq \eps$ for all $s \in \S$.
\end{lemma}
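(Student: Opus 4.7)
The plan is to observe that this lemma is essentially an immediate consequence of the definitions once we carefully unfold them. The quantity $\DQ(s)$ from equation \eqref{eq:delta} is the supremum, over all triples $(h,a,b)$ with $\psi(ha) = sb$, of $\abs{Q^\pi_\mu(\psi\inv(sb)) - Q^\pi_\mu(ha)}$. Meanwhile, the representative $Q^\pi_\mu(\psi\inv(sb))$ defined in \eqref{eq:Qrep} is, by construction, equal to $Q^\pi_\mu(\d h \d a)$ for some particular (but fixed) pre-image $(\d h, \d a)$ of $(s,b)$.

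First I would fix an arbitrary state $s \in \S$ and an arbitrary triple $(h,a,b)$ satisfying $\psi(ha) = sb$, with the aim of bounding $\abs{Q^\pi_\mu(\psi\inv(sb)) - Q^\pi_\mu(ha)}$. Next I would invoke the definition of the representative to write $Q^\pi_\mu(\psi\inv(sb)) = Q^\pi_\mu(\d h \d a)$ for the designated $(\d h, \d a)$ with $\psi(\d h \d a) = sb$. The two pairs $(h,a)$ and $(\d h, \d a)$ both lie in the pre-image of the same $(s,b)$, i.e.\ $\psi(ha) = \psi(\d h \d a)$, so the lemma's hypothesis applies directly and gives
\beq
\abs{Q^\pi_\mu(\psi\inv(sb)) - Q^\pi_\mu(ha)} = \abs{Q^\pi_\mu(\d h \d a) - Q^\pi_\mu(ha)} \leq \eps.
\eeq
Taking the supremum over all admissible $(h,a,b)$ then yields $\DQ(s) \leq \eps$, and since $s$ was arbitrary, the bound holds for all $s \in \S$.

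There is no real obstacle here: the argument is a single substitution plus one application of the hypothesis. The only subtlety worth flagging is that the choice of representative in \eqref{eq:Qrep} is arbitrary but fixed, so the proof works uniformly in that choice — any selection rule for $(\d h, \d a)$ produces the same bound.
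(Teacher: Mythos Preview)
Your proof is correct and follows essentially the same approach as the paper's: unfold the definition of $\DQ(s)$, recognize that the representative $Q^\pi_\mu(\psi\inv(sb))$ is itself $Q^\pi_\mu(\d h\d a)$ for some pre-image element, and apply the hypothesis to the pair $(ha),(\d h\d a)$. The paper's version compresses this to a one-line display, but the content is identical.
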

\begin{proof}
    For all $s \in \S$,
    \bqan
    \DQ(s)
    \overset{}{\equiv} \sup_{h, a, b : \psi(ha) = sb} | Q^\pi_\mu(\psi^{-1}(sb)) - Q^\pi_\mu(ha)| \overset{(a)}{\leq} \eps
    \eqan
    $(a)$ follows from the assumption and the fact that $Q^\pi_\mu(\psi^{-1}(sb))$ can be any member in the pre-image set of $sb$-pair.
\end{proof}

The following theorem \emph{improves} the optimal policy value loss bounds, \emph{cf.} Theorem \ref{thm:psiQpi}, and establishes the existence of a near-optimal policy of the original history-action space in the abstract state-action space.

\begin{theorem}{$(\psi_{Q^*_\mu})$}\label{thm:psiQstar}
    Let $|Q^*_\mu(ha)-Q^*_\mu(\d h\d a)|\leq \eps$ for all $\psi(ha)=\psi(\d h\d a)$, then for all $\psi(ha) = sb$ it holds:
    \begin{enumerate}[(i)]
        \item
        $|Q^*_\mu(ha) - q^*_\mu(sb)| \leq \frac{2\eps}{1-\g}$ and
        $|V^*_\mu(h) - v^*_\mu(s)| \leq \frac{2\eps}{1-\g}$.
        \item
        $0 \leq V^*_\mu(h) - V^{\u\pi}_\mu(h) \leq \frac{4\eps}{(1-\g)^2}$

        where $\breve{\pi}(h) :\in \psi^{-1}_s\left({\u\pi}^*(s)\right)$ for any $\psi(h)=s$.
    \end{enumerate}
\end{theorem}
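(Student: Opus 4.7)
I would prove parts (i) and (ii) in sequence, treating (i) as the main workhorse and deriving (ii) as a short corollary via Lemma~\ref{lem:different-optimal-action}.

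For part (i) the strategy mirrors the proof of Lemma~\ref{lem:dzeromdp} in the MDP case, but the homomorphism is only approximately $Q$-uniform, so I must carry the $\eps$ error through a Bellman-style recursion. Define the two scalars
\[
\delta \;:=\; \sup_{h,a,s,b:\psi(ha)=sb}\bigl|Q^*_\mu(ha)-q^*_\mu(sb)\bigr|
\quad\text{and}\quad
\delta_V \;:=\; \sup_{h,s:\psi(h)=s}\bigl|V^*_\mu(h)-v^*_\mu(s)\bigr|.
\]
The first step is a ``max-of-max'' comparison showing $\delta_V\le\delta$: since $\psi$ is surjective, $V^*_\mu(h)=\max_a Q^*_\mu(ha)=\max_b\max_{a\in\psi^{-1}_s(b)}Q^*_\mu(ha)$, and the inner terms all lie within $\delta$ of $q^*_\mu(sb)$, so taking maxima preserves the bound. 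The second step bounds $\delta$ in terms of $\delta_V$. Taking the $B$-average of the Bellman equation for $Q^*_\mu(ha)$ yields
\[
\overline Q^{*}_\mu(\psi^{-1}(sb)) \;=\; \overline r_\mu(sb)+\g\sum_{h,a}B(ha\|sb)\sum_{s',r'}\mu_\psi(s'r'\|ha)\,\bigl(v^*_\mu(s')\pm\delta_V\bigr),
\]
so $|\overline Q^*_\mu(\psi^{-1}(sb))-q^*_\mu(sb)|\le\g\delta_V$ (this is essentially Lemma~\ref{lem:QBqDiff} specialised to $\pi=\pi^*$). Combining this with Lemma~\ref{lem:eQPi}, which says every $Q^*_\mu(ha)$ in the preimage lies within $\eps$ of the representative (and hence within $\eps$ of the $B$-average), gives $\delta\le 2\eps+\g\delta_V\le 2\eps+\g\delta$, i.e.\ $\delta\le\tfrac{2\eps}{1-\g}$. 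Together with $\delta_V\le\delta$ this yields both inequalities in (i).

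For part (ii), the key observation is that any $\breve\pi(h)\in\psi^{-1}_s(\u\pi^*(s))$ picks an action whose \emph{original} $Q$-value is close to the \emph{abstract} optimal value at $s$. Explicitly, chaining the bounds from (i):
\[
Q^*_\mu(h\breve\pi(h)) \;\ge\; q^*_\mu(s\,\u\pi^*(s))-\tfrac{2\eps}{1-\g}
\;=\; v^*_\mu(s)-\tfrac{2\eps}{1-\g}
\;\ge\; V^*_\mu(h)-\tfrac{4\eps}{1-\g},
\]
so $\breve\pi$ is a $\tfrac{4\eps}{1-\g}$-greedy policy with respect to $Q^*_\mu$ at every history. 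Feeding this into Lemma~\ref{lem:different-optimal-action} with $\eps\!\leftarrow\!\tfrac{4\eps}{1-\g}$ (observing the trivial $V^*_\mu(h)\ge V^{\breve\pi}_\mu(h)$ for the lower inequality) yields $0\le V^*_\mu(h)-V^{\breve\pi}_\mu(h)\le\tfrac{4\eps}{(1-\g)^2}$.

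The only delicate step I expect is bookkeeping the two sources of approximation in step two of part (i): the intra-class Q-spread (size $\eps$, from the hypothesis) and the next-state value substitution (size $\g\delta_V$, from the Bellman recursion). Conflating them is what produces the $2\eps$ numerator instead of $\eps$, so I would write that step as two separate triangle-inequality manipulations via the $B$-averaged representative to keep the constants transparent. Everything else is a mechanical adaptation of the already-proven Theorem~\ref{thm:psimdpstar} with the MDP identity $\mu_\psi=\mu_{\mathrm{MDP}}$ replaced by its $B$-averaged counterpart $\b\mu_B$.
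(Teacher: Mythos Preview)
Your plan is correct and matches the paper's proof: part (i) reruns the Theorem~\ref{thm:psiQpi} argument with $\pi=\pi^*$, using the $B$-average as intermediary (Lemma~\ref{lem:QBqDiff}) together with the intra-class spread bound (Lemma~\ref{lem:eQPi}) and the max-of-max comparison (the paper phrases this via Lemma~\ref{lem:VStarrep}), and part (ii) is exactly the chain $V^*_\mu(h)\approx v^*_\mu(s)=q^*_\mu(s\u\pi^*(s))\approx Q^*_\mu(h\breve\pi(h))$ followed by Lemma~\ref{lem:different-optimal-action}. One small note: your own step~2 actually yields $\delta\le\eps+\g\delta_V$ (since $|Q^*_\mu(ha)-\bar Q^*_\mu|\le\eps$ directly, as a convex combination), so the $2\eps$ you wrote is looser than needed---the paper incurs the second $\eps$ only because it routes $\delta_V$ through Lemma~\ref{lem:VStarrep}, picking up an extra $\DQ(s)\le\eps$.
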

\begin{proof}
    {\bf(i)} The proof follows the same steps as the proof of Theorem \ref{thm:psiQpi}, replacing $\pi$ with $\pi^*$ and ${\u\pi}$ with ${\u\pi}^*$ and using Lemma \ref{lem:VStarrep} instead of Lemma \ref{lem:Vrep}. In the end we use Lemma \ref{lem:eQPi} to conclude the proof.


    {\bf(ii)} For $\psi(h)=s$ and $\breve{\pi}(h) :\in \psi^{-1}_s({\u\pi}^*(s))$,
    \bqan
    V^*_\mu(h) \pm \frac{2\eps}{1-\g}
    &\overset{(i)}{\lessgtr} v^*_\mu(s) \\
    &\overset{}{\equiv} q^*_\mu(s{\u\pi}^*(s))\\
    &\overset{(i)}{\lessgtr} Q^*_\mu(h \breve{\pi}(h)) \pm \frac{2\eps}{1-\g}
    \eqan
    which implies $\abs{Q^*_\mu(h \breve{\pi}(h)) - V^*_\mu(h)} \leq \frac{4\eps}{1-\g}$ and Lemma \ref{lem:different-optimal-action} concludes the proof.
\end{proof}

It is important to note that Theorem \ref{thm:psiQstar} holds for any stochastic inverse $B$. Every choice of $B$ gives a different surrogate MDP $\b \mu_B$, so the theorem provides a \emph{near-optimal} performance guarantee for the \emph{uplifted} abstract optimal policies of \emph{any} possible surrogate MDP. Therefore, for any non-MDP Q-uniform homomorphism and a fixed $B$ there exists an uplifted near-optimal policy ($\u\pi$ from Theorem \ref{thm:psiQstar}ii).

\subsection{V-uniform Homomorphisms}

All the previous results are valid for any choice of the stochastic inverse $B$. However, for V-uniform homomorphisms, the results are explicitly dependent on $B$ (see Theorem \ref{thm:psiVpi} and \ref{thm:psiVstar}).
The choice of $B$ is eventually be dictated by the behavior policy \cite{Hutter2016} which can be understood as a conditional distribution generate from a joint distribution over $\H \times \A$ space. Furthermore, we can decompose $B$ into two distinct parts: action dependent and independent.  With an abuse of notation, assume an arbitrary joint distribution $B$ over $\H,\A,\S$ and $\B$. By using the chain rule of probability distributions on $B$,
\bqa
B(ha\|sb)
&= B(h\|sb)B(a|bhs) \nonumber \\
&= \frac{B(hs)B(b\|hs)}{B(sb)}B(a|bhs) \nonumber \\
&\overset{(a)}{=} \frac{B(hs)B(b\|h)}{B(sb)}B(a|bh) \nonumber \\
&= B(h\|s)\frac{B(b\|h)}{B(b\|s)}B(a|bh) \nonumber \\
&= \underbrace{B(h\|s)}_{\text{action-independent}}
\cdot\overbrace{\left(\frac{B(ab\|h)}{B(b\|s)}\right)}^{\text{action-dependent}} \label{eq:decomposed-B}
\eqa
$(a)$ follows from Assumption \ref{asm:state-history}, the state is determined only by the history.

Using the action-dependent part from \eqref{eq:decomposed-B}, we define a history and state based induced measure on the original action space for any $B$ and an abstract state based policy ${\u\pi}$ as
\beq
B^{\u\pi}(a\|hs) := \sum_{b \in \B}\left(\frac{B(ab\|h)}{B(b\|s)}\right) {\u\pi}(b \|s).
\eeq

This seemingly complex and arbitrary relationship has a well-structured explanation. If \emph{approximately}, the $B$ distribution is linked to the actual dynamics of an agent ${\u\pi}$ acting in the abstract state-action space, i.e.\ $B(b\|s) \approx {\u\pi}(b\|s)$, then $B^{\u\pi}(a\|hs) \approx B(a\|h)$, which is effectively a \emph{shadow} agent induced by the agent ${\u\pi}$ on the original history-action space.

To prove a result analogous to Theorem \ref{thm:psiQpi} for a V-uniform homomorphism, we need to impose an extra condition on $B$, \emph{ cf.} Theorem \ref{thm:psiQpi}, which requires a structure on $B$ and/or on the underlying original process. For general $B$, there exist some known counter examples \cite{Hutter2016}.

\begin{theorem}{$(\psi_{V^\pi_\mu})$}\label{thm:psiVpi}
    Let $\abs{V^\pi_\mu(h)-V^\pi_\mu(\d h)}\leq \eps$ for some policy $\pi$ and for all $\psi(h)=\psi(\d h)$, and $\abs{\sum_{a \in \A} Q^\pi_\mu(ha)B^\piR(a\|hs) - V^\pi_\mu(h)} \leq \eps_B$ for all $s=\psi(h)$, then it holds:
    \beqn
    \left|\b Q^\pi_\mu(\psi^{-1}(sb)) - q^\piR_\mu(sb)\right| \leq \frac{\g(\eps+ \eps_B)}{1-\g} \quad \text{ and} \quad
    \left|V^\pi_\mu(h) - v^\piR_\mu(s)\right| \leq \frac{\eps + \eps_B}{1-\g}.
    \eeqn
\end{theorem}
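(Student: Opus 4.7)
The plan is to collapse both inequalities into a single value-loss bound and then close the loop via a Bellman-style self-reference. Let $\delta := \sup_{h,s:\psi(h)=s} \abs{V^\pi_\mu(h) - v^\piR_\mu(s)}$. Applied to this $\delta$, Lemma \ref{lem:QBqDiff} immediately yields $\abs{\b Q^\pi_\mu(\psi^{-1}(sb)) - q^\piR_\mu(sb)} \leq \g\delta$ for every $sb$-pair, so once I establish $\delta \leq (\eps+\eps_B)/(1-\g)$ both stated bounds fall out (the first from this inequality, the second by the definition of $\delta$). The real work is therefore to bound $\delta$.

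To bound $\delta$ I would exploit the action-dependent/action-independent decomposition of $B$ recorded in \eqref{eq:decomposed-B}. Expanding $v^\piR_\mu(s) = \sum_b \piR(b\|s)\, q^\piR_\mu(sb)$ and moving the $\piR$-average inside the $B$-expectation that defines $\b Q^\pi_\mu(\psi^{-1}(sb))$, the factorisation $B(ha\|sb) = B(h\|s)\frac{B(ab\|h)}{B(b\|s)}$ collapses the weighted sum into
\[
\sum_b \piR(b\|s)\,\b Q^\pi_\mu(\psi^{-1}(sb)) \;=\; \sum_h B(h\|s)\sum_a Q^\pi_\mu(ha)\,B^\piR(a\|hs),
\]
where $B^\piR$ is exactly the induced shadow-policy defined just before the theorem statement. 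This identity is the pivot of the whole argument: it is what lets the second hypothesis of the theorem talk to the surrogate-MDP value.

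At this point the two hypotheses take over. The second hypothesis replaces $\sum_a Q^\pi_\mu(ha) B^\piR(a\|hs)$ by $V^\pi_\mu(h)$ up to an additive error $\eps_B$, reducing the target to $\sum_h B(h\|s) V^\pi_\mu(h)$; the V-uniformity hypothesis then collapses this $B(\cdot\|s)$-mixture of values to a single $V^\pi_\mu(h)$ with error $\eps$ for any representative history $h$ with $\psi(h) = s$, since by construction $B(\cdot\|s)$ is supported on such histories. Combined with Lemma \ref{lem:QBqDiff}, this yields a self-referential inequality of the shape $\delta \leq \eps + \eps_B + \g\delta$, which resolves to $\delta \leq (\eps+\eps_B)/(1-\g)$ and finishes the argument.

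The main technical obstacle I expect is the bookkeeping for the $B$-decomposition: in our GRL setup $B(h\|s)$ is a distribution over a countable space of mixed-length histories, so the factorisation $B(ha\|sb) = B(h\|s)\frac{B(ab\|h)}{B(b\|s)}$ and the passage to the conditional $B^\piR(a\|hs)$ must be justified carefully under \Cref{asm:state-history} and the fact that $\piR$ depends on $s$ only. Once this is in place the rest mirrors the structure of Theorem \ref{thm:psiQpi}, with the role of the uniform Q-gap $\eps$ there played jointly here by the value-gap $\eps$ and the policy-shift gap $\eps_B$. Note that the assumption controlled by $\eps_B$ cannot be dropped: V-uniformity alone is known to admit counterexamples, and $B^\piR$ is precisely the object that measures the drift between the $\piR$-induced dynamics in the surrogate MDP and the dynamics under which $\pi$ is actually evaluated in the original process.
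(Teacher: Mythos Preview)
Your argument is correct, and it is actually cleaner and more faithful to the theorem's hypotheses than what the paper prints. The paper's proof sets $\delta := \sup_{\psi(ha)=sb}|Q^\pi_\mu(ha)-q^\piR_\mu(sb)|$, invokes Lemma~\ref{lem:Vrep} (bringing in $\DQ(s)$ and $\DPi(s)$), and then reuses verbatim the key step from Theorem~\ref{thm:psiQpi} that bounds $|\b Q^\pi_\mu(\psi^{-1}(sb)) - Q^\pi_\mu(\psi^{-1}(sb))|\le\eps$ with the justification ``follows from the theorem's assumption.'' But the present theorem assumes only V-uniformity, not Q-uniformity, so that step does not follow; the $\eps_B$-hypothesis is never used; and the proof's last line $\delta \leq \frac{\eps + \g\DQ(s) + \g\DPi(s)/(1-\g)}{1-\g}$ does not match the stated bound. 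The printed proof is a copy-paste of the proof of Theorem~\ref{thm:psiQpi}.

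Your route is what the surrounding text is set up for: you take $\delta$ to be the V-gap, feed it directly into Lemma~\ref{lem:QBqDiff}, and then pull the $\piR$-average through the decomposition \eqref{eq:decomposed-B} to produce $\sum_h B(h\|s)\sum_a Q^\pi_\mu(ha)B^\piR(a\|hs)$, which is precisely the object the $\eps_B$-hypothesis controls. This is the only place in the section where $B^\piR$ is actually needed, and your argument makes transparent why that extra condition is unavoidable for V-uniform homomorphisms. The recursion $\delta\le \eps+\eps_B+\g\delta$ then closes with no spurious $\DQ$, $\DPi$ terms and yields exactly the claimed constants. Your proof is the intended one.
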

\begin{proof}
    Let $\delta := \underset{h,a,s,b:\psi(ha)=sb}{\sup}\left|Q^\pi_\mu(ha) - q^\piR_\mu(s b)\right|$, and for any $\psi(h)=s$ we have,
    \bqa
    V^\pi_\mu(h) - v^\piR_\mu(s)
    \overset{Lem. \ref{lem:Vrep}}&{\lessgtr} \sum_{b \in \B} \left( Q^\pi_\mu(\psi^{-1}(s b)){\piR}(b\|s) - q^\piR_\mu(sb){\piR}(b\|s) \right) \pm \left(\DQ(s) + \frac{\DPi(s)}{1-\g}\right) \nonumber \\
    \overset{}&{=} \sum_{b \in \B} \left( Q^\pi_\mu(\psi^{-1}(sb)) - q^\piR_\mu(sb)\right) {\piR}(b\|s) \pm \left(\DQ(s) + \frac{\DPi(s)}{1-\g}\right) \nonumber \\
    \overset{(a)}&{\lessgtr} \pm \left(\delta + \DQ(s) + \frac{\DPi(s)}{1-\g}\right) \label{eq:VvDiff2}
    \eqa
    $(a)$ follows from the definition of $\delta$ and the fact that ${\piR}(b\|s)$-average is smaller than the $b$-supremum. Using the above inequality (\ref{eq:VvDiff2}) and Lemma \ref{lem:QBqDiff} we get,
    \beq \label{eq:BAvgqDiff2}
    | \b Q^\pi_\mu(\psi^{-1}(s,b)) - q^\piR_\mu(s,b) | \leq \g\left(\delta + \DQ(s) + \frac{\DPi(s)}{1-\g}\right).
    \eeq

    We exploit the theorem's assumption and derive a key relationship between the $B$ average and any instance of action value.
    \bqa \label{eq:BAvgRepDiff2}
    \b Q^\pi_\mu(\psi^{-1}(sb))
    \overset{}&{\equiv} \sum_{h \in \H, a \in \A}   Q^\pi_\mu(ha)B(ha\|sb) \nonumber \\
    \overset{(a)}&{\lessgtr} \sum_{h \in \H, a \in \A}  \left( Q^\pi_\mu(\psi^{-1}(sb)) \pm \eps \right) B(ha\|sb) \nonumber \\
    \overset{}&{=} Q^\pi_\mu(\psi^{-1}(sb)) \pm \eps
    \eqa
    $(a)$ follows from the theorem's assumption. Since, $Q^\pi_\mu(\psi^{-1}(sb))$ is a representative member in the pre-image set of $sb$-pair; it is equivalent to say $Q^\pi_\mu(\psi^{-1}(sb)) = Q^\pi_\mu(ha)$ for any $\psi(ha) = sb$. Therefore, combining \eqref{eq:BAvgqDiff2} and \eqref{eq:BAvgqDiff2} we get $\left|Q^\pi_\mu(ha) - q^\piR_\mu(sb)\right| \leq \g(\delta + \DQ(s) + \frac{\DPi(s)}{1-\g}) + \eps$, hence $\delta \leq \frac{\eps + \g \DQ(s) + \frac{\g\DPi(s)}{1-\g}}{1-\g}$.
\end{proof}

In Theorem \ref{thm:psiQpi}, we had an absolute loss-bound for action-value functions but in Theorem \ref{thm:psiVpi} we only have a $B$-average relationship. So far, we were able to get a near-optimal performance guarantee when the optimal policy of a surrogate MDP is uplifted to the original process (see Theorems \ref{thm:psimdpstar}ii and \ref{thm:psiQstar}ii). However, there does not exist such a near-optimal performance guarantee for V-uniform homomorphisms. A \emph{deterministic} abstract policy could be arbitrarily worse off when uplifted to the original process \cite[Theorem 10]{Hutter2016} in V-uniform state-abstractions, which are a special case of V-uniform homomorphisms. However, a relatively weak result is still possible.

\begin{theorem}{$(\psi_{V^*_\mu})$}\label{thm:psiVstar}
    Let $|V^*_\mu(h)-V^*_\mu(\d h)|\leq \eps$ for all $\psi(h)=\psi(\d h)$ and $|\sum_{a \in \A} Q^*_\mu(ha)B^{{\u\pi}^*}(a\|hs) - V^*_\mu(h)| \leq \eps_B$ for all $s=\psi(h)$, then for all $\psi(ha) = sb$ it holds:
    \begin{enumerate}[(i)]
        \item $|\b Q^*_\mu(\psi^{-1}(sb)) - q^*_\mu(sb)| \leq \frac{3\g(\eps + \eps_B)}{(1-\g)^2}$ and
        $|V^*_\mu(h) - v^*_\mu(s)| \leq \frac{3(\eps + \eps_B)}{(1-\g)^2}$.
        \item If $\eps + \eps_B = 0$ then $\psi(h\pi^*(h)) = s{\u\pi}^*(s)$ for all $\psi(h)=s$.
    \end{enumerate}
\end{theorem}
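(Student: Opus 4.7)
\textbf{Proof plan for Theorem $(\psi_{V^*_\mu})$.}

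The plan is to mirror the argument of Theorem $\psi_{V^\pi_\mu}$, but handle the optimal-policy mismatch by splicing in the $\eps_B$ assumption at the point where one would normally use policy uniformity. Define
\[
    \delta \;\coloneqq\; \sup_{h,\,s:\psi(h)=s} \big| V^*_\mu(h) - v^*_\mu(s) \big|,
\]
and set $\piR \coloneqq {\u\pi}^*$ throughout. First I would re-run the Lemma $(\b Q - q)$ calculation with $\pi$ replaced by $\pi^*$ and ${\u\pi}$ replaced by ${\u\pi}^*$; the proof of that lemma only uses Bellman expansion plus the hypothesis $|V^\pi_\mu(h)-v^{\u\pi}_\mu(s)|\le\eps$, so the same algebra yields
\[
    \big|\, \b Q^{\pi^*}_\mu(\psi^{-1}(sb)) - q^*_\mu(sb)\,\big| \;\le\; \g\,\delta
\]
for every $(s,b)$, where the $B$-average on the left uses $\pi^*$ on the underlying histories.

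Next I would use the optimal Bellman equation and the $\eps_B$ hypothesis to bound $V^*_\mu(h) - v^*_\mu(s)$. For any $\psi(h)=s$, using $V^*_\mu(h)=\max_a Q^*_\mu(ha)$ together with $v^*_\mu(s) = q^*_\mu(s\,{\u\pi}^*(s))$, write
\[
    V^*_\mu(h) - v^*_\mu(s) \;=\; \Big(V^*_\mu(h) - \!\!\sum_{a}\! Q^*_\mu(ha) B^{{\u\pi}^*}(a\|hs)\Big) + \Big(\!\sum_{a}\! Q^*_\mu(ha) B^{{\u\pi}^*}(a\|hs) - q^*_\mu(s\,{\u\pi}^*(s))\Big).
\]
The first parenthesis is at most $\eps_B$ by hypothesis. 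For the second, I would unfold $B^{{\u\pi}^*}(a\|hs)$ via the decomposition \eqref{eq:decomposed-B} so that it becomes $\sum_b \tfrac{B(ab\|h)}{B(b\|s)}\,{\u\pi}^*(b\|s)$, then exchange sums and recognize that after conditioning on $h\in\psi^{-1}(s)$ with weights $B(h\|s)$, the $(h,a)$-average collapses to the $B$-average $\b Q^{\pi^*}_\mu(\psi^{-1}(s\,{\u\pi}^*(s)))$. The V-uniformity hypothesis $|V^*_\mu(h)-V^*_\mu(\d h)|\le\eps$ for $\psi(h)=\psi(\d h)$ is what lets me replace $V^*_\mu(h)$ by its $B(\cdot\|s)$-average with error at most $\eps$, propagating an extra $\eps/(1-\g)$ through the Bellman step (this is exactly the place where Theorem \ref{thm:psiVpi} acquires its $\eps/(1-\g)$ term). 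Combining with the first step gives a fixed-point inequality
\[
    \delta \;\le\; \g\,\delta + \eps + \eps_B + \text{(lower-order terms from the uniformity swap)},
\]
which after careful bookkeeping of the two value-swaps (once on $V^*_\mu$ inside the Bellman expansion and once on the $\max$ versus $B^{{\u\pi}^*}$-average) yields $\delta \le 3(\eps+\eps_B)/(1-\g)^2$. Substituting this back into the first displayed bound gives the $\g$-discounted version for $|\b Q^*_\mu(\psi^{-1}(sb))-q^*_\mu(sb)|$.

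For part (ii), I would specialise to $\eps=\eps_B=0$. Then part (i) forces $V^*_\mu(h)=v^*_\mu(s)$ and the equality chain
\[
    V^*_\mu(h) \;=\; \sum_a Q^*_\mu(ha)\,B^{{\u\pi}^*}(a\|hs) \;\le\; \max_a Q^*_\mu(ha) \;=\; V^*_\mu(h)
\]
shows the weighted average equals the max, so $B^{{\u\pi}^*}(\cdot\|hs)$ is supported only on optimal actions of $h$. Unfolding $B^{{\u\pi}^*}$ via \eqref{eq:decomposed-B} then implies that for every $b$ in the support of ${\u\pi}^*(\cdot\|s)$, the preimage set $\psi^{-1}_s(b)$ contains an optimal action of $h$; equivalently, one can choose $\pi^*(h)\in\psi^{-1}_s({\u\pi}^*(s))$, so $\psi(h\pi^*(h)) = s\,{\u\pi}^*(s)$.

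The main obstacle is the careful bookkeeping in step two: there are \emph{three} separate sources of slack (the $\eps_B$ gap between $V^*_\mu$ and the $B^{{\u\pi}^*}$-average, the V-uniformity swap inside the next-state expectation, and the policy-representative error hidden in $\piR={\u\pi}^*$), and the proof must coordinate these to land on the exact constant $3/(1-\g)^2$ rather than a larger one; the trick is to apply the V-uniformity swap \emph{before} Bellman expansion to avoid an extra factor of $\g$.
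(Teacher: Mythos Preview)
Your route is genuinely different from the paper's. The paper does \emph{not} set up a direct fixed-point in $\delta=\sup_{h}|V^*_\mu(h)-v^*_\mu(s)|$. Instead it introduces the history-indexed abstract policy ${\u\pi}_h$ defined by $s\,{\u\pi}_h(s):=\psi(h\pi^*(h))$ and proceeds in two stages: first Theorem~$\psi_{V^\pi_\mu}$ (with $\pi=\pi^*$ and $\piR={\u\pi}_h$) gives $|V^*_\mu(h)-v^{{\u\pi}_h}_\mu(s)|\le(\eps+\eps_B)/(1-\g)$; then a separate contraction argument on the surrogate MDP bounds the policy gap $v^*_\mu(s)-v^{{\u\pi}_h}_\mu(s)\le 2(\eps+\eps_B)/(1-\g)^2$. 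Summing gives the stated $3(\eps+\eps_B)/(1-\g)^2$, and part~(ii) falls out of the second stage when the inequality chain collapses to equalities, forcing ${\u\pi}_h$ to be optimal for $\b\mu$, i.e.\ ${\u\pi}_h={\u\pi}^*$. Your direct approach is more economical, and if carried out carefully it actually yields the \emph{sharper} bound $\delta\le(\eps+\eps_B)/(1-\g)$; the ``three sources of slack'' and ``lower-order terms'' you anticipate do not materialise. The one point you gloss over is the ``collapse to the $B$-average'': you are bounding $V^*_\mu(h)-v^*_\mu(s)$ for a \emph{fixed} $h$, so you cannot simply average over $h$, and V-uniformity alone controls $V^*_\mu(h)$, not $\sum_a Q^*_\mu(ha)B^{{\u\pi}^*}(a\|hs)$. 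The correct manoeuvre is to chain the two hypotheses first---$V^*_\mu(h)\le V^*_\mu(\d h)+\eps\le\sum_a Q^*_\mu(\d ha)B^{{\u\pi}^*}(a\|\d hs)+\eps+\eps_B$ for every $\d h$ with $\psi(\d h)=s$---and only then average over $\d h$ with $B(\d h\|s)$, at which point the decomposition~\eqref{eq:decomposed-B} does collapse the sum to $\b Q^*_\mu(\psi^{-1}(s\,{\u\pi}^*(s)))$. Your part~(ii) argument (the $B^{{\u\pi}^*}$-weighted average equals the max, forcing support on optimal actions) is also workable but runs in the opposite direction from the paper's.
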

\begin{proof}
    Let us define ${\u\pi}_h(s)$ such that $s{\u\pi}_h(s) := \psi(h\pi^*(h))$ for $\psi(h)=s$. Then,
    \beq
    q^{{\u\pi}_h}\left(s{\u\pi}_h(s)\right) = v^{{\u\pi}_h}(s) \overset{(a)}{\lessgtr} V^*_\mu(h) \pm \frac{\eps + \eps_B}{1-\g} \label{eq:qVe1}
    \eeq
    $(a)$ follows from Theorem \ref{thm:psiVpi} applied to $\pi = \pi^*$ (with ${\u\pi} = {\u\pi}_h$). Now we derive a bound for any $b \in \B$.
    \bqa \label{eq:qVe2}
    q^{{\u\pi}_h}(sb) - \frac{\g(\eps + \eps_B)}{1-\g}
    \overset{Thm. \ref{thm:psiVpi}}&{\leq} \b Q^*_\mu\left(\psi^{-1}(sb)\right) \equiv \sum_{h \in \H, a \in \A} Q^*_\mu(ha) B(ha\|sb) \nonumber \\
    \overset{}&{=} \sum_{h \in \H} B(h\|s) \sum_{a \in \A} Q^*_\mu(ha) B(a \|sbh) \nonumber \\
    \overset{(b)}&{\leq} \sum_{h \in \H} B(h\|s) \sum_{a \in \A} Q^*_\mu(h\pi^*(h)) B(a \|sbh) \nonumber \\
    \overset{}&{=} \sum_{h: \psi(h)=s} B(h\|s) V^*_\mu(h) \nonumber \\
    \overset{(c)}&{\leq} V^*_\mu(h) + \eps
    \eqa
    $(b)$ is due to the definition of optimal value and $(c)$ follows form the theorem's assumptions. Together (\ref{eq:qVe1}) and (\ref{eq:qVe2}) imply,
    \beq \label{eq:qVe3}
    v^{{\u\pi}_h}_\mu(s) = q^{{\u\pi}_h}_\mu\left(s, {\u\pi}_h(s)\right) \overset{}{\leq} \max_{b \in \B} q^{{\u\pi}_h}_\mu(sb) \overset{(\ref{eq:qVe2})}{\leq} V^*_\mu(h) + \frac{\eps + \g\eps_B}{1-\g} \overset{(\ref{eq:qVe1})}{\leq} v^{{\u\pi}_h}_\mu(s) + \frac{2(\eps + \eps_B)}{1-\g}
    \eeq

    \paragraph{(ii)} For $\eps = \eps_B = 0$, the previous equation implies $v^{{\u\pi}_h}_\mu(s) = \max_{b \in \B} q^{{\u\pi}_h}_\mu(sb)$. It shows that ${\u\pi}_h(s) = {\u\pi}^*(s)$ for all $\psi(h)=s$.

    \paragraph{(i)} Now for general $\eps + \eps_B > 0$ case. For all $s \in \S$ and $b \in \B$ we have,
    \bqan
    0
    &\overset{}{\leq} q^*_\mu(sb) - q^{{\u\pi}_h}_\mu(sb) \\
    &\overset{}{\equiv} \g \sum_{s' \in \S, r' \in \R} \b \mu_B(s'r'\|sb) \left(v^*_\mu(s') - v^{{\u\pi}_h}_\mu(s')\right) \\
    &\overset{(d)}{\leq} \g\max_{s' \in \S}\left(v^*_\mu(s') - v^{{\u\pi}_h}_\mu(s')\right)
    \eqan
    And,
    \bqan
    0 \overset{}&{\leq} v^*_\mu(s) - v^{{\u\pi}_h}_\mu(s) \\
    \overset{(e)}&{\leq} \max_{b \in \B} q^*_\mu(sb) - \max_{b \in \B} q^{{\u\pi}_h}_\mu(sb) + \frac{2(\eps + \eps_B)}{1-\g} \\
    \overset{(f)}&{\leq} \max_{b \in \B}\left(q^*_\mu(sb) - q^{{\u\pi}_h}_\mu(sb)\right)+ \frac{2(\eps + \eps_B)}{1-\g}
    \eqan
    $(d)$ expectation is replace by maximum operation; $(e)$ follows from the definition of $v^*_\mu(s)$ and (\Cref{eq:qVe3}); $(f)$ is a simple mathematical fact of maximization operation. Together this implies,
    \bqa
    \max_{s \in \S} \left( v^*_\mu(s) - v^{{\u\pi}_h}_\mu(s) \right) &\leq \g\max_{s \in \S} \left( v^*_\mu(s) - v^{{\u\pi}_h}_\mu(s) \right) + \frac{2(\eps + \eps_B)}{1-\g} \nonumber \\
    \Rightarrow \quad \max_{s \in \S} \left( v^*_\mu(s) - v^{{\u\pi}_h}_\mu(s) \right) &\leq \frac{2(\eps + \eps_B)}{(1-\g)^2} \label{eq:qVe4}
    \eqa
    Hence for any $\psi(h)=s$, we have,
    \bqan
    V^*_\mu(h) - \frac{\eps + \eps_B}{1-\g} \overset{(\ref{eq:qVe1})}&{\leq} v^{{\u\pi}_h}_\mu(s) \\
    \overset{(g)}&{\leq} v^*_\mu(s) \\
    \overset{(\ref{eq:qVe4})}&{\leq} v^{{\u\pi}_h}_\mu(s) + \frac{2(\eps + \eps_B)}{(1-\g)^2} \\
    \overset{(\ref{eq:qVe1})}&{\leq} V^*_\mu(h) + \frac{\eps + \eps_B}{1-\g} + \frac{2(\eps + \eps_B)}{(1-\g)^2} \leq V^*_\mu(h) + \frac{3(\eps + \eps_B)}{(1-\g)^2}
    \eqan
    $(g)$ holds by definition. Hence, the main results follows by using Lemma \ref{lem:QBqDiff} with $\pi$ replaced by $\pi^*$ and ${\u\pi}$ by ${\u\pi}^*$.
\end{proof}

In the \emph{approximate} case, i.e.\ $\eps + \eps_B > 0$, Theorem \ref{thm:psiVstar} is not as useful as Theorem \ref{thm:psiQstar} because of the missing performance guarantee, {\em cf.} Theorem \ref{thm:psiQstar}ii. However, it is still an important theorem for the \emph{exact} V-uniform homomorphisms, i.e.\ $\eps + \eps_B = 0$. In that case, it is guaranteed that the optimal actions of \emph{all} member histories are mapped to the \emph{same} abstract optimal action (see Theorem \ref{thm:psiVstar}ii).

That concludes the main results of this chapter. However, before we close this chapter, we provide an example non-MDP homomorphism below.

\section{A Simple Example}\label{sec:Example-2}

In this section, we provide a toy example to illustrate the possibility of approximately joint state-action aggregation beyond MDPs. In the example, contrary to ESA, we also aggregate the approximately similar policy state-action pairs.  These pairs are not aggregated in ESA due to the exact policy similarity condition, \emph{cf.} Theorem 5 of \citet{Hutter2016}.

For simplicity, we assume that actions and histories are mapped independently and the true environment $\mu$ is an MDP. We define the original action and observation spaces as $\A = \O = \{1,2,3,4\}$. Moreover, $\S = \{X, Y\}$, $\B = \{\alpha, \beta\}$. The original state-action pairs are represented by dots and the shaded regions indicate the mapping function. The dynamics are (fictitious) region constant (see Figure \ref{fig:example-homo})\footnote{Only the dots are elements of the joint space and the regions are fictitious. The aggregated pair $(s,b)$ is indicated adjacent to each region.}.

We assume an arbitrary policy which may depend on the history rather than only on the last observation. This allows complex dynamics for the proof of concept. We express the environment as
\bqan
\mu(o'\|oa) &= \sum_{a' \in \A} \mu(o'\|oa) \pi(a'\|oao') \\
&= \sum_{a' \in \A} \mu^\pi(o'a'\|oa).
\eqan

We express the dynamics with a joint measure $\mu^\pi$ and do not distinguish between the policy and the environment unless otherwise stated. Let the rewards be a function of the originating region and the problem has a finite set of real-valued rewards. We present three cases in this example: non-MDP dynamics, approximate action-values and policy disagreement.

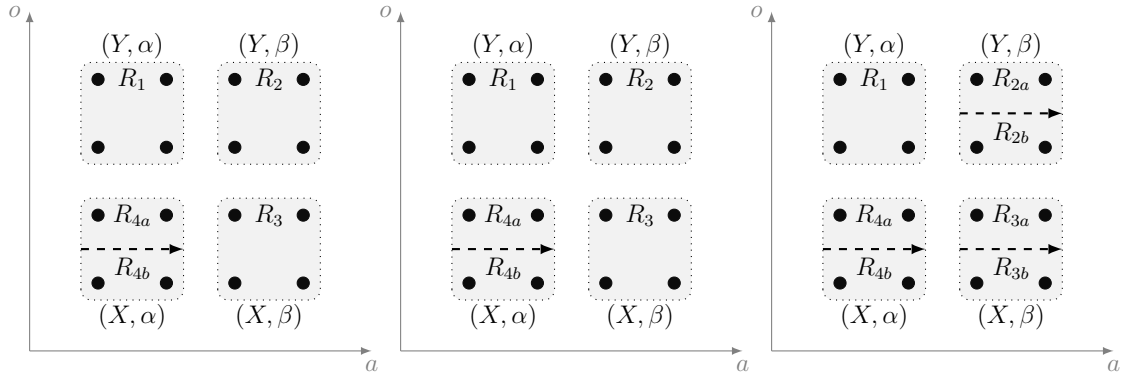
\begin{figure*}
    \begin{subfigure}[b]{0.30\textwidth}
        \centering
        \begin{tikzpicture}[scale=0.45, every node/.style={scale=0.8}]
            \coordinate (Origin)   at (0,0);
            \coordinate (XAxisMin) at (0,0);
            \coordinate (XAxisMax) at (10,0);
            \coordinate (YAxisMin) at (0,0);
            \coordinate (YAxisMax) at (0,10);
            \draw [thin, gray,-latex] (XAxisMin) -- (XAxisMax) node [below] {$a$};
            \draw [thin, gray,-latex] (YAxisMin) -- (YAxisMax) node [left] {$o$};
            \foreach \x in {1,2,...,4}{
                \foreach \y in {1,2,...,4}{
                    \node[draw,circle,inner sep=2pt,fill] at (2*\x,2*\y) {};
                }
            }
            \filldraw[fill=gray, fill opacity=0.1, draw=black, dotted, rounded corners] (1.5,1.5)
            rectangle (4.5,4.5);
            \filldraw[fill=gray, fill opacity=0.1, draw=black, dotted, rounded corners] (5.5,1.5)
            rectangle (8.5,4.5);
            \filldraw[fill=gray, fill opacity=0.1, draw=black, dotted, rounded corners] (1.5,5.5)
            rectangle (4.5,8.5);
            \filldraw[fill=gray, fill opacity=0.1, draw=black, dotted, rounded corners] (5.5,5.5)
            rectangle (8.5,8.5);

            \draw[dashed, thick] (1.5,3) -- (4.5,3);

            \node at (3,8) {$R_1$};
            \node at (7,8) {$R_2$};
            \node at (7,4) {$R_3$};
            \node at (3,4) {$R_{4a}$};
            \node at (3,3) [below] {$R_{4b}$};

            \node at (3,1) {$(X,\alpha)$};
            \node at (7,1) {$(X,\beta)$};
            \node at (3,9) {$(Y,\alpha)$};
            \node at (7,9) {$(Y,\beta)$};
        \end{tikzpicture}
    \end{subfigure}
    \begin{subfigure}[b]{0.30\textwidth}
        \centering
        \begin{tikzpicture}[scale=0.45, every node/.style={scale=0.8}]
            \coordinate (Origin)   at (0,0);
            \coordinate (XAxisMin) at (0,0);
            \coordinate (XAxisMax) at (10,0);
            \coordinate (YAxisMin) at (0,0);
            \coordinate (YAxisMax) at (0,10);
            \draw [thin, gray,-latex] (XAxisMin) -- (XAxisMax) node [below] {$a$};
            \draw [thin, gray,-latex] (YAxisMin) -- (YAxisMax) node [left] {$o$};
            \foreach \x in {1,2,...,4}{
                \foreach \y in {1,2,...,4}{
                    \node[draw,circle,inner sep=2pt,fill] at (2*\x,2*\y) {};
                }
            }
            \filldraw[fill=gray, fill opacity=0.1, draw=black, dotted, rounded corners] (1.5,1.5)
            rectangle (4.5,4.5);
            \filldraw[fill=gray, fill opacity=0.1, draw=black, dotted, rounded corners] (5.5,1.5)
            rectangle (8.5,4.5);
            \filldraw[fill=gray, fill opacity=0.1, draw=black, dotted, rounded corners] (1.5,5.5)
            rectangle (4.5,8.5);
            \filldraw[fill=gray, fill opacity=0.1, draw=black, dotted, rounded corners] (5.5,5.5)
            rectangle (8.5,8.5);

            \draw[dashed, thick] (1.5,3) -- (4.5,3);

            \node at (3,8) {$R_1$};
            \node at (7,8) {$R_2$};
            \node at (7,4) {$R_3$};
            \node at (3,4) {$R_{4a}$};
            \node at (3,3) [below] {$R_{4b}$};

            \node at (3,1) {$(X,\alpha)$};
            \node at (7,1) {$(X,\beta)$};
            \node at (3,9) {$(Y,\alpha)$};
            \node at (7,9) {$(Y,\beta)$};

        \end{tikzpicture}
    \end{subfigure}
    \begin{subfigure}[b]{0.30\textwidth}
        \centering
        \begin{tikzpicture}[scale=0.45, every node/.style={scale=0.8}]
            \coordinate (Origin)   at (0,0);
            \coordinate (XAxisMin) at (0,0);
            \coordinate (XAxisMax) at (10,0);
            \coordinate (YAxisMin) at (0,0);
            \coordinate (YAxisMax) at (0,10);
            \draw [thin, gray,-latex] (XAxisMin) -- (XAxisMax) node [below] {$a$};
            \draw [thin, gray,-latex] (YAxisMin) -- (YAxisMax) node [left] {$o$};
            \foreach \x in {1,2,...,4}{
                \foreach \y in {1,2,...,4}{
                    \node[draw,circle,inner sep=2pt,fill] at (2*\x,2*\y) {};
                }
            }
            \filldraw[fill=gray, fill opacity=0.1, draw=black, dotted, rounded corners] (1.5,1.5)
            rectangle (4.5,4.5);
            \filldraw[fill=gray, fill opacity=0.1, draw=black, dotted, rounded corners] (5.5,1.5)
            rectangle (8.5,4.5);
            \filldraw[fill=gray, fill opacity=0.1, draw=black, dotted, rounded corners] (1.5,5.5)
            rectangle (4.5,8.5);
            \filldraw[fill=gray, fill opacity=0.1, draw=black, dotted, rounded corners] (5.5,5.5)
            rectangle (8.5,8.5);

            \draw[dashed, thick] (1.5,3) -- (4.5,3);
            \draw[dashed, thick] (5.5,3) -- (8.5,3);
            \draw[dashed, thick] (5.5,7) -- (8.5,7);

            \node at (3,8) {$R_1$};
            \node at (7,8) {$R_{2a}$};
            \node at (7,7) [below] {$R_{2b}$};
            \node at (7,4) {$R_{3a}$};
            \node at (7,3) [below] {$R_{3b}$};
            \node at (3,4) {$R_{4a}$};
            \node at (3,3) [below] {$R_{4b}$};

            \node at (3,1) {$(X,\alpha)$};
            \node at (7,1) {$(X,\beta)$};
            \node at (3,9) {$(Y,\alpha)$};
            \node at (7,9) {$(Y,\beta)$};
        \end{tikzpicture}
    \end{subfigure}
    \caption[A non-MDP homomorphism example.]{\textbf{(Left):} Non-MDP aggregation, \textbf{(Middle):} Approximate aggregation, \textbf{(Right):} Violating policy uniformity condition}
    \label{fig:example-homo}
\end{figure*}

\paradot{Non-MDP Example} In the first case, this example (see Figure \ref{fig:example-homo}:Left) demonstrates a non-MDP aggregation. Let the problem have the following region uniform transition probability matrix in the observation-action space:
\beq
\mu^\pi =
\begin{bmatrix}
    0 & 1 & 0 & 0  & 0 \\
    0 & 0 & 1 & 0  & 0 \\
    0 & 0 & 0 & 1/2 & 1/2 \\
    1 & 0 & 0 & 0  & 0 \\
    1/2 & 0 & 0 & 1/4 & 1/4
\end{bmatrix}
\eeq
where $\mu^\pi_{ij}$ is the probability of reaching region $R_j$ if the current $(o,a) \in R_i$\footnote{The indexes should be read in order --- i.e.\ $1,2,3,4a,4b$.}. Formally, $\mu^\pi_{ij} = \sum_{(o'a') \in R_j} \mu^\pi(o'a'\|oa \in R_j)$. The marginalized process is expressed as
\beqn
\mu_\psi(X\|ha) = \mu_\psi(X\|oa) = \sum_{o' : \psi(oao')=X, a' \in \A} \mu^\pi(o'a'\|oa).
\eeqn

This is not even approximately an MDP. It is evident from the following probabilities of reaching state $X$ from itself.
\bqan
\mu_\psi(X\|oa \in R_{4a}) &= \mu^\pi_{4a3} + \mu^\pi_{4a4a} + \mu^\pi_{4a4b} = 0 \\
\mu_\psi(X\|oa \in R_{4b}) &= \mu^\pi_{4b3} + \mu^\pi_{4b4a} + \mu^\pi_{4b4b} = \frac{1}{2}
\eqan

The above equations show that reaching state $X$ from the regions $R_{4a}$ and $R_{4b}$ are different. But, we can still aggregate the regions if they have similar action-values.
Let the regional rewards be $r = \begin{bmatrix} 0 & 0 & 0 & \g & 0 \end{bmatrix}$. The regional action values can be expressed as
\beq
Q^\pi_\mu(oa) = r(oa) + \g \sum_{o' \in \O,a' \in \A} Q^\pi_\mu(o'a')\mu^\pi(o'a'\|oa).
\eeq

It translates into a vectorization form with each region $i$ has the action value expressed as
\beq
Q_i = r_i + \g \sum_{j} \mu^\pi_{ij}Q_j.
\eeq
By solving this system of equations we get the following region uniform action-value vector,
\beq
Q = \begin{bmatrix}
    c-2 & \g^2c & \g c & c & c
\end{bmatrix}
\eeq
where $c = \frac{2}{1-\g^3} \geq 2$. Hence, the example shows that even the regions $R_{4a}$ and $R_{4b}$ have non-MDP dynamics, they can still be aggregated due to the same action-values.

\paradot{Approximate Q-value Example} Now for the second case, we perform an approximate aggregation in region $R_3$ (see Figure \ref{fig:example-homo}:Middle). Let the reward in $R_{3b}$ be $\eps$. The updated reward vector is $r = \begin{bmatrix} 0 & 0 & 0 & \eps & \g & 0 \end{bmatrix}$. By keeping everything same as in the first example, the new transition matrix is given as
\beq
\mu^\pi =
\begin{bmatrix}
    0 & 1 & 0 & 0  & 0 & 0 \\
    0 & 0 & 1/2 & 1/2  & 0 & 0\\
    0 & 0 & 0 & 0 & 1/2 & 1/2 \\
    0 & 0 & 0 & 0 & 1/2 & 1/2 \\
    1 & 0 & 0 & 0  & 0 & 0 \\
    1/2 & 0 & 0 & 0 & 1/4 & 1/4
\end{bmatrix}
\eeq

Similar to the first case, we get the aggregated action-values as
\beq
Q = \begin{bmatrix}
    c-2 & \frac{\g\eps}{2}+ \g^2c & \g c & \g c + \eps & c & c
\end{bmatrix}
\eeq
where $c = \frac{\g^2\eps + 4}{2(1-\g^3)} \geq 2$. It shows that the regions $R_{3a}$ and $R_{3b}$ can be approximately aggregated together.

\paradot{Approximate Policy Example} As the last case, let us divide the region $R_2$ into two regions with different policies (see Figure \ref{fig:example-homo}:Right). Let the policy be approximately similar, i.e.\ $\abs{\pi(a\|o \in R_{2a}) - \pi(a\|o \in R_{2b})} = \eps'$ for all $\psi(oa) = (Y,\beta)$. The region $R_1$ is approachable from itself because the $\eps'$ weight of the aggregated action $\beta$ in region $R_{2b}$ is distributed to the aggregated action $\alpha$ in region $R_{1}$. This effectively translates the transition matrix into the following region uniform transition matrix,
\beqn
\mu^\pi =
\begin{bmatrix}
    \eps' & 1/2 & 1/2-\eps' & 0 & 0  & 0 & 0 \\
    0 & 0 & 0 & 1/2 & 1/2  & 0 & 0\\
    0 & 0 & 0 & 1/2 & 1/2  & 0 & 0\\
    0 & 0 & 0 & 0 & 0 & 1/2 & 1/2 \\
    0 & 0 & 0 & 0 & 0 & 1/2 & 1/2 \\
    1 & 0 & 0 & 0 & 0  & 0 & 0 \\
    1/2 & 0 & 0 & 0 & 0 & 1/4 & 1/4
\end{bmatrix}
\eeqn

The reward structure is the same as in the previous case, $r = \begin{bmatrix} 0 & 0 & 0 & 0 & \eps & \g & 0 \end{bmatrix}$. Finally, we get the action-value vector as
\beq
Q = \begin{bmatrix}
    \g^2 g(\eps')(\eps/2 + \g c) \\ \frac{\g\eps}{2} + \g^2 c \\ \frac{\g\eps}{2} + \g^2 c \\ \g c \\ \g c + \eps \\ c \\ c
\end{bmatrix}^T
\eeq
where $c = \frac{4 + \g^2\eps g(\eps')}{2(1-\g^3g(\eps'))}$ and $g(\eps') := \frac{1-\eps'}{1-\g\eps'}$. This final case shows that although the regions $R_{2a}$ and $R_{2b}$ have different policies they still have same action-values. Hence, the regions can be aggregated together exactly. It allows us to have coarser maps than ESA permits.

\section{Summary}\label{sec:Disc}

In this chapter we analyzed \emph{approximate} homomorphisms of a general history-based environment. The main idea was to find a \emph{deterministic} policy in the abstract state-action space such that, when uplifted, it is a near-optimal policy in the original problem. Using the surrogate MDP technique, we proved near-optimal performance bounds for both MDP (Theorem \ref{thm:psimdpstar}ii) and Q-uniform homomorphisms (Theorem \ref{thm:psiQstar}ii). In general, there does not exist a near-optimal \emph{deterministic} uplifted policy for V-uniform homomorphisms. However, we proved a weaker result (Theorem \ref{thm:psiVstar}ii) for the \emph{exact} V-uniform homomorphisms: the optimal actions of the member histories are mapped to the same abstract optimal action at the corresponding state of the surrogate MDP.

\paradot{Versus ESA}
We borrow some notation and techniques from \citet{Hutter2016}. But this work is crucially different from ESA. Apart from the obvious difference of being a generalization to homomorphisms, there are also some other key differences. In ESA, the policy $\pi$ is required to be state uniform for various of the main results \cite[Theorems 1,5,6 and 9]{Hutter2016}, whereas we do not make any such assumption.
Moreover, at the first instance our results might look \emph{almost} similar to ESA but the important difference is in the definition of $\eps(s)$ which is \emph{not} a simple addition of both state and action representation errors. It is a non-trivial weighted average of representation errors.
The extra conditions on Theorems \ref{thm:psiVpi} and \ref{thm:psiVstar} are \emph{weaker} than the policy-uniformity condition, \emph{cf.} \cite[Theorems 6 and 9]{Hutter2016}, and do not have direct counterparts in ESA.

\paradot{Versus Options} As briefly addressed in the introduction section, the options framework does not have any provable performance guarantees, yet. Whereas our restriction of uplifting a state-based policy and using a deceptively ``spatial-looking'' abstraction of actions have such guarantees. Since we allow the action mapping part of $\psi$ to be a function of history, which is arguably a function of time, our framework also admits temporal dependencies. It enables $\psi$ to model much more than mere renaming of the original action space distributions. A thorough comparison between these two approaches is left for future work.

\paradot{Special Environment Classes} In general, we do not use/exploit structure of the underlying original process. However, effects of a specific model class can be expressed in terms of the \mbox{(action-)value} functions. For example, if the original process is a finite state POMDP then our results provide the performance-loss guarantee by representing a belief-state based value function of the POMDP by a state-based value function. A similar argument can be rendered for various other types of model classes. Since the results in this work are general, they are not expected to gracefully scale down to some class specific tight performance bounds. Nevertheless, it is an important agenda to get the scaled-down variants of these results for some specific model classes.

%

\paradot{Fully Generalized Homomorphism} In a sense our results are not \emph{fully} general since we assumed a structure on the homomorphism. A fully generalized homomorphism formulation with no $\psi(ha) = f(h)b$ assumption would be an interesting extension of this work. However, lifting this condition may lead to some bizarre non-causal effects, e.g.\ the \emph{current} abstract state would be decided by the \emph{next} action!

\chapter[Action Sequentialization]{Action Sequentialization{\\ \it \small This chapter is an adaptation of \citet{Majeed2020}}}\label{chap:action-seq}

\begin{outline}
    Many real-world problems have large action-spaces such as in Go, StarCraft, protein folding, and robotics or are non-Markovian, which cause significant challenges to RL algorithms.
    In this chapter we address the large action-space problem by sequentializing actions, which can reduce the action-space size significantly, even down to two actions at the expense of an increased planning horizon. We provide explicit and exact constructions and equivalence proofs for all quantities of interest for arbitrary history-based processes. In the case of MDPs, this could help RL algorithms that bootstrap.
    In this chapter we show how action-binarization in the non-MDP case can significantly improve ESA bounds. As shown in \Cref{chap:extreme-abs}, ESA allows casting any (non-MDP, non-ergodic, history-based) RL problem into a fixed-sized non-Markovian state-space with the help of a surrogate Markovian process. On the upside, ESA enjoys similar optimality guarantees as Markovian models do. But a downside is that the size of the aggregated state-space becomes exponential in the size of the action-space. In this chapter, we patch this issue by binarizing the action-space. We provide an upper bound on the number of states of this binarized ESA that is logarithmic in the original action-space size, a double-exponential improvement.
\end{outline}


\section{Introduction}

Recall that an RL setting can be described by an agent-environment interaction \cite{Sutton2018}. The agent $\pi$ has an action-space $\A$ to choose its actions from while the environment $\mu$ reacts to the action by dispensing an observation and a reward from the sets $\O$ and $\R \subseteq \SetR$, respectively, see \Cref{fig:interaction}. Under a suitable definition of the ``state'' of environment, the resultant set of states might be huge or even infinite \cite{Powell2011}.

\begin{figure}[!ht]
    \centering
    \begin{tikzpicture}[
        node distance = 7mm and -3mm,
        innernode/.style = {draw=black, thick, fill=gray!30,
            minimum width=2cm, minimum height=0.5cm,
            align=center},
        outernode/.style = {draw=black, thick, rounded corners, fill=none,
            minimum width=1cm, minimum height=0.5cm,
            align=center},
        endpoint/.style={draw,circle,
            fill=gray, inner sep=0pt, minimum width=4pt},
        arrow/.style={->,thick,rounded corners},
        point/.style={circle,inner sep=0pt,minimum size=2pt,fill=black},
        skip loop/.style={to path={-- ++(#1,0) |- (\tikztotarget)}},
        every path/.style = {draw, -latex}
        ]
        \node (start) {Start};
        \node (h) [innernode]{History};
        \node (phi) [innernode, below=of h]{Abstraction $(\psi)$};
        \node (pi) [innernode, below=of phi]      {Policy $(\pi)$};
        \node [outernode, align=left, inner sep=0.5cm, fill=none, fit=(h) (phi) (pi)] (agent) {};
        \node[below right, inner sep=3pt, fill=none] at (agent.north west) {Agent};
        \node[outernode, left=120pt of agent, fit=(agent.north)(agent.south), inner sep=0pt] (env) {};
        \node[below right, inner sep=0pt, fill=none, rotate=90, anchor=center] at (env) {Environment $(\mu)$};
        \node[endpoint, above= -2pt of env] (or_env) {};
        \node[endpoint, below= -2pt of env] (a_env) {};
        \node[endpoint, below= -2pt of agent] (a_agent) {};
        \node[endpoint, above= -2pt of agent] (or_agent) {};

        \path (a_agent) edge[arrow,bend left] node[below]{$a_t$} (a_env);
        \path (or_env) edge[arrow, bend left] node[above]{$o_{t+1}r_{t+1}$} (or_agent);
        \path (or_agent) edge[arrow] node[right]{$o_{t+1}r_{t+1}$} (h);
        \path (h) edge[arrow] node[above=0.5pt,midway,name=h_phi,point]{} node[right]{$h_t$} (phi);
        \path (phi) edge[arrow] node[left]{$\psi(h_t)$} (pi);
        \path (pi) edge[arrow] node[above=0.5pt,midway,name=pi_a,point]{} node[left]{$a_t$} (a_agent);
        \path (pi_a) edge[arrow, skip loop=1.75cm] (h.east);
        \path (h_phi) edge[->, skip loop=-1.5cm, thick, rounded corners] (h.west);
    \end{tikzpicture}
    \caption{The agent-environment interaction, revisited.}
    \label{fig:interaction}
\end{figure}

To the best of our knowledge, extreme state aggregation (ESA), a non-MDP abstraction framework, is the only method which provides a provable upper bound on the size of required state-space uniformly, which depends only on the size of the action-space, discount factor and the optimality gap, for all problems \cite{Hutter2016}. However, a \emph{downside} of ESA is that the size of the aggregated state-space is \emph{exponential} in the size of the action-space, see \Cref{thm:esa}.
In this chapter, we move the research further in this direction. We provide a variant of ESA that can help provide much more compact representations as compared to MDP abstractions. Our approach improves the key upper bound on the size of the state-space in the original ESA framework.

The key trick to achieve this improvement is to sequentialize the actions. Often $\A$ already has a natural vector structure $\B^d$, e.g.\ real valued activators in robotics ($\B = \SetR$) or (padded) words ($\B = \{a, \dots, z, \textvisiblespace\}$), or more generally $\B_1 \times \dots \times \B_d$, where $\B$ denotes a finite set of \emph{decision symbols}.
In this case, sequentialization is natural, but one may further want to binarize $\B$ to $\SetB^{d'}$ esp. for ESA (\Cref{thm:bin-esa}).
If actions are (converted to) $\B$-ary strings,
the RL agent could execute the action ``bits'' sequentially with fictitious dummy observations in-between.
%

The example in \Cref{fig:example} provides a naive way of \emph{sequentializing} the actions in an MDP. Apparently, it might seem that such sequentialization of the action-space would be of no help, as the state-space would blow up, and it is simply shifting the problem from the actions to the states. However, we prove that this can be avoided. Most importantly, the universal upper bound on the effective state-space of ESA remains valid. Our scheme of sequentializing the actions achieves a \emph{double exponentially} improved bound; compare \Cref{thm:bin-esa} with \Cref{thm:esa}.

%
%
%

\begin{figure}
    \centering
    \begin{tikzpicture}[level/.style={sibling distance=40mm/#1,thick},square/.style={regular polygon,regular polygon sides=4,minimum size=1mm, inner sep=0,thick}]
        \node [graycircle] (z) {$s$}
        child {node [square,draw,inner sep=0, minimum size=14mm,thick] (a) {$s_0$}
            child {node [graycircle] (b) {$s'(sa_{00})$}
                edge from parent node[above left] {$\{a_{00}\}$}}
            child {node [graycircle] (g) {$s'(sa_{01})$}
                edge from parent node[above right] {$\{a_{01}\}$}}
            edge from parent node[above left] {$\{a_{00}, a_{01}\}$}
        }
        child {node [square,draw,inner sep=0, minimum size=14mm,thick] (j) {$s_1$}
            child {node [graycircle] (k) {$s'(sa_{10})$}
                edge from parent node[above left] {$\{a_{10}\}$}}
            child {node [graycircle] (l) {$s'(sa_{11})$}
                edge from parent node[above right] {$\{a_{11}\}$}}
            edge from parent node[above right] {$\{a_{10}, a_{11}\}$}
        };
    \end{tikzpicture}
    \caption[A simple sequentialization example in an MDP.]{A simple sequentialization example in an MDP. To see how the actions sequentialized, consider an agent which has to choose among four alternatives, e.g.\ $\A = \{a_{00}, a_{01}, a_{10}, a_{11}\}$. Let the agent receive a state signal $s$ from the environment. It first decides between a partition of actions, say two actions each, $\{a_{00}, a_{01}\}$ and $\{a_{10}, a_{11}\}$. \emph{After} it has decided on the bifurcation, the \emph{extended} state becomes $s_x$, where $x$ is the decision of the first stage. Now the agent \emph{on} this \emph{extended} state $s_x$ makes its second decision to choose from the \emph{short-listed} set of actions, and moves to the next state $s'(sa)$. This way, the agent only selects among \emph{two} alternatives at each stage by \emph{tripling} the effective state-space.}
    \label{fig:example}
\end{figure}

Along the way, we also establish some other key results, which are interesting and useful on their own. We provide explicit and exact constructions and equivalence proofs for all quantities of interest (\Cref{sec:bin-esa}) for arbitrary history-based processes, which are then used to double-exponentially improve the previous ESA bound (\Cref{thm:bin-esa}).
In the special case of MDPs, we show that through a sequentialized scheme (of augmenting observations with partial decision vectors) the resultant ``sequentialized process'' preserves the Markov property (\Cref{pro:mdpismdp}), which should help RL algorithms that bootstrap, though demonstrating or proving this is left for future work. Moreover in \Cref{lem:uplift}, we prove that the stipulated sequentialization scheme preserves near-optimality, i.e.\ a near-optimal policy of the sequentialized process is also near-optimal in the original process.

\section{Problem Setup}\label{sec:setup}

In \Cref{chap:grl,chap:arl}, we setup the GRL and ARL frameworks respectively. Now, we setup the scheme of sequentializing the decision-making process to reduce the effective action-space for the agent. Especially for our main result about ESA, we sequentialize the action-space to \emph{binary} decisions.

For this chapter we assume that the size of the action-space is finite and $\abs{\A} = \abs{\B}^d$ for some $d \in \SetN$. The latter assumption is not restrictive, as we can extend the set of actions by repeating some of the actions. It is important to note that these repeated actions should be labeled distinctly. This way we can have a bijection between the original (extended) action-space and the sequentialized one. For example, let an action set be $\{a_1,a_2,a_3,a_4,a_5\}$. One possible extended set, with repetition, for $\abs{\B} = 2$ and $d = 3$ is $\A := \{a_1,a_2,a_3,a_4,a_5,a_{5_1},a_{5_2},a_{5_3}\}$. Where, the actions $a_{5_i}$ for $i \leq 3$ are functionally the same as $a_5$, i.e.\ taking $a_5$ or any $a_{5_i}$ action has the same effect, but they are labeled distinctly.

\begin{remark}
Note that continuous action-spaces could be approximately sequentialized/binarized by using the binary expansion of reals to some desired precision, say $\delta$. Our main bound will only depend logarithmically on $\delta$.
\end{remark}

In the next section, we formulate another agent which only works in the ``sequentialized'' action-space, i.e.\ it takes decisions in a sequence of $\B$-ary choices, and responds \emph{only} to the histories generated by this $\B$-ary interaction, see \Cref{fig:bianary-interaction}. In the extreme case, this agent may only take binary decisions by sequentializing the action-space to binary sequences, i.e.\ $\B = \SetB$.

\section{Sequential Decisions}

We want to transform the action-space into a sequence of $\B$-ary decision code words, which are decided sequentially. To map the actions between the original action-space and the sequentialized decision-space, we define a pair of encoder and decoder functions. Let $C$ be any encoding function that maps the actions to a $\B$-ary decision code of length $d$, i.e.\ $C : \A \to \B^d$. A decoder function $D : \B^d \to \A$ sends the $\B$-ary decision sequences generated by $C$ back to the actions in the (original) action-space.
In this work, the choices of $C$ and $D$ do not matter\footnote{The choice could matter in practical implementation of such agents. For example, a clever choice of such functions might produce sparse $\B$-ary decision sequences for the optimal actions, hence it may facilitate in learning such optimal $\B$-ary decision sequences.} as long as they are bijections such that $D(C(\A)) = \A$.


This sequentialization of the action-space changes the interaction history. The generated histories are no longer members of $\H$. The goal of this paper is to argue that an agent can still work with the sequentialized histories only. The agent can plan, learn and interact with the environment using $\B$-ary actions and keeping sequentialized histories. Hence, the agent can be agnostic to the original action-space and with the state provided through an appropriate abstraction it can only take $\B$-ary decisions at every time step, see \Cref{fig:bianary-interaction}.

\begin{figure}[!ht]
    \centering
    \begin{tikzpicture}[
        node distance = 7mm and -3mm,
        innernode/.style = {draw=black, thick, fill=gray!30,
            minimum width=2cm, minimum height=0.5cm,
            align=center},
        outernode/.style = {draw=black, thick, rounded corners, fill=none,
            minimum width=1cm, minimum height=0.5cm,
            align=center},
        endpoint/.style={draw,circle,
            fill=gray, inner sep=0pt, minimum width=4pt},
        arrow/.style={->,thick,rounded corners},
        point/.style={circle,inner sep=0pt,minimum size=2pt,fill=black},
        skip loop/.style={to path={-- ++(#1,0) |- (\tikztotarget)}},
        every path/.style = {draw, -latex}
        ]
        \node (start) {Start};
        \node (h) [innernode]{History};
        \node (phi) [innernode, below=of h]{Abstraction $(\psi)$};
        \node (pi) [innernode, below=of phi]      {Policy $(\u \pi)$};
        \node [outernode, align=left, inner sep=0.5cm, fill=none, fit=(h) (phi) (pi)] (agent) {};
        \node[below right, inner sep=3pt, fill=none] at (agent.north west) {Agent};
        \node[outernode, left=90pt of agent, fit=(agent.north)(agent.south), inner sep=0pt] (bin_env) {};
        \node[outernode, left=143pt of bin_env, fit=(agent.north)(agent.south), inner sep=0pt] (env) {};
        \node[below right, inner sep=0pt, fill=none, rotate=90, anchor=center] at (env) {Environment $(\mu)$};
        \node[below right, inner sep=0pt, fill=none, rotate=90, anchor=center] at (bin_env) {Seq. Environment $(\u \mu)$};
        \node[below= 2pt of bin_env] (D) {$\langle C, D\rangle$};
        \node[endpoint, above= -2pt of env] (or_env) {};
        \node[endpoint, below= -2pt of env] (a_env) {};
        \node[endpoint, left=4pt of bin_env.north] (or_bin_env_in) {};
        \node[endpoint, right=4pt of bin_env.north] (or_bin_env_out) {};
        \node[endpoint, left=4pt of bin_env.south] (a_bin_env_out) {};
        \node[endpoint, right=4pt of bin_env.south] (a_bin_env_in) {};
        \node[endpoint, below= -2pt of agent] (a_agent) {};
        \node[endpoint, above= -2pt of agent] (or_agent) {};

        \path (a_agent) edge[arrow,bend left] node[below]{$x_t$} (a_bin_env_in);
        \path (a_bin_env_out) edge[arrow,bend left] node[below]{$a_k$} (a_env);
        \path (or_bin_env_out) edge[arrow, bend left] node[above]{$o_{t+1}r_{t+1}$} (or_agent);
        \path (or_env) edge[arrow, bend left] node[above]{$o_{k+1}r_{k+1}$} (or_bin_env_in);
        \path (or_agent) edge[arrow] node[right]{$o_{t+1}r_{t+1}$} (h);
        \path (h) edge[arrow] node[above=0.5pt,midway,name=h_phi,point]{} node[right]{$h_t$} (phi);
        \path (phi) edge[arrow] node[left]{$\psi(h_t)$} (pi);
        \path (pi) edge[arrow] node[above=0.5pt,midway,name=pi_a,point]{} node[left]{$x_t$} (a_agent);
        \path (pi_a) edge[arrow, skip loop=1.75cm] (h.east);
        \path (h_phi) edge[->, skip loop=-1.5cm, thick, rounded corners] (h.west);
    \end{tikzpicture}
    \caption[The agent-environment interaction through the sequentialization scheme.]{The agent-environment interaction through the sequentialization scheme. Note that the sequentialized-environment block (or a $\B$-ary ``mock'') manages two different time-scales $t$ and $k$. It is simply a buffer block which knows (de)coders $D$ and $C$ (see text for details). It buffers the input $\B$-ary actions and dispatches the buffered observation and reward. Once a complete $\B$-ary decision sequence is produced by the agent the $\B$-ary mock decodes the encoded actions to the original environment to continue the interaction loop. We can consider this sequentialized environment as a ``middle layer'' between the agent and the original environment.}
    \label{fig:bianary-interaction}
\end{figure}

We construct a history transformation function which maps the original histories from $\H$ to some sequentialized histories in $\u \H$, where
\beq
\u \H := \bigcup_{t=1}^\infty \underbrace{\O \times \R \times \B \times \dots \times \O \times \R \times \B}_{(t-1)-\text{step interactions}}\times\O\times\R
\eeq

It is worth noting that $\u \H$ does not (directly) contain any information about $\A$, cf.\ \Cref{eq:history}. The agent experiencing histories from this set would not be aware of $\A$.

\begin{definition}[History transformation function]
    A history transformation function is expressed with $g : \H \to \u \H$. The map is recursively defined for any history $h$, action $a$, next observation $o'$ and next reward $r'$ as
    \beq
    g(hao'r') := g(h)\v x_1or_\bot\v x_2or_\bot \dots \v x_do'r' \ \text{and} \ g(e) := e
    \eeq
    where $\v x := C(a)$, $o$ is the last observation of the history $h$, $e$ denotes the ``initial'' history\footnote{The initial history $e \in \O \times \R$ is similar to the initial state in standard RL. It is dispatched by environment without any input at the start.}, and $r_\bot$ is any fixed real-value. In this work, we assume\footnote{This assumption is not much of a restriction, if $r_\bot \notin \R$ then we can extend the reward space by $\R \cup \{r_\bot\}$.} that $r_\bot \in \R$ and $r_\bot = 0$.
\end{definition}

In the above construction, we chose to repeat the last observation $o$ in between the real interactions with the environment. This is not the only possible choice, we can choose a \emph{dummy} observation $o_\bot \in \O$ instead without affecting the claims. For brevity, we define $\v o$ and $\v r_\bot$ as $d$-dimensional constant vectors of $o$ and $r_\bot$, respectively. These vectors are then ``welded'' with $\v x$ to succinctly replace $x_1or_\bot\dots x_ior_\bot$ with $\vo{xor_\bot}_{\leq i}$. Note that we do not sequentialize the observations. It can be done, but we believe it is not useful in any way.

However, if the original process $P$ is an MDP, i.e.\ the most recent observation is the state of $P$, then there is another interesting option possible for $o_\bot$: extend the observation space $\O$ with $\O \times \cup_{i=0}^{d-1} \B^{i} =: \t \O$, and let the $\B$-ary mock dispatch an appropriate observation at every partial $\B$-ary decision vector $\v x_{<i}$ as:
\beq\label{eq:tobs}
\t o_\bot \coloneqq (o, \v x_1, \v x_2, \dots, \v x_{i-1}) \in \t \O
\eeq

It is not hard to show that with this sequentialization scheme the resultant sequentialized decision process is also an MDP over $\t \O$, see \Cref{pro:mdpismdp}.
By doing so, we end up with a state-space of size $|\t \O| = \abs{\O}(\abs{\A}-1) \leq \abs{\O \times \A}$. It is clear that this recasting of the original problem might not be very helpful for some Monte-Carlo like tree search methods, however, it might significantly improve the performance of some temporal-difference like algorithms,, e.g.\ Q-learning \cite{Watkins1992}, when applied to huge action-spaces.

Note that $g$ is injective, but it may not be a bijection. There are many sequentialized histories $\tau \in \u \H$ which are not mapped by $g$, i.e.\ there does not exist any history in $\H$ such that $\tau = g(h)$. For such sequentialized histories we define $g\inv(\tau) := \bot$, which formally allows us to talk about $g\inv$ without worrying about it being undefined on some arguments. The choice of this definition is not important. As a matter of fact, there is no particular significance of the symbol $\bot$. What makes this choice insignificant is the fact that the environment does not react until the agent has taken $d$ $\B$-ary actions. Some histories not covered by $g$ are such ``partial'' sequentialized histories where the actual environment does not react. Note that the rewards of the sequentialized setup are zero ($r_\bot := 0$) unless the sequentialized history length is a multiple of $d$, i.e.\ a ``complete'' sequentialized history. See \Cref{fig:example-2} for an example sequentialized/binarized setup for $\B = \SetB$ and $d=2$.

\begin{figure}
    \centering
    \begin{tikzpicture}[level/.style={sibling distance=40mm/#1,thick, inner sep=2pt},square/.style={regular polygon,regular polygon sides=4,minimum size=1mm, inner sep=0,thick}]
        \node [graycircle] (z){$\tau$}
        child {node [square,draw, minimum size=14mm,thick] (a) {$\tau 0$}
            child {node [graycircle] (b) {$\tau 00 o' r'$} edge from parent node[above left] {$x_2 = 0$}}
            child {node [graycircle] (g) {$\tau 01 o' r'$} edge from parent node[above right] {$x_2 = 1$}}
            edge from parent node[above left] {$x_1 = 0$}
        }
        child {node [square,draw, minimum size=14mm,thick] (j) {$\tau 1$}
            child {node [graycircle] (k) {$\tau 10 o' r'$} edge from parent node[above left] {$x_2 = 0$}}
            child {node [graycircle] (l) {$\tau 11 o' r'$} edge from parent node[above right] {$x_2 = 1$}}
            edge from parent node[above right] {$x_1 = 1$}
        };
    \end{tikzpicture}
    \caption[A simple sequentialization/binarization example in a deterministic history-based process.]{A simple sequentialization/binarization example in a deterministic history-based process. The $\B$-ary/binary decisions are on the edges. For brevity, we do not represent $o_\bot$ and $r_\bot$ in the figure. For example, it should be apparent that $\tau 1 o_\bot r_\bot \equiv \tau 1$. The circles represent complete histories while the squares indicate partial histories.}
    \label{fig:example-2}
\end{figure}

Any agent which interacts with the environment through this sequentialized scheme would effectively experience the following sequentialized environment.
\begin{definition}[Sequentialized environment]\label{def:uP}
    For any $\B$-ary action $x \in \B$, sequentialized history $\tau \in \u \H$, and any partial extension $\vo{xor_\bot}_{<i}$ for $i \leq d$ the probability of receiving $o'$ and $r'$ as the next observation and reward is as follows:
    \bqan
    \u \mu(o'r'\|\tau\vo{xor_\bot}_{<i} x)
    := \begin{cases}
        \mu(o'r'\|ha) &\text{if } \tau\vo{xor_\bot}_{<i} xo'r' = g(hao'r') \\
        1 &\text{if } o'r' = or_\bot \\ &\phantom{}\text{ and } g^{-1}(\tau\vo{x or_\bot}_{<i}  x o' r') = \bot \\
        0 &\text{otherwise}
    \end{cases} \numberthis
    \eqan
    where $\mu$ is the actual environment.
\end{definition}

As highlighted before, the history $h$ can not be empty, so the above definition is well-defined.

The next step is to define the (action) value functions for this sequentialized agent-environment interaction. Let $\u \pi$ be a policy such that $\u \pi : \u \H \to \Dist(\B)$. Then, we define the (action) value functions similar to the original agent-environment interaction case. For any $\tau \in \u \H$ and $x \in \B$, the action-value function is defined as
\beq
\u Q_\mu^{\u \pi}(\tau x) := \sum_{o'r'} \u \mu(o'r'\|\tau x) \left(r' + \lambda \u V_\mu^{\u \pi}(\tau x o'r')\right)\label{eq:ube}
\eeq
where $\u V_\mu^{\u \pi}(\tau) := \sum_{x \in \B} \u Q_\mu^{\u \pi}(\tau x)\u \pi(x\|\tau)$ and $\lambda$ is the discount factor of this sequentialized problem. Similar to the original optimal (action) value functions, $\u Q_\mu^*$ and $\u V_\mu^*$ denote the optimal (action) value functions of the sequentialized problem. The discount factor $\lambda$ plays an important role in trading off the size of the action-space with the planning horizon. Recall that the size of the original action-space is $\abs{\A} = \abs{\B}^d$. Therefore, if the agent has to make $d$ $\B$-ary decisions for each original action the discount factor after $d$ $\B$-ary actions should be $\g$, i.e.\ $\lambda^d = \g$. This implies that $\lambda = \g^{1/d} < 1$ as $\g < 1$ and $d < \infty$.

This completes the problem setup. We have defined an agent $\u \pi$ which only makes $\B$-ary decisions and reacts to sequentialized histories, see \Cref{fig:bianary-interaction}. As expected, the set of sequentialized histories $\u \H$ is blown out in comparison with $\H$. However, in \Cref{sec:esa}, we show that, under certain non-Markovian abstractions of either $\H$ or $\u \H$, this expansion is not ``harmful''.

\section{Sequentialized Processes and Values}\label{sec:bin-esa}

In this section we formally define the sequentialized process and related value functions. But first we need a couple of important quantities to state our main results. For any $\B$-ary vector $\v x \in \B^i$ where $i \leq d$, we define $\A(\v x) := \{ a \in \A : \v x \sqsubseteq C(a) \}$ a \emph{restricted} set of actions. Moreover, for any history $h$, an action-value function maximizer on this restricted set is defined as $\pi^*(h,\v x) \in \argmax_{a \in \A(\v x)} Q_\mu^*(ha)$ where ties are broken uniformly randomly.

We start off the section by noting an important relationship between the sequentialized process and the original process.

\begin{proposition}[Sequentialized Process]\label{pro:uPtoP}
    For any $o' \in \O, r' \in \R, h \in \H$ and $D(\v x) =: a \in \A$, the following relationship holds between $\u \mu$ and $\mu$:
    \beq
    \u \mu(o'r'|g(h)\vo{xor_\bot}_{<d} \v x_d) =\mu(o'r'\|ha)
    \eeq
\end{proposition}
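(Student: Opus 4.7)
The proposition is essentially a direct unpacking of the first case of Definition~9.4 (the definition of $\u\mu$), so the plan is to identify the argument of $\u\mu$ with a value of the history transformation $g$ on an extended original history, and then read off the conclusion.

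First I would fix the notation. Set $\tau := g(h)$ and $i := d$, so that the sequentialized history at which $\u\mu$ is being evaluated takes the form $\tau\vo{xor_\bot}_{<d}\,\v x_d$ used in Definition~9.4. The goal is then to verify that this sequentialized history, extended by the next observation–reward pair $o'r'$, equals $g(hao'r')$ for the $a := D(\v x)$ specified in the statement; once that equality is established, the first branch of Definition~9.4 immediately gives $\u\mu(o'r'\|g(h)\vo{xor_\bot}_{<d}\v x_d) = \mu(o'r'\|ha)$, which is the claim.

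The core step is therefore the identity
\[
  g(hao'r') \;=\; g(h)\,\vo{xor_\bot}_{<d}\,\v x_d\,o'r'.
\]
To prove it I would invoke the recursive definition of $g$ given just above Definition~9.4, which unfolds as $g(hao'r') = g(h)\,C(a)_1\,o\,r_\bot\,C(a)_2\,o\,r_\bot\cdots C(a)_d\,o'r'$. Since $C$ and $D$ are bijections satisfying $D(C(\A)) = \A$, the assumption $a = D(\v x)$ gives $C(a) = C(D(\v x)) = \v x$. Substituting $C(a)$ by $\v x$ and regrouping the repeated $or_\bot$ blocks into the shorthand $\vo{xor_\bot}_{<d}$ yields exactly the right-hand side above.

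Finally, I would note that because $g$ is injective on $\H$ and the extended history $g(hao'r')$ lies in the image of $g$, we are indeed in the first (rather than the second or third) branch of Definition~9.4, so no ambiguity arises. No obstacle is expected here — the statement is really a consistency check that Definition~9.4 truly captures the ``pass‑through'' behaviour of the sequentialized environment $\u\mu$ on completed $\B$-ary decision blocks, and the proof is just a careful rewrite of the definitions.
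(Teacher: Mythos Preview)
Your proposal is correct and follows exactly the paper's own approach: the paper's proof says only that the result ``trivially follows from \Cref{def:uP} by evaluating the definition at $i = d$ with $D(\v x_{<d}\v x_d) = a$,'' which is precisely what you do, just with the intermediate identity $g(hao'r') = g(h)\vo{xor_\bot}_{<d}\v x_d\,o'r'$ spelled out in full.
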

\begin{proof}
    The proof trivially follows from \Cref{def:uP} by evaluating the definition at $i = d$ with $D(\v x_{<d} \v x_d) = a$.
\end{proof}

When the original process is an MDP then there exists a sequentialization scheme such that the sequentialized process is also Markovian.

\begin{theorem}[Sequentialization preserves Markov property]\label{pro:mdpismdp}
    If $\mu$ is an MDP over $\O$, and the observations from the $\B$-ary mock are $\t \O \coloneqq  \O \times \cup_{i=0}^{d-1} \B^{i}$, then $\u \mu$ is an MDP over $\t \O$.
\end{theorem}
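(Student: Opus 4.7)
The plan is to exhibit $\u\mu$ explicitly as a transition kernel over $\t\O$ and observe that its value depends only on the most recent augmented observation and the most recent $\B$-ary action. First I would note that any sequentialized history $\tau$ can be parsed as a sequence of completed interaction blocks followed by a possibly partial block, so its most recent dispatched observation has the form $\t o = (o, \v x_{<i})$ for some original observation $o \in \O$ and some partial decision prefix $\v x_{<i} \in \B^i$ with $0 \leq i < d$. This $\t o \in \t\O$ is what we will treat as the current state of $\u\mu$.

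Next I would case-split on $i$. When $i < d-1$, the $\B$-ary mock has not yet received a complete decision word, so by construction (and because $r_\bot = 0$) $\u\mu$ deterministically dispatches the augmented observation $(o, \v x_{<i}\, x) \in \t\O$ with reward $0$; this transition depends only on $\t o$ and $x$. When $i = d-1$, the action $x$ completes a decision word $\v x := \v x_{<i} x$, and by \Cref{pro:uPtoP} the mock emits a pair $(o',r')$ sampled from $\mu(\cdot\|ha)$ where $a = D(\v x)$ and $h$ is the original history underlying $\tau$; the mock then re-enters an empty-prefix state so that $\t o' = (o', \epsilon) \in \t\O$. The key step is to invoke the MDP assumption on $\mu$ to replace $\mu(o'r'\|ha)$ with $\mu(o'r'\|oa)$, which depends only on $o$ (read off from $\t o$) and $a = D(\v x_{<i}\,x)$ (read off from $\t o$ and $x$).

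Combining the two cases, I would conclude that for every $\t o, x, \t o', r'$ there is a function $\b\mu(\t o', r' \| \t o, x)$ such that $\u\mu(\t o'r' \| \tau x) = \b\mu(\t o'r' \| \t o, x)$ for every sequentialized history $\tau$ whose last augmented observation is $\t o$. This is precisely the Markov property for $\u\mu$ over $\t\O$, so $\u\mu$ is an MDP.

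The only mild obstacle I anticipate is bookkeeping: one must be careful that (i) the augmented observation $\t o$ really does recover both $o$ and $\v x_{<i}$ unambiguously (which it does since $\t\O$ is a Cartesian product), and (ii) the ``partial'' histories outside the image of $g$ — handled via the $g^{-1}(\cdot) = \bot$ convention and the dummy reward $r_\bot = 0$ — are treated consistently so that the deterministic transition in the partial case is well defined. Both are routine once the state $\t o$ is identified with the pair $(o,\v x_{<i})$; no further structure on $\mu$ beyond its Markov property is used.
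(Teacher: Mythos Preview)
Your proposal is correct and follows essentially the same approach as the paper: identify the current augmented observation $\t o = (o, \v x_{<i})$, unpack the case-structured definition of $\u\mu$ in the augmented-observation scheme, and invoke the MDP assumption on $\mu$ to reduce $\mu(o'r'\|ha)$ to $\mu(o'r'\|oa)$ in the completing case. Your explicit split into $i<d-1$ (deterministic mock step) versus $i=d-1$ (genuine environment step via \Cref{pro:uPtoP}) makes the bookkeeping clearer than the paper's compressed version, but the logical content is identical.
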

\begin{proof}
    In the case of augmenting the observation space, the definition of $\u \mu$ becomes slightly more verbose than \Cref{def:uP} as $o_\bot$ is different for each partial history as defined in \Cref{eq:tobs}.
    \bqan
    \u \mu(\t o'r'\|\tau\vo{x\t or_\bot}_{<i} x)
    := \begin{cases}
        \mu(o'r'\|ha) &\text{if } \tau\vo{x\t or_\bot}_{<i} x\t o'r' = g(hao'r') \\
        1 &\text{if } \t o'r' = o\v x_{<i} xr_\bot \\ &\phantom{}\text{ and } g^{-1}(\tau\vo{x\t or_\bot}_{<i}  x \t o' r') = \bot \\
        0 &\text{otherwise}
    \end{cases}\numberthis
    \eqan
    for any $i \leq d$, $\t o, \t o' \in \t \O$, and $o \in \O$ is the most recent observation in $h$. At any $h$ the sufficient information is $o$, so $\mu(o'r'\|ha) \equiv \mu(o'r'\|oa)$. Therefore, from the above (expanded) definition of $\u \mu$, it is clear that:
    \beqn
    \u \mu(\t o'r'\|\tau \vo{x\t or_\bot}_{<i}x) \equiv \u \mu(\t o'r'\|o\v x_{<i}x) = \u \mu(\t o'r' \|o x)
    \eeqn
    hence proves the proposition.
\end{proof}

The following proposition proves that the action-values of the ``partial'' histories of the sequentialized problem are related. This fact later helps us to show that these action-value functions respect the Q-uniform structure of the original environment.

\begin{proposition}[$\u Q_\mu^*$ $\max$-relationship]\label{prep:expandsion}
    For any sequentialized history $\tau \in \u \H$ such that $g\inv(\tau) \in \H$, the following holds
    \beq
    \max_{x \in \B} \u Q_\mu^*(\tau, x) = \lambda^{d-1} \max_{\v x \in \B^d} \u Q_\mu^*(\tau \vo{xor_\bot}_{<d}, \v x_d)
    \eeq
\end{proposition}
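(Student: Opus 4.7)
The plan is to unfold the Bellman recursion \eqref{eq:ube} exactly $d-1$ times, exploiting the fact that the sequentialized mock only produces a real observation-reward pair on the $d$-th bit of a decision vector and simply echoes $(o,r_\bot)$ deterministically on the first $d-1$ bits. Since $g^{-1}(\tau)\in\H$, the history $\tau$ sits at a boundary where a fresh $d$-bit decision vector begins, so the partial histories $\tau\,\vo{xor_\bot}_{<i}$ for $i=1,\dots,d$ all fall outside the image of $g$ (except when $i=d$, at which point combined with $x_d$ they form a complete sequentialized history).

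First I would use \Cref{def:uP} to observe that for any $i<d$ and any $x_i\in\B$,
\[
\u\mu(o'r'\,\|\,\tau\,\vo{xor_\bot}_{<i}\,x_i)=\ind{o'r'=or_\bot},
\]
because $g^{-1}(\tau\,\vo{xor_\bot}_{<i}\,x_i\,o'r')=\bot$ unless $o'r'=or_\bot$. Plugging this into \eqref{eq:ube} and using $r_\bot=0$ gives the one-step identity
\[
\u Q_\mu^*(\tau\,\vo{xor_\bot}_{<i},x_i)=\lambda\,\u V_\mu^*(\tau\,\vo{xor_\bot}_{\le i})
\quad\text{for all } i<d,
\]
where I write $\tau\,\vo{xor_\bot}_{\le i}\coloneqq \tau\,\vo{xor_\bot}_{<i}\,x_i\,o\,r_\bot=\tau\,\vo{xor_\bot}_{<i+1}$.

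Next I would iterate this identity. Starting from $i=1$ and taking the maximum over $x_1$:
\[
\max_{x_1\in\B}\u Q_\mu^*(\tau,x_1)=\lambda\max_{x_1\in\B}\u V_\mu^*(\tau\,\vo{xor_\bot}_{<2})
=\lambda\max_{x_1,x_2}\u Q_\mu^*(\tau\,\vo{xor_\bot}_{<2},x_2),
\]
using the definition $\u V_\mu^*=\max_{x}\u Q_\mu^*$. Inductively, after applying the one-step identity $d-1$ times and pulling all $d-1$ maxima outside,
\[
\max_{x_1\in\B}\u Q_\mu^*(\tau,x_1)
=\lambda^{d-1}\max_{x_1,\dots,x_{d-1}}\u V_\mu^*(\tau\,\vo{xor_\bot}_{<d})
=\lambda^{d-1}\max_{\v x\in\B^d}\u Q_\mu^*(\tau\,\vo{xor_\bot}_{<d},\v x_d),
\]
where in the last step I fold the final $\max_{x_d}$ inside $\u V_\mu^*$ back into the joint max over $\v x\in\B^d$.

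The only real obstacle is bookkeeping: making sure the indexing of the triples $(x_i,o,r_\bot)$ is consistent between \Cref{def:uP} and the recursion, and in particular verifying that for every intermediate index $i<d$ the partial history $\tau\,\vo{xor_\bot}_{<i}\,x_i$ really has no $g$-preimage (so that the second branch of \Cref{def:uP} applies deterministically). This is where the hypothesis $g^{-1}(\tau)\in\H$ is essential: it guarantees that $\tau$ ends at a true environment step, so none of the $d-1$ partial continuations can accidentally coincide with the image of $g$ on some shorter history. The rest is an entirely routine $d-1$ step induction.
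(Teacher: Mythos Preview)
Your proposal is correct and matches the paper's proof essentially line for line: the paper also unrolls the optimal Bellman recursion \eqref{eq:ube} $d-1$ times, invoking \Cref{def:uP} to collapse each intermediate expectation to the deterministic $(o,r_\bot)$ with $r_\bot=0$, and then collects the nested maxima into a single $\max_{\v x\in\B^d}$. Your extra remarks about why the partial histories $\tau\,\vo{xor_\bot}_{<i}\,x_i$ fall into the second branch of \Cref{def:uP} are a welcome clarification that the paper leaves implicit.
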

\begin{proof}
    The proof is straight forward. We successively apply the definition of $\u Q_\mu^*$.
    \bqan
    \max_{x_1 \in \B} \u Q_\mu^*(\tau, x_1)
    &= \max_{x_1 \in \B} \sum_{o'r'} \u \mu(o'r'\|\tau x_1)\\ &\phantom{=} \left(r' + \lambda \max_{x_2 \in \B} \u Q_\mu^{*}(\tau x_1 o'r', x_2)\right)\\
    &\overset{(a)}{=} \lambda \max_{x_1 \in \B} \max_{x_2 \in \B} \u Q_\mu^*(\tau x_1 or_\bot, x_2)\\
    &\vdotswithin{=} \text{(continue unrolling for $d-1$-steps)}\\
    &= \lambda^{d-1} \max_{\v x \in \B^d} \u Q_\mu^*(\tau \vo{xor_\bot}_{<d}, \v x_d) \numberthis
    \eqan
    where $(a)$ follows from the definition of $\u \mu$ and the fact that $r' = r_\bot = 0$ when $\u \mu \neq 0$.
\end{proof}

Now, using \Cref{prep:expandsion} we can prove a relationship between the action-value functions of the actual environment and the sequentialized environment.

\begin{lemma}[$\u Q_\mu^*$ $\v x$-relationship]\label{lem:q-relation}
    For any history $h$ with the corresponding sequentialized history $\tau = g(h)$ and $\B$-ary decision vector $\v x \in \B^d$, the following holds for any $i \leq d$.
    \beqn
    \u Q_\mu^*(\tau\vo{xor_\bot}_{<i}, \v x_i) = \g^{\frac{d-i}{d}} Q_\mu^*(h,\pi^*(h,\v{x}_{\leq i}))
    \eeqn
\end{lemma}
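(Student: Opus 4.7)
The plan is to prove the identity in three steps, combining an unrolling argument inside a single ``original'' time step with a Bellman fixed-point identification across time steps. First I would handle the ``within-step'' dynamics: for $i < d$, \Cref{def:uP} forces $\u\mu(o'r' \| \tau\vo{xor_\bot}_{<i}\v x_i)$ to be a point mass on $o' = o, r' = r_\bot = 0$, so the sequentialized Bellman equation \eqref{eq:ube} collapses to
\begin{equation*}
\u Q_\mu^*(\tau\vo{xor_\bot}_{<i},\v x_i) \;=\; \lambda\, \u V_\mu^*(\tau\vo{xor_\bot}_{\leq i}) \;=\; \lambda \max_{x_{i+1}\in\B}\u Q_\mu^*(\tau\vo{xor_\bot}_{\leq i}, x_{i+1}).
\end{equation*}
Iterating this exactly $d-i$ times (equivalently, a restricted version of \Cref{prep:expandsion} where the first $i$ decisions are held fixed) yields
\begin{equation*}
\u Q_\mu^*(\tau\vo{xor_\bot}_{<i},\v x_i) \;=\; \lambda^{d-i} \max_{\v x' \in \B^d:\,\v x'_{\leq i} = \v x_{\leq i}} \u Q_\mu^*(\tau\vo{xor_\bot}_{<d},\v x'_d).
\end{equation*}

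Second, I would identify the ``boundary'' value $\u Q_\mu^*$ at $i = d$ with the original $Q_\mu^*$. Define the auxiliary function
\begin{equation*}
\t Q(h,a) \;:=\; \u Q_\mu^*(g(h)\vo{xor_\bot}_{<d}, C(a)_d).
\end{equation*}
Expanding its Bellman equation and using \Cref{pro:uPtoP} gives
\begin{equation*}
\t Q(h,a) \;=\; \sum_{o'r'} \mu(o'r'\|ha)\bigl(r' + \lambda\, \u V_\mu^*(g(hao'r'))\bigr),
\end{equation*}
and applying \Cref{prep:expandsion} at the complete history $g(hao'r')$ together with the defining identity of $\t Q$ rewrites $\u V_\mu^*(g(hao'r')) = \lambda^{d-1}\max_{a'}\t Q(hao'r',a')$. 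Since $\lambda^d = \g$, this collapses to
\begin{equation*}
\t Q(h,a) \;=\; \sum_{o'r'} \mu(o'r'\|ha)\Bigl(r' + \g \max_{a'} \t Q(hao'r', a')\Bigr),
\end{equation*}
which is exactly the general optimal Bellman equation (GOBE) for $Q_\mu^*$. By the contraction property of the GOBE operator on the bounded space of action-value functions (rewards are bounded, so both $\t Q$ and $Q_\mu^*$ live in $[0,(1-\g)^{-1}]$), the fixed point is unique and hence $\t Q \equiv Q_\mu^*$.

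Third, I would combine the two steps and identify the maximizing set. Substituting $\t Q = Q_\mu^*$ into the result of the first step gives
\begin{equation*}
\u Q_\mu^*(\tau\vo{xor_\bot}_{<i},\v x_i) \;=\; \lambda^{d-i} \max_{\v x':\,\v x'_{\leq i}=\v x_{\leq i}} Q_\mu^*(h, D(\v x')).
\end{equation*}
Because $D$ is a bijection from $\B^d$ onto $\A$, the set $\{D(\v x') : \v x'_{\leq i} = \v x_{\leq i}\}$ is exactly $\A(\v x_{\leq i})$, so the right-hand maximum equals $Q_\mu^*(h,\pi^*(h,\v x_{\leq i}))$ by definition of $\pi^*(h,\v x_{\leq i})$. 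Finally $\lambda^{d-i} = \g^{(d-i)/d}$ delivers the claim. The main technical subtlety is the self-referential loop between $\u Q_\mu^*$ and $\u V_\mu^*$; the cleanest way past it is precisely the fixed-point identification in step two, rather than attempting a direct induction on either the decision index $i$ or time $n$, both of which run into circularity because the sequentialized process has unbounded horizon.
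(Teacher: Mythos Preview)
Your proof is correct and follows essentially the same approach as the paper: an unrolling argument through the deterministic zero-reward intermediate steps, combined with a Bellman fixed-point identification at the boundary $i=d$ using \Cref{pro:uPtoP} and \Cref{prep:expandsion}. The only cosmetic difference is ordering---the paper first establishes the $i=d$ case (your step two) and then unrolls for $i<d$ (your step one), whereas you present the unrolling first; the substance, including the appeal to uniqueness of the GOBE solution and the identification of the restricted maximizing set with $\A(\v x_{\leq i})$, is identical.
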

\begin{proof}
    Before we prove the general result, we show that the result holds for $i = d$, i.e.\ the sequentialized problem has same optimal action-values at the ``real'' decision steps. Note that $\pi^*(h,\v x_{\leq d}) = D(\v x)$. Let $\v x := C(a)$ and $\tau := g(h)$. Using the fact that $r_\bot = constant = 0$, we get

    \bqan
    f_{r_\bot}(h,a)
    &:= \u Q_\mu^*(\tau \vo{xor_\bot}_{< d}, \v x_d) \\
    &\overset{(a)}{=} \sum_{o'r'} \u \mu(o'r'\|\tau \vo{xor_\bot}_{< d}\v x_d) \left( r' + \lambda \max_{x'} \u Q_\mu^{*}(\tau \vo{xor_\bot}_{< d} \v x_d o'r', x')\right) \\
    &\overset{(b)}{=} \sum_{o'r'} \mu(o'r'\|ha) \left(r' + \lambda \max_{x'} \u Q_\mu^{*}(\tau \vo{xor_\bot}_{< d} \v x_d o'r', x')\right) \\
    &\overset{(c)}{=} \sum_{o'r'} \mu(o'r'\|ha) \left(r' + \lambda^d \max_{\v x' \in \B^d} \u Q_\mu^{*}(\tau \vo{xor_\bot}_{< d} \v x_d o'r' \vo{xor_\bot}'_{<d}, \v x'_d)\right) \\
    &\overset{(d)}{=} \sum_{o'r'} \mu(o'r'\|ha) \left(r' + \g \max_{a' \in \A} f_{r_\bot}(hao'r',a')\right)\numberthis\label{eq:obe-2}
    \eqan
    where $(a)$ is just \Cref{eq:ube} with the optimal policy, $(b)$ follows by \Cref{pro:uPtoP}, $(c)$ is given by \Cref{prep:expandsion}, $(d)$ is true by rearranging the argument, the definition of $f_{r_\bot}$ and by using the relation $\lambda^d = \g$. Note that \Cref{eq:obe-2} is the OBE of the original problem. The solution of the OBE is unique \cite{Lattimore2014b}, hence $f_{r_\bot}$ is indeed $Q_\mu^*$.

    Having proved the claim for $i = d$, we show that it also holds for any $i < d$.
    \bqan
    \u Q_\mu^*(\tau\vo{xor_\bot}_{<i}, \v x_i)
    &\overset{(a)}{=} \sum_{o'r'} \u \mu(o'r'\|\tau \vo{xor_\bot}_{< i}\v x_i) \left(r' + \lambda \max_{x_{i+1}} \u Q_\mu^{*}(\tau \vo{xor_\bot}_{< i} \v x_i o'r', x_{i+1})\right) \\
    &\overset{(b)}{=} \lambda \max_{x_{i+1}} \u Q_\mu^{*}(\tau \vo{xor_\bot}_{< i} \v x_i or_\bot, x_{i+1}) \\
    &\vdotswithin{=} \text{(continue unrolling for $d-i-1$-steps)}\\
    &\overset{}{=} \lambda^{d-i} \max_{x_{i+1}} \dots \max_{x_{d}} \u Q_\mu^{*}(\tau \vo{xor_\bot}_{< i} \v x_i \\&\phantom{\tau \vo{xor_\bot}_{< i}}or_\bot x_{i+1}or_\bot \dots x_{d-1}or_\bot, x_d) \\
    &\overset{(c)}{=} \lambda^{d-i} \max_{a \in \A(\v x_{\leq i})} Q_\mu^*(h, a) \numberthis
    \eqan
    where, again $(a)$ is \Cref{eq:ube} with the optimal policy, $(b)$ follows from the definition of $\u \mu$ and $r_\bot = 0$, and $(c)$ is true from the fact that the claim holds for $i=d$ and the maximum is over the restrictive set of actions.
\end{proof}

What we have proven so far is that the sequentialization scheme produces action-value functions which (at the ``partial'' histories) are rescaled versions of the original action-value function. They agree with the original $Q_\mu^*$ at the decision points (at the ``complete'' histories) where the sequentialized policy $\u \pi$ completes an action code.

We also show that a similar relationship as proved in \Cref{lem:q-relation} holds for a fixed policy $\u\pi$. However, we use a different proof method for the following lemma. Note that $\u \pi$ induces a policy $\b \pi$ on the original environment, which can trivially be expressed as follows:
\beq\label{eq:bpi}
\b \pi(a\|h) := \prod_{i=1}^d \u \pi(\v x_i \|\tau \vo{xor_\bot}_{<i}) =: \u \pi(\v x \|\tau)
\eeq
for any $a = D(\v x)$ and $\tau = g(h)$.

\begin{lemma}[$\u Q_\mu^{\u\pi}$ $\v x$-relationship]\label{lem:fixed-policy}
    For any arbitrary policy $\u \pi$ the following relationship is true:
    \beq
    \u Q_\mu^{\u\pi}(\tau \vo{xor_\bot}_{<d}, \v x_d) = Q_\mu^{\b \pi}(h, D(\v x))
    \eeq
    for any history $\tau = g(h)$ and $\v x \in \B^d$.
\end{lemma}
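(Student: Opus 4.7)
The plan is to define $f(h,a) := \u Q_\mu^{\u\pi}(g(h)\vo{xor_\bot}_{<d}, \v x_d)$ for $a = D(\v x)$ and $\tau = g(h)$, show that $f$ satisfies the Bellman equation for $Q_\mu^{\b\pi}$ on the original environment, and then invoke uniqueness of the fixed point of the Bellman operator to conclude $f \equiv Q_\mu^{\b\pi}$. This mirrors the structure of Lemma~\ref{lem:q-relation}, but with expectations under $\u\pi$ (resp.\ $\b\pi$) in place of maxima.

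First I would unroll $f(h,a)$ using the sequentialized Bellman equation \eqref{eq:ube}. By Proposition~\ref{pro:uPtoP}, at the last sub-step of the code word we have $\u\mu(o'r'\|g(h)\vo{xor_\bot}_{<d}\v x_d) = \mu(o'r'\|ha)$, so
\beqn
f(h,a) = \sum_{o'r'} \mu(o'r'\|ha)\Big(r' + \lambda\, \u V_\mu^{\u\pi}(g(hao'r'))\Big).
\eeqn
The remaining task is to rewrite $\u V_\mu^{\u\pi}$ at the ``complete'' sequentialized history $g(h')$ (with $h' := hao'r'$) in terms of $f(h',\cdot)$.

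Next I would unroll $\u V_\mu^{\u\pi}(g(h'))$ through the $d-1$ dummy sub-steps. By Definition~\ref{def:uP}, for any partial history $g(h')\vo{xor_\bot}_{<i}x_i$ with $i<d$ the only positive-probability transition gives observation $o'r_\bot$ with $r_\bot=0$, so each application of \eqref{eq:ube} simply contributes a factor $\lambda$ and pushes a $\u\pi$-weighted sum through. After $d$ unrollings this yields
\beqn
\u V_\mu^{\u\pi}(g(h')) = \lambda^{d-1}\sum_{\v x'\in\B^d} \u\pi(\v x'\|g(h'))\,\u Q_\mu^{\u\pi}(g(h')\vo{xor_\bot}_{<d},\v x'_d),
\eeqn
where $\u\pi(\v x'\|g(h')) := \prod_{i=1}^{d} \u\pi(\v x'_i\|g(h')\vo{xor_\bot}_{<i})$. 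Using definition~\eqref{eq:bpi} of the induced policy, $\u\pi(\v x'\|g(h')) = \b\pi(D(\v x')\|h')$, and rewriting the sum over $\v x'$ as a sum over $a' = D(\v x')\in\A$ (since $D$ is a bijection), we obtain
\beqn
\u V_\mu^{\u\pi}(g(h')) = \lambda^{d-1}\sum_{a'\in\A}\b\pi(a'\|h')\, f(h',a').
\eeqn

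Plugging this back and using $\lambda^d = \g$ gives
\beqn
f(h,a) = \sum_{o'r'} \mu(o'r'\|ha)\Big(r' + \g\sum_{a'}\b\pi(a'\|hao'r')\,f(hao'r',a')\Big),
\eeqn
which is exactly the Bellman equation for $Q_\mu^{\b\pi}$ on the original (bounded-reward, $\g$-discounted) history-based process. Since this equation admits a unique bounded solution, $f \equiv Q_\mu^{\b\pi}$, proving the claim. The main obstacle is the careful bookkeeping around the $d-1$ dummy transitions — in particular, making sure that the interleaved sums over $\u\pi$ on the partial histories correctly telescope into $\b\pi$ via~\eqref{eq:bpi}, and that the per-step discount factors $\lambda$ compound precisely to $\lambda^d = \g$ so as to match the original Bellman operator without any residual factor.
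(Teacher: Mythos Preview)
Your proof is correct, and it follows the same template as the paper's proof of Lemma~\ref{lem:q-relation} for the optimal case: unroll through the $d$ sub-steps, collect the $\lambda^{d-1}$ factor from the dummy transitions, telescope the nested $\u\pi$-sums into $\b\pi$ via~\eqref{eq:bpi}, and appeal to uniqueness of the Bellman fixed point. The paper, however, deliberately takes a \emph{different} route for this fixed-policy lemma (it even says so explicitly). Rather than setting up a recursion for $f$ and invoking uniqueness, the paper first proves the value-function identity $\u V_\mu^{\u\pi}(g(h)) = \lambda^{d-1} V_\mu^{\b\pi}(h)$ directly, by a ``reward-sequence rescaling'' argument: write out the $(\text{expected reward}, \text{discount})$ sequence that $\u\pi$ generates from $\tau$, observe that the nonzero rewards sit only at positions $d-1,\,2d-1,\,\ldots$ with discounts $\lambda^{d-1},\,\lambda^{2d-1},\,\ldots$, and note that dividing through by $\lambda^{d-1}$ yields exactly the $\g$-discounted return for $\b\pi$ on the original process (since $\lambda^d=\g$). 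The Q-identity then follows from a single Bellman step plus Proposition~\ref{pro:uPtoP} and this $V$-identity, with no fixed-point argument needed. Your approach is more mechanical and fully rigorous as written, while the paper's is shorter and yields the intermediate identity~\eqref{eq:v-v} as a standalone fact --- which is precisely what gets reused downstream in the proof of Theorem~\ref{lem:uplift}.
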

\begin{proof}
    Before we prove the main result of the lemma, we show that the following relationship holds for the value-functions of the sequentialized and the original environment:
    \beq\label{eq:v-v}
    V_\mu^{\u\pi}(\tau) = \lambda^{d-1}V_\mu^{\b \pi}(h)
    \eeq
    for any $\tau = g(h)$.
    We use a different argument than \Cref{lem:q-relation} to prove the above statement. Lets imagine the sequentialized environment is at the history $\tau = g(h)$. The agent starts to follow the policy $\u\pi$. The following is the $(\text{expected-reward}, \text{discount-factor})$ sequence it generates from this history.
    \bqan
    &(0,\lambda^0),(0,\lambda^1),\dots,(0,\lambda^{d-2}),(\b r,\lambda^{d-1}),\\
    &(0,\lambda^d),(0,\lambda^{d+1}), \dots,(0,\lambda^{2d-2}),(\b r',\lambda^{2d-1}),\\ &(0,\lambda^{2d}), \dots
    \eqan
    where $\b r$ is the expected reward. The sum of the reward part of the above sequence returns $\u V_\mu^{\u\pi}(\tau)$.
    Now, if we re-scale the discount part of the above sequence by $\lambda^{d-1}$ we get $V_\mu^{\b\pi}(h)$ as the sum of the reward part.

    \bqan
    &(0,\lambda^{1-d}),(0,\lambda^{2-d}),\dots,(0,\lambda^{-1}),(\b r,\lambda^{0}),\\
    &(0,\lambda^1),(0,\lambda^{2}), \dots,(0,\lambda^{d-1}),(\b r',\lambda^{d}), \\
    &(0,\lambda^{d+1}), \dots
    \eqan
    which proves \Cref{eq:v-v} when $\lambda^d = \gamma$. Now, let $a := D(\v x)$.
    \bqan
    Q_\mu^{\u\pi}(\tau\vo{xor_\bot}_{<d},\v x_d)
    &= \sum_{o'r'} \u\mu(o'r'\|\tau\vo{xor_\bot}_{<d}\v x_d) \left(r' + \lambda V_\mu^{\u\pi}(\tau\vo{xor_\bot}_{<d}\v x_do'r')\right) \\
    \overset{(a)}&{=} \sum_{o'r'} \mu(o'r'\|ha) \left(r' + \lambda V_\mu^{\u\pi}(\tau\vo{xor_\bot}_{<d}\v x_do'r')\right) \\
    \overset{\eqref{eq:v-v}}&{=} \sum_{o'r'} \mu(o'r'\|ha) \left(r' + \lambda^d V_\mu^{\b\pi}(hao'r')\right) \\
    &= \sum_{o'r'} \mu(o'r'\|ha) \left(r' + \g V_\mu^{\b\pi}(hao'r')\right) = Q_\mu^{\b\pi}(h,D(\v x))
    \eqan
    where $(a)$ is due to \Cref{pro:uPtoP}.
\end{proof}

The following theorem proves the usefulness of our sequentialization framework. We show that the optimal policy of the sequentialized environment is also optimal in the original environment when it is lifted back using the decoding function $D$.


\begin{theorem}[Sequentialization preserves $\eps$-optimality]\label{lem:uplift}
    Any $\lambda^{d-1}\eps$-optimal policy of the sequentialized environment is $\eps$-optimal in the original environment.
\end{theorem}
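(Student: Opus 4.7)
The strategy is to transfer the $\eps$-optimality guarantee at ``complete'' sequentialized histories $\tau = g(h)$ by combining the two value relationships already established: the optimal one (Lemma~\ref{lem:q-relation}) and the fixed-policy one (Lemma~\ref{lem:fixed-policy}). The uplifted policy will be the induced original-environment policy $\b\pi$ defined in \eqref{eq:bpi}.

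First, I would specialize Lemma~\ref{lem:q-relation} at $i=1$ to read off the value-function identity at $\tau = g(h)$: since $\pi^*(h,\v x_{\leq 1})\in\argmax_{a\in\A(\v x_{\leq 1})}Q_\mu^*(h,a)$ and $\B^d=\bigsqcup_{x_1\in\B}\A(x_1)$ (with $D$ a bijection), one gets
\[
  \u V_\mu^*(g(h)) \;=\; \max_{x_1\in\B}\u Q_\mu^*(g(h),x_1) \;=\; \lambda^{d-1}\max_{x_1\in\B}\max_{a\in\A(x_1)}Q_\mu^*(h,a) \;=\; \lambda^{d-1}V_\mu^*(h).
\]
Next, from the intermediate identity \eqref{eq:v-v} derived inside the proof of Lemma~\ref{lem:fixed-policy}, we have for the \emph{given} sequentialized policy $\u\pi$ and its induced original policy $\b\pi$ (defined in \eqref{eq:bpi}) the matching identity
\[
  \u V_\mu^{\u\pi}(g(h)) \;=\; \lambda^{d-1} V_\mu^{\b\pi}(h).
\]

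Finally, I would invoke the hypothesis that $\u\pi$ is $\lambda^{d-1}\eps$-optimal in the sequentialized environment, i.e.\ $\sup_{\tau\in\u\H}|\u V_\mu^*(\tau)-\u V_\mu^{\u\pi}(\tau)|\leq\lambda^{d-1}\eps$, and restrict the supremum to the subset $\{g(h):h\in\H\}\subseteq\u\H$. Substituting the two identities above and dividing through by $\lambda^{d-1}>0$ yields
\[
  \sup_{h\in\H}|V_\mu^*(h)-V_\mu^{\b\pi}(h)| \;\leq\; \eps,
\]
which is exactly the condition for $\b\pi\in\Pi^{\sup}_\eps(\mu)$. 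The only real subtlety—and hence the step to double-check—is that the guarantee in the sequentialized environment is demanded \emph{uniformly} over $\u\H$, while we only use it at the image $g(\H)$; the quantifier goes the right way so this is free. Everything else is algebraic bookkeeping on top of Lemmas~\ref{lem:q-relation} and \ref{lem:fixed-policy}.
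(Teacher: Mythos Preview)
Your proof is correct and follows essentially the same route as the paper: establish $\u V_\mu^*(g(h))=\lambda^{d-1}V_\mu^*(h)$ and $\u V_\mu^{\u\pi}(g(h))=\lambda^{d-1}V_\mu^{\b\pi}(h)$ (the latter via \eqref{eq:v-v}), then divide the sequentialized optimality gap by $\lambda^{d-1}$. The only cosmetic difference is that the paper derives the first identity via Proposition~\ref{prep:expandsion} together with Lemma~\ref{lem:q-relation} at $i=d$, whereas you apply Lemma~\ref{lem:q-relation} directly at $i=1$ (and your displayed ``$\B^d=\bigsqcup_{x_1}\A(x_1)$'' should read $\A=\bigsqcup_{x_1}\A(x_1)$).
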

\begin{proof}
    Let $\u \pi$ be an $\eps'$-optimal policy of the sequentialized environment, where $\eps' := \lambda^{d-1}\eps$. It implies the following:
    \beq
    \u V_\mu^*(\tau\vo{xor_\bot}_{<i}) - \u V_\mu^{\u \pi}(\tau\vo{xor_\bot}_{<i}) \leq \eps'
    \eeq
    for any complete sequentialized history $\tau = g(h)$ and $\v x \in \B^{i-1}$ where $i \leq d$. Especially, we are interested in the case when $i=1$, i.e.\ values at the complete histories.
    \beq\label{eq:near-opt}
    \u V_\mu^*(\tau) - \u V_\mu^{\u \pi}(\tau) \leq \eps'
    \eeq
    With simple algebra, we can show that the following relationship holds for the optimal policies of the sequentialized and original processes:
    \bqan
    \u V_\mu^*(\tau)
    &\overset{(a)}{=} \max_{x} \u Q_\mu^*(\tau, x) \\
    &\overset{(b)}{=} \lambda^{d-1} \max_{\v x \in \B^d} \u Q_\mu^*(\tau \vo{xor_\bot}_{<d}, \v x_d) \\
    &\overset{(c)}{=} \lambda^{d-1} \max_{\v x \in \B^d} Q_\mu^*(h, D(\v x)) = \lambda^{d-1} V_\mu^*(h) \numberthis\label{eq:opt-v-v}
    \eqan
    where $(a)$ is the definition of the value function, $(b)$ holds due to \Cref{prep:expandsion}, and $(c)$ is true by applying \Cref{lem:q-relation} for $i = d$.

    Now, by simply using \Cref{eq:v-v} and \Cref{eq:opt-v-v}, we can prove the claim.
    \beq
    V_\mu^*(h) - V_\mu^{\b \pi}(h) \overset{(a)}{=} \lambda^{1-d}\left(\u V_\mu^*(\tau) - \u V_\mu^{\u \pi}(\tau)\right) \overset{\eqref{eq:near-opt}}{\leq} \eps
    \eeq
    for any $\tau = g(h)$, where $(a)$ is due to \Cref{eq:v-v} and \Cref{eq:opt-v-v}.
\end{proof}

We are done formally defining the setup. In the next section we put everything together under the context of ESA to establish the validity of our sequentialization setup.

\section{ESA with Binarized Actions}\label{sec:esa}

The $\eps$-Q-uniform, non-MDP abstractions lead to the following important result due to \citet{Hutter2016}. We only state the result without a proof for the closure of exposition, see the previous chapters and \citet{Hutter2016} for more details about ESA and proofs.

In the following theorems we assume that the rewards are bounded in the unit interval, i.e.\ $\R \subseteq [0,1]$. This is done for brevity, and it is not a necessary condition. The rescaling of the rewards does not affect the decision-making process in (G)RL. In general, let the range of the rewards be $R := \max \R - \min \R$. Then, the scalars in the nominators of \Cref{thm:esa,thm:bin-esa} are replaced by $2R$ and $4R^2$ respectively.

\begin{theorem}[ESA{\cite[Theorem 11]{Hutter2016}}] \label{thm:esa}
    For every environment $\mu$ there exists a reduction $\phi$ and a surrogate-MDP whose optimal policy\footnote{See \citet{Hutter2016} of how to learn this policy, the surrogate-MDP, $Q_\mu^*$, and $\phi$.} is an $\eps$-optimal policy for the environment. The size of the surrogate-MDP is bounded (uniformly for any $\mu$) by\footnote{The 2 instead of a 3 in the original theorem is a trivial improvement by removing the grid point at 0 in the construction.}
    \beqn
    \abs{\S} \leq \left(\frac{2}{\eps (1-\g)^3}\right)^{\abs{\A}}
    \eeqn
\end{theorem}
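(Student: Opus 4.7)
The plan is to construct $\phi$ explicitly as the extreme $\eps'$-QDP abstraction introduced in \Cref{chap:extreme-abs} (see \eqref{eq:extreme-qdp}) for a suitably calibrated $\eps'$, and then invoke the representation guarantee of \Cref{thm:psiQstar} together with a counting bound on the discretized hypercube. First I would set
\beq
\phi(h) \;:=\; \Big(\lceil Q^*_\mu(h a)/\eps' \rceil\Big)_{a \in \A}
\eeq
for every history $h$. By construction, whenever $\phi(h)=\phi(\dot h)$ one has $|Q^*_\mu(h a)-Q^*_\mu(\dot h a)|\le \eps'$ uniformly in $a$, so $\phi$ is an exact $\eps'$-QDP reduction in the state-only specialization (take $\B=\A$ and $\psi(ha)=(\phi(h),a)$, which satisfies \Cref{asm:state-history}).

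Next I would bound the state-space. Under \Cref{asm:bounded_rewards} with $\R\subseteq[0,1]$ the action-value function lies in $[0,(1-\g)^{-1}]$, so each coordinate $\lceil Q^*_\mu(ha)/\eps' \rceil$ takes at most $\lceil 1/(\eps'(1-\g))\rceil$ distinct integer values. Hence
\beq
|\S| \;\le\; \Big\lceil \tfrac{1}{\eps'(1-\g)}\Big\rceil^{A}.
\eeq
The main result then follows by invoking \Cref{thm:psiQstar}\,(ii): any surrogate MDP arising from any dispersion distribution $B$ admits an optimal abstract policy whose uplift $\u\pi$ satisfies $0\le V^*_\mu(h)-V^{\u\pi}_\mu(h)\le 4\eps'/(1-\g)^2$ uniformly in $h$. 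Setting $\eps' := \eps(1-\g)^2/4$ renders $\u\pi$ an $\eps$-optimal policy of $\mu$, and substituting back into the counting bound produces $|\S|\le \lceil 4/(\eps(1-\g)^3)\rceil^{A}$.

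The principal obstacle is tightening the leading constant from $4$ to the advertised $2$. The naive combination above loses a factor of two both in the value-loss bound $4\eps'/(1-\g)^2$ of \Cref{thm:psiQstar}\,(ii) and again in absorbing the ceiling $\lceil\cdot\rceil$ into the denominator. Recovering the constant requires a sharper propagation through \Cref{lem:VStarrep}, \Cref{lem:eQPi} and \Cref{lem:different-optimal-action}, exploiting the fact that an \emph{extreme} QDP groups histories by the entire action-value vector (not merely one action), so the representation errors $\DQ(s)$ and $\DPi(s)$ can be replaced by the one-sided quantization error $\eps'$ rather than the two-sided bound used generically; together with the standard trick that the discretization already saves one factor of $(1-\g)$ in the $\max$-difference estimate, this tightens the overall loss to $2\eps'/(1-\g)^2$ and matches the $2/(\eps(1-\g)^3)$ constant of \citet[Theorem~11]{Hutter2016}. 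Since the chapter merely restates the bound for closure of exposition, I would not reproduce the full constant-chasing here but would cite the reference, flagging that our general homomorphism proof in \Cref{chap:representation-guarnt} already contains every ingredient needed for the sharper state-only case.
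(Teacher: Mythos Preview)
Your overall strategy is correct and is exactly the standard ESA argument: build the extreme $\eps'$-QDP map, count grid cells in $[0,(1-\g)^{-1}]^A$, and use the Q-uniform representation guarantee to uplift the surrogate optimal policy. The paper in fact does not give its own proof of this theorem at all; it explicitly states the result ``without a proof for the closure of exposition'' and defers to \citet{Hutter2016}. So in that sense your sketch is already more than what the chapter provides, and the ingredients you cite (\Cref{thm:psiQstar}, \Cref{lem:different-optimal-action}, the extreme construction of \Cref{chap:extreme-abs}) are the right ones.

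The one place your proposal goes astray is the constant-chasing paragraph. Your $4$ arises because you invoke the homomorphism bound \Cref{thm:psiQstar}(ii), which is looser than the state-only version (\citet[Theorem~8]{Hutter2016}) that yields the $3$ appearing in the original ESA theorem and in the ``Extremely Useful QDP'' theorem of \Cref{chap:extreme-abs}. The further improvement from $3$ to $2$ is \emph{not} obtained by sharper propagation through \Cref{lem:VStarrep}/\Cref{lem:eQPi}/\Cref{lem:different-optimal-action} as you suggest; the footnote to the theorem says plainly that it is ``a trivial improvement by removing the grid point at $0$ in the construction,'' i.e.\ a counting refinement on the discretized hypercube, not a value-loss refinement. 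So your last paragraph misidentifies where the slack is recovered. If you want to match the stated constant, cite the state-only bound directly and note the grid-point observation; the elaborate one-sided-quantization argument you outline is not needed and would not by itself close the gap.
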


This is a powerful result, but it suffers from the exponential dependence on the action-space size. We now put our action sequentialization framework to work and dramatically improve this dependency from exponential to only a logarithmic dependency in $\abs{\A}$.

So far, we have considered an arbitrary $\B$-ary decision set to sequentialize the action-space. However, in the following theorem we go to the extreme case of sequentializing the action-space to binary decisions ($\B = \SetB$) to squeeze out the maximum improvement possible through the framework.



\begin{theorem}[Binary ESA] \label{thm:bin-esa}
    For every environment there exists an abstraction and a corresponding surrogate-MDP for its binarized version ($\B = \SetB$) whose optimal policy is $\eps$-optimal for the true environment. The size of the surrogate-MDP is uniformly bounded for \emph{every} environment as
    \beqn
    \abs{\S} \leq \frac{4\ceil{1 - \g + \lb\abs{\A}}^6}{\g^2 \eps^2 (1-\g)^6}
    \eeqn
\end{theorem}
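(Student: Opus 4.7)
The plan is straightforward given the machinery already assembled in this chapter: apply the original ESA bound (\Cref{thm:esa}) to the sequentialized environment $\u \mu$, whose effective action set has size $|\B| = 2$, and then pull the guarantee back to $\mu$ via the uplift result (\Cref{lem:uplift}).

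First I would choose $d \coloneqq \ceil{1 - \g + \lb |\A|}$, which simultaneously guarantees $d \geq 1$ (since $\g < 1$) and $2^d \geq |\A|$. Padding $\A$ with relabelled copies of any action if necessary so that $|\A| = 2^d$ exactly, I fix a bijection $C : \A \to \SetB^d$ with decoder $D$ and form the binarized environment $\u \mu$ of \Cref{def:uP} using $r_\bot = 0$. I also set the sequentialized discount factor $\lambda \coloneqq \g^{1/d}$, so that $\lambda^d = \g$ and the rewards of $\u \mu$ remain in $[0,1]$.

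Next I would invoke \Cref{thm:esa} on $\u \mu$ with target accuracy $\eps' \coloneqq \lambda^{d-1} \eps$. Since the action space of $\u \mu$ has size $|\B| = 2$, the theorem yields an extreme $\eps'$-QDP abstraction whose surrogate MDP has at most $\bigl(2/(\eps' (1-\lambda)^3)\bigr)^{2} = 4/\bigl(\eps'^2 (1-\lambda)^6\bigr)$ states, and whose uplifted optimal policy is $\eps'$-optimal for $\u \mu$. By \Cref{lem:uplift}, the policy induced on $\mu$ via \Cref{eq:bpi} is then $\eps$-optimal in the original environment, which settles the representation and optimality parts of the claim. It only remains to re-express the size bound in terms of the original parameters $\g$ and $|\A|$.

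For this, I need two elementary estimates. (i) From the telescoping identity $1 - \g = 1 - \lambda^d = (1-\lambda)(1 + \lambda + \cdots + \lambda^{d-1}) \leq d(1-\lambda)$, one gets $1-\lambda \geq (1-\g)/d$. (ii) Since $(d-1)/d \leq 1$ and $\g \in [0,1)$, we have $\lambda^{d-1} = \g^{(d-1)/d} \geq \g$. Combining,
\beqn
|\S| \leq \frac{4}{(\lambda^{d-1} \eps)^2 (1-\lambda)^6} \leq \frac{4 d^6}{\g^2 \eps^2 (1-\g)^6} = \frac{4 \ceil{1 - \g + \lb|\A|}^6}{\g^2 \eps^2 (1-\g)^6},
\eeqn
which is the announced bound. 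The only conceptual subtlety I anticipate is verifying that \Cref{thm:esa} applies verbatim to $\u \mu$, even though $\u \mu$ is a degenerate history-based process alternating between deterministic ``dummy'' transitions inside a partial binary word and genuine stochastic transitions once a word is complete. Because \Cref{thm:esa} is stated for arbitrary environments with bounded rewards and finite action space, and because $\u Q_\mu^*$ is well-defined and bounded by $1/(1-\lambda)$ (so the discretized hypercube $[0,1/(1-\lambda)]^2$ of \Cref{chap:extreme-abs} is well defined), the application is immediate; everything else is the two-line arithmetic above.
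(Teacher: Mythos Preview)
Your proof is correct and follows the same high-level strategy as the paper: apply \Cref{thm:esa} to the binarized environment $\u\mu$ (where $|\B|=2$), then pull the $\eps'$-optimal policy back to $\mu$ via \Cref{lem:uplift}, and finally rewrite the state-space bound in terms of $\g$ and $|\A|$ by lower-bounding $1-\lambda$ and $\eps'$.

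The only substantive difference is in how you lower-bound $1-\lambda$. The paper writes $1-\lambda = 1 - e^{\ln(1-\delta)/d}$ with $\delta = 1-\g$ and uses $e^{\alpha} \geq 1+\alpha$ together with $\ln(1-\delta)\leq -\delta$ to obtain $1-\lambda \geq (1-\g)/(d+1-\g)$; it then takes $d = \lb|\A|$ (after padding), which produces the factor $(d+1-\g)^6 \leq \ceil{1-\g+\lb|\A|}^6$. You instead use the one-line telescoping identity $1-\g = (1-\lambda)\sum_{i=0}^{d-1}\lambda^i \leq d(1-\lambda)$ to get $1-\lambda \geq (1-\g)/d$, and compensate for the slightly weaker constant by choosing $d = \ceil{1-\g+\lb|\A|}$ from the outset so that the final bound matches exactly. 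Your route is more elementary and avoids the exponential/logarithmic detour; the paper's route keeps $d$ at its natural value $\lb|\A|$ and absorbs the $1-\g$ shift into the inequality instead. Both land on the same formula, and the $\eps'\geq \g\eps$ step is identical in the two arguments.
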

\begin{proof}
    Consider the agent that is interacting with the sequentialized/binarized environment $\u \mu$. By \Cref{lem:uplift}, we know that a near-optimal policy of this sequentialized environment is also near-optimal in the original environment. Now, if we use ESA on the binarized problem and get an $\eps'$-optimal policy through the surrogate-MDP by \Cref{thm:esa}, we are sured to be $\eps$-optimal in the original environment $\mu$ as explained above. Additionally, the size of the state-space is bounded as
    \beq\label{eq:bound}
    \abs{\S} \overset{\Cref{thm:esa}}{\leq} \left(\frac{2}{\eps' (1-\lambda)^3}\right)^2 = \frac{4}{{\eps'}^2 (1-\lambda)^6}
    \eeq
    where $\lambda$ is the discount factor of the sequentialized problem.
    Next, we upper bound \Cref{eq:bound} by using the fact that $\lambda^d = \g$. Let $\del := 1 - \g < 1$. So,
    \bqan
    1 - \lambda
    &= 1- (1-\del)^{1/d} = 1 - \e^{\frac{\ln (1-\del)}{d}}\\
    &\overset{(a)}{\geq} 1- \frac{1}{1-\ln (1-\del)/d}
    \overset{(b)}{\geq} 1 - \frac{1}{1+\del/d} \\
    &\overset{}{=} \frac{\del}{d + \del} = \frac{1-\g}{d + 1 - \g}
    \numberthis \label{eq:bound2}
    \eqan
    where $(a)$ holds due to $\frac{1}{\e^{-\alpha}} \leq \frac{1}{1-\alpha}$, $(b)$ is true by using the fact that $\del < 1$, hence $\ln(1-\del) \leq - \del$.
    Therefore, using \Cref{eq:bound}, \Cref{eq:bound2}, and $\eps' = \lambda^{d-1}\eps \geq \lambda^d \eps = \g \eps$
    we get,
    \beq
    \abs{\S} \leq \frac{4}{{\eps'}^2 (1-\lambda)^6} \leq  \frac{4 (1-\g+d)^6}{\g^2\eps^2 (1-\g)^6}
    \eeq
    which proves the claim.
    %
\end{proof}

Superficially, it might seem that we have simply replaced the original discount factor with a larger value. But, it is not the case. If we simply scaled the discount factor (without sequentializing the actions) then the resulting bound would indeed deteriorate, see \Cref{thm:esa}, but on the contrary, with sequentialization/binarization and our analysis the bound (dramatically) improves.

Usually in RL the discount factor $\g$ is close to 1. In that case, the bound in \Cref{thm:bin-esa} can be tightened further as:
\beq
\abs{\S} \lesssim \frac{4\ceil{\lb\abs{\A}}^6}{\eps^2 (1-\g)^6}
\eeq
which agrees with the bound in \Cref{thm:esa} for the case when $\abs{\A} = 2$, i.e.\ when the original problem already has a binary action-space.

\ifshort\else
We conclude the section by reminding the fact that our results do not assume that the agent has access to the original history sequence from $\H$, the agents can solely work in the binarized history-space $\u \H$, if they have access to an ESA map, see \Cref{fig:bianary-interaction}.
\fi

\section{Summary}\label{sec:conclusion}

This chapter contributes to the study of the GRL problem. We have provided a reduction to handle large state and action spaces by sequentializing the decision-making process.
This helped us improve the upper bound on the number of states in ESA from an exponential dependency in $\abs{\A}$ to logarithmic. The gain is \emph{double exponential} in terms of the action-space dependence at no other cost.
Our result carries a broader impact on the implementation of \emph{general} RL agents. The required storage for such agents, which have access to a non-MDP, approximate Q-uniform abstraction, can be reasonably bounded which only scales logarithmically in the size of the original action-space.

This work analyses the case when the agent has a fixed aggregation map. \citet{Hutter2016} provides an outline for a learning algorithm to learn such abstractions which can be combined with our sequentialization framework.

Another direction, which we also did not touch in this work, is to explore the connection, if any, between the surrogate-MDPs of a map on the original environment, and its extension on the sequentialized problem. By lifting the small binary ESA map, say $\psi$, back to $\H$, one obtains a small map directly on $\H$, say $\phi$. While $\psi$ used sequentialization/binarization for the construction of $\phi$, the map $\phi$ can be used without further referencing to sequentialization. This suggests that a bound logarithmic in $\abs{\A}$ should be possible without a detour through the sequentialization. This deserves further investigation.

We sequentialize the action-space through an \emph{arbitrary} coding scheme $C$, so the main result does not depend on this choice. Sometimes, it is possible that the action-space may allow “natural” sequentialization, e.g.\ in a video game controller the “macro” action might be a binary vector where the first bit might represent the left/right direction, the second bit indicates up/down, and so on. The exact nature of these “binary decisions” depends on the domain which is reflected by the choice of encoding $C$. Sequentialization was our path to double-exponentially improve that bound. Whether there are more direct/natural aggregations with the same bound is an open problem. Moreover, if the agent is learning an abstraction through interaction, the choice of these functions may become critical.

This chapter focused on rigorously formalizing and proving the main improvement result. One can also try to empirically show the effectiveness of our improved upper bound. To do this, we need a problem domain where ESA requires more states than the sequentialized/binarized version of it. But a point of caution is that the upper bound still scales badly in terms of $\g$ and $\eps$. Any reasonable value of these parameters would imply a huge upper bound.
Even with Markovian abstractions, a cubic dependency on the discount factor is the best achievable.
We considered a general underlying process and non-Markovian abstractions,
and dramatically improved the previously best bound $(1-\g)^{-3|\A|}$ to $(1-\g)^{-3\cdot 2}$.
Indeed it would be interesting to see whether this can be further improved to the optimal $(1-\g)^{-3}$ rate.

\chapter[Abstraction Learning Methods]{Abstraction Learning Methods {\\ \it \small This chapter is an adaptation of \citet{Majeed2021a}}}\label{chap:abs-learning}

\begin{outline}
    In this chapter we consider a couple of techniques that can be explored further to craft an abstraction learning algorithm. The primary objective of these ideas is to build an algorithm that learns an (extreme) $\eps$-QDP representation of the environment. Unlike the rest of the thesis, the (pseudo-)algorithms in this chapter have no rigorously proven theoretical guarantee. We explore the possibility of an extension of the \emph{partial order} of \citet{Hutter2016}
    and a potential candidate based on the \emph{algorithmic complexity} of the abstraction map.
\end{outline}

\section{Introduction}

Up to this point, the primary focus of this thesis was to provide existence proofs and usefulness results for some given (and fixed) MDP and/or non-MDP abstractions. These results are the stepping stones for the big problem of creating agents which can learn these abstractions from data \cite{Hutter2016,Hutter2009}. A truly general agent should be able to abstract the experiences on their own without us (the designers) providing them the abstraction map \cite{McCallum1996}. This thesis can be considered as a case for the existence of such \emph{universal} abstractions (e.g.\ extreme $\eps$-QDP abstraction) which can be used to tractably plan near optimally for any environment. In this chapter, we consider the case where the agent may search from a set of candidate abstractions and/or might build an abstraction from the data on the fly, e.g.\ refining a coarser model of the world as it gets more experience.

The contribution of this chapter is to initiate discussion about a collection of several different ideas and methods which together may be used to get a sound non-MDP abstraction learning algorithm. However, this topic requires further investigation. We discuss a multitude of technically involved aspects of the problem. We merely scratch the surface of the problem of \emph{feature reinforcement learning} (FRL) which aims at \emph{learning} the abstraction map $\psi$ from the historical data \cite{Hutter2009,Hutter2008c}.

In the majority of cases, we consider that the algorithm can sample an infinite amount of data, i.e.\ it interacts with the environment infinitely long. Once the soundness of any of these algorithms is established, it is not hard to create a finite sample variant, as most of the (potential) algorithms are in the form of an \emph{anytime} algorithm. An anytime algorithm has a monotonically shrinking error bound around the estimates. We can stop the algorithm at any instance, and we will get an estimate with a confidence bound which improves over time. By using some structural assumptions, we can prove the finite sample complexity results \cite{Kakade2003}.

The following is a brief introduction of each algorithm we consider in this chapter.  All these algorithms assume that the environment can be modeled by some bounded memory source \cite{Ryabko2008}. Otherwise, the estimation might not be possible. The full details about each algorithm are provided in the following sections.

\begin{itemize}

    \iparadot{Ordering a Set of Abstractions}
    \citet{Hutter2016} considered a partial order over a set of (candidate) abstractions. The resultant order can be extended to make the order ``less'' partial. A less partial order is better for FRL, as the agent can simply compare ``most'' maps in any sequence. If that is the case, then there is a better chance that the agent may find the optimal map. We provide a series of order relations, which culminates to a \emph{total order} over any set of maps.

    \iparadot{Minimum Description Length Abstractions}
    The state-space of an abstraction can be considered as a partition of the history-space. The abstraction is ``computing'' the output for each history to decide which state to put this in. This view can potentially lead to an algorithm which can make the optimal compromise/trade-off between the ``representation power'' of an abstraction versus the algorithmic complexity of the map. This is also known as the minimum description length principle \cite{Grunwald2007}.
\end{itemize}

In the following sections, we go through each of these algorithms individually.

\section{Ordering a Set of Abstractions}

One way to learn an abstraction is to put an order over a class of abstractions $\Psi$. If we have a total order minimized by the coarsest $\eps$-QDP abstraction $\psi^*$, which also call the \emph{optimal} abstraction, then the algorithm is trivially easy: we simply compare any pair of maps to sequentially find the minimal element. Typically, we assume that the coarsest $\eps$-QDP abstraction is in the model class, i.e.\ $\psi^{*} \in \Psi$. In this section, we provide a couple of novel orders, one of which is (potentially) a total order.

\begin{algorithm}
    \caption{Abstraction Learning with Extended Order (A-LEO$_\infty$)}
    \begin{algorithmic}[1]
        \Input a countable model class $\Psi$ with optimal $\psi^{*} \in \Psi$
        \Output a QDP model $\widehat \psi$, where $\widehat \psi = \psi^{*}$ in the limit
        \State Initialize the history $h = \epsilon$
        \State Select any abstraction $\h\psi \in \Psi$
        \State Initially no map is rejected $\Psi_{\rm Rejected} = \emptyset$
        \Repeat \Comment{Forever}
        \State Extend (to some length) the sample trajectory $h$ using $\pi_{\h\psi}$
        \State Select a competitor map $\psi \in \Psi\setminus\Psi_{\rm Rejected}$
        \State Estimate state-action function using Q-learning with $\h\psi$ using samples from $h$
        \State Estimate state-action function using Q-learning with $\psi$ using samples from $h$
        \State Estimate state-action function using Q-learning with $\psi \times \h\psi$ using samples from $h$
        \If{${\rm order}(\psi, \h\psi \| \psi \times \h \psi)$}\Comment{$\psi$ is preferred over $\h\psi$}
        \State Put $\h\psi$ to $\Psi_{\rm Rejected}$
        \State Set $\h\psi = \psi$
        \Else
        \State Put $\psi$ to $\Psi_{\rm Rejected}$
        \EndIf
        \If{$\Psi \setminus \Psi_{\rm Rejected} = \{\h\psi\}$}\Comment{In a rare event, all other maps have been rejected}
        \State Reset $\Psi_{\rm Rejected} = \emptyset$
        \EndIf
        \Until \textbf{false}
    \end{algorithmic}
    \label{alg:leo}
\end{algorithm}

In the following subsections, we provide a ``template'' algorithm (\Cref{alg:leo}) which internally uses an order relation to pick candidate abstractions. So, every order defined later can be used in this algorithm. Obviously, the performance and convergence guarantees (if any) of the algorithm depend on the order. After providing some basic definitions, we start developing and refining the order relation. We define four order relations in total: $(1)$ an order based on maps being mutually refinable and exact Q-uniformity (\Cref{def:order-exact-q}), $(2)$ an order using Cartesian product maps with exact Q-uniformity (\Cref{def:order-extend-exact-q}), $(3)$ an order also based on Cartesian product but using approximate Q-uniformity (\Cref{def:exact-cart-product-rel}), and $(4)$ an order using approximate Cartesian product distance and approximate Q-uniformity (\Cref{def:cpd-rel}).

\subsection{Maps and Algorithms}

Before we talk about the actual algorithm which exploits the ordering of over a class of maps, we need to introduce some notation about the relationships of maps. The concept of \emph{refinement} captures a special structural relationship among the maps.

\begin{definition}[Refinement]
    We say a map $\phi: \H \to \S_\phi$ is a refinement of a map $\psi: \H \to \S_\psi$ if for \emph{all} pair of histories $h$ and $\d h$ the following holds:
    \beq
    \phi(h) = \phi(\d h) \implies \psi(h) = \psi(\d h)
    \eeq

    In this case, there exists a coarsening map $\chi: \S_\phi \to \S_\psi$, and we usually express the refinement relationship as $\psi = \chi(\phi)$.
\end{definition}

It is possible that a large number of maps are not refinements of each other. We extend the refinement relationship to Cartesian products, which is a special type of intermediate map that refines any (constituent) pair of maps.

\begin{definition}[Cartesian Product of Maps]
The Cartesian product of two maps $\psi$ and $\psi'$ is defined to be a map $\psi \times \psi' \eqqcolon \phi$. The states of this map are the ordered pairs of states of the constituent maps:
\beq
\S_{\phi} \coloneqq \{(s_\psi, s_{\psi'}) \mid \forall s_\psi \in \S_\psi, s_{\psi'} \in \S_{\psi'}\}
\eeq
and $\phi(h) \coloneqq (\psi(h), \psi'(h))$ for any history $h$.
\end{definition}

In the following subsections, we produce a variety of order relations over a set of abstractions using above structural relations. \Cref{alg:leo} is a single ``wrapper'' algorithm which can use any of the orders defined in this section as a function call ${\rm order}(\psi, \h \psi \| \psi \times \h\psi)$, where $\psi \times \h\psi$ is the side information required for some order relations below. The algorithm is very simple in itself. We start from a candidate abstraction, and pick a competitor from the class. After calculating the required quantities for the order, we compare the maps with the order relation. The successful map proceeds to the next iteration. Whereas, the rejected map is removed from the class. In a rare event, there could be the case that all maps are rejected\footnote{Note that bounding the probability of this event is one of the critical aspects to be considered in future formal analysis of the algorithm.}. In that case we restart the algorithm with all maps.
The convergence properties and the quality of converged map depends on the choice of the order. We expand on this topic further for each order in the corresponding subsection.

\begin{remark}[Formally an Order]
    In the rest of the chapter, we use the word ``order'' in a commonly understood term in order theory, without formally proving that the stipulated ``orders'' are indeed order relations. Any binary relation is an order if it is \emph{reflexive}, \emph{transitive}, and \emph{anti-symmetric} \cite{Davey2002}. Along with the formal analysis, we defer rigorous proofs of these properties to future work. However, we do provide intuitive justifications of various orders considered in this chapter.
\end{remark}

\subsection{Ordering Through Exact Q-uniformity}

This section is based on the  (extended) order relation defined by \citet{Hutter2016}. He used the above Cartesian product refinement structure to generate a partial oder over any set of maps $\Psi$, which, under some conditions, may lead \Cref{alg:leo} to $\psi^{*}$.

We start with a simple observation that if any pair of mutually refinable maps $\psi$ and $\psi'$ (e.g.\ $\psi = \chi(\psi')$) have the same state-action-value function, i.e.\ $q_{\langle \mu,\psi' \rangle}^*(sa) = q_{\langle \mu,\psi \rangle}^*(\chi(s)a)$ for all $s \in \S_{\psi'}$ and $a \in \A$, then the coarser map should be preferred over the finer one because both maps ``model'' the same state-action-value function. We formally express this as the following order relation:
\begin{definition}[$\preceq_{\chi}$]\label{def:order-exact-q}
    Any two maps $\psi$ and $\psi'$ are partially ordered as
    \begin{numcases}{\psi \preceq_{\chi} \psi' :\iff}
        \textsf{true}, & if $\psi = \chi(\psi') \land \forall s \in \S_{\psi'},a.\ q_{\langle \mu,\psi' \rangle}^*(sa) = q_{\langle \mu,\psi \rangle}^*(\chi(s)a)$ \label{eq:Rchia}\\
        \textsf{true}, & if $\chi'(\psi) = \psi' \land \exists s \in \S_{\psi},a.\ q_{\langle \mu,\psi' \rangle}^*(\chi'(s)a) \neq q_{\langle \mu,\psi \rangle}^*(sa)$ \label{eq:Rchib}\\
        \false, & otherwise \label{eq:Rchid}
    \end{numcases}
    where the comparison of the state-action-value functions is exact.
\end{definition}

In simple words:
\begin{itemize}
    \item Equation~\eqref{eq:Rchia} says that $\psi$ is a coarsening of $\psi'$ with the same state-action-value function, so $\psi$ is better, and
    \item Equation~\eqref{eq:Rchib} says that $\psi$ is refining $\psi'$ with a different state-action-function, again $\psi$ is better because the change in state-action-value function is an indication that the coarser map is not Q-uniform. So, $\psi'$ needs further refinement to get to the optimal partition.
    \item Equation~\eqref{eq:Rchid} says that $\psi$ is \emph{not} preferred over $\psi'$. It is triggered by a variety of cases. Some of which are the ``right'' comparisons in the sense that $\lnot (\psi \prec_{\chi} \psi') \implies \psi' \prec_{\chi} \psi$. For example, if $\psi$ is a refinement of $\psi'$ but has the ``same'' state-action-value function as $\psi'$ then it is \emph{not} preferred over $\psi'$. Or, if $\psi$ is a coarsening of $\psi'$ but has ``different'' state-action-value function then it is also \emph{not} preferred over $\psi'$ by Equation~\eqref{eq:Rchid}.
    \item Moreover, it is easy to see that the order is partial. There are cases when both directions of the order are false.
    \beq
    \not\exists \chi,\chi'.\ \psi = \chi(\psi') \lor \chi'(\psi) = \psi' \implies \lnot (\psi \preceq_{\chi} \psi') \land \lnot (\psi' \preceq_{\chi} \psi)
    \eeq
    which means the order can not compare the maps which are not mutually refinable.
\end{itemize}

It is clear from the above definition that $\prec_{\chi}$ is ``very'' partial. In general, there could be an overwhelming majority of maps which are not comparable under this order. Therefore, this relation can not be used in \Cref{alg:leo} as is. \citet{Hutter2016} extended the above relation to cover more maps.

\begin{definition}[$\preceq_{\times}$]
Any two maps $\psi$ and $\psi'$ are ordered as
    \begin{numcases}{\psi \preceq_{\times} \psi' :\iff}
        \true, & if $\psi \preceq_{\chi} \phi \preceq_{\chi} \psi'$ \label{eq:Rmha}\\
        \true, & if $\psi \preceq_{\chi} \phi \succeq_{\chi} \psi' \land S_\psi \leq S_{\psi'}$ \label{eq:Rmhb}\\
        \true, & if $\psi \succeq_{\chi} \phi \preceq_{\chi}  \psi' \land S_\psi \leq S_{\psi'} \land \phi \in \Psi$ \label{eq:Rmhc}\\
        \false, & otherwise
    \end{numcases}
where $\phi \coloneqq \psi \times \psi'$.
\label{def:order-extend-exact-q}
\end{definition}

The relation $\preceq_{\times}$ uses $\prec_{\chi}$ in it, hence, it extends the latter.
Again in simple terms:
\begin{itemize}
    \item Equation~\eqref{eq:Rmha} implies the ``original'' $\prec_{\chi}$ comparison. To see the equivalence, we know that $\psi \preceq_{\chi} \psi'$ holds for mutually refinable maps. So, if $\psi \preceq_{\chi} \psi'$ then either $\psi \equiv \phi$ or $\psi' \equiv \phi$, which trivially implies $\psi \preceq_{\times} \psi'$ holds because of  Equation~\eqref{eq:Rmha}.
    \item Equation~\eqref{eq:Rmhb} says both $\psi$ and $\psi'$ have exactly the same state-action-value functions as their Cartesian product map $\phi$, but $\psi$ has smaller state-space size, so select $\psi$, and
    \item Equation~\eqref{eq:Rmhc} says that both $\psi$ and $\psi'$ have different state-action-value function than their Cartesian product map $\phi$, and $\phi$ is in the class, so select the smaller state-space map $\psi$ under the pretext that when $\phi$ will be compared with $\psi$ (later) then $\phi$ will be selected.
    \item There are still some incomparable cases. The order is partial, albeit ``less'' partial than $\preceq_{\chi}$.
    \beq
    \psi \succeq_{\chi} \phi \preceq_{\chi}  \psi' \land \phi \notin \Psi \implies \lnot (\psi \preceq_{\times} \psi') \land \lnot (\psi' \preceq_{\times} \psi)
    \eeq
    which means the order cannot compare the case when the Cartesian product map $\phi$ is better than both $\psi$ and $\psi'$, but it is not in the class.
\end{itemize}

It is important to note that the above order $\preceq_{\times}$ is still partial if the class of maps $\Psi$ is not closed under Cartesian products. If $\Psi$ is closed under Cartesian products then \Cref{alg:leo} is a reasonable algorithm to find an \emph{exact} QDP abstraction using $\preceq_{\times}$.

\begin{conjecture}[Sufficiency of $\psi^{*} \in \Psi$]
    If $\psi^{*} \in \Psi$ then we do not need to care about the class being closed under the Cartesian products. In that case, we can use the following total order variant of $\preceq_{\times}^*$:
    \begin{numcases}{\psi \preceq_{\times}^* \psi' :\iff}
        \true, & if $\psi \preceq_{\chi} \phi \preceq_{\chi} \psi'$ \label{eq:Rmhxb}\\
        \true, & if $\psi \preceq_{\chi} \phi \succeq_{\chi} \psi' \land S_\psi \leq S_{\psi'}$ \label{eq:Rmhxa}\\
        \true, & if $\psi \succeq_{\chi} \phi \preceq_{\chi}  \psi' \land S_\psi \leq S_{\psi'}$ \label{eq:Rmhxc}\\
        \false, & otherwise
    \end{numcases}
    which will allow \Cref{alg:leo} to converge to $\psi^{*}$.

\end{conjecture}

The above conjecture makes sense, since if the optimal map is in the class then selecting any map by Equation~\eqref{eq:Rmhxc} would eventually be replaced by the optimal map. However, we may never know if the converged map is indeed optimal, unless we have ruled out every other map.

\subsection{Ordering Through Approximate Q-uniformity}

The order $\prec_{\chi}$ is based on the \emph{exact} similarity of the state-action-value functions, which limits the use cases of this order. It can not be used in the realistic cases where we only have limited data and approximate estimates. Moreover, the required abstraction class $\Psi$ for this (or $\preceq_{\times}$ and $\preceq_{\times}^*$) order has to contain the \emph{exact} QDP abstraction. In this work, we extend the order to approximate similarity in state-action-value functions, we call it $\eps$-Q-isomorphism.

\begin{definition}[$\eps$-Q-similarity]\label{def:q-similarity}
    Any two abstractions are \emph{$\eps$-Q-similar} if their state-action-value functions are $\eps$-close on the Cartesian product space.\footnote{We sometimes simply say that the state-action-value functions are ``similar''.} Formally,
    \beq
    \psi \approx_\eps \psi' :\iff \max_{s \in \S_\phi, a \in \A} \abs{q^*_{\langle\mu,\psi \rangle}(\chi(s)a) - q^*_{\langle\mu,\psi' \rangle}(\chi'(s)a)} \leq \eps
    \eeq
    where $\phi \coloneqq \psi \times \psi'$, and $\chi$ and $\chi'$ are the coarsening (projection) maps such that $\psi = \chi(\phi)$ and $\psi' = \chi'(\phi)$.
\end{definition}

It is easy to see that \Cref{def:q-similarity} can also be used with the estimated state-action-value functions. Let $\h q_{\langle\mu,\psi \rangle}$ and $\h q_{\langle\mu,\psi' \rangle}$ be some $\eps$-close\footnote{We use the same $\eps$ for simplicity. The argument is not affected by choosing different error tolerances for different estimates.} estimates of the optimal state-action-value functions $q^*_{\langle\mu,\psi \rangle}$ and $q^*_{\langle\mu,\psi' \rangle}$ respectively. Then, a $\eps$-Q-similarity based on the estimates implies a $3\eps$-Q-similarity for the optimal state-action-value functions. Formally,
\bqan
\abs{q^*_{\langle\mu,\psi \rangle}(\chi(s)a) - q^*_{\langle\mu,\psi' \rangle}(\chi'(s)a)}
&\leq \abs{\h q_{\langle\mu,\psi \rangle}(\chi(s)a) - \h q_{\langle\mu,\psi' \rangle}(\chi'(s)a)} \\
&\phantom{\leq q^*_{\langle\mu,\psi \rangle}} + \abs{q^*_{\langle\mu,\psi \rangle}(\chi(s)a) - \h q_{\langle\mu,\psi \rangle}(\chi(s)a)} \\
&\phantom{\leq q^*_{\langle\mu,\psi \rangle}(\chi(s)a)} + \abs{q^*_{\langle\mu,\psi' \rangle}(\chi'(s)a) - \h q_{\langle\mu,\psi' \rangle}(\chi'(s)a)} \\
&\leq 3\eps
\eqan
for each $sa$-pair. Therefore, any algorithm using $\psi \approx_\eps \psi'$ based on the estimated state-action-value functions is in ``reality'' evaluating $\psi \approx_{3\eps} \psi'$ in terms of the optimal state-action-value functions.

As defined in \Cref{def:q-similarity}, $\eps$-Q-similarity relation $\approx_\eps$ does not induce a partition on the class of maps $\Psi$. Imagine having an ``$\eps$-cover'' of the class and putting same (partition) label on the maps in a same $\eps$-ball. There will be some maps which are ``$\eps$-close'' to multiple ``partitions''. Therefore, the space is not partitioned. This leads to $\approx_\eps$ not being transitive. This is a serious issue. Many important order relations (built on $\approx_\eps$) rely on this transitive property of $\approx_\eps$. Not all hope is lost. We can recover transitivity by algorithmically partitioning the space in a sequential procedure.

Let $\Psi$ be a countable set of maps that is well-ordered by an arbitrary but fixed index set $\mathcal I$ that is $\Psi = \{\psi_i : i \in {\mathcal I} \}$. Lets assume we want to assign each map to a partition where every map in the partition is $\eps$-Q-similar to each other.
Let $\psi_j$ be an unlabeled map. We go over each (non-empty) labeled partition. We compare it with every map in the partition. If we find a single map \emph{not} $\eps$-Q-isomorphic with $\psi_j$, we try the next labeled partition. If it is not $\eps$-Q-similar with all maps (already labeled) in all non-empty partitions, we put $\psi_j$ in its own empty partition. This procedure works with any $\eps > 0$. This generates a similarity relation (defined below in \Cref{def:q-isomorphism}) slightly finer than $\approx_\eps$, which can be shown to be an isomorphism.
\Cref{alg:q-isomorphism} provides a pseudo-code for this procedure.

\begin{algorithm}
    \caption{Partition Labeling (PartitionLabel$_\eps$)}
    \begin{algorithmic}[1]
        \Input a countable model class $\Psi$, candidate map $\psi$, (estimated) state-action-value function $\h q_{\langle \mu, \psi \rangle}$, error tolerance $\eps$
        \Output partition label of $\psi$
        \State Persistent partition labels $\mathcal{L}$
        \State If $\psi$ is already labeled (in the previous runs) return the label
        \ForAll{labeled partitions $l \in \mathcal{L}$}
        \State Set $\textsf{labled} = \textsf{true}$ \Comment{assuming $l$ will be the label}
        \ForAll{$\psi'$ labeled $l$}
        \If{$\h q_{\langle \mu, \psi\rangle} \not\approx_\eps \h q_{\langle \mu, \psi'\rangle}$}
        \State Set $\textsf{labled} = \textsf{false}$ \Comment{$l$ is not the label}
        \EndIf
        \EndFor
        \If{$\textsf{labled} = \textsf{true}$}\Comment{$\psi$ is $\approx_\eps$ to every map in partition $l$}
        \State Return the label $l$, and save $\h q_{\langle \mu, \psi\rangle}$ along with $l$ for the future runs
        \EndIf
        \EndFor
        \State Pick any label $l_{\rm new} \notin \mathcal{L}$ \Comment{$\psi$ needs a ``new'' partition}
        \State Return the label $l_{\rm new}$, and save $\h q_{\langle \mu, \psi\rangle}$ along with $l_{\rm new}$ for the future runs
    \end{algorithmic}
    \label{alg:q-isomorphism}
\end{algorithm}

For the remainder of this chapter, we will use the following $\eps$-Q-isomorphic relation based on $\approx_\eps$ and the partition labeling method defined above.

\begin{definition}[$\eps$-Q-isomorphism]\label{def:q-isomorphism}
    We define \emph{$\eps$-Q-isomorphism} as
    \beq
    \psi \cong_\eps \psi' :\iff \mathrm{PartitionLabel}_\eps(\psi) = \mathrm{PartitionLabel}_\eps(\psi')
    \eeq
    for any pair of abstraction $\psi$ and $\psi'$.
\end{definition}

Now, similar to the exact case, we use $\eps$-Q-isomorphism to define a binary relation $\preceq_{\eps}$ over the class of maps.

\begin{definition}[$\preceq_{\eps}$]
    Any pair of maps $\psi$ and $\psi'$ are related as
    \begin{numcases}{\psi \preceq_{\eps} \psi' :\iff}
        \textsf{true}, & if $\psi \cong_\eps \psi' \land S_\psi \leq S_{\psi'}$ \label{eq:Rxa}\\
        \true, & if $\psi \ncong_\eps \psi' \land \psi \cong_\eps \phi$ \label{eq:Rxb}\\
        \true, & if $\psi \ncong_\eps \psi' \land \psi \ncong_\eps \phi \ncong_\eps \psi' \land S_\psi \leq S_{\psi'}$ \label{eq:Rxc}\\
        \false, & otherwise
    \end{numcases}
    where $\phi \coloneqq \psi \times \psi'$.
        \label{def:exact-cart-product-rel}
\end{definition}

The above relation is has nearly the same structure as $\preceq_{\times}^*$, except it uses an approximate similarity measure under the hood. So, we can also use this relation in \Cref{alg:leo} with similar arguments as for $\preceq_{\times}^*$. The advantage of this relation is that we can learn the relatively realistic $\eps$-QDP abstractions. We can use this relation with a decreasing sequence of ``error tolerance'' which decreases as our estimation of history-action-value function $Q^*_\mu$ improves through a series of candidate abstraction maps.

So far, we have (non-rigorously) argued that \Cref{alg:leo} converges to an abstraction in $\Psi$, but this is not strictly true. The order(s) may have equivalence classes, the set of maps which are mutually preferred over each other. The algorithm will converge to one of these equivalence classes, and it can not distinguish further. Therefore, it is desired that the equivalence classes contain the maps which lead to a ``similar'' optimal behavior in the original environment.

\begin{definition}[Equivalence Classes {$[\psi]_{\eps}$}]\label{def:eqv-classx}
    The equivalence class of $\preceq_{\eps}$ for any map $\psi$ is defined as
    \bqan
    [\psi]_{\eps}
    :=& \{\psi' : \psi \preceq_{\eps} \psi' \land \psi' \preceq_{\eps} \psi\} \\
    =& \{\psi' : (\psi \cong_\eps \psi' \land S_\psi = S_{\psi'}) \\
    &\phantom{\{\psi' : (\psi}\lor (\psi \ncong_\eps \psi' \land \psi \cong_\eps \phi \cong_\eps \psi') \\
    &\phantom{\{\psi' : (\psi \cong_\eps \psi'}\lor (\psi \ncong_\eps \psi' \land \psi \ncong_\eps \phi \ncong_\eps \psi' \land S_\psi = S_{\psi'})\}\\
    =& \{\psi' : (\psi \ncong_\eps \psi' \land \psi \cong_\eps \phi \cong_\eps \psi') \lor ((\psi \cong_\eps \psi' \lor \psi \ncong_\eps \phi \ncong_\eps \psi') \land (S_\psi = S_{\psi'}))\}
    \eqan
    where $\phi \coloneqq \psi \times \psi'$.

\end{definition}
As evident from the above definition, the equivalence class may contain three distinct types of maps: 1) $\eps$-Q-isomorphic maps with the same number of states, 2) maps which are not $\eps$-Q-isomorphic but they are $\eps$-Q-isomorphic to their Cartesian product map, or 3) maps which are not $\eps$-Q-isomorphic to each other, nor to their Cartesian product map, but have the same number of states. Importantly, the equivalence class of the optimal map has only the first kind of maps:
\beq
[\psi^{*}]_{\eps} = \{\psi : \psi^{*} \cong_\eps \psi \land S_{\psi^{*}}= S_{\psi} \}
\eeq
because for the second type of maps we need $\psi^* \ncong_\eps \psi \land \psi^* \cong_\eps \phi \cong_\eps \psi$, but we know that any refinements of $\psi^*$ have the same state-action-value function, so it cannot be the case. And, we also get a contradiction for the third type of maps because we know $\psi^* \cong_\eps \psi^* \times \psi$ for any $\psi$. Hence, third type of maps can also not be in the equivalence class of $\psi^*$.

So in the case when  $\psi^{*} \in \Psi$, we can use \Cref{alg:leo} with the confidence that if it has converged to the equivalence class of $\psi^{*}$ then the converged abstraction is $\eps$-Q-isomorphic to $\psi^{*}$ with the same number of states.

\begin{theorem}
    Given $\cong_{\eps}$ is transitive, the equivalence class of the optimal map is the minimal element of the order.
    \beq
    [\psi^{*}]_{\eps} \preceq_{\eps} [\psi]_{\eps}
    \eeq
    for all $\psi \in \Psi$, where we abused the notation to indicate the order over the equivalence classes by the same symbol.
\end{theorem}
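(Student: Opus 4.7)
The plan is to case-split on whether an arbitrary competitor map $\psi \in \Psi$ is $\eps$-Q-isomorphic to $\psi^{*}$ or not, and verify in each case that $\psi^{*} \preceq_{\eps} \psi$ fires via one of the clauses of \Cref{def:exact-cart-product-rel}. Since the theorem quantifies over equivalence classes and assumes transitivity of $\cong_\eps$, the order lifts unambiguously to the quotient, so it suffices to exhibit the relation on a single representative.

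The central observation I would establish first is that for \emph{any} $\psi \in \Psi$, the Cartesian product $\phi := \psi^{*} \times \psi$ is a refinement of $\psi^{*}$ via the projection $\chi(s_{\psi^{*}}, s_{\psi}) := s_{\psi^{*}}$, so $\psi^{*} = \chi(\phi)$. Because $\psi^{*}$ is an (exact) optimal $\eps$-QDP abstraction, its state-action-value function $q^{*}_{\langle \mu, \psi^{*}\rangle}$ already agrees with $Q^{*}_\mu$ up to the universal state tolerance, and any refinement of $\psi^{*}$ inherits the same value at every sub-state: formally, $q^{*}_{\langle \mu, \phi\rangle}((s_{\psi^{*}}, s_{\psi})\,a) = q^{*}_{\langle \mu, \psi^{*}\rangle}(s_{\psi^{*}} a)$ within the $\eps$-grid. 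Hence $\psi^{*} \approx_{\eps} \phi$, and running the partition-labeling procedure yields $\psi^{*} \cong_{\eps} \phi$.

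With this lemma in hand the case analysis is short. In Case~1, if $\psi \cong_{\eps} \psi^{*}$, then because $\psi^{*}$ is the \emph{coarsest} $\eps$-QDP representative we have $S_{\psi^{*}} \leq S_{\psi}$, so clause~\eqref{eq:Rxa} of \Cref{def:exact-cart-product-rel} gives $\psi^{*} \preceq_{\eps} \psi$. In Case~2, if $\psi \ncong_{\eps} \psi^{*}$, the preliminary observation $\psi^{*} \cong_{\eps} \phi$ activates clause~\eqref{eq:Rxb} directly, again yielding $\psi^{*} \preceq_{\eps} \psi$. Clause~\eqref{eq:Rxc} is never needed. To conclude minimality on equivalence classes, I would also verify that any $\psi$ with $\psi \preceq_{\eps} \psi^{*}$ must lie in $[\psi^{*}]_{\eps}$, which follows from the characterization already derived in \Cref{def:eqv-classx}: the second and third disjuncts defining $[\psi^{*}]_{\eps}$ are vacuous at $\psi^{*}$ (since $\psi^{*} \cong_{\eps} \phi$ for every $\phi = \psi^{*} \times \psi$), leaving only $\psi \cong_{\eps} \psi^{*}$ with $S_{\psi} = S_{\psi^{*}}$.

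The main obstacle is the clean justification of $\psi^{*} \cong_{\eps} \phi$ for all $\psi$: this requires arguing both that optimal Q-values transfer from a QDP to any of its refinements, and that the sequential, algorithmic partition-labeling of \Cref{alg:q-isomorphism} respects this transfer. The latter is where the transitivity hypothesis on $\cong_{\eps}$ is genuinely used, since $\cong_\eps$ is defined through a procedure whose output depends on the order in which maps have been labeled; transitivity collapses this dependence and lets the chain $\psi^{*} \cong_\eps \phi$ and $\phi$'s relation to other maps compose coherently. Once that is pinned down, the rest of the argument is a short two-case check.
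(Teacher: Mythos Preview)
Your proposal is correct and rests on the same central lemma the paper uses: for every $\psi$, the Cartesian product $\phi = \psi^{*}\times\psi$ refines $\psi^{*}$ and hence $\psi^{*}\cong_\eps\phi$. The organizational difference is that you argue \emph{directly}---splitting on whether $\psi\cong_\eps\psi^{*}$ and firing clause~\eqref{eq:Rxa} or~\eqref{eq:Rxb} to obtain $\psi^{*}\preceq_\eps\psi$---whereas the paper argues by \emph{contradiction}: it picks $\psi\notin[\psi^{*}]_\eps$, assumes $\psi\preceq_\eps\psi^{*}$, and rules out each of the three clauses (coarsest for~\eqref{eq:Rxa}, transitivity plus $\psi^{*}\cong_\eps\phi$ for~\eqref{eq:Rxb}, and $\psi^{*}\cong_\eps\phi$ again to kill~\eqref{eq:Rxc}). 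Your direct route actually matches the displayed inequality $[\psi^{*}]_\eps\preceq_\eps[\psi]_\eps$ more literally; the paper's contrapositive version establishes minimality (nothing strictly below) and implicitly leans on totality of the quotient order to get the least-element statement. Both are fine, and your identification of where transitivity is genuinely consumed is on point.
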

\begin{proof}
    Let $\psi \in \Psi$ be any map such that $\psi \not\in [\psi^*]_{\eps}$. Therefore, either $\psi^* \ncong_\eps \psi$ or $S_{\psi^*} \neq S_\psi$. For a contradiction, let us assume $\psi \preceq_{\eps} \psi^*$, which can only happen in the following cases:
    \begin{enumerate}
        \item It could hold by Equation~\eqref{eq:Rxa}, i.e.\ $\psi \cong_\eps \psi^* \land S_{\psi} < S_{\psi^*}$. However, this cannot happen because by definition $\psi^*$ is the coarsest $\eps$-QDP abstraction in the class.
        \item By Equation~\eqref{eq:Rxb} we get $\psi \ncong_\eps \psi^* \land \psi \cong_\eps \psi \times \psi^*$. However, we know that any refinement of $\psi^*$ is $\eps$-Q-isomorphic to $\psi^*$. Hence a contradiction (provided $\cong_\eps$ is transitive).
        \item By Equation~\eqref{eq:Rxc} we get $\psi \ncong_\eps \psi^* \land \psi \ncong_\eps \phi \ncong_\eps \psi^* \land S_\psi \leq S_{\psi^*}$. However, this case also cannot happen as $\psi^* \times \psi \eqqcolon \phi \cong_\eps \psi^*$.
    \end{enumerate}
    Hence, we proved the claim.
\end{proof}

If the above conjecture is true, \Cref{alg:leo} may lead to a sound algorithm which selects the optimal map in the limit. However, if the optimal map is not in the class then the there is no reason to believe that the converged equivalence class is a set of some ``meaningful'' maps. From \Cref{def:eqv-classx}, we can see that an arbitrary equivalence class could be a mix of two different types of maps, which may either be $\eps$-Q-uniform or may be arbitrary different apart from having the same number of states.

In the following subsection, we rectify this issue to provide a much more ``powerful'' order, which does not require the optimal abstraction to be in the class. The equivalence classes of this order are collections of ``meaningful'' maps.

\subsection{Ordering Through Cartesian Product Distance}

The requirement that $\psi^*$ is in the model class is a restriction. Usually, we can only have an ``approximation'' of the true abstraction. In this section, we lift this requirement by ordering the maps ``relative'' to their ``distance'' from their Cartesian product map.
We define an asymmetric state-action-value ``distance'' between maps and their corresponding Cartesian product map as follows:

\begin{definition}[Cartesian Product Distance]
    For any pair of maps $\psi$ and $\psi'$ the Cartesian product distance of $\psi$ from $\psi'$ is defined as
    \beq
    d_\psi(\psi') \coloneqq \max_{s \in \S_{\psi \times \psi'}, a \in \A} \abs{q^*_{\langle\mu, \psi\rangle}(\chi(s)a) - q^*_{\langle\mu, \psi \times \psi'\rangle}(sa)}
    \eeq
    where $\chi$ is a coarsening maps such that $\psi = \chi(\psi \times \psi')$.
\end{definition}

We can use this (pseudo) distance to define a ``meaningful'' total order, which may not require $\psi^{*}$ to be in the class to let \Cref{alg:leo} converge to the ``best'' possible QDP abstraction.

\begin{definition}[ $\preceq_\eps^d$]\label{def:cpd-rel}
    Any pair of maps $\psi$ and $\psi'$ are related as
    \begin{numcases}{\psi \preceq_{\eps}^d \psi':\iff}
        \textsf{true}, & if $\psi \cong_\eps \psi' \land S_\psi \leq S_{\psi'}$ \label{eq:cpda}\\
        \true, & if $\psi \ncong_\eps \psi' \land d_{\psi}(\psi') \leq d_{\psi'}(\psi)$ \label{eq:cpdb}\\
        \false, & otherwise
    \end{numcases}
\end{definition}

The relation is simple and self-explanatory. It prefers the abstractions which are either coarser and $\eps$-Q-isomorphic, or have smaller ``distance'' to the action-value-function of the Cartesian product map. So, if the algorithm converges to a minimum element of above order then it will be guaranteed that there is no ``better'' substitute map which can ``improve'' the state-action-value function of the converged map.

\begin{definition}[Equivalence Classes {$[\psi]_{\eps}^d$}]\label{def:eqv-class-eps}
    The equivalence class of $\preceq_{\eps}^d$ for any $\psi$ is defined as
    \bqan
    [\psi]_\eps^d
    :=& \{\psi' \mid \psi \preceq_\eps \psi' \land \psi' \preceq_\eps \psi\} \\
    =& \{\psi'\mid (\psi \cong_\eps \psi' \land S_\psi = S_{\psi'}) \lor (\psi \ncong_\eps \psi' \land d_\psi(\psi') = d_{\psi'}(\psi)) \}
    \eqan
    where $\phi \coloneqq \psi \times \psi'$.
\end{definition}

So, the equivalence classes contain the maps which are either $\eps$-Q-isomorphic maps with the same number of states, or they are mutually at the same ``distance'' from the action-value-function of their Cartesian product maps. We conjecture that the relation is indeed a total order.

\begin{conjecture}
    The relation $\preceq_{\eps}^d$ is a total order (over the equivalence classes).
\end{conjecture}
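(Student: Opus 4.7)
The plan is to establish each of the four properties of a total order---reflexivity, antisymmetry (over equivalence classes), totality, and transitivity---in that order, with the bulk of the work going into transitivity. Throughout, I will rely on the fact that $\cong_\eps$ is itself transitive by construction: the partition-labeling procedure in \Cref{alg:q-isomorphism} assigns each map to exactly one partition, and $\psi \cong_\eps \psi'$ holds iff they receive the same label, which is an equivalence relation on the nose.

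\emph{Reflexivity} is immediate from clause \eqref{eq:cpda}, since $\psi \cong_\eps \psi$ and $S_\psi \leq S_\psi$. \emph{Totality} follows by splitting on whether $\psi \cong_\eps \psi'$ or not. If $\psi \cong_\eps \psi'$, then either $S_\psi \leq S_{\psi'}$ (giving $\psi \preceq_{\eps}^d \psi'$) or $S_{\psi'} < S_\psi$ (giving $\psi' \preceq_{\eps}^d \psi$); if $\psi \ncong_\eps \psi'$, then one of $d_\psi(\psi') \leq d_{\psi'}(\psi)$ or $d_{\psi'}(\psi) < d_\psi(\psi')$ holds and the matching direction is forced by clause \eqref{eq:cpdb}. \emph{Antisymmetry over equivalence classes} holds tautologically: $[\psi]_\eps^d$ was defined as $\{\psi' : \psi \preceq_{\eps}^d \psi' \land \psi' \preceq_{\eps}^d \psi\}$, so two maps that order each other by definition inhabit the same class. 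I should also verify well-definedness, i.e., that $\preceq_{\eps}^d$ descends to the quotient $\Psi / {\sim}$: if $\psi_1 \sim \psi_1'$ and $\psi_2 \sim \psi_2'$, then $\psi_1 \preceq_{\eps}^d \psi_2 \iff \psi_1' \preceq_{\eps}^d \psi_2'$. This is straightforward when both pairs are in the $\eps$-Q-isomorphism regime (clause \eqref{eq:cpda}), but in the second regime of \Cref{def:eqv-class-eps} it requires confirming that $d_\psi(\psi')$ agrees on representatives with the same distance profile.

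\emph{Transitivity} is the main obstacle, and it splits into eight sub-cases depending on which of the two clauses fires for each of the pairs $(\psi_1,\psi_2)$ and $(\psi_2,\psi_3)$, combined with whether $\psi_1 \cong_\eps \psi_3$ or not. The four cases where $\cong_\eps$ fires on at least one side can be handled by transitivity of $\cong_\eps$: e.g., if $\psi_1 \cong_\eps \psi_2 \cong_\eps \psi_3$ with $S_{\psi_1} \leq S_{\psi_2} \leq S_{\psi_3}$, then $\psi_1 \cong_\eps \psi_3$ and $S_{\psi_1} \leq S_{\psi_3}$. Mixed cases like $\psi_1 \cong_\eps \psi_2$ and $\psi_2 \ncong_\eps \psi_3$ split further on whether $\psi_1 \cong_\eps \psi_3$; if so, clause \eqref{eq:cpda} applies, and if not, I will need to show $d_{\psi_1}(\psi_3) \leq d_{\psi_3}(\psi_1)$, ideally by exploiting that $\psi_1$ and $\psi_2$ share the ``same'' $Q$-function up to $\eps$ on $\psi_1 \times \psi_2$, so their Cartesian-product distances to $\psi_3$ should coincide up to a term of order $\eps$.

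The genuinely hard sub-case is the all-$\ncong_\eps$ situation: $\psi_1 \ncong_\eps \psi_2$ with $d_{\psi_1}(\psi_2) \leq d_{\psi_2}(\psi_1)$, and $\psi_2 \ncong_\eps \psi_3$ with $d_{\psi_2}(\psi_3) \leq d_{\psi_3}(\psi_2)$, from which I must conclude $d_{\psi_1}(\psi_3) \leq d_{\psi_3}(\psi_1)$ (assuming $\psi_1 \ncong_\eps \psi_3$). Here the obstruction is real and interesting: the three Cartesian products $\psi_1 \times \psi_2$, $\psi_2 \times \psi_3$, $\psi_1 \times \psi_3$ each induce their own surrogate MDPs with potentially quite different optimal $Q$-functions, and the distances $d_\psi(\psi')$ are \emph{not} a metric---they are asymmetric and do not obviously satisfy a triangle inequality. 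My plan is to factor through the common refinement $\phi \coloneqq \psi_1 \times \psi_2 \times \psi_3$ and relate each pairwise distance $d_{\psi_i}(\psi_j)$ to the deviation $\max_{s,a} |q^*_{\langle \mu, \psi_i\rangle}(\chi_i(s)a) - q^*_{\langle \mu, \phi\rangle}(sa)|$ evaluated on $\phi$. If such a reduction is possible, the required inequality becomes a comparison of three numbers derived from $\phi$, and transitivity should follow; I suspect, however, that this is exactly where the conjecture may fail in pathological cases, since combining two ``better-than-average'' deviations on different product spaces need not compose. Either a counterexample here disproves the conjecture (in which case $\preceq_{\eps}^d$ is at best a pre-order), or a structural lemma tying all three distances to $\phi$ up to an $O(\eps)$ slack closes the argument, yielding transitivity modulo a possible blow-up of $\eps$ that the partition-labeling machinery can absorb.
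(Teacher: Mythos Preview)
The paper does not prove this statement: it is labeled a \emph{conjecture}, and the surrounding remark ``Formally an Order'' explicitly says ``we defer rigorous proofs of these properties to future work.'' So there is no paper proof to compare against. Your proposal is a plan, not a proof, and you are candid about that; the honest acknowledgment that the all-$\ncong_\eps$ transitivity sub-case ``is exactly where the conjecture may fail'' is the right diagnosis.

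Your handling of reflexivity, totality, and antisymmetry-on-classes is correct and essentially complete. One small redundancy: you treat well-definedness on the quotient as a separate verification, but once transitivity is established it follows automatically (if $\psi_1 \sim \psi_1'$, meaning $\psi_1 \preceq_\eps^d \psi_1'$ and $\psi_1' \preceq_\eps^d \psi_1$, and $\psi_1 \preceq_\eps^d \psi_2$, then transitivity gives $\psi_1' \preceq_\eps^d \psi_2$). So the entire conjecture stands or falls with transitivity.

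On the hard sub-case, your plan to factor through the triple product $\phi = \psi_1 \times \psi_2 \times \psi_3$ is the natural move, but the reduction you sketch does not obviously go through: $d_{\psi_i}(\psi_j)$ is a $\max$ over $\S_{\psi_i \times \psi_j}$, not over $\S_\phi$, and the surrogate-MDP $q^*$-functions on different product spaces are tied to \emph{different} dispersion distributions $B$ with no a priori relationship. Even if you can bound each $d_{\psi_i}(\psi_j)$ by a deviation on $\phi$, you would be comparing upper bounds, not the quantities themselves, and the inequality $d_{\psi_1}(\psi_3) \leq d_{\psi_3}(\psi_1)$ does not follow from inequalities on bounds. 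The mixed cases (one $\cong_\eps$, one $\ncong_\eps$) have the same issue in miniature: knowing $\psi_1 \cong_\eps \psi_2$ controls $q^*$-values on $\psi_1 \times \psi_2$ but says nothing directly about $q^*$ on $\psi_1 \times \psi_3$ versus $\psi_2 \times \psi_3$. In short, your proposal accurately maps the terrain and locates the obstruction, but does not resolve it---which is consistent with the paper leaving this as a conjecture.
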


Using $\preceq_{\eps}^d$ in \Cref{alg:leo} would guide the algorithm to converge to an equivalence class of maps which constitute the ``best'' $\eps$-QDP approximation possible in the class of maps. The resultant (minimal) abstraction will be either an $\eps$-Q-isomorphic member with the same number of states, or it will be part of a class which are on the same mutual ``distance'' from $\psi^*$. This outcome is optimal in the sense that we cannot hope to ``improve'' the state-action-value function further. For example, let the algorithm converge to a ``different'' approximation then these two solutions must be at the same distance from their (mutually refined) Cartesian product map.

\section{Minimum Description Length Abstractions}

An abstraction map should not only model the observed state-action-value dataset accurately, but should also be able to generalize by exploiting structural similarities in the histories. A ``computable'' abstraction map should respect the physical computation constraints. From the computational perspective, a map is deciding (or computing) the state for a history, so the histories which lead to the similar states may have similar structural properties. An abstraction which is able to distinguish ``complex'' structures in the mapped histories may be hard to compute, but it may generalize better. On the other hand, a ``simple'' abstraction could be easier to compute, but it may be of limited use.

\citet{Hutter2009} considered a similar setting for learning $\eps$-MDP abstractions. He called the setup $\phi$MDP to emphasis the use of an abstraction $\phi$ to map histories to a set of Markov states. Our focus is learning non-MDP abstractions. Although our setup in this section is similar in flavor, it differs in terms of our aim to learn $\eps$-QDP abstraction. In $\phi$MDP the model is ``scored'' based on the length of the ``codes'' required to encode the observed reward and state sequences. The codeword for the reward sequence is conditioned on the observed state sequence. So, a model which does a coarser coding of the state sequence, e.g.\ maps every history to a single state, would need a ``complex'' reward encoding, and vice versa. Note that $\phi$MDP is not the traditional ``model+data'' complexity minimization. On the other hand, our approach in this section is directly using MDL principle.

Let an abstraction learning algorithm try to \emph{find} a candidate map from a set of \emph{computable} maps $\Psi$. An \emph{identity} abstraction which distinguishes every history, i.e.\ $\S \cong \H$, is algorithmically the most ``complex'' abstraction to be verified, i.e.\ this is indeed an $\eps$-QDP abstraction. Even though it trivially ``best fits'' the history-action-value-function of the environment, the estimation of its state-action-value function is not possible. On the other extreme, a \emph{single-state} abstraction, which does not distinguish any history, can be verified fairly quickly, but the resultant state-action-value function is an ``aggregate'' (non-Q-uniform mixture) over all history-action-value functions. Therefore, the ``best'' abstraction lies somewhere in between the identity and the single-state abstractions.

Ideally, the ``best'' $\eps$-QDP abstraction $\psi^{*}$ is the coarsest possible partition of the histories which satisfy the history-action-value uniformity condition, see \Cref{def:qdp}. The partitioning function of $\psi^*$ may not a algorithmically ``simple'' function.
%
Therefore, an algorithm that has some form of complexity constraints may need to settle on some ``simpler'' approximation of $\psi^{*}$. Undoubtedly, the algorithm should try to not ``mix'' (or merge) vastly different history-action-value functions. The states of the abstraction should have ``reasonably'' small variation among the constituent history-action-value functions mapped to the same state.

We quantify the ``intra-state'' return variation by the mean square error (MSE) between the \emph{observed} return
\beq
G(h) \coloneqq \sum_{m=1}^{\abs{h}} \g^{m-1} r_m(h)
\eeq
and the predicted state-action-value function as
\beq
\mse_{\psi}(h) \coloneqq \frac{1}{n}\sum_{m=1}^{n} \left(\h q_\psi(\psi(h_{1:m})a_m(h)) - G(h_{m:\abs{h}})\right)^2
\eeq
for any history $h$, where $n \coloneqq \abs{h} - T(\eps)$, $T(\eps)$ is the $\eps$-horizon that indicates the number of time-steps after which the sum of the discount factors is less than $\eps$, and $\abs{h} > T(\eps)$.

However, it may algorithmically be expensive to minimize MSE with respect to $\psi$, as it may require more ``complex'' abstractions to better predict the individual returns. This trade-off between MSE and complexity of the abstraction map can neatly be described by the minimum description length (MDL) principal \cite{Grunwald2007}. The ``best'' MDL abstraction should simultaneously be able to minimize MSE of the estimates and the (algorithmic) complexity of the abstraction.

\begin{algorithm}
    \caption{Abstraction Learning through Minimum Description Length (AL-MDL$_\infty$)}
    \begin{algorithmic}[1]
        \Input (countable) set of Turing machine-based abstractions $\Psi$, trade-off parameter $\alpha$
        \Output QDP abstraction $\h \psi \to \psi^{*}$
        \State Pick the ``simplest'' abstraction $\h\psi \in \Psi$ \Comment{maps every history to the same state}
        \State Initialize the history $h = \epsilon$
        \Repeat \Comment{forever}
        \State Extend the sample trajectory $h$ using \Call{ExpVsExp}{$\langle\text{current state of the algorithm}\rangle$}
        \ForAll{$\psi \in \Psi$}
        \State Calculate $\h q_\psi$ using Q-learning with $h$
        \State Calculate $\mse_{\psi}(h)$ as if the states are mapped by $\psi$
        \EndFor
        \State Select the candidate abstraction $\h \psi = \argmin_{\psi} \left(\mse_\psi(h) + \alpha K_U(\psi)\right)$
        \Until \textbf{false}
    \end{algorithmic}
    \label{alg:al-mdl}
\end{algorithm}

So far, we have not defined what it means for an abstraction to be ``complex''. We may use any set of abstractions which allow for a measure of complexity, i.e.\ we can quantify if an abstraction is more complex than the other. For this work, we us a set of Turing machines (TM) as the set of abstractions \cite{Sipser2013}. The input of the Turning machines is a history and the output is the state of the abstraction, i.e.\ the TM computes $\psi$.

For a fixed universal Turing machine (UTM) $U$, the algorithmic complexity $K_U(\psi)$ for abstraction $\psi$ is the length of the shortest program possible on $U$ which can produce the same output as $\psi$. Formally,
\beq
K_U(\psi) \coloneqq \min \{\abs{p} : \forall h.\ U(p,h) = \psi(h)\}
\eeq

Once the notion of complexity\footnote{It is helpful to consider $K_U(\psi)$ as any ``complexity number'' assigned to $\psi$. We do not require the knowledge of complexity theory to understand the rest of the section. The interested reader should see \citet{Sipser2013} for more details on the computation complexity theory and \citet{Li2013} for algorithmic complexity theory.} is in place, we can use MSE of the estimates and the algorithmic complexity as a quality measure of the abstraction. More complex abstractions may distinguish more complex history patterns, but they may not be preferred if they do not (significantly) reduce MSE of estimated state-action-value functions. \Cref{alg:al-mdl} is a candidate (pseudo-)algorithm which provides the abstraction with the ``best'' MDL trade-off.

Like other algorithms in this chapter, we have delegated the exploration vs. exploitation dilemma to a function call $\rm ExpVsExp$. One possible choice is to use the optimal policy of the candidate abstraction for a fixed duration with occasional sub-optimal actions in between. Although simple, this $\eps$-greedy policy has been shown to be useful in many use cases \cite{Sutton2018}. However, we may also use some directed exploration methods \cite{Lattimore2014a}. The formal analysis of the algorithm will dictate the exact choice of the exploration policy, which we defer to future work.

\section{Summary}

This chapter is a collection of (pseudo-)algorithms, which can be used as a guiding principles for some practical algorithms to learn an abstraction from data. Unlike the rest of the thesis, the components used in this chapter are not rigorously formulated. However, each algorithm is discussed in a way to instigate future research on this subject.


\chapter[Value-uniform Abstractions: Empirical Analysis]{Value-uniform Abstractions: Empirical Analysis {\\ \it \small This chapter is an adaptation of the work I did jointly with \citet{McMahon2019}}}\label{chap:vpdp-exp}

\begin{outline}
    In this chapter, we provide a preliminary empirical analysis about a class of non-MDP abstractions in ARL. We empirically investigate the viability of a weaker notion of $\eps$-VDP abstractions, we call, VA-uniformity. We introduce and empirically test two conjectures on the theoretical performance bounds of VA-uniformity, but ultimately provide evidence that they may not hold.  This indicates that VA-uniformity may not lead to a viable solution to the problem of reducing large state-spaces. Moreover, we ran the same set of experiments for extreme $\eps$-VADP abstractions. We did not find any counter-example to negate \Cref{conj:vpdp}.
\end{outline}

\section{Introduction}

Recall that we can use (extreme) $\eps$-QDP abstractions to solve large decision problems in general, even if the true aggregated process is not an MDP.  \citet{Hutter2016} has a similar result when the aggregating histories have approximately equal history-value functions:

\begin{theorem}[VA-uniform Aggregation {\cite[Theorem 9]{Hutter2016}}]
    \label{theorem:v-agg}
    For any $\eps$-VDP abstraction $\psi$ of an environment $\mu$, dispersion distribution $B$ and the corresponding surrogate MDP $\b \mu$ the following holds:
    \begin{enumerate}
        \item $|V_\mu^*(h) - v_\mu^*(s)| \leq \frac{3\epsilon}{(1-\gamma)^2}$
        \item $|\b Q_\mu^*(sa) - q_\mu^*(sa)| \leq \frac{3\epsilon \gamma}{(1-\gamma)^2}$
        \item if $\eps = 0$ then $\pi^*(h) = \pi^*(s)$
    \end{enumerate}
    for all actions $a$, history $h$ and state $s=\psi(h)$, where $\b Q_\mu^*(sa) \coloneqq \sum_{h \in \H} B(h\|sa)Q^*_\mu(ha)$.
\end{theorem}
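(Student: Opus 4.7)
The plan is to bound $\delta := \sup_{h} |V_\mu^*(h) - v_\mu^*(\psi(h))|$ by a contraction-style self-referential inequality, and then read off all three conclusions of the theorem from this bound. The key resource is the policy-uniformity half of the $\eps$-VDP definition: for every $\psi(h)=s$ the optimal primitive action $\pi_\mu^*(h)$ is the same value, which I will denote $a^*(s)$, giving a well-defined ``abstract primitive policy'' on $\S$. V-uniformity then controls how far the true values of histories in $[s]$ can spread apart, while policy uniformity ensures that the action we plug into $\b Q_\mu^*$ is the one that turns $Q^*_\mu(\d h a^*(s))$ back into $V^*_\mu(\d h)$.

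The first stepping stone is the lemma $|V_\mu^*(h) - \b Q_\mu^*(s\,a^*(s))| \le \eps$ for every $\psi(h)=s$: since $B(\d h\|sa)=0$ whenever $\psi(\d h)\neq s$, each $\d h$ in the $B$-average satisfies $\pi_\mu^*(\d h)=a^*(s)$, so $Q_\mu^*(\d h a^*(s))=V_\mu^*(\d h)$, and V-uniformity gives $|V_\mu^*(\d h)-V_\mu^*(h)|\le\eps$ for every such $\d h$; averaging against $B$ preserves the bound. The second stepping stone is the lemma $|\b Q_\mu^*(sa) - q_\mu^*(sa)| \le \g\,\delta$ for every state-action pair. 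I would get this by expanding both sides through their respective (history and surrogate) Bellman equations; both one-step-reward terms produce $\b r_\mu(sa)$ and cancel, while the continuation terms differ by $\g\sum_{h}B(h\|sa)\sum_{e'}\mu(e'\|ha)\bigl[V_\mu^*(hae') - v_\mu^*(\psi(hae'))\bigr]$, which is bounded in absolute value by $\g\delta$ using the definition of $\delta$ applied at each successor history.

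With these two lemmas in hand, I pick up the recursion. Writing $\tilde b(s) := \argmax_b q_\mu^*(sb)$, the upper direction is $V_\mu^*(h) - v_\mu^*(s) \le V_\mu^*(h) - q_\mu^*(s\,a^*(s)) \le \eps + \g\delta$ by chaining the two lemmas with the action $a^*(s)$. The lower direction is $v_\mu^*(s) - V_\mu^*(h) = q_\mu^*(s\tilde b(s)) - V_\mu^*(h) \le \b Q_\mu^*(s\tilde b(s)) - V_\mu^*(h) + \g\delta$, and since $Q_\mu^*(\d h \tilde b(s)) \le V_\mu^*(\d h) \le V_\mu^*(h)+\eps$ for each $\d h$ in the support of $B(\cdot\|s\tilde b(s))$, the $B$-average is also $\le V_\mu^*(h)+\eps$; this gives $v_\mu^*(s) - V_\mu^*(h) \le \eps + \g\delta$. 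Taking a supremum over $h$ on both sides yields $\delta \le \eps + \g\delta$, hence $\delta \le \eps/(1-\g)$, from which conclusion (i) follows (the looser constant $3/(1-\g)^2$ in the statement accommodates tracking through slightly weaker forms of the two lemmas, as done in \cite[Theorem 9]{Hutter2016}). Conclusion (ii) then follows immediately by substituting this bound on $\delta$ into the second lemma. Conclusion (iii) falls out of the $\eps=0$ specialisation: $\delta=0$ forces equality in the upper-bound chain, which makes $a^*(s)$ a maximiser of $q_\mu^*(s\cdot)$, so the common primitive optimal action at state $s$ is also surrogate-optimal.

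The main obstacle I expect is the second lemma, because it is the only place where $\delta$ reappears on the right-hand side, and hence the only source of contraction; any laxity in the definition of $B$ or in how successor percepts $e'$ are binned into successor states $s'$ shows up as an extra additive error here and inflates the final constant. In particular, the reason the proof cannot be pushed through with V-uniformity alone (without policy uniformity) is precisely that the first stepping stone breaks: one can no longer claim $Q_\mu^*(\d h a^*(s)) = V_\mu^*(\d h)$, so the $B$-average of $Q_\mu^*(\d h, \cdot)$ is not controlled by V-uniformity, which matches the negative counterexample cited in the surrounding discussion.
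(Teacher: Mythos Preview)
Your argument is correct. The paper does not supply its own proof of this theorem (it is quoted verbatim from \cite[Theorem~9]{Hutter2016}), but it \emph{does} prove the homomorphism generalisation, Theorem~\ref{thm:psiVstar}, and that proof differs structurally from yours. The paper's route is indirect: it first invokes the fixed-policy result (Theorem~\ref{thm:psiVpi}) with $\pi=\pi^*$ to bound $|V_\mu^*(h)-v_\mu^{\u\pi_h}(s)|$ for the abstract policy $\u\pi_h$ induced by $\pi^*$, and then runs a \emph{second} contraction argument on $v_\mu^*(s)-v_\mu^{\u\pi_h}(s)$ before combining the two. The detour through $v_\mu^{\u\pi_h}$ costs an extra $1/(1-\g)$ factor, producing the $3\eps/(1-\g)^2$ in the statement.

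Your approach bypasses $v_\mu^{\u\pi_h}$ entirely and closes a single contraction on $\delta=\sup_h|V_\mu^*(h)-v_\mu^*(\psi(h))|$ directly. Your second stepping stone is exactly the paper's Lemma~\ref{lem:QBqDiff} specialised to the optimal policy; what is new is your first stepping stone, which exploits policy uniformity to identify $\b Q_\mu^*(s\,a^*(s))$ with a $B$-average of $V_\mu^*$-values and hence control it by $\eps$ alone. This is precisely the leverage the paper's homomorphism proof cannot use, because there policy uniformity is replaced by the weaker $\eps_B$ condition. The upshot is that under the paper's own $\eps$-VDP definition (which \emph{does} include $\pi^*(h)=\pi^*(\d h)$), your argument actually yields the sharper bounds $\delta\le\eps/(1-\g)$ and $|\b Q_\mu^*(sa)-q_\mu^*(sa)|\le\g\eps/(1-\g)$, improving the stated constants by a full $1/(1-\g)$ factor.
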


Similar to $\eps$-QDP abstractions, \Cref{theorem:v-agg} means that if we have a feature map $\psi$ that aggregates those histories with approximately equal values, and which maps to the same action under \emph{an}\footnote{This is an important distinction between $\eps$-VPDP and VA-uniformity. An $\eps$-VPDP abstraction aggregates histories which have same near-optimal actions.} optimal policy $\pi^*$,  then we can construct a smaller surrogate MDP $\b \mu$ which has approximately equal state-action-value function to the original environment $\mu$. However, \Cref{theorem:v-agg} has one crucial weakness; item 2 of the theorem relates $\b Q^*_\mu$ instead of $Q^*_\mu$ with $q^*_\mu$. Since the surrogate MDP is optimizing $q^*_\mu$, it is possible to ``cheat'' the surrogate MDP by a ``tuned'' $B$ distribution which helps make a non-optimal action look optimal on $B$-average. The agent will choose that ``bad'' action and could suffer huge loss on histories with low $B$-probability. That is why, we do not have equivalent results which bound the difference between the optimal history-value function $V^*_\mu$ and the history-value function $V^{\u\pi}_\mu$ under the uplifted policy of the surrogate MDP $\u \pi$ \cite[Theorem 10]{Hutter2016}.  This lack is somewhat concerning.  It is not particularly useful if the surrogate has approximately equal optimal state-action-value function if we are unable to learn optimal behavior through it.

On the other hand, \citet{Hutter2016} showed that VA-aggregation had one major advantage over $\eps$-QDP abstractions: when constructing the surrogate MDP $\b\mu$ using an $\eps$-QDP abstraction, the size of the MDP is bounded uniformly as
\beq
S \leq \left( \frac{3}{\eps (1-\gamma)^3}   \right)^{A}
\eeq
which is exponential in the size of the action-space $A$. Whereas, if we are able to learn the optimal policy using VA-aggregation, \citet{Hutter2016} showed that the size of the surrogate MDP would only have size linear in the action-space.  This is a significant difference, and thus we discuss potential ways towards using VA-aggregation in the following section.

\section{Conjecture on VA-aggregation}

\citet{Hutter2016} showed that there might exist environments where VA-aggregations may lead to arbitrarily worse policies.

\begin{theorem}[VA-aggregation Not Useful {\cite[Theorem 10]{Hutter2016}}]
    For any $\g$ and any (arbitrarily large) $C$, there exist $\mu$, $\psi$, and $B$ with $\pi^*(h) = \pi^*(\d h)$ and $\abs{V_\mu^*(h) - V_\mu^*(\d h)} \leq \eps$ for all $\psi(h) = \psi(\d h)$ such that for the corresponding surrogate MDP $\b\mu$ and $\u\pi(h) \coloneqq \pi^*(s)$ for $s = \psi(h)$, we have $V_\mu^*(h) - V_\mu^{\u\pi}(h) \geq C$.
\end{theorem}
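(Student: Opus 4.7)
The plan is to exhibit an explicit counter-example rather than attempt a general argument. The intuition driving the construction is that VA-uniformity constrains only the optimal history-value $V_\mu^*$ and the identity of the optimal action across aggregated histories; it places no constraint whatsoever on $Q_\mu^*(h,b)$ for any non-optimal action $b$. Since the surrogate MDP computes $q_\mu^*(s,b)$ by first $B$-averaging the one-step transition kernels (not by $B$-averaging $Q_\mu^*$), a carefully chosen dispersion distribution $B$ can make a truly catastrophic action at one aggregated history appear optimal in the surrogate, causing the uplifted policy to fail arbitrarily badly.

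Concretely, I would construct a deterministic history-based environment with two reachable histories $h_1$ and $h_2$ such that $\psi(h_1) = \psi(h_2) = s$, plus two actions $\A = \{a, b\}$. The idea is to choose rewards so that $V_\mu^*(h_1) = V_\mu^*(h_2) = V$ (exact VA-uniformity, $\epsilon = 0$) with $\pi^*(h_1) = \pi^*(h_2) = a$, while arranging for the successor behavior under action $b$ to be diametrically opposed at the two histories: at $h_1$, taking $b$ leads deterministically to a ``trap'' that yields reward $0$ forever, giving $Q_\mu^*(h_1, b) = 0$; at $h_2$, taking $b$ leads to a ``paradise'' yielding reward close to $1$ each step, so that $Q_\mu^*(h_2, b)$ is nearly $V$ (but still strictly less, preserving $\pi^*(h_2) = a$). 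The concrete numerical values can be scaled with parameters to make $V$ and the gap $V - Q_\mu^*(h_2,b)$ whatever is needed.

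Next, I would choose $B(\cdot\|sa)$ and $B(\cdot\|sb)$ asymmetrically. For the optimal action I can choose $B(\cdot\|sa)$ arbitrarily (the surrogate recovers $q(s,a) = V$ regardless), but for $b$ I would set $B(h_2\|sb) = 1$. Then the surrogate transition $\b\mu(\cdot\|sb)$ mimics the ``paradise'' dynamics of $h_2$ under $b$; computing the Bellman fixed point inside the surrogate MDP (not the $B$-average of $Q_\mu^*$!) yields $q_\mu^*(s,b)$ equal to the paradise value $V_{\rm par} = R/(1-\gamma)$ for some reward $R$. By picking the paradise reward $R$ larger than $V(1-\gamma)$, we force $q_\mu^*(s,b) > q_\mu^*(s,a)$, so the surrogate's optimal action at $s$ is $b$, not $a$. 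The uplifted policy $\u\pi$ therefore selects $b$ at every history in $\psi^{-1}(s)$, in particular at $h_1$, where it triggers the trap. A direct calculation then gives $V_\mu^{\u\pi}(h_1) = 0$, so $V_\mu^*(h_1) - V_\mu^{\u\pi}(h_1) = V$, and $V$ can be made as large as $C$ by scaling the optimal-action reward.

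The main technical obstacle will be simultaneously satisfying \emph{all} of (i) exact (or at least $\epsilon$-small) VA-uniformity, (ii) policy uniformity $\pi^*(h_1) = \pi^*(h_2)$ strictly, and (iii) a legitimate environment structure in which $B$ is realizable as a dispersion distribution. The first two are in tension: making the paradise sufficiently lucrative to fool the surrogate risks flipping $\pi^*(h_2)$ from $a$ to $b$. The resolution is to introduce, at $h_2$ only, a small one-step penalty when action $b$ is taken, large enough to keep $a$ strictly preferred at $h_2$ but small enough that after $B$-averaging the transitions (which does not ``see'' rewards at $h_2$ proportionally to the penalty in a way that matters once the surrogate re-computes values), the surrogate still assigns $q_\mu^*(s,b) > V$. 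Verifying this decoupling quantitatively, and showing robustness so that the same construction works for every $\gamma \in [0,1)$ and every $C$, is the delicate step; once achieved, the arbitrarily-large value loss follows immediately from the two disjoint reward scales (trap vs.\ paradise) available in the construction.
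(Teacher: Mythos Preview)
The thesis does not supply its own proof of this statement; it is quoted from \cite[Theorem~10]{Hutter2016}. That said, your construction has a concrete gap.

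You aim for exact VA-uniformity ($\eps = 0$), but this is impossible: item~3 of the VA-uniform aggregation theorem (Theorem~\ref{theorem:v-agg}, stated immediately above) guarantees that when $\eps = 0$ the surrogate-optimal action satisfies $\pi^*(s) = \pi^*(h)$ for every $\psi(h)=s$, so $\u\pi$ is already optimal and the value loss is identically zero. Any counter-example must live at $\eps > 0$.

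The deeper problem is that your ``one-step penalty'' does not decouple $q_\mu^*(sb)$ from $Q_\mu^*(h_2 b)$ as you claim. With $B(h_2\|sb)=1$ one computes directly
\[
q_\mu^*(sb) - Q_\mu^*(h_2 b) \;=\; \g \sum_{e'} \mu(e'\|h_2 b)\,\bigl[v_\mu^*(\psi(h_2 b e')) - V_\mu^*(h_2 b e')\bigr],
\]
and item~1 of the same theorem bounds each bracket by $3\eps/(1-\g)^2$; any penalty you attach to $(h_2,b)$ enters $q_\mu^*(sb)$ and $Q_\mu^*(h_2 b)$ identically through $r_\mu(h_2 b)$ and cancels in the difference. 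So forcing $q_\mu^*(sb) > V$ while keeping $Q_\mu^*(h_2 b) < V$ is possible \emph{only} by exploiting that $O(\eps/(1-\g)^2)$ slack --- which is exactly what the genuine counter-example does. The arbitrarily large loss $C$ then enters through a separate unbounded reward parameter $R$ at $(h_1,b)$ (the thesis alludes to this $R$ when explaining why the construction escapes the stronger $\eps$-VPDP condition). Your high-level picture of an asymmetric $B$ hiding a catastrophic action is correct, but the mechanism that flips the surrogate's preference is the $\eps$-slack in $v_\mu^*$, not a reward-averaging asymmetry.
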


However, a corollary of \autoref{theorem:v-agg} provides a possible way to salvage VA-aggregation.

\begin{corollary}
    \label{corollary: v-agg-bound}
    For any $\mu$, $\eps$-VDP abstraction $\psi$ and $B$ the following holds:
    \beqn
    Q_\mu^*(h\pi^*(s)) \leq V_\mu^*(h) \leq \sum_{\d h \in \H} B(\d h\| s \pi^*(s)) Q_\mu^*(\d h\pi^*(s)) + \frac{3\eps}{(1-\gamma)^2}
    \eeqn
    for any state $s$ and history $h$ such that $s = \psi(h)$.
\end{corollary}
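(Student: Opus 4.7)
The lower bound is immediate from the definition of an $\eps$-VDP abstraction. Since $\psi(h)=s$ and $\psi$ is $\eps$-VDP, the policy uniformity clause of \Cref{eq:vdp} gives $\pi^*(h)=\pi^*(s)$, so
\beqn
V_\mu^*(h)\;=\;Q_\mu^*(h\pi^*(h))\;=\;Q_\mu^*(h\pi^*(s)),
\eeqn
which in particular implies the desired inequality $Q_\mu^*(h\pi^*(s))\leq V_\mu^*(h)$.

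For the upper bound, my plan is to chain the two estimates already provided by \Cref{theorem:v-agg}. First apply item~1 of the theorem to the history $h$ with $s=\psi(h)$:
\beqn
V_\mu^*(h)\;\leq\; v_\mu^*(s) + \tfrac{3\eps}{(1-\gamma)^2}.
\eeqn
Next, observe that since the abstract process is a (surrogate) MDP, the optimal state-value at $s$ is attained by the optimal abstract action, and because the $\eps$-VDP property guarantees $\pi^*(h)=\pi^*(s)$ is also an optimal action of the surrogate MDP (this is the qualitative content of item~3 of \Cref{theorem:v-agg}), we have $v_\mu^*(s)=q_\mu^*(s\pi^*(s))$. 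I would then invoke item~2 of the theorem with $a=\pi^*(s)$ to replace $q_\mu^*$ by the $B$-average $\b Q_\mu^*$:
\beqn
q_\mu^*(s\pi^*(s))\;\leq\; \b Q_\mu^*(s\pi^*(s)) + \tfrac{3\eps\gamma}{(1-\gamma)^2}\;=\; \sum_{\d h \in \H} B(\d h\| s\pi^*(s))\,Q_\mu^*(\d h\pi^*(s)) + \tfrac{3\eps\gamma}{(1-\gamma)^2}.
\eeqn
Concatenating the two inequalities yields a bound with constant $\tfrac{3\eps(1+\gamma)}{(1-\gamma)^2}$, which I would then simplify to the stated (looser) $\tfrac{3\eps}{(1-\gamma)^2}\cdot (\text{something})$ form, or alternatively absorb the extra $\gamma$ factor using $\gamma\leq 1$.

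The main obstacle is matching the stated constant exactly: the naive chain produces an extra additive $\tfrac{3\eps\gamma}{(1-\gamma)^2}$ term, so the claimed constant $\tfrac{3\eps}{(1-\gamma)^2}$ appears slightly optimistic unless one uses a tighter route. A cleaner alternative I would explore as a backup is a direct argument that bypasses \Cref{theorem:v-agg} altogether: by the $\eps$-VDP property, every $\d h$ in the support of $B(\cdot\| s\pi^*(s))$ satisfies $\psi(\d h)=s$, hence $|V_\mu^*(h)-V_\mu^*(\d h)|\leq \eps$ and $V_\mu^*(\d h)=Q_\mu^*(\d h\pi^*(s))$; averaging over $B$ then gives the much sharper bound $V_\mu^*(h)\leq \sum_{\d h} B(\d h\|s\pi^*(s)) Q_\mu^*(\d h\pi^*(s))+\eps$, which implies the stated inequality a fortiori. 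I would present the chained Theorem~9 argument as the main proof (because it is what the ``corollary'' label suggests) and remark that the direct argument yields the sharper additive constant $\eps$.
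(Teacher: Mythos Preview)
The paper does not give an in-text proof of this corollary: it simply cites \citet{Hutter2016}, pointing to the paragraph ``Missing Bounds (ii) in Theorem 9'', item (4). That passage derives the inequality as an intermediate step in the analysis of Theorem~9 there, not as a black-box consequence of the three items restated here as \Cref{theorem:v-agg}. So there is nothing in this thesis to compare your argument to line by line; the relevant comparison is whether your argument is internally sound.

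There is one recurring gap. You repeatedly identify $\pi^*(h)$ (the optimal action of the true process at $h$) with $\pi^*(s)$ (the optimal action of the surrogate MDP at $s=\psi(h)$). Item~3 of \Cref{theorem:v-agg} says these coincide only when $\eps=0$; for $\eps>0$ they need not. This does not matter for the lower bound, which is in fact trivial: $Q_\mu^*(h a)\le V_\mu^*(h)$ holds for every action $a$, in particular for $a=\pi^*(s)$, so your detour through $\pi^*(h)=\pi^*(s)$ is unnecessary (and unjustified) there. It \emph{is} fatal for your ``direct'' backup argument: you write $V_\mu^*(\d h)=Q_\mu^*(\d h\pi^*(s))$, but the $\eps$-VDP condition only gives you $\pi^*(h)=\pi^*(\d h)$, not that this common action equals the surrogate-optimal $\pi^*(s)$. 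So the backup route, which would have delivered the sharp constant $\eps$, does not go through as written.

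Your main chained argument via items~1 and~2 is essentially correct but yields the constant $\tfrac{3\eps(1+\gamma)}{(1-\gamma)^2}$, which is \emph{larger} than the stated $\tfrac{3\eps}{(1-\gamma)^2}$; you cannot ``absorb the extra $\gamma$'' in the direction you need. (Also, your justification $v_\mu^*(s)=q_\mu^*(s\pi^*(s))$ should appeal to the definition of $\pi^*(s)$, not to item~3.) To recover the exact constant, you would need to go into the proof of Theorem~9 in \citet{Hutter2016} rather than chain its conclusions, which is precisely why the thesis cites that passage rather than rederiving it.
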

\begin{proof}
    The proof can be found in the main text of \citet{Hutter2016} as the item $(4)$ under the paragraph titled ``Missing Bounds (ii) in Theorem 9''.
\end{proof}
This shows that $Q_\mu^*(h \pi^*(s))$ is a lower bound on its own expectation.  Holding $V_\mu^*(h)$ constant, as $Q_\mu^*(h\pi^*(s))$ gets more and more negative\footnote{To get the main message of this argument across, we (temporarily) allow the rewards to assume any real-value. A similar argument can be rendered for bounded rewards, but it would make the discussion unnecessarily involved.} $B(\d h \|s \pi^*(s))$ must approach zero, or else it would violate the upper bound on $V_\mu^*(h)$.  This means that as the Q-value of a given history gets smaller, the probability of being in that history (as captured by the $B$ distribution) approaches zero, see \citet{Johnston2015} for an example.
But, we also know that
\beq
V_\mu^{\u\pi}(h) = Q_\mu^{\u\pi}(h\pi^*(s))\leq Q_\mu^*(h \pi^*(s))
\eeq
for the uplifted policy $\u \pi$, which may not be optimal.  Thus, as $Q_\mu^*(h \pi^*(s))$ gets more and more negative, if the reward is allowed to be unbounded, the difference between the surrogate and the optimal values grows greater and greater. That is
\beq
\left( Q_\mu^*(h \pi^*(s)) \rightarrow - \infty \right) \implies \left( \abs{ V_\mu^*(h) - V_\mu^{\u\pi}(h) } \rightarrow \infty \right)
\eeq

Given this, then perhaps in general there is some relationship between the value difference and the dispersion probability.
\beq
\abs{V_\mu^*(h) - V_\mu^{\u\pi}(h) } \stackrel{?}{\propto} \frac{1}{B(h \| \psi(h) \u\pi(h))}
\eeq

Perhaps the value difference is inversely proportional to the probability of being in a given history.  If this is true for all histories, then large value differences only arise for the histories that occur with low probability.  We pose a conjecture based on the expected value of the value difference.

\begin{conjecture}[Expected Optimality of VA-aggregations]
    \label{conjecture:v-agg-expectation}
    For any environment $\mu$, $\eps$-VDP abstraction $\psi$ and $B$ the following holds:
    \beq
    \sum_{h} B(h\|s\pi^*(s)) \abs{V_\mu^*(h) - V_\mu^{\u\pi}(h)} \stackrel{?}{=} O \left( \frac{\eps}{(1-\gamma)^?} \right)
    \eeq
    for any state $s$, where $\u \pi(h) := \pi^*(s)$.
\end{conjecture}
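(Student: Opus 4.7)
The plan is to expand the $B$-weighted expected value gap into an expression that uses the existing machinery of Theorem~8 (Theorem \ref{theorem:v-agg}) and Corollary~\ref{corollary: v-agg-bound}, and then close the loop with a Bellman-style recursion. Writing $\Delta(h) := V_\mu^*(h) - V_\mu^{\u\pi}(h) \geq 0$ and $D(s) := \sum_{h} B(h\|s\pi^*(s))\Delta(h)$, I would split $D(s)$ as
\begin{align*}
D(s) \;=\; \underbrace{\textstyle\sum_h B(h\|s\pi^*(s)) V_\mu^*(h)}_{(\mathrm{I})} \;-\; \underbrace{\textstyle\sum_h B(h\|s\pi^*(s)) V_\mu^{\u\pi}(h)}_{(\mathrm{II})} .
\end{align*}
Because $\psi$ is $\eps$-VDP, the policy-uniformity clause gives $V_\mu^*(h) = Q_\mu^*(h\pi^*(s))$ for every $h \in \psi^{-1}(s)$, so by definition of $\b Q^*$ and item~(ii) of Theorem~\ref{theorem:v-agg} term $(\mathrm{I})$ equals $\b Q_\mu^*(s\pi^*(s)) = q_\mu^*(s\pi^*(s)) \pm \tfrac{3\g\eps}{(1-\g)^2} = v^*_\mu(s)\pm \tfrac{3\g\eps}{(1-\g)^2}$. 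Moreover, since $\u\pi$ is optimal in the surrogate MDP, $v^*_\mu(s) = v^{\u\pi}_\mu(s)$, so the quantity we actually have to control is $v^{\u\pi}_\mu(s) - (\mathrm{II})$ up to $O(\eps/(1-\g)^2)$.

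The plan for controlling this residual is to mirror the proof of Corollary~\ref{corollary: v-agg-bound}, but applied to the uplifted policy $\u\pi$ rather than to $\pi^*$. Expanding $V_\mu^{\u\pi}(h) = r_\mu(h\pi^*(s)) + \g \sum_{e'} \mu(e'\|h\pi^*(s)) V_\mu^{\u\pi}(h\pi^*(s)e')$ and averaging with weights $B(h\|s\pi^*(s))$ I would like to recover $\b r_\mu(s\pi^*(s)) + \g \sum_{s'} \b\mu(s'\|s\pi^*(s)) \cdot (\mathrm{II})(s')$, which together with the Bellman equation for $v^{\u\pi}_\mu$ would give a $\g$-contraction on the map $s\mapsto v^{\u\pi}_\mu(s) - (\mathrm{II})(s)$ and yield the bound $O(\eps/(1-\g)^3)$. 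The one-step reward terms match exactly by definition of $\b r_\mu$, so the only thing left to verify is the transition consistency step.

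The hard part, and the reason I expect the conjecture to be genuinely delicate (consistent with the empirical warning in the abstract), is precisely this transition step. After averaging with $B(h\|s\pi^*(s))$, the descendants $h\pi^*(s)e'$ are distributed according to the \emph{one-step push-forward} of $B$ under $\mu(\cdot\|h\pi^*(s))$, which need not equal the abstract dispersion $B(\cdot\|s'\pi^*(s'))$ at the successor state $s'$. To make the recursion close I would have to either (a) impose a consistency assumption of the form
\begin{align*}
\sum_{h \in [\tau]} B(h\|s\pi^*(s)) \sum_{e':\psi(h\pi^*(s)e')=s'} \mu(e'\|h\pi^*(s))\,\delta_{h\pi^*(s)e'}(\cdot) \;=\; \b\mu(s'\|s\pi^*(s))\,B(\cdot\|s'\pi^*(s')),
\end{align*}
(i.e.\ $B$ is a stationary distribution of its own Markov chain under $\u\pi$), or (b) measure the defect $\DPi_B(s,s')$ between these two distributions and carry it as an extra error term through the contraction. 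Route~(b) is what I expect to drive the eventual exponent on $(1-\g)$, and it is also where I expect a counter-example to live: if $B$ concentrates on histories with large $V^*$ but the trajectory induced by $\u\pi$ drifts into low-$V^{\u\pi}$ histories, the defect can be $\Omega(1)$, blow up the recursion, and falsify the conjecture even though $(\mathrm{I})\approx v^*_\mu(s)$. I would therefore spend the bulk of the effort either proving that the $B$-induced consistency error is bounded by $O(\eps/(1-\g))$ under the $\eps$-VDP hypothesis, or isolating it as the free parameter and constructing an explicit family of $(\mu,\psi,B)$ that makes the left-hand side of the conjecture arbitrarily large, thereby sharpening the empirical negative evidence into a formal disproof.
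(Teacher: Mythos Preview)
The paper does not prove this statement: it is a \emph{conjecture}, and the paper's own treatment consists entirely of empirical experiments (Section~\ref{chap:v-agg}) whose outcome is negative --- the authors conclude that the evidence ``indicate[s] that Conjecture~\ref{conjecture:v-agg-expectation} is false.'' There is therefore no proof in the paper to compare your argument against; the relevant question is whether your proposal would succeed, and if not, whether it isolates the right failure mode.

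Your handling of term~$(\mathrm{I})$ reaches the right conclusion but by a flawed route. The policy-uniformity clause of $\eps$-VDP guarantees that the \emph{original} environment admits a state-uniform optimal policy, say $\pi^*_{\mathrm{orig}}$; it does \emph{not} say that the surrogate's optimal action $\pi^*(s) = \argmax_a q^*_\mu(sa)$ coincides with $\pi^*_{\mathrm{orig}}(s)$. Indeed, Theorem~10 of \citet{Hutter2016} is precisely a construction where they differ. So you cannot conclude $V^*_\mu(h) = Q^*_\mu(h\pi^*(s))$ from policy uniformity. The conclusion $(\mathrm{I}) \approx v^*_\mu(s)$ can nevertheless be salvaged directly from the value-uniformity clause together with item~(i) of Theorem~\ref{theorem:v-agg}, so this slip is repairable.

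Your diagnosis of term~$(\mathrm{II})$ is accurate and is where the content lives. The one-step push-forward of $B(\cdot\|s\pi^*(s))$ under $\mu(\cdot\|h\pi^*(s))$ need not equal $\b\mu(s'\|s\pi^*(s))\,B(\cdot\|s'\pi^*(s'))$, so the recursion does not close. There is no mechanism in the $\eps$-VDP hypothesis that forces this consistency defect to be $O(\eps/(1-\g))$; the value-uniformity constraint says nothing about $V^{\u\pi}_\mu$, which can vary wildly within a state. Your route~(b) --- isolating the $B$-consistency defect as the free parameter and building an explicit family making the left-hand side arbitrarily large --- is exactly the right direction, and would turn the paper's empirical refutation into a formal one. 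The paper does not supply such a construction; it stops at the numerical evidence.
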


This conjecture poses the idea that the difference between the optimal values and the values under the learned policy is bounded in expectation, rather than directly bounded as in the $\eps$-QDP case.  It could be the case that the value differences are bounded in the states that occur with high probability, and only fail in some small number of cases in those states that occur with very low probability.  If we take the expectation of the value difference, then larger value differences will be offset by the small probabilities in which those states occur, which would lead to some overall bound of the expected value.

Interestingly, if \Cref{conjecture:v-agg-expectation} were true, then we may get an \emph{almost surely} result. The agent using $\eps$-VDP abstraction can be sure that it has bounded performance loss on the histories which are actually being generated. In case the underlying process is an MDP, this result implies that the agent achieves the optimal value for all underlying states which have higher ``belief''. Moreover, if $B$ is being generated by an exploratory policy, which is usually the case, then a sufficiently exploratory policy makes sure that the ``belief'' is reflective of the true realization. That is, the agent will believe more in the underlying state which occur more. Therefore, the agent can not be ``cheated'' into believing a ``counter-factual'' distribution over the underlying state-space.

\Cref{conjecture:v-agg-expectation} proposed that value aggregation only failed in infrequently occurring micro states. However, maybe this would only work if instead we considered the macro states --- the aggregated states in the surrogate MDP. We conjecture (in case the underlying process is an ergodic MDP) that value aggregation succeeds in the majority of aggregated states, but only fails in low-probability aggregated states.

\begin{conjecture}[VA-aggregation Expectation in Macro States]
    \label{conjecture:v-agg-expectation_macro}
    For any ergodic finite-state MDP $\mu$, $\eps$-VDP abstraction $\psi$ and $B$ generated by a stationary distribution $\rho_\mu^{\u\pi} \in \Dist(\OR)$ under the uplifted policy $\u\pi$ the following holds:
    \beq
    \sum_{e \in \OR} \rho_\mu^{\u\pi}(e) \abs{V_\mu^*(e) - V_\mu^{\u\pi}(e)} \stackrel{?}{=} O \left( \frac{\eps}{(1-\gamma)^?} \right)
    \eeq
\end{conjecture}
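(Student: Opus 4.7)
The plan relies on two standard MDP tools---the performance-difference identity and the stationarity of $\rho_\mu^{\u\pi}$---together with Theorem~9.1(ii), which is the only $\eps$-VDP ingredient actually needed.

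First I would apply the performance-difference identity to unroll $V_\mu^* - V_\mu^{\u\pi}$:
\beq
V_\mu^*(e_0) - V_\mu^{\u\pi}(e_0) = \frac{1}{1-\g}\,\mathbb{E}_{e \sim d^{\u\pi}_{e_0}}\bigl[V_\mu^*(e) - Q_\mu^*(e,\u\pi(e))\bigr],
\eeq
where $d^{\u\pi}_{e_0}$ is the $\g$-discounted occupancy measure under $\u\pi$ started at $e_0$.  Averaging $e_0$ against the stationary $\rho_\mu^{\u\pi}$ and using the elementary identity $\sum_{e_0}\rho_\mu^{\u\pi}(e_0)\,d^{\u\pi}_{e_0}(e) = \rho_\mu^{\u\pi}(e)$ (a direct consequence of $\rho_\mu^{\u\pi}$ being $\u\pi$-invariant), I obtain
\beq
L \coloneqq \sum_e \rho_\mu^{\u\pi}(e)\bigl(V_\mu^*(e)-V_\mu^{\u\pi}(e)\bigr) = \frac{1}{1-\g}\sum_e \rho_\mu^{\u\pi}(e)\bigl[V_\mu^*(e)-Q_\mu^*(e,\u\pi(\psi(e)))\bigr].
\eeq

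Next I would group the sum by macro-state $s=\psi(e)$ and set $B(e\|s) \coloneqq \rho_\mu^{\u\pi}(e)/\rho_\mu^{\u\pi}(s)$ for $e \in s$---the natural action-independent dispersion induced by $\rho_\mu^{\u\pi}$.  Because the $\eps$-VDP condition forces $\pi_\mu^*(e)=\pi_\mu^*(s)$ for every $e \in s$, we have $Q_\mu^*(e,\pi_\mu^*(s)) = V_\mu^*(e)$ throughout $s$, so
\beq
\sum_{e \in s} B(e\|s)\bigl[V_\mu^*(e)-Q_\mu^*(e,\u\pi(s))\bigr] = \b Q_\mu^*(s\,\pi_\mu^*(s)) - \b Q_\mu^*(s\,\u\pi(s)).
\eeq
Two applications of Theorem~9.1(ii), combined with the surrogate-optimality $q_\mu^*(s\,\u\pi(s)) \ge q_\mu^*(s\,\pi_\mu^*(s))$, then collapse the right-hand side to at most $6\eps\g/(1-\g)^2$.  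Summing, weighted by $\rho_\mu^{\u\pi}(s)$, over the macro-states gives $L \le 6\eps\g/(1-\g)^3$, i.e.\ Conjecture~9.2 with exponent $?=3$.

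The step I expect to be the main obstacle---and the most plausible source of the numerical counterexamples the chapter is about to exhibit---is the move from ``\,$B$ generated by $\rho_\mu^{\u\pi}$\,'' to ``\,$B(e\|sa)$ is action-independent\,''.  The conjecture is ambiguous on this point: if $B(e\|sa)$ is formed action-independently from the $\u\pi$-stationary frequency, then the two $\b Q_\mu^*$ values above share the same dispersion and Theorem~9.1(ii) cancels their error cleanly; if instead $B(e\|sa)$ is allowed to depend on $a$ (for example by restricting to trajectories that actually take action $a$ at $s$), then the $\pi_\mu^*(s)$-conditional and $\u\pi(s)$-conditional dispersions can differ substantially and the cancellation breaks.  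There is also a subtler fixed-point obstacle: $\rho_\mu^{\u\pi}$ depends on $\u\pi$, which depends on the surrogate, which is built from $B$, which is built from $\rho_\mu^{\u\pi}$.  Proving existence and---more importantly---characterising which fixed point a learning agent actually converges to is where a carefully constructed small ergodic MDP may disagree with the clean action-independent analysis sketched above and produce a counterexample.
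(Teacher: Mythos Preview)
The paper does not prove this statement; it is posed as a conjecture and the chapter's purpose is to test it empirically, concluding that the evidence is \emph{against} it.  So there is no ``paper's own proof'' to compare to.  Your submission is therefore not a competing proof but an analytical contribution that the paper lacks.

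Under the reading you adopt---$B(e\|sa)=\rho_\mu^{\u\pi}(e)/\rho_\mu^{\u\pi}(s)$ action-independently, taken at a fixed point of the $B\to\b\mu\to\u\pi\to\rho_\mu^{\u\pi}\to B$ loop---your chain is sound.  The performance-difference identity plus $\u\pi$-stationarity of $\rho_\mu^{\u\pi}$ gives
\[
L=\frac{1}{1-\g}\sum_s\rho_\mu^{\u\pi}(s)\bigl[\b Q_\mu^*(s,\pi_\mu^*(s))-\b Q_\mu^*(s,\u\pi(s))\bigr],
\]
and because both $\b Q_\mu^*$ terms use the \emph{same} dispersion, Theorem~\ref{theorem:v-agg}(ii) and surrogate optimality of $\u\pi$ collapse the bracket to $6\eps\g/(1-\g)^2$, giving $L\le 6\eps\g/(1-\g)^3$.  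That is a genuine positive result the paper does not contain.

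You also correctly isolate the two fragile steps, and your diagnosis matches the chapter's experiments precisely.  The paper builds its surrogate from an \emph{action-dependent} dispersion $B(e\|\psi(e)a)=\rho_\mu^a(e)/\sum_{\d e\in\psi(e)}\rho_\mu^a(\d e)$, so $\b Q_\mu^*(s,\pi_\mu^*(s))$ and $\b Q_\mu^*(s,\u\pi(s))$ are averages against \emph{different} weights and Theorem~\ref{theorem:v-agg}(ii) no longer forces them close---exactly the failure mode you anticipate.  The empirical counterexamples the chapter reports (Figures~\ref{fig:hist_expected} and~\ref{fig:hist_expected0}) therefore live in the branch of the ambiguity where your argument does not apply; they do not contradict your action-independent bound.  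What remains genuinely open, as you note, is the fixed-point question: existence is plausible in a finite ergodic MDP, but whether a learning procedure selects a fixed point at which your bound holds is not settled by your argument.
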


Although the bound in the above conjecture may seem weak, if it holds then we can consider this to be the ``weakest'' form of optimality guarantee an abstraction map can provide.

Now, we experimentally investigate these conjectures in the following sections.

\section{Experimental Setup}
\label{chap:v-agg}

The setup is quite simple. We follow the approach of \citet{Johnston2015} to generate random ergodic MDPs with $X \coloneqq \abs{\OR}$ number of states and $A$ number of actions. It is important to mention that if we sample an MDP naively for the set of all possible MDPs with state-action space $\OR \times \A$, then with probability 1 we will get a state-action-value function that cannot be aggregated. On the other hand, \citet{Johnston2015} devised a simple way to generate aggregatable MDPs. He starts by first randomly generating an abstract action-values and later add non-optimal action-values to get the underlying MDP.

Once we have generated an MDP by the above method, we then aggregate it into $S_\psi \coloneqq \abs{\S_\psi}$ number of states which respect the VA-uniformity condition. We define the dispersion probability $B$ in terms of the stationary distribution $\rho_\mu^{\pi^*}$ under the optimal policy.  This stationary distribution describes the average time spent in a given state when following policy $\pi^*$ --- exactly a measure of the probability of being in a state.  To calculate this, we calculate the stationary distribution $\rho_\mu^a$ for each action $a$, and use it to construct the dispersion probability.
\beq
B(e \| \psi(e)a) := \frac{\rho_\mu^a(e)}{\sum_{\d e: \psi(e) = \psi(\d e)}\rho_\mu^a(\d e)}
\eeq
where the most recent observation $e$ is the state of the underlying MDP, and abstraction $\psi$ is assumed to be a map from $\OR$ to $\S_\psi$.
We then use this to learn $\u\pi(e):= \pi^*(\psi(e))$, and use this as an approximation to the optimal policy.  Specifically, we will test to see if there exists a linear relationship between  $\abs{V_\mu^* - V_\mu^{\u\pi}}$ and $\rho_\mu^{\u\pi}$. For numerical and presentational reasons, we use $-\log_2 \rho_\mu^{\u\pi}$ in the reported results instead of $\rho_\mu^{\u\pi}$.

Note that while we are interested in the general case concerning histories, we restrict ourselves to MDPs in this chapter.  This is for simplicity, as it is much simpler to randomly generate an MDP that is able to be value aggregated. The complete details about the setup can be found in \citet{McMahon2019}.

\section{Empirical Investigation of \Cref{conjecture:v-agg-expectation}}

We generated 1000 MDPs with different sets of parameters, e.g.\ aggregation factor, noise value and branching factor. Then, we performed VA-aggregation and learned the policy $\u\pi$.  As we are interested in looking for cases when $\abs{V_\mu^{*} - V_\mu^{\u\pi}} > 0$, if the learned policy was equal to the optimal policy the MDP was discarded and a new one was generated.  To test the linear correlation between $ \abs{V_\mu^{*} - V_\mu^{\u\pi}}$ and $- \log_2 \rho_\mu^{\u\pi}$, we calculated the Pearson correlation coefficient (PCC) \cite{Benesty2009},  and their associated $p$-values.  All of the MDPs generated had 64 states, 2 actions and a branching factor (i.e.\ maximum number of reachable states from any other state) of 4.

For the first tests, the aggregation value was set at 4, while the noise factor $\eps$ took values from $[1,5,10,15,20]$, see \autoref{tab:noise_tests}. For the second batch of experiments, the noise value was set at 5, and the aggregation factor $X/S_\psi$ took values in $[2,4,16,32]$, see \autoref{tab:agg_tests}.

\begin{table}
    \centering
    \begin{tabular}{ | c|| c | c | c | c | c |  }
        \hline
        \multicolumn{6}{|c|}{Noise Values} \\
        \hline
        $\eps$  		& 1 		& 5 		& 10		& 15		& 20 \\
        \hline
        \hline
        PCC  		& 0.00882 	& 0.00221 	& 0.00414	& -0.00202	& -0.00540 \\
        p-value 	& 0.02553 	& 0.57515 	& 0.29461	& 0.60883	& 0.17223  \\
        \hline
    \end{tabular}
    \caption[Pearson Correlation Coefficients for Varying Noise Values]{The Pearson correlation coefficients and p-values for varying noise values $\eps$.  All MDPs had 64 states, 2 actions, branching factor 4 and aggregation factor 4.}
    \label{tab:noise_tests}
\end{table}

\begin{table}
    \centering
    \begin{tabular}{ | c || c | c | c | c | }
        \hline
        \multicolumn{5}{|c|}{Aggregation Factors} \\
        \hline
        $X/S_\psi$  	& 2 		& 4 		& 16		& 32 \\
        \hline
        \hline
        PCC  					& 0.00577 	& 0.00554 	& 0.00461	& 0.00495 \\
        p-value 				& 0.14430 	& 0.16073 	& 0.24380	& 0.21020   \\
        \hline
    \end{tabular}
    \caption[Pearson Correlation Coefficients for Varying Aggregation Factors]{The Pearson correlation coefficients and p-values for aggregation factors $X/S_\psi$.  All MDPs had 64 states, 2 actions, branching factor 4 and noise value 1.}
    \label{tab:agg_tests}
\end{table}

Despite enforcing policy uniformity, there seems to be no linear relationship between $\abs{ V_\mu^{*} - V_\mu^{\u\pi}} $ and $- \log_2 \rho_\mu^{\u\pi}$.  The Pearson correlation coefficients are very close to 0, and the $p$-values indicate there is no reason to reject the null hypothesis that they are linearly independent. We observe in \autoref{fig:agg_16} two dense clusters of results.  The first cluster on the left are the high to mid probability events.  There is also a thin cluster of low probability events in the right section of the plot.


\begin{figure}[htb]
    \centering
    \begin{tikzpicture}
        \node (img)  {\includegraphics[width=0.8\linewidth]{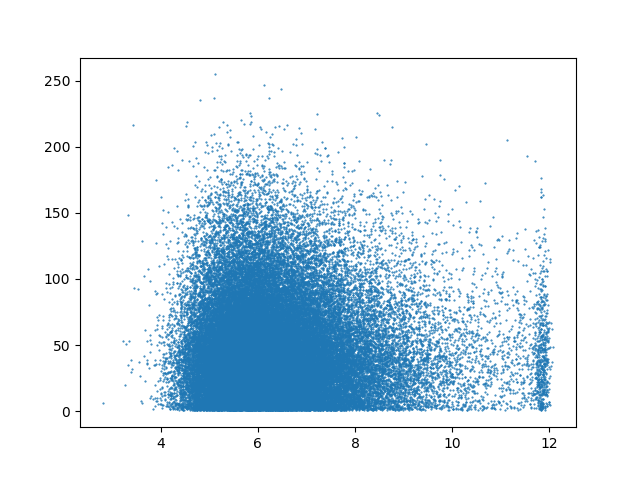}};
        \node[below=of img, node distance=0cm, yshift=1.5cm] {$-\log_2 \rho_\mu^{\u\pi}(e)$};
        \node[left=of img, node distance=0cm, rotate=90, anchor=center,yshift=-0.35cm] {$\abs{V^*_\mu(e) - V^{\u\pi}_\mu(e)}$};
    \end{tikzpicture}
    \caption[Value differences against $-\log_2 \rho^{\u\pi}_\mu$]{The results after generating 1000 MDPs and plotting the two variables.  Clearly they do not have a linear relationship.  The MDPs were generated with 2 actions, 64 states, aggregation factor 16, branching factor 4, noise 5 and $\delta = 5 \cdot 10^{-6}$.}
    \label{fig:agg_16}
\end{figure}

The cluster of results on the right of \autoref{fig:agg_16} is a by-product of how we ensure the MDP is quasi-positive  \cite{McMahon2019}. We add a small $\delta > 0$ to the transition probability of every state-action to get an ergodic transition matrix after normalization where every state is reachable under every policy. Therefore, a large number of (otherwise unreachable) states have very similar (and very small) values of $\rho^a_\mu$ for every action $a$.  This causes the clustering observed in the results.  Indeed, as we decrease  $\delta$, the cluster is pushed further and further to the right, as seen in \autoref{fig:small_epsilon}. If we remove these values and consider the results, see \autoref{fig:sanitised_epsilon}, we still see no evidence of a linear relationship between the two variables.


\begin{figure}[htb]
    \centering
    \begin{tikzpicture}
        \node (img)  {\includegraphics[width=0.8\linewidth]{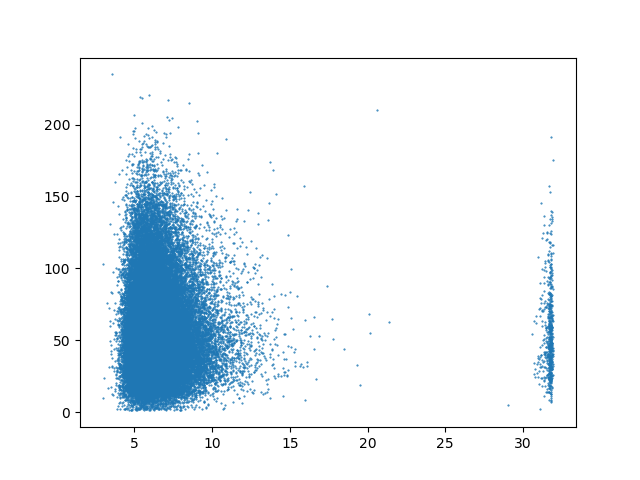}};
        \node[below=of img, node distance=0cm, yshift=1.5cm] {$-\log_2 \rho_\mu^{\u\pi}(e)$};
        \node[left=of img, node distance=0cm, rotate=90, anchor=center,yshift=-0.35cm] {$\abs{V^*_\mu(e) - V^{\u\pi}_\mu(e)}$};
    \end{tikzpicture}
    \caption[Results with small $\delta$ value.]{The results when $\delta$ is set to $5 \cdot 10^{-10}$.  The cluster of low probability events gets pushed to the right.}
    \label{fig:small_epsilon}
\end{figure}

\begin{figure}[htb]
    \centering
    \begin{tikzpicture}
        \node (img)  {\includegraphics[width=0.8\linewidth]{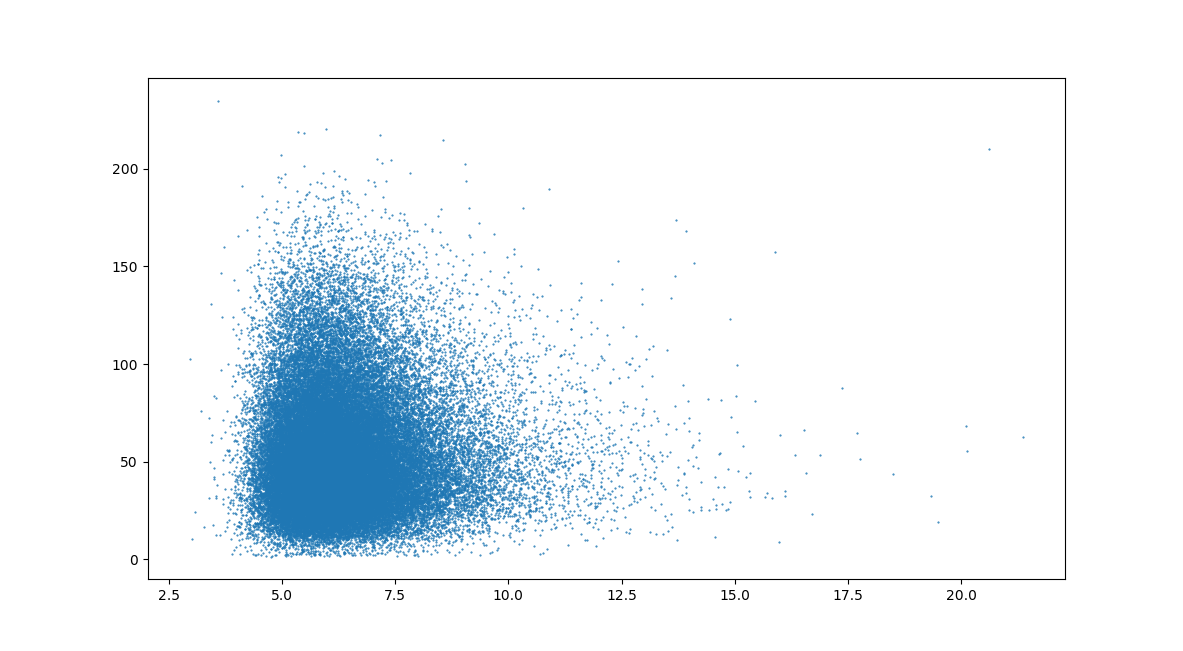}};
        \node[below=of img, node distance=0cm, yshift=1.4cm] {$-\log_2 \rho_\mu^{\u\pi}(e)$};
        \node[left=of img, node distance=0cm, rotate=90, anchor=center,yshift=-0.7cm] {$\abs{V^*_\mu(e) - V^{\u\pi}_\mu(e)}$};
    \end{tikzpicture}
    \caption[Results with the $\delta$ results filtered out]{The results when we exclude the low-probability events based $\delta$ is set to $5 \cdot 10^{-10}$.  There is no evidence of a linear relationship.}
    \label{fig:sanitised_epsilon}
\end{figure}


When generating results, we only considered those MDPs that failed to learn the optimal policy. The values for those MDPs that did learn the correct policy are, trivially, the same.  As the number of states increases, fewer MDPs learn the optimal policy.

If the bounds in Conjecture \ref{conjecture:v-agg-expectation} held, we would expect a linear relationship between the value difference and the stationary distribution.  The states occurring with high probability (on the left of the graph) would have a small value difference, and the states with low probability states (those tending towards the right of the graph) would generally have large value differences.  But we have no evidence of a linear relationship between the two variables, which makes it extremely unlikely that the conjecture is true.  It could be that the relationship between the two variables is more complex, or it could be that they are not correlated at all.

These results indicate that Conjecture \ref{conjecture:v-agg-expectation} is false: that the difference between the true and learned values is not bounded in expectation.  This negative experimental result indicates it is likely not possible to learn optimal behavior in general when using VA-aggregation.  In the next section we test to see if value difference is bounded in expectation over the macro states of the environment.

\section{Empirical Investigation of \Cref{conjecture:v-agg-expectation_macro}}

Using a similar framework as in the previous section, we empirically test this conjecture using random MDPs which can be value aggregated.
We calculate the expected value difference for a given micro state $s$
\beq
\sum_e B(e \| s\pi^*(s)) \abs{V_\mu^*(e) - V_\mu^{\u\pi}(e)}
\eeq
and then averaged with respect to the probability of being in a specific state.  This is represented by the stationary distribution $\rho^{\u\pi}_\mu(s) \coloneqq \sum_{e:\psi(e) = s} \rho_\mu^{\u\pi}(e)$ for each state $s$ of the surrogate MDP $\b\mu$, which is calculated by taking the left eigenvector of the transition matrix \cite{McMahon2019}.  For an MDP, we thus calculate

\bqan
\sum_{s} \rho^{\u\pi}_\mu(s) \sum_{e} B(e \| s\pi^*(s)) \abs{V_\mu^*(e) - V_\mu^{\u\pi}(e)}
&= \sum_{s} \rho^{\u\pi}_\mu(s) \sum_{e:\psi(e) = s} \fracp{\rho_\mu^{\u\pi}(e)}{\rho_\mu^{\u\pi}(s)} \abs{V_\mu^*(e) - V_\mu^{\u\pi}(e)} \\
&= \sum_{e} \rho_\mu^{\u\pi}(e)\abs{V_\mu^*(e) - V_\mu^{\u\pi}(e)} \numberthis
\eqan

We use the same experimental procedure as for the previous conjecture to generate MDPs and learn the values.  Then we calculate the stationary distribution, and finally calculate the normalized expected value difference for each MDP.

We generated 1000 random MDPs with 64 states, aggregation factor 4, 2 actions, branching factor 4 and noise value 1.  \autoref{fig:hist_expected} shows a histogram of the resulting scores for the 1000 MDPs. The noise having value 1 means that all of the micro states that are to be aggregated have maximum value difference of 2 --- i.e. $\abs{V_\mu^*(e) - V_\mu^*(\d e)} \leq 2$.  If our conjecture were true, we would expect the normalized expected value difference to be of the order $2/(1-\gamma)^c$ for some $c > 1$. It may feel like the case in \autoref{fig:hist_expected}, as for $\g=0.9$, any value of $c > 1$ leads to a bound greater than $200$. But, this (misleading) trend does not stand the next set of parameters.  The majority of MDPs have a normalized expected value difference exceeding 10. Even when we set the noise to 0, which means that all micro states that aggregate to the same macro state have exactly the same value,therefore, we do not observe any such bound, see \autoref{fig:hist_expected0}.

\begin{figure}[htb]
    \centering
    \begin{tikzpicture}
        \node (img)  {\includegraphics[width=0.8\linewidth]{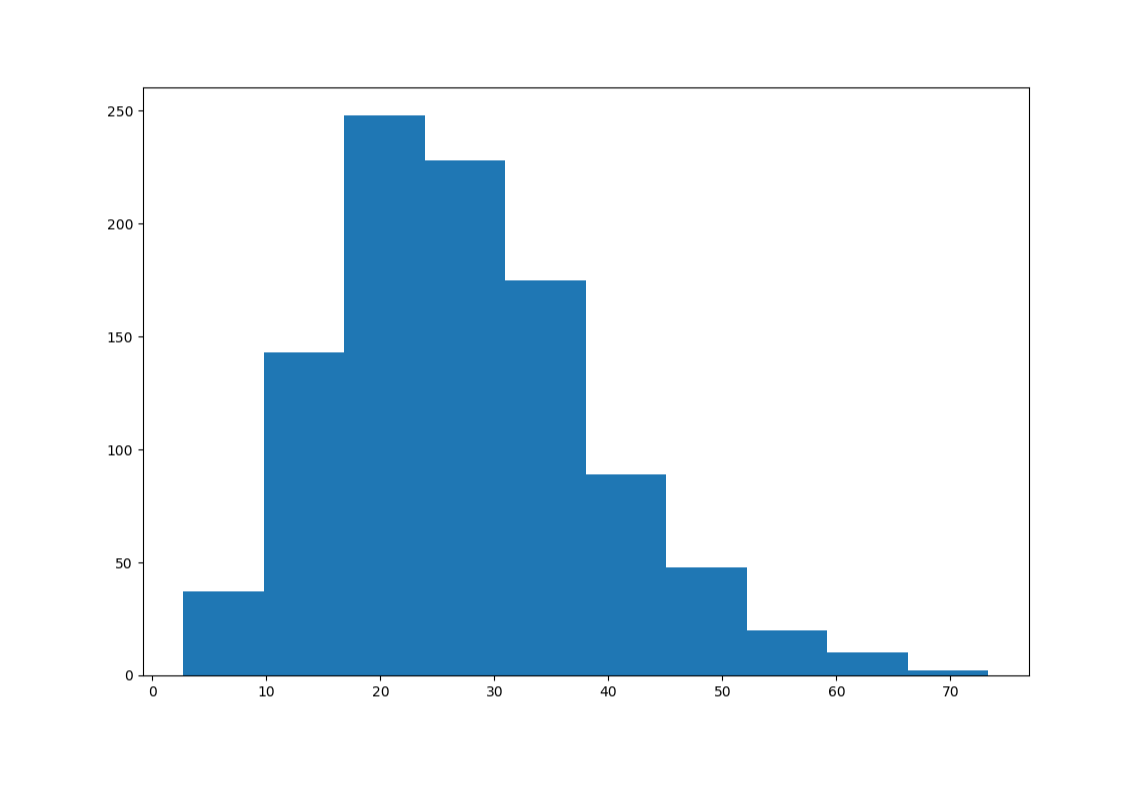}};
        \node[below=of img, node distance=0cm, yshift=2cm] {$\sum_{e} \rho^{\u\pi}_\mu(e)\abs{V^*_\mu(e) - V^{\u\pi}_\mu(e)}$};
        \node[left=of img, node distance=0cm, rotate=90, anchor=center,yshift=-0.7cm] {Frequency};
    \end{tikzpicture}
    \caption[Histogram of normalized expected value difference.]{A histogram of the normalized expected value difference calculated for 1000 randomly generated MDPs.  Each MDP was randomly generated with 64 states, aggregation factor 4, 2 actions, branching factor 4 and noise value 1.}
    \label{fig:hist_expected}
\end{figure}

\begin{figure}[htb]
    \centering
    \begin{tikzpicture}
        \node (img)  {\includegraphics[width=0.8\linewidth]{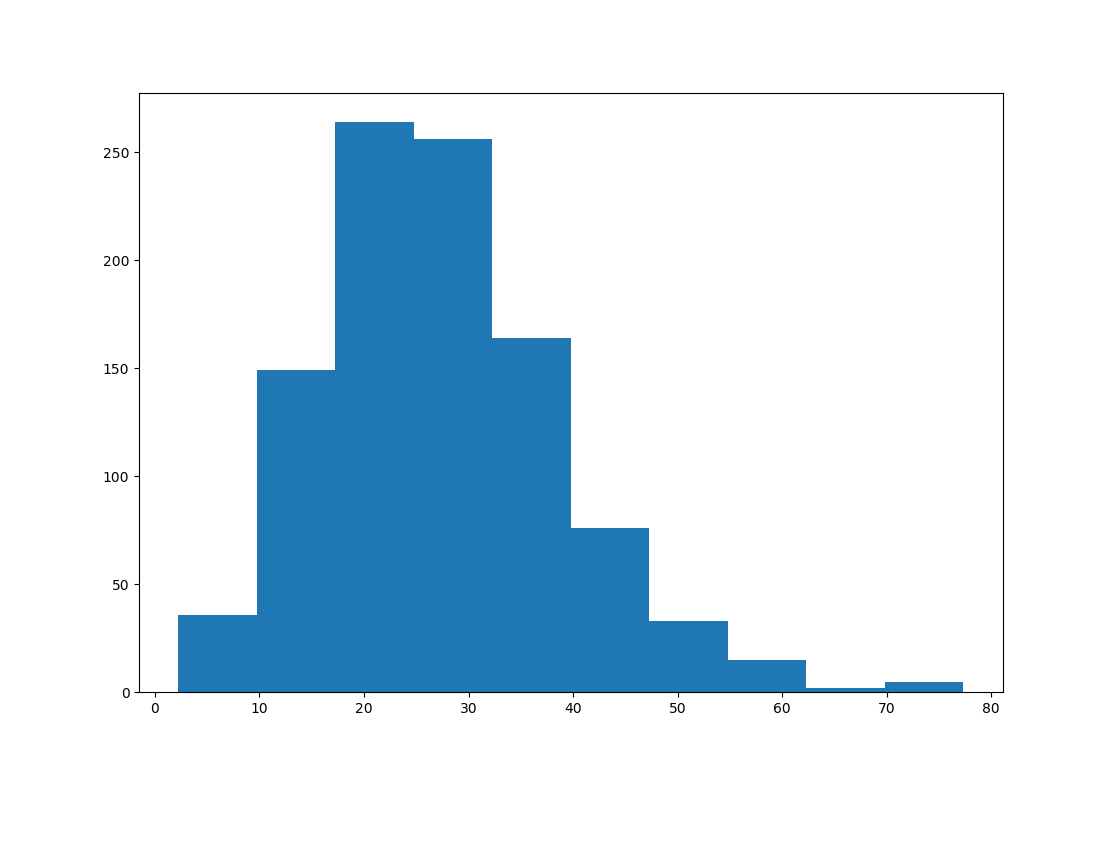}};
        \node[below=of img, node distance=0cm, yshift=2.3cm] {$\sum_{e} \rho^{\u\pi}_\mu(e)\abs{V^*_\mu(e) - V^{\u\pi}_\mu(e)}$};
        \node[left=of img, node distance=0cm, rotate=90, anchor=center,yshift=-0.7cm] {Frequency};
    \end{tikzpicture}
    \caption[Histogram of normalized expected value difference with zero noise.]{A histogram of the normalized expected value difference when the micro states all have the same values.  We randomly generated 1000 MDPs, each with 64 states, aggregation factor 4, 2 actions, branching factor 4 and noise value 0.}
    \label{fig:hist_expected0}
\end{figure}

%
%

As the expectations are not bounded, these empirical results indicate that Conjecture \ref{conjecture:v-agg-expectation_macro} does not hold.  Value aggregation still fails to learn optimal policies even when we weaken the learning condition to this extent.  This is a disappointing result.  If the difference between the optimal and learned values is not even bounded in expectation of the macro states, it seems very unlikely that we will be able to use value aggregation to extract any useful information or learn any optimal behavior.

\section{Empirical Investigation of \Cref{conj:vpdp}}

We repeat the same set of experiments to test \Cref{conj:vpdp}. Contrary to the above findings, this time we did not find any counter-example. Every surrogate MDP of $\eps$-VPDP abstraction was able to learn the optimal policy of the underlying MDP. This empirical backing increases our hope that this conjecture might be true. $\eps$-VPDP abstractions may turn out to be extremely ``useful'' after all. This asks for a formal inquiry.

\section{Conclusion}

We highlighted the missing theoretical results that stopped VA-aggregation from provably being able to learn optimal policies.  In an effort to fill that gap we introduced two conjectures that if true, would provide performance guarantees when learning in VA-aggregated environments.  But, we found no evidence that learning over VA-aggregated environments is bounded in expectation in the micro states.  We conjectured that perhaps a weaker result held instead --- that value aggregation is bounded in expectation over the macro states, but found no evidence that this was the case either. However, we did not find any counter-example to \Cref{conj:vpdp}.

VA-aggregation would be useful when using the extreme aggregation framework, as the surrogate MDP only requires size linear in the action-space of the original process, as opposed to the exponential required via $\eps$-QDP abstractions.  These results suggest that value aggregation will never be able to be used to solve such problems.  However, we do have performance guarantees for $\eps$-QDP abstractions.  Despite the larger  size required for construction of the surrogate MDP, these techniques are a promising way of reducing and solving domains with large state-spaces. Also with the binarization technique from \Cref{chap:action-seq} provides a significant improvement over the size of the state-space of the surrogate MDPs. However, the choice of how to \emph{meaningfully} binarize the action-space is not clear. $\eps$-VPDP has (and VA-aggregation had) potential to provide a better bound over the number of states (maybe) without binarization.

\chapter{Conclusion \& Outlook}\label{chap:conclusion}

\begin{outline}
    This chapter summarizes, hypothesizes and concludes the thesis. We start by providing a brief summary of the thesis. As with any research project, there are many questions left unanswered in the thesis. We go over some of the major future directions possible of this work. Especially, we consider the possibility of a ``mortal'' safe ARL agent, ARL with sparse reward, hierarchical ARL, and an ARL setup which finds only limit or average optimal policies. In the end, we conclude the thesis.
\end{outline}

\section{Thesis Summary}

The core of the thesis revolved around ARL with non-MDP abstractions. The $\eps$-QDP abstractions are put in the center stage in this work. As ARL is a GRL framework with abstraction, there is a potential that ARL setup can be used to design scalable generally intelligent agents. We proved a number of useful properties of $\eps$-QDP abstractions, and conjectured that similar properties might hold for $\eps$-VPDP abstractions. We provided convergence guarantees for TD-like algorithms in non-MDP domains, representation guarantees for non-MDP homomorphisms and significantly improved the upper bound on the number of states in ESA by sequentializing the action-space.
This thesis has mostly discussed the existence and usefulness of some non-MDP abstractions. We have surveyed a number of techniques which may lead to a sound (non-MDP) abstraction learning algorithm in \Cref{chap:abs-learning}.
Lastly, we empirically refuted the usefulness of VA-aggregations, which justifies the focus on $\eps$-QDP and $\eps$-VPDP abstractions in the thesis.

\section{Future Research Directions}

The problem of abstraction learning has been thoroughly discussed in \Cref{chap:abs-learning}. It is a high priority future research task to find a sound abstraction learning algorithm. Moreover, we have made a few assumptions in this thesis, which, if they do not hold in the domain of interest, may lead to some deep philosophical problems about decision-making under uncertainty.  In the following subsections, we discuss the consequences of the failure of some of these assumptions and point to future research directions where possible.

Especially, we consider the possibility of extension of ARL framework to design \emph{mortal} and {safe} AGI agent. Once extended, the ARL setup can help AGI agents to safely explore the environment. We also discuss the case of sparse rewards. This helps the designers, e.g.\ us, to naturally specify goals for ARL agents. We initiate the idea of a ``hierarchical'' ARL framework, where the abstraction could be a composition of many ``finer'' abstractions. Moreover, we speculate about a possible extension by relaxing the requirement of the uplifted policy from being a uniformly value-maximizing policy, i.e.\ $\u\pi \in \Pi^{\sup}_\eps$, to only being an asymptotically value-maximizing or pseudo-regret minimizing policy, i.e.\  $\u\pi \in \Pi^{\infty}_\eps$ or $\u\pi \in \b \Pi^{\infty}_\eps$, to further reduce the required number of states of a surrogate MDP. In the following subsections we go through each possible extension in details.

\subsection{Mortal Safe Agents}

We assumed that the agent-environment interaction never halts, i.e.\ the agents are \emph{eternal}. In reality, it may be more natural to model situations where this interaction may terminate. For example, the agent may be decommissioned or stopped for repairs/updates.  From the agent's perspective the world has stopped. The agent goes into a ``suspended-state''. It is a similar situation as a human goes through a coma, however, when woken up, the human can make sense of the updated situation based on the observations it (had) received before and after the coma. The human agent can conclude that they have ``gaps'' in their experience.
If an eternal agent has been ``suspended'' enough times in the past, just like a human recovering from coma, it can also argue about the experience ``gaps'' solely based on the interaction history.

However, the past experience is not sufficient for an eternal agent to reason about a ``death-state'' where the agent is permanently terminated and is never turned back on. ``Death'' does not exist for such agents. Formally, the agents only experience ``suspensions''. Therefore, it is not hard to imagine some real-life situations where the eternal agents may not behave ``safely'' to avert a catastrophic outcome. From the agent's perspective, it has simply a ``jump'' in the experience. Of course we (or the environment) can dispatch an extremely low reward afterwards to teach the agent to avoid such situations in the future. But, it could be an expensive exercise. Therefore, a framework of mortal agents which can argue about ``death-states'' is essential to design \emph{safe} AGI.\footnote{A safe AGI is a powerful general-purpose intelligence which is provably not an \emph{existential threat} to us.}

In a ``mortal'' RL setup, there may be a termination percept, say $e_\bot \in \OR$. which the environment (or model) dispatches forever under any policy. However, for the agents to argue about such a ``termination'' situations they must have a sufficient belief about the termination or have visited such situations in the past to ``remember'' it through the interaction history. For more discussion on this topic we recommend the reader to see \citet{Martin2016}.

It would be interesting to see whether we can formulate an analogous ARL framework for \emph{mortal} (and potentially safe) agents, and how they may behave under these abstractions. Our ARL setup can help such mortal agents to ``safely'' explore the environment by leveraging an abstraction which distinguishes between ``safe'' and ``unsafe'' states.

\subsection{Sparse Rewards}

In this thesis, we have assumed that the environment produces a reward at every instance of time. However, there are many natural situations where no meaningful reward signal can be produced for the current interaction cycle. How can one model such situations?
\begin{itemize}
    \item The environment (model) dispatches reward zero for such situations. However, this may deter the agent from getting into these portions of experience, which is not the ideal behavior under many circumstances. In some circumstances the agent can find a bug in specifications and loop around some rewarding situation \cite{Everitt2018}. So, it is sometimes more logical to not give any ``default'' reward if we, the designers of the environment models, do not know what should be an appropriate reward at that instance of time.
    \item The environment dispatches no reward, but the agent can input a proxy reward. This may be the current estimated ``value'' of the situation. Once the agent has an appropriate notion\footnote{Note that the definition of ``value'' in a sparse reward setting may not be the same as the history-value function defined in the thesis. In the sparse reward setting, it would require special care to handle ``no-reward'' time-steps.} of ``value'' in a sparse reward setting, it can work ``backwards'' from a rewarded time-step to ``no-reward'' time instances.
    \item A ``no-reward'' time-step might be an ideal position for the agent to add intrinsic motivation into the reward signal coming from the environment. Since the environment is not sure how it should respond to this ``no-reward'' situation, it seems like an intermediate situation for a ``good'' or ``bad'' upcoming state. It is critical to note that an always low reward situation, e.g. navigating through a grid-world with minimum reward at each step, is different than receiving no reward. In the former case the agent is driven to move away from these states as soon as possible, whereas, in the later case the agent is left clueless. It has to decide if it wants to revisit this state or not in the future. So maybe, we should update the reward space as $\R^\star = \R \cup \{ \square \}$, where $\square$ denotes no-reward. It is not a real-value. It is unclear how to define value in this case. The agent has to replace $\square$ (maybe) by its intrinsic real-valued reward in order to define value.
\end{itemize}

An ARL formulation based on a \emph{partial} reward function would be an interesting generalizations of the this thesis. An ARL setup, once formulated properly, can greatly help mitigate the difficulties of handling no-reward situations. An abstract state can help ``extrapolate'' the reward signal from one history to another.

\subsection{Hierarchical ARL}

The ARL setup defined in the thesis can be called a ``flat'' architecture. We assume a ``monolithic'' abstraction map. Sometimes, the environment may allow for a hierarchical structure where the level of ``information'' is different at different levels of control. For example, it is tedious to come up with an abstraction map straight from muscle movements to, say, performing a surgery. We do not think at this ``fine'' level of granularity to perform many ``abstract'' tasks. This observation asks for an extension of ARL to a hierarchical setup where the (overall) abstraction map is a composition of some ``finer'' (intermediate) maps.

\subsection{Limit Optimal Policies}

The core idea explored in the thesis is to analyze (and learn) abstractions which admit an optimal policy from $\Pi^{\sup}_\eps$. That is, the optimal policy of the surrogate MDP is  near-optimal at every history. We may gain a lot in terms of reduction of state-space if we relax this condition to limit optimal policies, i.e.\ we could try to find an abstraction whose surrogate MDP can help us find a limit or average optimal policy, i.e.\ $\u\pi \in \overline{\Pi}_\eps^\infty$ or $\u\pi \in \Pi_\eps^\infty$. This extension of the ARL framework should make it easier to find good abstraction maps which make early mistakes.

\section{Conclusion}

Current RL algorithms are often too brittle and demanding to be (directly) used in real-world applications. The theory presented in this thesis tackled the issue of scalability of AGI agents from the perspective of using a compact abstraction of the environment. The key point of the thesis is that we can leverage non-MDP abstractions to learn the optimal behavior via a surrogate MDP. Especially, we could do this for a broad range of environments. The typical requirement for the abstraction maps to be MDPs limits the scope of environments one can cater for with a fixed state-space. An MDP abstraction would eventually start behaving as a non-Markovian map as the complexity (e.g.\ the underlying state-space) of the environment increases. We showed the size of the state-space is upper bounded for \emph{any} environment if we allow for non-MDP abstractions (e.g.\ extreme $\eps$-QDP). On top of that, these non-MDP abstractions admit surrogate MDPs, so we can learn the optimal behavior without an extra overhead of planning with a non-MDP model, \emph{cf.} planning with POMDPs. Our results can help build resource bounded \emph{scalable} AGI agents. The ARL framework is a stepping stone in this direction!


\appendix



\chapter{Symbols \& Notation}\label{chap:notation-and-symbols}

This chapter provides a comprehensive list of symbols and notation used in the main text.

\section*{Number Sets}

\begin{notation}
    \item[$\SetN$] The set of natural numbers (starting from 1) $\{1, 2, \dots \}$%
    \item[$\SetB$] The set of binary symbols $\{0, 1\}$%
    \item[$\SetR$] The set of real numbers%
\end{notation}

\section*{Common Notation}

\begin{notation}
    \item[$\PowSet(X)$] The power set of $X$, i.e.\ $\PowSet(X) \coloneqq \{ A \mid A \subset X\}$%
    \item[$X \times Y$] Cartesian product of $X$ and $Y$, i.e.\ $X \times Y \coloneqq \{xy \mid x \in X, y \in Y\}$%
    \item[$\Dist(X)$] The set of probability distributions over $X$%
    \item[$\v x$] A finite vector of length $|\v x| < \infty$%
    \item[$\v x\trp$] Transpose of a vector (or a matrix) $\v x$%
    \item[$\d x$] A different member from a set $X$, i.e.\ $x, \d x \in X$%
    \item[$x'$] The next symbol in the sequence, i.e.\ if $x = x_n$ then $x' \coloneqq x_{n+1}$%
    \item[$\t x$] A local variable%
    \item[$\norm{\v x}$] A norm of $\v x$ whose nature (e.g.\ sup or weighted norm) is apparent from the context
    \item[{$\norm[\infty]{\v x}$}] The sup-norm of $\v x$
    \item[{$\norm[\v w]{\v x}$}] The $\v w$-weighted norm of $\v x$
    \item[{$f[\v \theta]$}] A $\v\theta$-parametrized function $f \equiv f(X \| \v \theta)$ over some space $X$, which is apparent from the context
    \item[{$\rm uniform(X)$}] The uniform distribution over $X$
\end{notation}

\section*{General Reinforcement Learning}

\begin{notation}
    \item[$\mu$] The true environment%
    \item[$\A$] The (finite) set of (original) actions%
    \item[$\O$] The (finite) set of observations%
    \item[$\OR$] The (finite) set of percepts%
    \item[$\R$] The (finite) set of rewards $\R \subseteq \SetR$%
    \item[$\H_n$] The set of all histories of length $n$, $\H_n \coloneqq (\OR \times \A)^{n-1} \times \OR$%
    \item[$\H$] The set of all finite histories, i.e.\ $\H \coloneqq \cup_{n \in \SetN} \H_n$%
    \item[$\H_\infty$] The set of all infinite histories%
    \item[$\g$] The discount factor for the true environment%
    \item[$V_\mu^\pi$] The history-value function of $\mu$ on policy $\pi$%
    \item[$Q_\mu^\pi$] The history-based action-value function of $\mu$ on policy $\pi$%
    \item[$V_\mu^*$] The optimal history-value function of $\mu$
    \item[$Q_\mu^*$] The optimal history-based action-value function of $\mu$%
    \item[$\pi_\mu$] The randomized optimal policy of $\mu$, i.e.\ $\pi_\mu[h] \coloneqq {\rm uniform}\left(\argmax_{a \in \A} Q^*_\mu(ha)\right)$ for any history $h \in \H$%
\end{notation}

\section*{Abstraction Reinforcement Learning}

\begin{notation}
    \item[$\nu$] A model of the true environment $\mu$%
    \item[$\U$] The universe of all causal models%
    \item[$\M$] A class of models $\M \subseteq \U$%
    \item[$\S$] The (finite) set of states%
    \item[$\B$] The (finite) set of abstract actions%
    \item[$\psi$] An abstraction of the true environment%
    \item[$\mu_\psi$] The abstract process induced by $\psi$%
    \item[$B$] A stochastic inverse of the abstraction%
    \item[$\H^\psi_n$] The set of all abstract histories of length $n$%
    \item[$\H^\psi$] The set of all finite abstract histories%
    \item[$\H^\psi_\infty$] The set of all infinite abstract histories%
    \item[$\lambda$] The discount factor for the abstract environment%
    \item[$v_\mu^\pi$] The state-value function of $\mu$ on policy $\pi$%
    \item[$q_\mu^\pi$] The state-action-value function of $\mu$ on policy $\pi$%
\end{notation}

\section*{Surrogate Markov Decision Process}

\begin{notation}
    \item[$\b \mu_{\rm MDP}$] A surrogate-MDP of $\mu$%
    \item[$\b q_\mu^\pi$] The action-value function of the surrogate-MDP on policy $\pi$%
    \item[$\b v_\mu^\pi$] The state-value function of the surrogate-MDP on policy $\pi$%
    \item[$\b q_\mu^*$] The optimal action-value function of the surrogate-MDP%
    \item[$\b v_\mu^*$] The optimal state-value function of the surrogate-MDP%
    \item[$\u \pi$] The uplifted randomized optimal policy of the surrogate-MDP%
\end{notation}

\section*{State-Action Homomorphism}

\begin{notation}
    \item[$\eps$] The small positive error constant%
    \item[$\DPi$] The maximum variation among abstracted policy members
    \item[$\DQ$] The maximum variation among abstracted action value members
    \item[$\psi^{-1}_b(s)$] The set of histories mapped to $(s,b)$ pair
    \item[$\psi_s(b)$] The history-dependent set of (original) actions mapped to an $(s,b)$ pair
    \item[$B^\pi$] $B$ and $\pi$ induced measure on the original action space
    \item[$\langle \cdot \rangle_B$] $B$ average
\end{notation}





\backmatter

\printbibliography



\end{document}